%% file: main.tex
\documentclass[11pt]{article}
\usepackage[margin=1in]{geometry}
\def\authornotes{1}
\usepackage{pdfpages}
\ifnum\authornotes=1
    \newcommand{\tselil}[1]{{\color{blue} [T: #1]}}
    \newcommand{\jiaming}[1]{{\color{green} [J: #1]}}
	\newcommand{\xn}[1]{{\color{orange}[XN: #1]}}	
    \newcommand{\nn}[1]{{\color{orange}#1}}	
\else
    \newcommand{\tselil}[1]{}
    \newcommand{\jiaming}[1]{}
    \newcommand{\xn}[1]{}
    \newcommand{\nn}[1]{}	
\fi
\include{macros}
\usepackage{style}

\begin{document}

\pgfdeclarelayer{background}
\pgfdeclarelayer{foreground}
\pgfsetlayers{background,main,foreground}

\title{High-Dimensional Procrustes Matching via Tree Counts}

 \author{Xiaochun (Nora) Niu \and Tselil Schramm \and Jiaming Xu\thanks{
X.\ Niu and J.\ Xu are with The Fuqua School of Business, Duke University, Durham NC, USA, \texttt{\{xiaochun.niu,jx77\}@duke.edu}.
 T.\ Schramm is  with the Department of Statistics, Stanford University, Stanford, CA, USA, 
 \texttt{tselil@stanford.edu}.
 }}
 \date{}
\maketitle

\begin{abstract}
Suppose we observe two sets of $n$ Gaussian vectors in $\R^d$, with the promise that, after applying a permutation of $[n]$ and a rotation of $\R^d$, the two sets are $\rho$-correlated.
The Procrustes matching problem asks us to recover the unknown permutation of $[n]$ that aligns the two sets. 
The problem is well-studied in the low-dimensional regime $d=O(\log n)$, but the high-dimensional regime $d\gg \log n$ has remained largely uncharted: prior matching guarantees require nearly perfect correlation $\rho=1-o(1)$, even for information-theoretic recovery.

Our main result is a polynomial-time algorithm for exact recovery at constant correlation.
The algorithm works by computing and comparing weighted counts of a specially chosen family of ``wide'' trees. 
So long as $d\ge \polylog(n)$, the algorithm succeeds with high probability for any $\rho^2>\sqrt{\alpha}$, where $\alpha\approx 0.338$ is Otter's tree-counting constant. 

We complement this algorithmic result with an improved information-theoretic guarantee, showing that exact recovery is possible when
$\rho^2 \gtrsim \max\{\log n/d,\sqrt{\log n/n}\}$.
We also carry out a low-degree advantage calculation, which suggests that the condition $\rho^2 > \sqrt{\alpha}$ is necessary for any tree-counting algorithm.
\end{abstract}
\clearpage

\tableofcontents
\clearpage

\input{db-align}

\section*{Acknowledgment}
J.~Xu would like to thank Yihong Wu and  Cheng Mao for inspiring discussions on efficient algorithms for matching random geometric graphs. 
T.~Schramm would like to thank Jerry Li for suggesting the reference \cite{collins2017weingarten} for Weingarten calculus.
J.~Xu is supported in part by an NSF CAREER award CCF-2144593.
T.~Schramm is supported in part by NSF CAREER award CCF-2143246.
This research began as a result of conversations at the IDEAL Institute Workshop on Learning in Networks in 2024.
Part of this research was performed while X.~Niu and J.~Xu were visiting Simons Laufer Mathematical Sciences Institute in Berkeley, California, during the Spring 2025 semester.

\bibliography{matching}
\bibliographystyle{alpha}
\end{document}

%% file: macros.tex
\usepackage{amsmath,amsthm,amsfonts,amssymb}
\usepackage{graphicx,enumerate}
\graphicspath{ {./figures/} }
\usepackage[title]{appendix}

\usepackage[
            CJKbookmarks=true,
            bookmarksnumbered=true,
            bookmarksopen=true,
            colorlinks=true,
            citecolor=red,
            linkcolor=blue,
            anchorcolor=red,
            urlcolor=blue
            ]{hyperref}

\usepackage[nameinlink]{cleveref}
\crefname{algocf}{algorithm}{algorithms}

\newcommand{\E}{\mathbb{E}}

\newcommand{\inner}[1]{\langle #1 \rangle}
\newcommand{\Bin}{\mathrm{Bin}}

\usepackage{tikz}
\usepackage{tikz-qtree}

\usepackage{tikz-cd}

\usetikzlibrary{calc,arrows,cd}
\usetikzlibrary{decorations.pathreplacing,decorations.markings}

\tikzstyle{dot}=[circle,fill,black,inner sep=1pt]

\tikzset{
  on each segment/.style={
    decorate,
    decoration={
      show path construction,
      moveto code={},
      lineto code={
        \path [#1]
        (\tikzinputsegmentfirst) -- (\tikzinputsegmentlast);
      },
      curveto code={
        \path [#1] (\tikzinputsegmentfirst)
        .. controls
        (\tikzinputsegmentsupporta) and (\tikzinputsegmentsupportb)
        ..
        (\tikzinputsegmentlast);
      },
      closepath code={
        \path [#1]
        (\tikzinputsegmentfirst) -- (\tikzinputsegmentlast);
      },
    },
  },
  mid arrow/.style={postaction={decorate,decoration={
        markings,
        mark=at position .5 with {\arrow[#1]{stealth}}
      }}},
  early arrow/.style={postaction={decorate,decoration={
        markings,
        mark=at position .2 with {\arrow[#1]{stealth}}
      }}},
}

\tikzstyle{int}=[draw, fill=blue!15, minimum size=2em]
\tikzstyle{init} = [pin edge={to-,thin,black}]

\def\alternatecolorred{%
    \pgfkeysalso{red}%
    \global\let\alternatecolor\alternatecolorblue 
}
\def\alternatecolorblue{%
    \pgfkeysalso{blue}%
    \global\let\alternatecolor\alternatecolorred 
}

\newcommand{\altred}{\let\alternatecolor\alternatecolorred 
\tikzset{every edge/.append code = {%
    \global\let\currenttarget\tikztotarget 
    \pgfkeysalso{append after command={(\currenttarget)}}
			\alternatecolor
}}
}
\newcommand{\altblue}{\let\alternatecolor\alternatecolorblue 
\tikzset{every edge/.append code = {%
    \global\let\currenttarget\tikztotarget 
    \pgfkeysalso{append after command={(\currenttarget)}}
			\alternatecolor
}}
}

\tikzstyle{vertexdot}=[circle, draw, fill=black, minimum size=3,inner sep=0pt]

\theoremstyle{theorem}
\newtheorem{theorem}{Theorem}[section]
\newtheorem{lemma}{Lemma}[section]
\newtheorem{proposition}{Proposition}[section]

\theoremstyle{definition}
\newtheorem{definition}{Definition}[section]

\theoremstyle{remark}

\usepackage{color}
\usepackage[linesnumbered,ruled,vlined]{algorithm2e}
\usepackage{xspace}

\newcommand{\reals}{{\mathbb{R}}}

\newcommand{\expect}{\mathbb{E}}
\newcommand{\Prob}{\mathbb{P}}

\newcommand{\prob}[1]{ \mathbb{P}\left\{ #1 \right\} }

\newcommand{\var}{\mathrm{Var}}
\newcommand{\cov}{\mathrm{Cov}}

\newcommand{\ie}{i.e.\xspace}
\newcommand{\iid}{i.i.d.\xspace}

\newcommand{\norm}[1]{\left\|{#1} \right\|}

\newcommand{\lnorm}[2]{\left\|{#1} \right\|_{{#2}}}

\newcommand{\Fnorm}[1]{\lnorm{#1}{\rm F}}
\newcommand{\fnorm}[1]{\|#1\|_{\rm F}}

\newcommand{\iprod}[2]{\left \langle #1, #2 \right\rangle}
\newcommand{\Iprod}[2]{\langle #1, #2 \rangle}
\newcommand{\indc}[1]{{\mathbf{1}_{\left\{{#1}\right\}}}}
\newcommand{\Indc}{\mathbf{1}}
\newcommand{\Ind}{\mathbf{1}}
\newcommand{\diag}{\mathsf{diag}}

\newcommand{\calC}{{\mathcal{C}}}
\newcommand{\calD}{{\mathcal{D}}}

\newcommand{\calF}{{\mathcal{F}}}
\newcommand{\calG}{{\mathcal{G}}}

\newcommand{\calT}{{\mathcal{T}}}

\newcommand{\ttm}{{\mathbf{m}}}

\DeclareMathAlphabet{\varmathbb}{U}{bbold}{m}{n}
\newcommand{\id}{\textit{id}}

\renewcommand{\hat}{\widehat}

\usepackage{bbm}

\newcommand{\R}{\mathbb{R}}

\newcommand{\N}{\mathbb{N}}

\DeclareMathOperator{\Tr}{Tr}

\newcommand{\vecc}{\mathsf{vec}}

\newcommand{\fS}{\mathfrak{S}} 

\newcommand{\ER}{Erd\H{o}s-R\'{e}nyi\xspace}

\newcommand{\poly}{\mathrm{poly}}
\newcommand{\Cat}{\mathrm{Catalan}}

\def\##1\#{\begin{align}#1\end{align}}
\def\$#1\${\begin{align*}#1\end{align*}}

\newcommand{\bc}{\mathbf{c}}

\newcommand{\ttmbi}{\ttm_{\mathrm{bi}}}
\newcommand{\defeq}{\triangleq}
\newcommand{\similarity}{\mathrm{score}}
\newcommand{\Label}{\mathrm{label}}

\newcommand{\ovd}{o}
\newcommand{\xr}{x}

%% file: db-align.tex
\section{Introduction}
Aligning high-dimensional datasets is a fundamental problem in modern statistics and machine learning. It arises in tasks such as aligning word embeddings across languages \cite{lample2018word}, matching entities across networks \cite{feydeep}, integrating biological datasets across experiments \cite{stuart2019comprehensive}, and registering point clouds or shapes in computer vision \cite{zeng20173dmatch}. In these settings, two datasets may encode the same underlying objects, but their correspondence is obscured by measurement noise, relabeling, and  latent geometric transformations. A central question is therefore whether the unknown correspondence can be recovered efficiently, how to design algorithms achieving such recovery, and how the recovery guarantees depend on the ambient dimension and signal strength.

We study an idealized version of this question called the \emph{Procrustes matching} problem: we observe two datasets $\{X_i\}_{i=1}^n, \{Y_i\}_{i=1}^n \subset \mathbb{R}^d$ of correlated Gaussian random vectors whose correspondence is obscured by an unknown permutation and an unknown orthogonal transformation. Specifically, for each data point $i\in[n]$,
\begin{align}\label{eq:pm-model}
X_i &\overset{\text{i.i.d.}}{\sim} \mathcal{N}(0, I_d), \notag\\
Y_i &= \rho Q X_{\pi(i)} + \sqrt{1-\rho^2} Z_i,
\end{align}
where $\pi$ is an unknown permutation drawn uniformly at random from the set of all permutations on $[n]$, $Q \sim \mathcal{O}_d$ is an unknown orthogonal transformation drawn from the Haar measure on $d \times d$ orthogonal matrices, $\rho \in [0,1]$ is the correlation parameter, and $Z_i \overset{\mathrm{i.i.d.}}{\sim} \mathcal{N}(0, I_d)$ is independent noise. Given only the matrices $X = (X_1^\top; \cdots; X_n^\top)$ and $Y = (Y_1^\top; \cdots; Y_n^\top)$, our goal is to recover the hidden permutation $\pi$ and align the two datasets.

When the permutation $\pi$ is known, the Procrustes matching problem simplifies to the classical orthogonal Procrustes problem, which admits a closed-form solution via singular value decomposition~\cite{schonemann1966generalized}. On the other hand, when the orthogonal transformation $Q$ is known, recovering the hidden permutation becomes a linear assignment problem on correlated Gaussian observations, which can be solved in polynomial time, for example, in cubic time using the celebrated Hungarian algorithm~\cite{kuhn1955hungarian}. 
Recovery  thresholds for this linear assignment problem  have been characterized in a recent line of work~\cite{dai2018performance,dai2019database,kunisky2022strong, wang2022random}.

In the Procrustes matching problem \eqref{eq:pm-model}, both the permutation $\pi$ and the orthogonal transformation $Q$ are unknown. By a simple observation, Procrustes matching is equivalent to the \emph{geometric graph alignment} problem when one assumes a \emph{linear kernel}: there, only Gram matrices $A=XX^\top$ and $B=YY^\top$ are observed and treated as weighted adjacency matrices. Indeed, let $A^{1/2}\triangleq U\Lambda^{1/2}$ based on the eigenvalue decomposition $A=U \Lambda U^\top$ and similarly for $B^{1/2}$. Then $(A^{1/2}, B^{1/2}Q') \stackrel{\mathrm{d}}{=} (X,Y)$ for an independent $Q'\sim \cO_d$. Hence, statistical and algorithmic guarantees for one formulation translate directly to the other.

The Procrustes matching problem has been extensively studied (under a few different names) in the literature~\cite{rangarajan1997softassign,maron2016point,dym2017exact,grave2019unsupervised,wang2022random,gong2024umeyama,even2024aligning,li2025computational}. 
In the low-dimensional regime $d=o(\log n)$, \cite{wang2022random} established sharp information-theoretic thresholds for exact matching ($\rho^2 = 1 - o(n^{-4/d})$) and almost exact matching ($\rho^2 = 1 - o(n^{-2/d})$). Subsequently, \cite{gong2024umeyama} proved that a spectral method, namely the classical Umeyama algorithm~\cite{umeyama1988eigendecomposition}, achieves these thresholds up to polynomial factors in $d$, in polynomial time.  However, a substantial gap remains in our understanding of the  
high-dimensional regime $d \gg \log n$. The
state-of-the-art results in \cite{even2024aligning}, summarized in
\Cref{tab:recovery-guarantees}, show that almost exact recovery can be achieved
information-theoretically when the correlation $\rho$ tends to one.
Computationally, almost exact recovery was known to be achievable in polynomial
time only when the correlation 
$\rho=1-n^{-c}$ for some constant $c>0$. Whether matching can
be achieved efficiently, or even information-theoretically, under constant
correlation in the high-dimensional regime has remained a widely open problem.

In this work, we address this open problem, showing that exact matching is achievable in polynomial time under constant correlation when $d\ge\polylog(n)$. 

\begin{theorem}\label{thm:intro}
Suppose that $d\ge\polylog(n)$. There exists a polynomial-time estimator $\widehat\pi$ that correctly matches all the datapoints with high probability whenever
\#\label{eq:main-rho}
\rho^2 > \sqrt{\alpha},
\#
where $\alpha \approx 0.338$ is Otter's tree-counting constant~\cite{otter1948number}.
\end{theorem}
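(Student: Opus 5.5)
We will prove \Cref{thm:intro} through a seeded-free, signature-based matching scheme in the spirit of tree-counting algorithms for correlated \ER graph matching. We work with the Gram matrices $A=XX^\top$ and $B=YY^\top$, which are invariant under the unknown rotation $Q$. For a vertex $i$ and a rooted tree $T$ with $K$ edges, define the weighted count
\[
f_T^A(i)=\sum_{\varphi} \prod_{(u,v)\in E(T)} \frac{A_{\varphi(u)\varphi(v)}}{\sqrt d},
\]
where $\varphi$ ranges over injective embeddings of $T$ sending the root to $i$, and the diagonal is excluded so that each edge weight $A_{uv}/\sqrt d=\langle X_u,X_v\rangle/\sqrt d$ is approximately standard Gaussian. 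We define $f_T^B(j)$ in the same way. We use $K\asymp \log n$ edges and restrict to a family $\calT$ of \emph{wide} trees, meaning trees of bounded depth (for instance depth two) with many leaves attached to each internal vertex. The point of width is that for depth-bounded trees the edge products $\prod\langle X_u,X_v\rangle$ over a tree decouple well: conditioned on the internal vertices, the leaf sums become nearly independent Gaussians. This keeps the rotation-induced correlations among distinct edges under control, as long as $d\ge\polylog(n)$. The score is
\[
\similarity(i,j)=\sum_{T\in\calT} \frac{\overline{f_T^A(i)}\,\overline{f_T^B(j)}}{\mathrm{norm}_T},
\]
where bars denote centering. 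We output $\widehat\pi(j)=\arg\max_i \similarity(i,j)$, or equivalently use the scores to produce a near-permutation and then clean up.

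The analysis proceeds by a first- and second-moment argument.

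\textbf{Step 1: the mean.} For $i=\pi(j)$ we compute $\E[\similarity]$. Since $\E[A_{uv}B_{\pi^{-1}(u)\pi^{-1}(v)}]\approx \rho^2 d$ after the rotation is absorbed, each aligned tree contributes about $\rho^{2K}$ times its automorphism normalization. The mean is therefore of order $|\calT|\rho^{2K}$.

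\textbf{Step 2: the variance.} For both the true pair and false pairs, we bound the variance by expanding into pairs of embedded trees whose union is a graph $H$. The dominant contribution comes from pairs in which the two trees coincide as unlabeled shapes, giving $\approx |\calT|$. Overlapping and non-isomorphic unions contribute lower-order terms, provided the family is sufficiently rich and the error terms from $d$ are controlled.

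\textbf{Step 3: the threshold.} The signal-to-noise ratio is $\approx |\calT|\rho^{4K}$. Otter's theorem gives $|\calT|\approx \alpha^{-K}$ for unlabeled trees, and the wide subfamily must retain this exponential growth rate. The ratio then exceeds $n^{c}$ exactly when $\rho^4>\alpha$, that is, $\rho^2>\sqrt\alpha$. This ratio is what a union bound over $n^2$ pairs requires, via Chebyshev at the level of $n^{-2}$, or better via hypercontractivity and higher moments of these low-degree polynomials.

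\textbf{Step 4: from scores to exact recovery.} If the second moment only yields almost exact recovery, we boost it. Starting from the many correctly matched points, we estimate $Q$ by orthogonal Procrustes via SVD. We then solve a linear assignment problem with the estimated rotation, which succeeds at constant $\rho$ once $d\gg\log n$ by the known linear-assignment thresholds. Finally we argue that a few wrong seeds do not spoil the estimate.

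\textbf{Main obstacle.} The hardest step is Step 2, for two reasons. The Gaussian edge weights $A_{uv}$ are \emph{not} independent: they share vectors and satisfy geometric constraints, so cycles in $H$ that are absent in \ER models contribute terms like $d^{-1/2}$ per excess edge. We must show that wide trees make these terms negligible when $d\ge\polylog(n)$. The second difficulty is finding a subfamily of wide trees that still has growth rate $\alpha^{-1}$ while admitting clean decorrelation. Our first attempt would be to take trees consisting of a spine or root with subtrees of size $O(1)$ or $\polylog$, chosen so that the fraction of all trees obtained is $\alpha^{-K(1-o(1))}$. We would then handle the union-graph combinatorics by an encoding argument charging each excess edge a factor $\polylog(n)/\sqrt d$.
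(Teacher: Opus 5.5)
\textbf{There is a genuine gap, and it is in the foundation: you count trees in the Gram matrices $A=XX^\top$, $B=YY^\top$.} Rotation invariance is bought at the price of low rank. Since $\sum_v X_vX_v^\top\approx nI_d$, one has $\E[\sum_{v}A_{uv}A_{vw}\mid X_u,X_w]=(n-2)A_{uw}$. In the expansion $A_{uv}A_{vw}=\sum_{a,b}X_{ua}X_{va}X_{vb}X_{wb}$, the terms with $a=b$ produce $X_{va}^2$, which makes the sum over a degree-two internal node $v$ coherent. This degenerate component exceeds the genuine fluctuating part by a factor of order $\sqrt{n/d}$. For example, the rooted count of the path $i-v-w$ is essentially $n\langle X_i,\sum_w X_w\rangle$, i.e.\ $n$ times the degree statistic.

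Neither injectivity of $\varphi$ nor centering removes this. It is also not a ``cycle'' effect costing $d^{-1/2}$ per excess edge: it gains a full factor $n$ from a free $n$-label. So for $d=\mathrm{polylog}(n)\ll n$, your Step 2 claim that coinciding shapes dominate the variance is false. Counts of different trees collapse onto common lower-order statistics, and the $|\calT|$ gain in signal-to-noise is lost. Even false-pair means stop vanishing once two root edges can pair up through a shared coordinate.

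The missing idea is to abandon $A,B$ and count bipartite trees directly in $X,Y$, requiring every $d$-node to have degree exactly two with distinct $d$-labels. This excludes precisely these collapses. Single edges then carry no correlation because of $Q$, but edge pairs through a common $d$-node do: $\E[X_{ua}X_{va}Y_{ub}Y_{vb}]=\rho^2\E[Q_{ab}^2]=\rho^2/d$. The joint moments are then computed by Wick's theorem plus Weingarten calculus via alternating circuit decompositions.

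\textbf{Your tree family also fails.} Rooted trees of depth at most two with $K$ edges are in bijection with partitions of $K$, so there are only $e^{O(\sqrt K)}$ of them. Any bounded-depth family, or a root with $O(1)$-size subtrees, has $e^{o(K)}$ shapes. Then $|\calT|\rho^{4K}\to 0$ for every $\rho<1$, and Step 3 breaks.

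The paper's trees are wide in a different sense. The root has degree $D=c_1K/\log K$ and carries pairwise non-isomorphic branches of $M=\Theta(\log K)$ edges. These come from a family $\mathcal{J}$ of size $(\alpha+o(1))^{-M}$ with at most $e^{CM}$ automorphisms, so $|\calT|=\binom{|\mathcal{J}|}{D}\ge(\alpha+\epsilon/2)^{-K}$. Width is needed not for decoupling but to suppress a configuration your sketch never sees. There, the root-adjacent $d$-nodes of $T_1,S_1$ coincide, and likewise for $T_2,S_2$. This contributes $(\rho^2 d)^{-D}$ relative to the squared mean, which is $o(1/n)$ only if $D\gtrsim \log n/\log\log n$ when $d=\mathrm{polylog}(n)$.

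\textbf{Two further omissions.} First, polynomial running time requires color coding, since enumerating embeddings of trees with $\Theta(\log n)$ edges costs $n^{\Theta(\log n)}$. Second, your Procrustes boost needs $d\ll n$ and control of data-dependent seeds. The paper instead compares $\langle x_i,x_r\rangle$ with $\langle y_j,y_{\pi(r)}\rangle$ over seeds, which works uniformly over all seed sets of size at least $n-n/\log^2 n$ without estimating $Q$.
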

We complement these algorithmic results with improved information-theoretic guarantees that allow vanishing correlations, matching the impossibility result of \cite{wang2022random} up to constant factors when $d\ll \sqrt{n\log n}$. \Cref{tab:recovery-guarantees} summarizes the known statistical and computational guarantees in the high-dimensional regime.

\begin{table}[htbp]
\centering
\addtolength{\tabcolsep}{8pt} 
\renewcommand{\arraystretch}{1.5} 
\begin{tabular}{lll}
\toprule
& Information-theoretic & Computational \\
\midrule \relax 
\cite{even2024aligning} & $\rho^2 \ge 1 - o(1)$ & $\rho^2 \geq 1 - n^{-c}$ \\
This work & $\rho^2 \gtrsim \max \bigg\{ \dfrac{\log n}{d}, \sqrt{\dfrac{\log n}{n}} \bigg\}$ & $\rho^2 > \sqrt{\alpha}$ \\
\bottomrule
\end{tabular}
\caption{Comparison of sufficient conditions for matching when $d \ge \poly( \log n)$. Our results are for exact matching, whereas the results in \cite{even2024aligning} are for the weaker notion of almost exact matching and further require $d \ll n$. 
}
\label{tab:recovery-guarantees}
\end{table}

Otter's constant appears in \eqref{eq:main-rho} because our algorithm is based on  counting trees. One may compare to the problem of matching correlated \ER graphs~\cite{mao2022random}, where tree-counting based estimators achieve efficient recovery once the graph correlation exceeds $\sqrt{\alpha}$. 
The Wishart matrices $XX^\top$ and $YY^\top$ in our setting behave like correlated Gaussian matrices with correlation $\rho^2$ when $d$ is large.\footnote{The total variation distance between the distribution of the off-diagonals of the Wishart matrix $XX^\top$ and their iid Gaussian counterpart converges to zero if and only if 
$d \gg n^3$~\cite{jiang2015approximation,bubeck2018entropic}; our results cover significantly smaller $d$ as well, but we still find the analogy useful.
} From this perspective, we may view $\rho^2>\sqrt{\alpha}$ as the Wishart analogue of the condition for correlated \ER graph matching. Despite this connection, our setting is fundamentally different from correlated \ER graph matching, as we discuss below.

\subsection{Key Challenges and New Ideas}

The key defining feature of the Procrustes matching problem \eqref{eq:pm-model} is the presence of an unknown rotation matrix $Q$ in addition to the unknown matching $\pi.$  Most existing algorithmic approaches circumvent the unknown rotation $Q$ by matching through  Gram matrices $A=XX^\top$ and $B=YY^\top$. 
These  include, for instance, spectral methods such as Umeyama~\cite{umeyama1988eigendecomposition} or GRAMPA~\cite{FMWX19a}, which align the eigenvectors of $A$ and $B$~\cite{wang2022random,gong2024umeyama}, as well as convex relaxations of the quadratic assignment problem $\min_{\Pi}\fnorm{\Pi A\Pi^\top - B}^2$~\cite{grave2019unsupervised,even2024aligning}. However, existing algorithms either work only in the low-dimensional regime $d=O(\log n)$ or require nearly perfect correlation $\rho = 1- n^{-c}$ for some constant $c>0$.

Motivated by the recent advances in graph matching~\cite{barak2018nearly,mao2022random,chai2024efficient,chen2026detecting}, a promising approach is to construct similarity scores by counting a certain family of subgraphs in weighted graphs defined by $A$ and $B$. 
The success of this approach crucially relies on the counts of different subgraphs being only weakly correlated, so that they can be aggregated into robust signatures capable of withstanding small constant correlation. However, when  $A=XX^\top$ and $B=YY^\top$ are low-rank, there is excessive correlation among different subgraph counts. To see this, consider the $k$-walk count rooted at vertex $i$, that is, $e_i^\top A^k \bf{1},$ for different lengths $k$.
When $d \ll n$, all $d$ nonzero eigenvalues of $A$, denoted by $\lambda_\ell$'s,  concentrate around $n$ and moreover $\lambda_\ell^k \approx n^k$ when  $k \ll \sqrt{n/d}$. Therefore, 
$e_i^\top A^k \mathbf{1} \approx n^{k-1} e_i^\top A \mathbf{1}$.\footnote{More precisely, by standard matrix concentration inequality, $\lambda_\ell = n ( 1+ \delta_\ell)$
with $\delta_\ell=O_p(\sqrt{d/n})$, so that $\lambda_\ell^k = n^k (1+ O(k\delta_i)) $. Therefore, by letting $c_\ell=\iprod{e_i}{u_\ell}\iprod{u_\ell}{\mathbf{1}}$, we have
$e_i^\top A^k \mathbf{1} = n^k \sum_{\ell=1}^d (1+ O(k\delta_i)) c_\ell = n^k  
\left( \sum_{\ell=1}^d c_\ell + O_p(k d/n) \right),$ where the last equality follows because $c_\ell$'s are approximately \iid $N(0,1/n)$.}  In other words, the $k$-walk counts essentially collapse to degree counts up to scaling, and are therefore highly correlated across different $k.$

This motivates us to depart from this conventional Gram-matrix paradigm and match directly using the data matrices $X$ and $Y$. A major difficulty immediately arises: unlike in correlated random graph matching, correctly matched edges do not themselves induce positive correlation. Indeed, if a single edge $(\pi(u),a)$ in $X$ is matched to the edge $(u,b)$ in $Y$, even though the endpoint $u$  has been correctly matched, we still have $\expect{X_{\pi(u)a}Y_{ub}}=\rho\expect{Q_{ab}}=0$ because of the random rotation. 
The key observation, however, is that correlation reappears when pairs of edges sharing a node in $[d]$ are matched together. Namely, matching a pair of edges $(\pi(u),a),(\pi(v),a)$ in $X$ with another pair of edges $(u,b),(v,b)$ in $Y$ yields
$
\expect{X_{\pi(u)a}X_{\pi(v)a}Y_{ub}Y_{vb}}
=
\rho^2\expect{Q_{ab}^2}
=
\rho^2/d.$ 
This allows us to extract the desired correlation from corresponding subgraph counts in $X$ and $Y$, despite the presence of the unknown rotation $Q$. 
We therefore construct signature vectors by counting carefully chosen families of rooted subgraphs in the weighted bipartite graphs defined by $X$ and $Y$, designed so that edge pairs can be matched across the two graphs. We then define similarity scores through suitably weighted inner products of these signatures and recover the matching based on these scores.

The main analytical difficulty, however, is that the precise correlation structure among these subgraph counts depends on higher-order moments of the rotation matrix $Q$. To control these correlations, we use Weingarten calculus~\cite{matsumoto2013weingarten,collins2017weingarten} to evaluate and bound the correlations. This requires a delicate analysis of circuit decompositions in unions of subgraph copies. In addition, to simplify the correlation structure and suppress undesired correlations among distinct subgraph counts, we crucially constrain the degree of every node in $[d]$ to be exactly two.\footnote{The degree-$2$ constraint can be understood from the perspective of decorrelating walk counts. 
Indeed, the excessive correlations among walk counts based on the Gram matrix $A = XX^\top$ arise largely from walks that repeatedly revisit the same $d$-node, thereby creating high-degree nodes. By imposing the degree-$2$ constraint on $d$-nodes, we eliminate precisely these highly correlated contributions.}

Finally, our signature is comprised of counts of a specific family of ``wide'' trees, consisting of 
many non-isomorphic small branches sprouting from a high-degree common root. This ``wide'' structure plays a crucial role in controlling the undesired correlations among different tree counts. Moreover, despite having size $\Theta(\log n)$, these trees can be counted sufficiently accurately in polynomial time using the color-coding method~\cite{alon1995color,alon2008biomolecular}. 
A number of recent works in high-dimensional statistics have used a combination of color-coding and tree-counting to push algorithmic limits, for example in stochastic block models \cite{hopkins2017efficient,chai2024efficient,chen2026detecting} and matching correlated \ER random graphs~\cite{mao2021exact}. 
Each setting calls for a tailored tree structure. 
We draw inspiration from these works. 
In our setting, the bipartite graphs $X, Y$ are highly imbalanced in the regime $d=\polylog(n) \ll n$, which necessitates the use of ``wide'' trees with very large root degrees, whereas the chandelier trees in
\cite{mao2022random}, for instance, are ``thin'' and ``long''.

\subsection{Notation and Organization} We write $[n]=\{1,\ldots,n\}$ and let $\fS_n$ denote the set of permutations of $[n]$. For graphs $G$ and $H$, we write $G\cong H$ if $G$ and $H$ are isomorphic. For a bipartite graph $G$ on $[n]\times[d]$, we write $V_n(G)$ and $V_d(G)$ for the vertices of $G$ in $[n]$ and $[d]$, respectively. In all multigraphs, parallel edges incident to the same pair of vertices are treated as distinct. Unless otherwise specified, the union of two multigraphs retains all parallel edges.

The rest of the paper is organized as follows. \Cref{sect:main} presents our tree-counting matching algorithm, its performance guarantees, and the information-theoretic recovery guarantees. \Cref{sect:alg-guarantee} establishes the statistical guarantees of similarity scores derived from the tree counts. \Cref{sect:join-mom} derives the joint moment formula via Weingarten calculus. \Cref{sect:mean,sect:var} then prove the mean separation and variance bounds stated in \Cref{sect:alg-guarantee}. \Cref{sect:approx} shows how to approximately compute the similarity scores in polynomial time via color coding. \Cref{sect:exact} boosts the almost exact recovery to exact recovery using a seeded matching algorithm. \Cref{sect:qap} establishes the information-theoretic recovery guarantees by analyzing the quadratic assignment estimator. Finally, \Cref{sect:low-deg} applies the low-degree polynomial framework to justify our focus on the particular tree design.
\section{Main Results}\label{sect:main}
In this section, we introduce our matching algorithm and present its main performance guarantees.

\subsection{Matching Algorithm}
Our algorithmic approach is as follows: we assign a vector-valued polynomial signature to each datapoint, compute similarity scores between signatures of datapoints in $X$ and $Y$, and then match datapoints whose similarity scores are sufficiently large.

We first define the polynomial features used in the signatures. We represent monomials in the entries of $X$ using bipartite multigraphs. Let $G = (V(G), E(G))$ be a bipartite multigraph with vertex set $V(G) \subset [n]\cup [d]$ and edge multiset $E(G)\subset [n]\times [d]$. We define the monomial
\[
X^G \defeq \prod_{(u,a) \in E(G)} X_{ua}.
\]
Let $G$ be an unlabeled bipartite graph with one special node that we designate as the ``root.'' 
For $i\in[n]$, we consider all bipartite graphs rooted at $i$ and isomorphic to $G$, and define the polynomial
\#\label{eq:label-t-x-i}
\Label_{G}(X,i) = \sum_{\substack{G_1 \cong G\\ \mathrm{root}(G_1) = i}} X^{G_1}.
\#
For example, when $G$ is a single edge rooted at a node in $[n]$, then $\Label_G(X,i)$ is the $i$th row sum.

Any family $\calG$ of rooted bipartite graphs assigns a signature vector to each datapoint:
\$
&\mathrm{signature}(X_i) = (\Label_G(X,i)\colon G \in \calG),
\$
and similarly for $Y_j$.
We assign a similarity score to each pair of datapoints using the inner product of their signatures:
\#\label{def:score-ij}
\similarity_{ij}^{\cG} = \Iprod{\mathrm{signature}(X_i)}{\mathrm{signature}(Y_j)}_{\calG} = \sum_{G \in \calG} \aut(G)\cdot \Label_G(X,i) \cdot \Label_G(Y,j),
\#
where $\aut(G)$ denotes the number of root-fixing automorphisms of $G$. 
The matching rule is then straightforward: we match a datapoint from $X$ with a datapoint from $Y$ whenever their similarity score is sufficiently large. The procedure is summarized as \Cref{alg:main}.

\vspace{8pt}
\begin{algorithm}[H]
\DontPrintSemicolon  
  \KwIn{Data matrices $X$ and $Y$, a family of rooted bipartite graphs $\cG$, and a threshold $\tau$.}
  \KwOut{$\widehat\pi\colon I\to [n]$.}
  
  \For{each $G\in\cG$ and each $i\in [n]$}{Compute $\Label_G(X, i)$ and $\Label_G(Y, i)$ as given by \eqref{eq:label-t-x-i}.}

  \For{each pair $(i,j)\in[n]\times [n]$}{Compute the similarity score between $X_i$ and $Y_j$ as given by \eqref{def:score-ij}.\;}
  \For{each $i \in [n]$}{If there exists a unique $j \in [n]$ such that $\similarity_{ij}
\ge \tau$, let $\widehat{\pi}(i) = j$ and include $i$ in set $I$.\;}
\caption{Procrustes matching via counting trees}
\label{alg:main}
\end{algorithm}
\vspace{8pt}

The performance of this algorithm depends critically on the choice of the graph family $\cG$.
We next describe our choice of $\cG$.

\subsection{Wide Trees}

Our family $\cG$ will consist of rooted, bipartite trees in which every node in $[d]$ has degree $2$. Furthermore, we will require that the trees be ``wide,'' in the sense that the degree of the root is quite large relative to the size of the tree.
We impose these constraints because of computational and analytical considerations, as we now explain.

The construction of the graph family is guided by a few competing requirements: first, the signatures should be (approximately) computable in polynomial time.
This is the requirement that leads us to consider trees; indeed, if $G$ is a tree, the \emph{color-coding} technique~\cite{alon1995color,alon2008biomolecular} can be used to compute $\Label_G(X,i)$ more efficiently.

The other two requirements are statistical: the family must be sufficiently rich to create a large mean separation between true ($j=\pi(i)$) and false $(j\neq \pi(i))$ pairs, and the variance of the similarity scores should be well controlled.
To ensure that these conditions hold, it will be useful to consider wide trees whose nodes in $[d]$ have bounded degree.

Each tree in our family will have a root with many incident subtrees or ``branches'' of equal size, chosen from the following \emph{branch family} $\cJ$. We recall a classical result in combinatorics: the number of unlabeled trees with $k$ edges is $(\alpha + o(1))^{-k}$ as $k \to \infty$, where $\alpha \approx 0.338$ is Otter's constant~\cite{otter1948number}. Moreover, \cite{olsson2022automorphisms} shows that most such trees have at most exponentially many automorphisms: there exists an absolute constant $C$ such that the number of trees with $k$ edges and at most $\exp(Ck)$ automorphisms is still $(\alpha + o(1))^{-k}$. Fix such a constant $C$.

\begin{definition}[Branch family $\cJ$]\label{def:j} 
Each $J\in\cJ$ is a rooted bipartite simple tree with nodes in $[n]$ (the $n$-nodes) and $[d]$ (the $d$-nodes).
We begin with a rooted tree on $n$-nodes with $M$ edges whose root has degree one and whose automorphism group has size at most $\exp(CM)$, where $C$ is the constant fixed above.  
We then subdivide each edge $(u,v)$ of the tree into two edges, $(u,a)$ and $(v,a)$, by inserting a $d$-node $a$ between $u$ and $v$. 
\end{definition}

By construction, each $J$ contains $2M$ edges, and
\[
|\cJ| = (\alpha + o(1))^{-M}.
\]
The trees in $\cJ$ will serve as the branches of our final tree family.
\begin{definition}[Tree family $\cT$]\label{def:t}
Each tree $T \in \cT$ is constructed by taking $D$ pairwise non-isomorphic branches from $\cJ$ and merging their roots into a single common root.     
\end{definition}
Throughout the paper, we write $\similarity_{ij}$ for $ \similarity_{ij}^\cT$, the similarity score in \eqref{def:score-ij} under $\cT$.
Recall that $D$ is the root degree and $2M$ is the number of edges per branch. 
Let $K = D M$. 
Then each $T$ has $2K$ edges, $K+1$ $n$-nodes (including the root), and $K$ $d$-nodes, each of degree $2$. 
\Cref{fig:tree-plot} illustrates an example of such a tree. 

\input{fig/tree-t}

We will later choose the parameters $K$ and $D$ carefully to meet our statistical requirements, taking $K=\Theta(\log n)$ and $D= \Theta(K/\log K)$. 
The resulting trees are wide, with large root degree and small branch size ($M = \Theta(\log K)$). 
These choices will be explained in \Cref{sect:main-intui}.

\subsection{Statistical Properties of Our Similarity Scores}\label{sect:main-intui}

Here we elaborate further on the statistical properties of our similarity scores. 
To keep notation manageable, assume from here on out (without loss of generality) that $\pi =\id$ is the identity permutation.
We also at times neglect to account for the fact that the $n$-node and $d$-node labels within each labeled $T$ are distinct, leaving that accounting for the formal proofs.

\paragraph{Mean separation.} We first study the mean of $\similarity_{ij}$ for true and false pairs. By definition,
\$ 
\E\left[\similarity_{ij}\right] &= \sum_{T \in \cT} \aut(T) \cdot  \E\left[\Label_{T}(X,i)\cdot \Label_{T}(Y,j)\right]
= \sum_{T \in \cT} \aut(T) 
\sum_{\substack{T_1,T_2 \cong T\\\mathrm{root}(T_1) = i\\ \mathrm{root}(T_2) = j}}\E\left[X^{T_1} Y^{T_2}\right]. 
\$
Thus, the problem reduces to understanding the joint moments $\E[X^{T_1} Y^{T_2}]$. 
Intuitively, these joint moments are largest when the $n$-nodes of $T_1$ and $T_2$ are labeled identically.
Here we will only briefly explain, deferring a full analysis to \Cref{sect:join-mom}.

Recall that the trees $T \in \cT$ were constructed by first choosing a tree on the $n$-nodes, then subdividing each edge by inserting a $d$-node. 
Hence each $d$-node in $T$ induces a natural ``pair'' of edges.
Consider the special case where $G$ is one such ``pair'' of edges, so $E(G) = \{(u,a),(v,a)\}$, and let $G_1$ be a labeling of $G$ in $X$ and $G_2$ a labeling of $G$ in $Y$.
A straightforward calculation shows that
\begin{align}
\E\left[X^{G_1}Y^{G_2}\right] = \begin{cases}
    \rho^2/d, & \text{ if } V_{n}(G_1) = V_{n}(G_2),\\
    0, & \text{ otherwise}.
\end{cases}
\label{eq:edge_pair_correlation}
\end{align}
Thus, the moment is zero unless $G_1$ and $G_2$ induce the same edge after their $d$-nodes are contracted.

When $i$ and $j$ form a true pair, we can lower bound $\E[\similarity_{ii}]$ by considering terms $\E[X^{T_1}Y^{T_2}]$ for which \emph{every $n$-node} in $T_1$ has the same label as the corresponding $n$-node in $T_2$. In this configuration, every edge pair in $T_1$ is matched with its corresponding edge pair in $T_2$, as illustrated in \Cref{fig:mean-sep}.
Therefore, up to lower-order terms we have $\E[X^{T_1}Y^{T_2}] = (\E[X^{G_1}Y^{G_2}])^{|E(T)|/2} = (\rho^2/d)^{K}$, recalling that the tree has $2K$ edges. We now count the number of these dominant configurations. 
There are $|\cT|$ choices for the unlabeled tree, $n^K$ choices for the non-root $n$-nodes,
and $d^{2K}$ choices for the $d$-nodes in the two labeled trees. 
All other configurations contribute lower-order terms. Therefore, 
\$ 
\E[\similarity_{ii}] = (1+o(1))|\cT| n^K d^{2K} (\rho^2/d)^K  = (1+o(1))|\cT| n^K d^{K}\rho^{2K}.
\$
For false pairs where $i \neq j$, 
\[
\E[\similarity_{ij}] = 0.
\]
This is because the joint moment $\E[X^{T_1}Y^{T_2}]$ is only nonzero if the degree of $u$ in $T_1$ is equal to the degree of $u$ in $T_2$; since the root is the unique node of degree $D$ in $T$, $\E[X^{T_1}Y^{T_2}] = 0$ unless $i = \mathrm{root}(T_1) = \mathrm{root}(T_2) = j$.

\input{fig/mean-sep}

\paragraph{Variance bounds.} 
Given the separation in the mean of the similarity scores for true and false pairs, we prove concentration by bounding the variance.
To successfully argue that most of the $n$ true pairs can be distinguished from the $n(n-1)$ false pairs, we require  \#\label{eq:var-ratio-ij-req}
\frac{\var(\similarity_{ij})}{(\E[\similarity_{ii}])^2} 
= \begin{cases} 
o(1/n), & j \neq i,\\
o(1), & j = i.
\end{cases}
\#
In the following explanation, we focus on a false pair with $j\neq i$. Since $\E[\similarity_{ij}] = 0$, we have
\#\label{eq:var1}
\var(\similarity_{ij}) 
= \E[\similarity_{ij}^2] 
&= \sum_{T,S\in \calT}
\aut(T) \aut(S) \sum_{\substack{T_1, T_2 \cong T, S_1, S_2 \cong S\\ \mathrm{root}(T_1) = \mathrm{root}(S_1) = i\\ \mathrm{root}(T_2) = \mathrm{root}(S_2) = j}}
\E\left[X^{T_1} Y^{T_2} X^{S_1} Y^{S_2} \right].
\#
Thus, the variance is governed by joint moments of four trees $T_1,T_2,S_1,S_2$. 
The moments are governed by the degree to which these labeled trees' structure matches; there are two types of configurations whose contribution dominates variance, as illustrated in \Cref{fig:variance-illustrate}. 

\input{fig/var-2}

The first type of configuration dominates when $X$ and $Y$ are imbalanced, namely $d\ll n$, as illustrated in \Cref{fig:var-top}. Focus first on the four branches on the left. The first $d$-node in the branch of $T_1$ coincides with that in the branch of $S_1$ at $a$; similarly, the first $d$-node in the branch of $T_2$ coincides with that in the branch of $S_2$ at $b$. This creates two parallel edge pairs, one red and one blue, which contribute factors $\E[X_{ia}^2]=\E[Y_{jb}^2]=1$. Moreover, the red edge pair $((u,a),(v,a))$ is matched with the blue edge pair $((u,b),(v,b))$. Each red edge pair in the descendant subtree of this branch in $T_1$ can then be matched with a blue edge pair in the descendant subtree of the isomorphic branch in $T_2$, and the same holds for $S_1$ and $S_2$. 

Suppose that all branches are partitioned into groups of four branches arranged in this way. Then, except for the $2D$ parallel edges incident to the roots, the remaining $2K-D$ matched edge pairs together contribute a factor $(\rho^2/d)^{2K-D}$, up to lower-order terms, by~\eqref{eq:edge_pair_correlation}. The unlabeled trees $T$ and $S$ can be chosen independently, giving $|\cT|^2$ choices. There are $n^{2K}$ choices for the $n$-node labels. For the $d$-nodes, the root-adjacent $d$-nodes are shared by two $X$-trees or by two $Y$-trees, giving $d^{4K-2D}$ choices in total. Hence, after dividing this configuration's contribution to the variance by the squared mean, we obtain
\#\label{eq:wide-root} 
\frac{|\cT|^2 n^{2K} d^{4K-2D} (\rho^2/d)^{2K-D}}{(\E[\similarity_{ii}])^2}   
= \frac{1}{(\rho^2d)^{D}}. 
\# 
When $d$ is only $\polylog(n)$, to achieve \eqref{eq:var-ratio-ij-req}, the bound \eqref{eq:wide-root} forces 
\#\label{eq:wide-tree-2} 
D \ge C \log n/\log \log n,
\#
for a sufficiently large constant $C>0$.
That is, the trees must be wide.

The second dominant configuration occurs when $T_1 = S_1$ and $T_2 = S_2$ as labeled trees, as illustrated in \Cref{fig:var-bottom}. In this case, the interaction of $X$ and $Y$ is irrelevant; instead, pairs of parallel edges inside $X$ contribute factors of the form $\E[X_{ua}^2] = 1$ and pairs of parallel edges inside $Y$ contribute similarly. Since $T$ and $S$ must be isomorphic in this configuration, there are only $|\cT|$ such terms. 
There are $n^{2K}$ choices of the $n$-node labels and $d^{2K}$ choices of the $d$-node labels. 
Hence, the contribution of this term is of order $|\cT| n^{2K} d^{2K}$. Dividing by the mean square gives
\# \label{eq:match}
\frac{|\cT| n^{2K} d^{2K}}{(\E[\similarity_{ii}])^2}  = \frac{1}{|\cT|\rho^{4K}}.
\#
So we wish for $|\cT|$ as large as possible to upper bound this quantity by $o(1/n)$ as required in \eqref{eq:var-ratio-ij-req}.
In particular, we have 
\[
|\cT| = \binom{|\cJ|}{D} \approx
 \left(\frac{e\cdot (\alpha+o(1))^{-M}}{ D}\right)^D = (\alpha+o(1))^{-K} \exp(-D\log D + D).
\]
Now,
for each $\epsilon>0$, so long as $D \le c K /\log K$ for $c>0$ sufficiently small,
\[
|\cT| \ge (\alpha + \epsilon/2)^{-K}.
\]
Thus, by choosing 
\#\label{eq:wide-tree-1} 
\rho^4 \ge \alpha + \epsilon, \quad  \quad K = C_1\log n,
\quad \text{and} \quad D = c_1 K / \log K,
\#
for constants $C_1>0$ sufficiently large and $c_1>0$ sufficiently small, we satisfy both the wide tree constraint~\eqref{eq:wide-tree-2} and the requirement from \eqref{eq:var-ratio-ij-req} that \eqref{eq:match} is $o(1/n)$. This explains the parameter choices of $K$ and $D$ and the correlation condition $\rho^4 > \alpha$. We state these statistical guarantees formally in 
\Cref{sect:alg-guarantee}.

\subsection{Performance Guarantees}

We now state the main guarantee for our matching algorithm. 

\begin{theorem} \label{thm:algo}
Suppose that $\rho^4 \ge \alpha+\epsilon$, where $\epsilon$ is an arbitrarily small constant. There exist constants $C_1, C_2, c_1 >0$, depending only on $\epsilon$, such that the following holds. Let $\cT$ be the tree family defined in \Cref{def:t} with parameters specified in~\eqref{eq:wide-tree-1}. Let $\widehat{\pi}$ and $I$ denote the output of \Cref{alg:main} applied to $\cT$ with threshold $\tau= c|\cT|n^Kd^K\rho^{2K}$ for some constant $c \in (1/4, 3/4)$. If $d \ge (\log n)^{C_2}$, then with probability $1 - o(1)$, $\widehat{\pi} = \pi\,|\,_{I}$ and $|I| \ge n - n/\log^2 n$.
\end{theorem}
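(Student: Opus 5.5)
The plan is to reduce \Cref{thm:algo} to first- and second-moment bounds on the similarity scores and then apply Chebyshev's inequality together with union bounds. Take $\pi=\id$ without loss of generality, and set $\mu \defeq |\cT| n^K d^K \rho^{2K}$. The statistical guarantees promised for \Cref{sect:alg-guarantee} supply three inputs. First, the mean separation: $\E[\similarity_{ii}]=(1+o(1))\mu$, while $\E[\similarity_{ij}]=0$ for $j\neq i$ because the root is the unique $n$-node of degree $D$. Second, a variance bound for true pairs, $\var(\similarity_{ii})\le \delta_n \mu^2$. Third, a variance bound for false pairs, $\var(\similarity_{ij}) \le \delta_n \mu^2/n$ for $j\ne i$, with $\delta_n = (\log n)^{-C}$ for a suitably large constant $C$. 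These are the quantitative forms of \eqref{eq:var-ratio-ij-req}. They are obtained from the joint-moment formula via Weingarten calculus. The two dominant configurations give the contributions \eqref{eq:wide-root} and \eqref{eq:match}. The first is at most $(\rho^2 d)^{-D}$; since $d\ge (\log n)^{C_2}$ and $D=c_1K/\log K$ with $K=C_1\log n$, we have $D\log d \ge c_1 C_2 C_1 \log n\cdot \log\log n/\log(C_1\log n)$, which exceeds any fixed multiple of $\log n$ once $C_2$ is large. The second is at most $(\alpha+\epsilon/2)^K/\rho^{4K}\le \bigl((\alpha+\epsilon/2)/(\alpha+\epsilon)\bigr)^{C_1\log n}=n^{-1-\Omega(1)}$ once $C_1$ is large.

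Given these bounds, the events are handled as follows. By Chebyshev, $\prob{\similarity_{ii} < \tau} \le \var(\similarity_{ii})/((1-c-o(1))\mu)^2 = O(\delta_n)$, since $c<3/4$. Hence the expected number of $i$ with $\similarity_{ii}<\tau$ is $O(n\delta_n)$, and by Markov this count is at most $n/(2\log^2 n)$ with probability $1-o(1)$, provided $\delta_n = o(1/\log^2 n)$. For false pairs, $\prob{\similarity_{ij}\ge \tau}\le \var(\similarity_{ij})/(c\mu)^2 \le 16\delta_n/n$, since $c>1/4$. Therefore the expected number of false pairs exceeding $\tau$ is $O(n\delta_n)$, and again with probability $1-o(1)$ at most $n/(2\log^2 n)$ indices $i$ have some $j\neq i$ above threshold.

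We now read off the conclusion. Every $i$ in neither bad set has exactly one $j$ with $\similarity_{ij}\ge\tau$, namely $j=i$, so $i\in I$ and $\widehat\pi(i)=i$. This gives $|I|\ge n-n/\log^2 n$. It remains to show that no index enters $I$ with a wrong match. If $i\in I$ with $\widehat\pi(i)=j\neq i$, then some false pair $(i,j)$ exceeds $\tau$. To rule this out on the event above, we strengthen the false-pair bound so that the union over all $n^2$ false pairs is $o(1)$, i.e.\ we need $\var(\similarity_{ij})/\mu^2 = o(n^{-2})$. This is achievable by enlarging $C_1$ and $C_2$: both \eqref{eq:wide-root} and \eqref{eq:match} then become $n^{-2-\Omega(1)}$. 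So we simply ask the variance lemma to deliver $o(1/n^2)$ for false pairs, and then $\widehat\pi=\pi|_I$ follows directly.

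The main obstacle is proving those variance bounds rigorously. Doing so requires expanding $\E[X^{T_1}Y^{T_2}X^{S_1}Y^{S_2}]$ using Haar moments of $Q$ through Weingarten calculus, decomposing the union of the four labeled trees into circuits, and showing that every configuration other than the two dominant ones is suppressed by factors of $1/d$ or $1/n$. One must also control the combinatorial count of overlap patterns, which is where the bounded automorphisms, the degree-two $d$-nodes, and the pairwise non-isomorphic branches enter. Within this argument I would first classify overlaps by how the branches of $T_1,T_2,S_1,S_2$ pair up at the root, and then bound each branch-group contribution multiplicatively.
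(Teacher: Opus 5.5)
Your argument is essentially the paper's. You take the mean separation and variance bounds (\Cref{prop:mean-sep,prop:var-control}) as inputs. You push the false-pair ratio to $o(n^{-2})$, so a union bound over all $n^2$ false pairs rules out wrong matches, and you combine Chebyshev and Markov with a polylogarithmically small true-pair ratio to get $|I|\ge n-n/\log^2 n$.

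The one difference is in how the constants are justified. The paper checks the hypotheses and both error terms of \Cref{prop:var-control} explicitly: it takes $c_1$ small so that $\beta^M|\mathcal{J}|\ge 7D^2$ and $(\sqrt{\beta}/\rho^2)^M\le 1/(2\log^5 n)$, takes $C_1\ge 4/\log(\rho^4/\beta)$, and takes $C_2$ large. You instead justify the parameter choices through the heuristic dominant-configuration expressions.
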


\paragraph{Polynomial-time approximation.}
With our choice of $\cT$ and $K = \Theta(\log n)$, a direct computation of $\Label_T(X,i)$ in \eqref{eq:label-t-x-i} and $\similarity_{ij}$ in \eqref{def:score-ij}, which requires enumerating all labeled trees $T_1 \cong T$ with $2K+1$ vertices, would take $n^{K+1} d^K$ time, which is super-polynomial in $(n,d)$ when $K=\Theta(\log n)$. To address this computational issue, in \Cref{sect:approx} we present a polynomial-time algorithm (\Cref{alg:approx-similarity}) that computes an approximation $\widehat{\similarity}_{ij}$ to $\similarity_{ij}$ using the color-coding technique, as in \cite{mao2022random}. 
The same performance guarantees for \Cref{alg:main} continue to hold when $\{\similarity_{ij}\}$ is replaced by $\{\widehat{\similarity}_{ij}\}$, stated as follows. Consequently, we complete the computational result.

\begin{proposition}\label{prop:approx}
\Cref{thm:algo} continues to hold with $\widehat{\similarity}_{ij}$ in place of $\similarity_{ij}$. Moreover, $\{\widehat{\similarity}_{ij}\}_{i,j \in [n]}$ can be computed in $O(n^{C}d)$ time, with a constant $C>0$ depending only on $\epsilon$. 
\end{proposition}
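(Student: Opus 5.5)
We will follow the color-coding approach of \cite{alon1995color,mao2022random}, adapted to bipartite trees with $d$-nodes of degree two. The plan has three steps: build an estimator $\widehat{\similarity}_{ij}$ via random colorings, show it is close to $\similarity_{ij}$, and check the running time.

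\textbf{Construction.} Draw independent random colorings $\mu\colon [n]\to[K+1]$ and $\nu\colon[d]\to[K]$. For each $T\in\cT$, define the colorful count $\Label_T^{\mu,\nu}(X,i)$. This is the sum in \eqref{eq:label-t-x-i} restricted to labeled copies whose $n$-nodes receive distinct $\mu$-colors and whose $d$-nodes receive distinct $\nu$-colors. A fixed labeled copy is colorful with probability
\[
r=\frac{(K+1)!}{(K+1)^{K+1}}\cdot\frac{K!}{K^{K}}=e^{-O(K)}=n^{-O(1)}.
\]
Hence $r^{-1}\Label_T^{\mu,\nu}$ is unbiased for $\Label_T(X,i)$. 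To keep the cross terms unbiased as well, we use an independent coloring pair for the $X$ side and for the $Y$ side. We then average $r^{-2}\sum_T\aut(T)\Label^{\mu,\nu}_T(X,i)\Label^{\mu',\nu'}_T(Y,j)$ over $N=n^{O(1)}$ independent draws to obtain $\widehat{\similarity}_{ij}$.

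\textbf{Efficient computation.} We cannot enumerate $\cT$, since $|\cT|=\binom{|\cJ|}{D}$ is $n^{\Theta(1)}$ but may be a large polynomial. So we compute the sum over $T$ by dynamic programming, in two stages.
\begin{itemize}
\item For each branch $J\in\cJ$ (there are $(\alpha+o(1))^{-M}=\poly(K)$ of them, each of size $O(\log K)$), each vertex $i$, and each pair of color sets $(S_n,S_d)$ of size $O(M)$, compute colorful branch counts. We recurse over the subtrees of $J$ in the standard way. A $d$-node of degree two is handled by summing $X_{ua}X_{va}$ over $a$ with the prescribed color, which costs a factor of $d$.
\item Assemble the root. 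A tree $T$ is a set of $D$ distinct branches glued at the root, and colorfulness means the branches use disjoint color sets. We therefore run a subset-convolution DP over the $2^{O(K)}=n^{O(1)}$ color subsets. We process branches of $\cJ$ in a fixed order, choosing each either $0$ or $1$ times, so that distinctness and non-isomorphism are automatic. The weight $\aut(T)$ factorizes as $\prod_{J}\aut(J)$ because the branches are pairwise non-isomorphic, so it is absorbed into the DP.
\end{itemize}
There is one subtlety: the product $\Label_T(X,i)\Label_T(Y,j)$ couples the $X$ and $Y$ sides through the common $T$. We handle this by running the DP jointly on pairs of color subsets for $X$ and $Y$, indexed by branch choices. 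This has cost $2^{O(K)}\cdot\poly(|\cJ|)$, which is still $n^{C}$. Every stage is polynomial, and the dependence on $d$ is linear because each $d$-node is summed out locally. This gives the $O(n^C d)$ bound.

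\textbf{Accuracy.} We need $|\widehat{\similarity}_{ij}-\similarity_{ij}|\le \tfrac18|\cT|n^Kd^K\rho^{2K}$ for all pairs, except for at most $o(n/\log^2 n)$ rows. Then the threshold argument of \Cref{thm:algo} goes through with $\tau$ shifted slightly inside $(1/4,3/4)$. Conditional on $(X,Y)$, the estimator is unbiased with variance at most $N^{-1}r^{-4}\,\E_{\mu,\nu}[(\cdot)^2]$. Expanding this over pairs of labeled trees shows it is bounded by $r^{-O(1)}$ times quantities of the same form as in the variance analysis of \Cref{sect:alg-guarantee}. Taking expectation over $(X,Y)$ and using the bounds \eqref{eq:var-ratio-ij-req}, the second moment is at most $n^{O(1)}(\E[\similarity_{ii}])^2$. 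Choosing $N=n^{C'}$ and applying Chebyshev's inequality with a union bound over the $n^2$ pairs then suffices. Alternatively, a median-of-means over $O(\log n)$ batches gives high-probability control per pair.

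\textbf{Main obstacle.} I expect the main difficulty to be this last second-moment bound. The colored counts give an unnormalized sum of products over four labeled trees, and we must show that the unconditional moment $\E[\Label_T\Label_{T'}\cdots]$ over $X,Y$ is controlled by the same Weingarten-based moment bounds from \Cref{sect:join-mom}. The plan is to bound crudely: colorings only restrict the summation, so for nonnegative weights the sum is dominated by the full fourth-order moment, up to the factor $r^{-4}$. The moments here are not sign-definite, however. We will therefore apply the absolute-value moment bounds derived in \Cref{sect:var}, and absorb the factor $n^{O(1)}$ by increasing $N$.
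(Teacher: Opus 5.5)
Your proposal is correct, but it differs from the paper's proof in both halves.

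\textbf{Running time.} The paper simply enumerates $\calT$. Since $|\calT|\le|\mathcal{J}|^D=(\alpha+o(1))^{-K}$ and $K=C_1\log n$, this is $n^{O(1)}$. So your claim that $\calT$ cannot be enumerated is mistaken. Your joint subset-convolution DP over the branches of $\mathcal{J}$, tracking $X$- and $Y$-color subsets together, is valid but unnecessary. The paper just runs the standard rooted-tree color-coding DP for each $T\in\calT$ and each of $t=\lceil 1/p\rceil$ colorings, for a total cost of $O(|\calT|K(3e)^{2K}nd)$.

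\textbf{Accuracy.} The paper does not prove uniform closeness of $\widehat{\similarity}_{ij}$ to $\similarity_{ij}$. It uses $\E[\widehat{\similarity}_{ij}\mid X,Y]=\similarity_{ij}$ and the law of total variance. It then observes that the coloring covariance coefficient $\E[\chi\chi]\,\E[\chi\chi]-p^4$ vanishes whenever $T_1,S_1$ (resp.\ $T_2,S_2$) share only the root. Only configurations with a non-root overlap survive, and the variance analysis already shows these contribute $o(\nu^2)$ for true pairs and $o(\nu^2/n^2)$ for false pairs. Averaging over $t\ge 1/p$ colorings costs only a factor $3$. Hence $\widehat{\similarity}_{ij}$ satisfies the same mean and variance bounds as $\similarity_{ij}$, and the Chebyshev argument of \Cref{thm:algo} is rerun verbatim.

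Your route is cruder. Coloring coefficients lie in $[0,r^2]$, so after taking $\E_{X,Y}$ a single draw has second moment at most $r^{-2}$ times the sum of $|\mathrm{Wg}|$-terms. That is $r^{-2}(1+o(1))\nu^2$ for true pairs and $r^{-2}\,o(\nu^2/n^2)$ for false pairs. You then take $N\gtrsim n^3r^{-2}$ draws and apply Chebyshev with a union bound over all $n^2$ pairs. This needs no cancellation at root-only overlaps. It uses only the absolute-value bounds: the mean-squared part via \eqref{eq:twg-mult} and the definition \eqref{eq:exp-score-abs} of $\nu$, plus the bounds of \Cref{prop:uo-co,prop:ur-cr}. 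The price is polynomially more repetitions and hence a larger exponent $C$, still depending only on $\epsilon$.

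\textbf{Two small corrections.} First, the signed bounds \eqref{eq:var-ratio-ij-req} cannot be used here; you need the $|\mathrm{Wg}|$-sums, as you acknowledge later. Second, the relaxation ``except for $o(n/\log^2 n)$ rows'' would not suffice, because a row with a badly estimated false pair could produce a wrong match. Your union bound gives closeness for all pairs anyway. With margin $\nu/8$, the argument then works for every $c\in(1/4,3/4)$ without shifting $\tau$.
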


\paragraph{Exact recovery.} Moreover, we can boost almost exact recovery to exact recovery in polynomial time. The key observation is that, given a seed set $J$, if $i\notin J$ and $j = \pi(i)$ form a true pair, then the vectors $(\inner{x_i, x_r})_{r\in J}$ and $(\inner{y_j, y_{\pi(r)}})_{r\in J}$ are correlated, whereas for a false pair, these vectors are independent. Starting from the almost exact matching $\widehat\pi$ on $I$ as seeds, we threshold the correlations between these vectors for unmatched points, and iteratively extend the matching to all vertices. \Cref{alg:exact-while} in \Cref{sect:exact} formalizes this procedure. The following theorem establishes its success.
\begin{theorem}\label{thm:exact}
Suppose that there exists a large constant $C>0$ such that
\begin{align}
\rho^2 \ge C\max\left\{\frac{\log n}{d}, \, \frac{1}{\sqrt{\log n}}\right\}. \label{eq:boosting_cond}
\end{align}
Then, with probability $1-o(1)$, the following holds: for every partial matching $\widehat{\pi}=\pi|_I$, where $I\subset[n]$ satisfies $|I|\ge n-n/\log^2 n$, the seeded matching algorithm \Cref{alg:exact-while}, initialized with $\widehat{\pi}$, outputs $\widetilde{\pi}=\pi$ in $O(n^3d)$ time.
\end{theorem}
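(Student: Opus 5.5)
We need to propose a proof of Theorem \ref{thm:exact}: seeded boosting from almost exact matching to exact matching. The algorithm \Cref{alg:exact-while} is not given, so we posit it. For each unmatched $i$, we compute the score $s_{ij}=\sum_{r\in J}\inner{X_i,X_r}\inner{Y_j,Y_{\widehat\pi(r)}}$ (with $J$ the current seed set), assign $j$ if $s_{ij}$ exceeds a threshold, add the newly matched points to the seeds, and iterate.

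The first step is the key calculation of the score. Assume WLOG that $\pi=\id$ and $Q=I$; this is harmless because $\inner{Y_j,Y_r}$ is invariant under rotation. For a true pair $j=i$, write
\[
\inner{Y_i,Y_r}=\rho^2\inner{X_i,X_r}+\rho\sqrt{1-\rho^2}\,(\inner{X_i,Z_r}+\inner{Z_i,X_r})+(1-\rho^2)\inner{Z_i,Z_r}.
\]
Conditional on $X_i,Z_i$, the summands over $r\in J$ are independent. This gives a mean of roughly $\rho^2|J|\,d(1+o(1))$, because $\E\inner{X_i,X_r}^2=d$; more precisely, conditional on $X_i$ the mean is $\rho^2\sum_r\inner{X_i,X_r}^2\approx\rho^2|J|\,\|X_i\|^2$. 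The fluctuation is of order $\sqrt{|J|}\,d$. The signal-to-noise ratio is therefore about $\rho^2\sqrt{|J|}$, which is huge because $|J|\approx n$.

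For a false pair, the vectors $(\inner{X_i,X_r})_r$ and $(\inner{Y_j,Y_r})_r$ are not exactly independent, because of the shared $X_r$ and $Y_r$ together with the correlation between $Y_r$ and $X_r$. The cleaner route is to condition on the seeds: write $s_{ij}=X_i^\top M Y_j$ with $M=\sum_{r\in J}X_rY_r^\top$, which is approximately $\rho|J|I+O(\sqrt{|J|d}+d)$ in operator norm (using $Q=I$). Then $X_i^\top M Y_j$ for independent Gaussians $X_i,Y_j$ (for $i\ne j$, both independent of the seed data when $i,j\notin J$) has mean zero and standard deviation $\|M\|_F$, which is about $\rho|J|\sqrt d+\sqrt{|J|}\,d$. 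For the true pair, $X_i^\top M Y_i$ has mean $\rho\Tr(M)\approx\rho^2|J|d$. A Hanson–Wright bound for the bilinear form gives deviations of order $\|M\|_F\sqrt{\log n}+\|M\|\log n$. Separation over all $n^2$ pairs then requires
\[
\rho^2|J|d\gg\bigl(\rho|J|\sqrt d+\sqrt{|J|}\,d\bigr)\sqrt{\log n},
\]
that is, $\rho^2 d\gg\log n$ and $\rho^2\sqrt{|J|}\gg\sqrt{d\log n}$. That second condition is wrong if $d>n$. This is where the iterative structure and the condition $\rho^2\gtrsim1/\sqrt{\log n}$ presumably enter, and I would refine the argument there, possibly by normalizing $M$ or taking $J$ of suitable size.

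The main obstacle is that the statement holds simultaneously for every $I$ with $|I|\ge n-n/\log^2 n$ and every partial matching. The seed set is therefore adversarial, and the unmatched $i,j$ may be arbitrary. To handle this, I would prove deterministic high-probability properties of the whole data that hold uniformly over subsets.

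\textbf{Uniform matrix concentration.} For all $J$ of the relevant size, $\bigl\|\sum_{r\in J}X_rY_r^\top-\rho|J|Q\bigr\|$ is small. I would get this from the full-sum concentration, minus the removed part, which has at most $n/\log^2 n$ terms and is bounded by the operator norm of the restricted submatrices via a union bound or RIP-type bound.

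\textbf{Uniform bounds on the scores.} For each pair $(i,j)$, the quantity $\sum_{r\ne i,j}\inner{X_i,X_r}\inner{Y_j,Y_r}$, computed by leave-out so that $X_i,Y_j$ are independent of the rest, concentrates as above. Changing the seed set alters the score by at most the contribution of $n/\log^2 n$ indices. I would bound that contribution uniformly using $\max_r|\inner{X_i,X_r}|\lesssim\sqrt{d\log n}$ and Cauchy–Schwarz over the removed set, giving an error of at most about $(n/\log^2 n)\,d\log n$. This must be $\ll\rho^2 nd$, which is exactly the condition $\rho^2\gg1/\log n$, and it is consistent with the hypothesis $\rho^2\ge C/\sqrt{\log n}$.

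On these events, every unmatched $i$ is matched correctly in the first round, and the procedure terminates with $\widetilde\pi=\pi$. The running time is $O(n^2)$ scores, each costing $O(nd)$, for a total of $O(n^3d)$.
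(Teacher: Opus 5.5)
\textbf{Gap.} As written, your proposal does not prove the stated theorem, for two reasons. The first is that the theorem is about \Cref{alg:exact-while}, which uses a different statistic: the sign count $\Psi^J_{ij}=\sum_{r\in J}\mathbf{1}\{x_i^\top x_r\,y_j^\top y_{\widetilde\pi(r)}\ge 0\}$, thresholded at $\gamma n$ with $\gamma=(1+\delta)p$, with pairs added one at a time. Your real-valued bilinear score $X_i^\top M Y_j$ is a different estimator, so your argument covers a different algorithm. The second is that the obstruction you leave open for $d>n$ is only an algebra slip. Dividing $\rho^2|J|d\gg\sqrt{|J|}\,d\sqrt{\log n}$ by $d\sqrt{|J|}$ gives $\rho^4|J|\gg\log n$, with no $d$; this matches your own first-paragraph SNR heuristic $\rho^2\sqrt{|J|}$. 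The condition holds because $|J|\ge n/2$ and $\rho^2\ge C/\sqrt{\log n}$. So you need no normalization of $M$ and no choice of $|J|$, and the hypothesis $\rho^2\gtrsim 1/\sqrt{\log n}$ does not enter at that point.

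With the slip corrected, your one-round argument does go through for your estimator, provided you establish two further facts. First, you need the $\rho$-sensitive bound $\|M\|_F\lesssim\rho n\sqrt d+\sqrt n\,d$ with high probability. The crude bound $\|X_S\|\,\|Y_S\|_F$ loses the factor $\rho$ and would require $\rho^2\gg\sqrt{\log n/d}$, which the hypothesis does not give when $d\ll\log^2 n$. Second, your bound $\max_{r\ne i}|\langle X_i,X_r\rangle|\lesssim\sqrt{d\log n}$ uses $d\gtrsim\log n$, which does follow from $\rho^2\ge C\log n/d$.

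\textbf{Comparison with the paper.} The paper splits the argument into soundness and completeness.
\begin{itemize}
\item For soundness, the summands of $\Psi$ are nonnegative, so $\Psi^J_{ij}\le\Psi^{[n]\setminus\{i,j\}}_{ij}$ for every $J$. The right-hand side is conditionally $\mathrm{Bin}(n-2,p_{ij})$, and $p_{ij}\le p$ by Sheppard's formula and \Cref{lmm:p-q-sep}; this is where $\rho^2\gtrsim\log n/d$ is used.
\item For completeness, the paper uses the edge-expansion bound of \Cref{lmm:G-expan}, uniformly over all $J$ with $|J^c|\le n/\log^2 n$. Its union bound weighs $n^k\le e^{n/\log n}$ sets against $e^{-c\delta^2 n}$ with $\delta=\Theta(\rho^2)$. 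This is exactly where $\rho^2\gtrsim 1/\sqrt{\log n}$ comes from.
\end{itemize}

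Your key idea is to control the full leave-out scores once, then pay for at most the $n/\log^2 n$ missing seeds. This is a genuinely different and simpler way to get uniformity over adversarial seed sets. It transfers directly to the paper's statistic and is even easier there, since each term lies in $\{0,1\}$. One gets $\Psi^J_{ii}\ge\Psi^{[n]\setminus\{i\}}_{ii}-n/\log^2 n$, where $\Psi^{[n]\setminus\{i\}}_{ii}$ is conditionally $\mathrm{Bin}(n-1,p_{ii})$ with $p_{ii}\ge q$. Hence the while loop cannot halt while some $i$ is unmatched, because the pair $(i,i)$ still clears $\gamma n$. This replaces the expansion lemma, and for that step needs only $q-p\gtrsim\rho^2\gg 1/\log^2 n$.

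What each route buys: the paper's sign statistic gives bounded summands (Chernoff instead of Hanson--Wright, and no norm control of $M$) and a one-line monotonicity argument for false pairs. Your bilinear score needs a two-sided perturbation bound and the matrix-norm estimates above.
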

Observe that the boosting condition~\eqref{eq:boosting_cond} is implied by the almost-exact recovery condition $\rho^2>\sqrt{\alpha}$, and that the seeded matching algorithm succeeds under any seed set $I$ with $|I|\ge n-n/\log^2 n$. Therefore, combining \Cref{thm:algo} and \Cref{thm:exact} yields \Cref{thm:intro}, showing that exact recovery is achieved in polynomial time whenever $\rho^2>\sqrt{\alpha}$.

\subsection{Information-Theoretic Thresholds and Computational Gaps}
\label{sect:main-thresh}

We complement our algorithmic results with an improved information-theoretic guarantee in the high-dimensional regime. We analyze the following quadratic assignment problem (QAP) estimator:
 \#\label{eq:def-qap}
 \widehat{\Pi} \in \arg\max_{P \in \fS_n} \iprod{B}{ P A P^\top},
 \#
where $A=XX^\top$, $B = YY^\top$, and the maximization is taken over all possible permutation matrices $P$. Note that QAP is NP-hard to solve or even approximate in the worst case~\cite{makarychev2010maximum}.

\begin{theorem}\label{thm:IT} 
There exists a constant $C>0$ such that if
$d\ge C\log n$ and
\#\label{eq:IT_sufficient}
 \rho^2 \ge C\max\left\{\frac{\log n}{d}, \, \sqrt{\frac{\log n}{n}}\right\},
\#
then the QAP estimator in \eqref{eq:def-qap} satisfies $\widehat{\Pi} = \Pi$ with probability $1-o(1)$.
\end{theorem}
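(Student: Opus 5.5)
We take $\pi=\id$ (so $\Pi=I$) and show that with probability $1-o(1)$, for every permutation $P\neq I$,
\[
\Delta(P)\defeq \iprod{B}{A}-\iprod{B}{PAP^\top}>0 .
\]
Write $P=P_\sigma$ and let $F=\{i:\sigma(i)\neq i\}$ with $|F|=m\ge 2$. Then only entries $(i,j)$ with $i\in F$ or $j\in F$ contribute:
\[
\Delta(P)=\sum_{(i,j):\, \{i,j\}\cap F\neq\emptyset} B_{ij}\bigl(A_{ij}-A_{\sigma(i)\sigma(j)}\bigr).
\]
We plan a union bound over $m$ and over the $\binom{n}{m}m!\le n^{2m}$ permutations with $|F|=m$. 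For each fixed $\sigma$ we want a tail bound of the form
\[
\prob{\Delta(P_\sigma)\le 0}\le \exp\bigl(-c\,m\,\rho^2\min\{\rho^2 n,\,d\}\bigr),
\]
or something comparable. Once this holds, condition \eqref{eq:IT_sufficient} makes $\rho^2\min\{\rho^2 n, d\}\ge C'\log n$, which beats the $n^{2m}$ entropy factor. Summing over $m$ then finishes the proof.

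\textbf{Mean of $\Delta$.} Condition on $X$; then $Y=\rho XQ^\top+\sqrt{1-\rho^2}Z$. Using $\E[B\mid X]=\rho^2 A+(1-\rho^2)dI$, the diagonal term cancels in the difference, because $\sum_i A_{ii}-A_{\sigma(i)\sigma(i)}=0$. Hence
\[
\E[\Delta\mid X]=\tfrac{\rho^2}{2}\,\fnorm{A-PAP^\top}^2 .
\]
The signal is therefore $\rho^2\fnorm{A-PAP^\top}^2$. For any fixed $i\in F$, row $i$ gives $\sum_j (A_{ij}-A_{\sigma(i)\sigma(j)})^2\approx 2nd$ (off-diagonal entries have variance $d$), and diagonal entries add about $2d$ each. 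So we expect $\fnorm{A-PAP^\top}^2\gtrsim m\,n d$ for every $\sigma$ simultaneously. We would establish this via concentration of $\|X_i-X_{\sigma(i)}\|$ and of $X^\top X$, for example through $\fnorm{(X-P X)X^\top}^2\ge \lambda_{\min}(X^\top X)\fnorm{X-PX}^2$ together with $\fnorm{X-PX}^2\gtrsim md$. That bound needs $\lambda_{\min}(X^\top X)\gtrsim n$, valid when $d\le n/2$; the case $d>n$ needs a separate row-wise argument.

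\textbf{Fluctuations.} Condition on $X$ and write $\Delta-\E[\Delta\mid X]=\iprod{B-\E B}{M}$ with $M=A-PAP^\top$. This is a quadratic form in the Gaussian matrix $W=\rho XQ^\top+\sqrt{1-\rho^2}Z$, i.e.\ $\Tr(W^\top M W)$ minus its mean. Hanson--Wright (or an exact Gaussian chaos bound, treating $\rho XQ^\top$ as a mean) gives a tail
\[
\exp\!\Bigl(-c\min\Bigl\{\frac{t^2}{d\fnorm{M}^2+\rho^2\|X^\top M\|_F^2},\ \frac{t}{\|M\|}\Bigr\}\Bigr).
\]
We set $t=\rho^2\fnorm{M}^2/2$ and bound each regime.
\begin{itemize}
\item Chaos-variance term: the exponent is $\rho^4\fnorm{M}^2/d\gtrsim \rho^4 mn$, which exceeds $m\log n$ exactly when $\rho^2\gtrsim\sqrt{\log n/n}$.
\item Linear cross term from $XQ^\top$: this also needs $\|X^\top M\|_F^2\lesssim n\fnorm{M}^2$; it gives the exponent $\rho^2\fnorm{M}^2/n\gtrsim \rho^2 m d$, requiring $\rho^2\gtrsim \log n/d$.
\item Operator-norm term: $\rho^2\fnorm{M}^2/\|M\|$. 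Since $M$ is supported on $O(m)$ rows and columns, we aim for $\|M\|\lesssim \sqrt{n d}+\sqrt{m}\,(\sqrt n+\sqrt d)^2$, to be checked in both regimes.
\end{itemize}
Rather than an exact fourth-moment computation, we remove the unknown $Q$ by rotation invariance: conditioned on $X$, $(YQ)$ has the same law as $\rho X+\sqrt{1-\rho^2}Z'$, and $B$ is invariant under $Q$.

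\textbf{Main obstacle.} The hardest step is controlling the deterministic quantities $\fnorm{M}^2$, $\|M\|$ and $\|X^\top M\|_F$ uniformly over all $\sigma$ with $|F|=m$, including large $m$ comparable to $n$. For $m\ge n/\log n$ the crude union count $n^{2m}$ is affordable only if the exponent is $\gtrsim m\log n$, which is precisely what the condition supplies. Uniform control, however, needs the rows of $X$ to be nearly orthogonal and of norm $\approx\sqrt d$, with $\sqrt{d\log n}$ deviations. This is where $d\ge C\log n$ enters. We would handle it with a net or leave-one-out argument on row pairs, and with a high-probability event on which $\|X\|\lesssim\sqrt n+\sqrt d$ and $\max_{i\neq j}|A_{ij}|\lesssim\sqrt{d\log n}$. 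For the lower bound on $\fnorm{M}^2$ in the regime $d\gtrsim n$, which we view as the most delicate case, we would bound $\fnorm{M}^2\ge\sum_{i\in F}\|A_{i\cdot}-A_{\sigma(i)\sigma(\cdot)}\|^2$ restricted to columns $j\notin F$, where $\sigma(j)=j$. This gives $\sum_{i\in F}\|(X_i-X_{\sigma(i)})^\top X_{F^c}^\top\|^2\gtrsim m\,d\,(n-m)$, and small-$F^c$ cases would be handled directly using the diagonal $\|X_i\|^2$ terms.
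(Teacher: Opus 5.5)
Your architecture matches the paper's. You take $\Pi=I$, note $\E[\Delta\mid X]=\rho^2\fnorm{M}^2/2$ with $M=A-PAP^\top$, control the linear Gaussian term and the Gaussian chaos (Hanson--Wright) on the event $\norm{X}^2\lesssim n+d$, and take a union bound over the at most $n^m$ permutations with $m$ mismatches. The crude bounds $\fnorm{MX}\le\norm{X}\fnorm{M}$ and $\norm{M}\le 2\norm{X}^2$ already suffice for the noise terms. The linear term and the operator-norm regime of the chaos then have exponent $\gtrsim\rho^2\fnorm{M}^2/(n+d)$, and $\rho^2\gtrsim\log n\,(1/n+1/d)$ follows from the hypothesis. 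So you need no sharper estimate of $\norm{M}$ and no bound $\fnorm{X^\top M}^2\lesssim n\fnorm{M}^2$.

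The genuine gap is the signal bound $\fnorm{M}^2\gtrsim mnd$. It must hold for each fixed $P$ with failure probability $e^{-\Omega(m\log n)}$; no uniform-in-$\sigma$ control is needed. Your arguments do not deliver this once $m$ is comparable to $n$:
\begin{itemize}
\item The $\lambda_{\min}(X^\top X)$ inequality concerns $(X-PX)X^\top$. But $M=(X-PX)X^\top+PX(X-PX)^\top$, so it does not bound $\fnorm{M}$.
\item Restricting to columns $j\notin F$ certifies only about $md(n-m)$.
\item Your fallback for small $F^c$, the diagonal entries, gives $\sum_{i\in F}(\|X_i\|^2-\|X_{\sigma(i)}\|^2)^2\approx 4md$, a factor $n$ too small. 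For a derangement the chaos exponent you can certify is then $\rho^4 n$, which cannot beat the entropy $\log(n!)\approx n\log n$.
\item Even for small $m$, the rows $X_i-X_{\sigma(i)}$, $i\in F$, are dependent (they sum to zero along each cycle). Your concentration step needs a decoupling you have not supplied.
\end{itemize}

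The missing idea is to decouple through a cycle-avoiding subset. Pick $S\subset F$ with $S\cap\sigma(S)=\emptyset$ and $|S|\ge m/3$ (every other vertex on each cycle of $\sigma$), so that $X_S$ and $X_{\sigma(S)}$ are independent.
\begin{itemize}
\item For $m\le n/2$: the $S\times F^c$ block of $M$ is $(X_S-X_{\sigma(S)})X_{F^c}^\top$. Its squared Frobenius norm is $2\sum_{i\in S}g_i^\top K g_i$, with $K=X_{F^c}^\top X_{F^c}$ independent of the i.i.d.\ $g_i\sim N(0,I_d)$.
\item For $m>n/2$: use the $S\times S$ block $X_SX_S^\top-X_{\sigma(S)}X_{\sigma(S)}^\top$. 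Split $S$ into two halves and keep only the off-diagonal block $U_1U_2^\top-V_1V_2^\top$. Its squared Frobenius norm is $\sum_j w_j^\top K w_j$, with $K=W^\top W$ and $W=[U_1,-V_1]$ independent of the i.i.d.\ $w_j=(u_{2,j},v_{2,j})$.
\end{itemize}
In both cases there are $s\gtrsim m$ summands, $\Tr K\gtrsim nd$ and $\norm{K}\lesssim n+d$. The weighted-$\chi^2$ lower tail $\exp(-c\,s\Tr K/\norm{K})$ then gives $\fnorm{M}^2\gtrsim mnd$ except with probability $e^{-cnd}+e^{-cmnd/(n+d)}$. This beats the $n^m$ count because $\min\{n,d\}\gtrsim\log n$.
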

When $d \gg \log n$, \cite{wang2022random} shows that even if the orthogonal transformation $Q$ is known,
$
\rho^2 \ge \left(2-o(1)\right)\log n/d
$
is necessary for almost exact recovery of $\pi$.
Therefore, the sufficient condition \eqref{eq:IT_sufficient} matches this necessary condition up to constant factors when $d \ll \sqrt{n\log n}$. If instead $d \gtrsim \sqrt{n \log n}$, then the sufficient condition~\eqref{eq:IT_sufficient} reduces to  $\rho^4 \gtrsim {\log n}/{n}$,
which coincides with the recovery thresholds for the correlated Gaussian model~\cite{ganassali2020sharp,wu2021settling}. It remains open whether  this condition is also necessary for exact recovery in Procrustes matching.

Note that, unlike the correlated Gaussian model, the observed data matrix $B$
is not given by $\Pi A\Pi^\top$ plus
an additive Gaussian noise matrix; thus, the QAP \eqref{eq:def-qap} does not coincide with the maximum likelihood estimator (MLE). 
As shown in \cite[(3)]{wang2022random}, the exact MLE for Procrustes matching maximizes a likelihood function that involves an integral over the Haar measure on the orthogonal group, rendering the exact MLE difficult to analyze directly. The previous work~\cite{wang2022random} applies  Laplace's method to this integral to derive an approximate MLE $\arg\max_{P \in \fS_n} \max_{Q \in \cO_d} \iprod{Y}{P X Q}$ and shows that it achieves the sharp information-theoretic thresholds in the low-dimensional regime when $d \ll \log n.$ However, in the high-dimensional regime, \cite{even2024aligning} only shows that this approximate MLE succeeds in almost exact recovery when $\rho=1-o(1)$ and $d \ll n.$

\paragraph{Computational gaps.} In the high-dimensional regime $d\ge \polylog(n),$  our tree-counting algorithm succeeds under constant correlation $\rho^2> \sqrt{\alpha}$, 
whereas  exact recovery  is information-theoretically possible even when $\rho^2 $ decays polynomially in $n$ and $d.$ We see a substantial \emph{computational-statistical gap}, and whether it can be narrowed or closed remains open. The prior work \cite{li2025computational} shows, under the low-degree hardness framework~\cite{hopkins2017efficient,kunisky2019notes,wein2025computational}, that a closely related hypothesis testing problem exhibits a computational gap. Specifically, all degree-$O(\log n)$ polynomial tests fail to reliably distinguish between the null hypothesis $\rho = 0$ and alternative $\rho = \rho_*$, when $\rho_*^2 \ll 1/\log n$.

One may be tempted to ask whether  counting a richer family of graphs would yield computational guarantee below $\rho^2>\sqrt{\alpha}$. Indeed, going beyond trees, for example, to graphs with bounded tree-width, or using a different algorithmic design may help. For both the correlated Gaussian model and the dense correlated \ER graph model, previous work~\cite{ding2025polynomial_1,ding2025polynomial_2} has shown that one can achieve correct matching even when the correlation  $\rho$ is  an arbitrarily small but fixed constant. 
Within the tree-counting framework, however, our choice of trees appears close to optimal. In \Cref{sect:low-deg}, we show that among all trees with $2K$ edges,
the main contribution to 
the distingushing power  between the null hypothesis $\rho = 0$ and alternative $\rho = \rho_*$, 
comes from trees with degree-$2$ $d$-nodes, as long as $d \ge \poly(K).$ 
This supports the choice made by our algorithm, which counts precisely this family of trees.

\section{Statistical Guarantees} \label{sect:alg-guarantee}

In this section, we present the statistical guarantees for the similarity scores suggested by the preceding intuition, and then deduce \Cref{thm:algo} from them.  
We begin with the mean separation.

\begin{proposition}\label{prop:mean-sep}
The similarity scores defined in \eqref{def:score-ij} with $\cT$ being
the tree family from \Cref{def:t} satisfy
\#\label{eq:mean-main}
\E[\similarity_{ii}] \ge \left(1- o(1)\right) |\cT|n^Kd^K\rho^{2K}, \quad \E[\similarity_{ij}] = 0, \; \forall j \neq i.
\#
\end{proposition}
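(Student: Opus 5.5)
The plan is to compute the joint moments $\E[X^{T_1}Y^{T_2}]$ exactly by conditioning on $Q$ and applying Wick's formula, and then to read off both claims. Assume $\pi=\id$. Conditionally on $Q$, the rows $(X_u,Y_u)$, $u\in[n]$, are independent Gaussian vectors in $\R^{2d}$ with $\cov(X_{ua},X_{ub})=\delta_{ab}$, $\cov(Y_{ua},Y_{ub})=\delta_{ab}$ and $\cov(X_{ua},Y_{ub})=\rho Q_{ba}$. So $\E[X^{T_1}Y^{T_2}\mid Q]$ factorizes over $n$-nodes $u$, and each factor is a sum over perfect pairings of the edges of $T_1\cup T_2$ incident to $u$.

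The key structural observation is that $T_1$ and $T_2$ are simple trees. The $X$-edges at an $n$-node $u$ therefore go to distinct $d$-nodes, so every $X$–$X$ pairing within row $u$ contributes $\delta_{ab}=0$; the same holds for $Y$–$Y$ pairings. Hence only cross pairings survive, and
\[
\E[X^{T_1}Y^{T_2}] = \rho^{2K}\sum_{\sigma}\E_Q\Big[\prod_{e=(u,a)\in E(T_1)} Q_{b(\sigma(e)),a}\Big],
\]
where $\sigma$ ranges over bijections $E(T_1)\to E(T_2)$ that preserve the $n$-endpoint. In particular the moment vanishes unless $\deg_{T_1}(u)=\deg_{T_2}(u)$ for every $n$-node $u$ (degree $0$ allowed).

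\textbf{False pairs.} The root of any $T\in\cT$ has degree $D$. Every other $n$-node has degree at most $M+1\le D$ in the strict sense needed, since $M=\Theta(\log K)$ and $D=\Theta(K/\log K)$ force $M+1<D$ for large $n$. If $\mathrm{root}(T_1)=i\neq j=\mathrm{root}(T_2)$, then $i$ has degree $D$ in $T_1$, but its degree in $T_2$ is either $0$ or at most $M+1<D$. So every term vanishes, and $\E[\similarity_{ij}]=0$.

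\textbf{True pair.} For a fixed $\sigma$, the product of $Q$-entries defines a multigraph on the $d$-nodes of $T_1\sqcup T_2$ in which every vertex has degree $2$. This multigraph is a disjoint union of even cycles alternating between $T_1$- and $T_2$-$d$-nodes. By Weingarten calculus, a configuration decomposing into $K$ two-cycles (each edge pair of $T_1$ matched to an edge pair of $T_2$ on the same $n$-nodes) has $\E[\prod Q]=d^{-K}(1+O(K^2/d))$, and in general
\[
\Big|\E\Big[\prod Q\Big]\Big|\le d^{-K}\,(CK^2/d)^{\#\{\text{excess}\}}.
\]
Here ``excess'' is the deficit of the number of cycles below $K$, which also includes the case where the $d$-labels of $T_1$ or $T_2$ are not distinct.

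The dominant terms are those with $V_n(T_1)=V_n(T_2)$ labeled through the same abstract tree $T$ and $\sigma$ the induced identification. These number $\aut(T)^{-1}\cdot(1-o(1))n^Kd^{2K}$ per $T$. After the $\aut(T)$ weight they give exactly $(1-o(1))|\cT|n^Kd^K\rho^{2K}$, using $K^2/d=o(1)$ since $d\ge(\log n)^{C_2}$.

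The main obstacle is showing that the total absolute contribution of all other configurations is $o(|\cT|n^Kd^K\rho^{2K})$. These include labelings where $T_2$ uses a different isomorphism type, or the same $n$-node set via a non-isomorphic degree-preserving correspondence, or where $\sigma$ creates longer cycles. The signs of the Weingarten terms can be negative, so no positivity argument is available. I would handle this by a charging argument: each configuration is encoded by the overlap pattern of $n$-labels and the cycle structure. Each longer cycle (or loss of a free $d$-label) costs at least a factor $d^{-1}$ while gaining at most $\poly(K)$ combinatorial choices. Each mismatched $n$-node between $T_1$ and $T_2$ is impossible, because the degree constraint forces $V_n(T_1)=V_n(T_2)$ with matching degrees. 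Non-isomorphic but degree-compatible correspondences still force at least one non-two-cycle, hence a $d^{-1}$ penalty. Summing the geometric series in $\poly(K)/d$ then gives the lower bound in \eqref{eq:mean-main}.
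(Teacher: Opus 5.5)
Your proposal is correct and follows essentially the same route as the paper. It uses Wick's formula conditional on $Q$ (monochromatic pairings vanish because the trees are simple) and Weingarten calculus on the degree-$2$ $Q$-multigraph; your cycles of length $2\ell$ are the paper's alternating circuits of length $4\ell$. The all-two-cycle configurations give the leading term, and the false pairs vanish by the degree mismatch at the root.

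Your charging step, which you only sketch, is exactly the paper's Lemma~\ref{lem:bound-mu1-le-k-2}. What makes it work is that, for fixed $T_1$, an ordered and oriented arrangement of the red edge pairs lying in long cycles determines $T_2$ up to its $d$-labels, together with $\sigma$. This gives at most $(4K)^{K-\mu_1}$ configurations against a penalty of $(4/d)^{(K-\mu_1)/2}$. The accounting therefore holds per unit of excess, not per long cycle as your sketch says.
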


The intuition behind \Cref{prop:mean-sep} has already been discussed in \Cref{sect:main-intui} and illustrated in \Cref{fig:mean-sep}; the formal proof is given in \Cref{sect:mean}.
The next proposition bounds the variance.
\begin{proposition}\label{prop:var-control}
Define $\beta=\alpha+\epsilon/2.$ Suppose that 
$\rho^4 \ge \alpha+\epsilon$ and 
$$
\beta^M |\cJ|\ge 7D^2, \quad (\sqrt{\beta}/\rho^2)^M<1/2,\quad  (\poly(K)/\min\{d, n\})^{1/8}<1/2.
$$
Then there exists an absolute constant $C>0$ such that 
\# 
& \frac{\var(\similarity_{ii})}{\left(\E[\similarity_{ii}]\right)^2} \le C\left[\left(\frac{\sqrt{\beta}}{\rho^2} \right)^{M} + \left(\frac{\poly(K)}{\min\{d, n\}}\right)^{1/8}\right], \label{eq:var-bd-ii}\\
& \frac{\var(\similarity_{ij})}{\left(\E[\similarity_{ii}]\right)^2} \le CD\left[\left(\frac{\beta}{\rho^4} \right)^{K} + \left(\frac{\poly(K)}{\min\{d, n\}}\right)^{D/2}\right], \quad \forall j \neq i. \label{eq:var-bd-ij}
\#
\end{proposition}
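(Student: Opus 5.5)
The plan is to expand the second moment as in \eqref{eq:var1}. This gives $\E[\similarity_{ij}^2]$ as a sum over quadruples of labeled trees $(T_1,T_2,S_1,S_2)$ with $T_1,T_2\cong T$ and $S_1,S_2\cong S$, and with the prescribed roots. For $j=i$ we subtract $(\E[\similarity_{ii}])^2$, written as the analogous sum of products $\E[X^{T_1}Y^{T_2}]\,\E[X^{S_1}Y^{S_2}]$. Each joint moment $\E[X^{T_1}Y^{T_2}X^{S_1}Y^{S_2}]$ is evaluated by first integrating out $Z$ and the Gaussian $X$. That reduces it to a sum over pairings of the edge multiset of $H=T_1\cup S_1\cup T_2\cup S_2$, each paired term weighted by a moment of the entries of $Q$. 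Using the joint moment formula of \Cref{sect:join-mom} (Weingarten calculus), such a moment is bounded by $d^{-(\#\text{pairs})/2}$ times a factor $(1+\poly(K)/d)$. The nonvanishing configurations correspond to circuit decompositions in which pairs of $X$-edges meeting at a $d$-node are matched with pairs of $Y$-edges meeting at a $d$-node. The first step is therefore to encode every configuration by a ``shape'': the isomorphism type of the union graph $H$ together with its matching data. The contribution of a shape is then bounded by $n^{|V_n(H)|}d^{|V_d(H)|}\rho^{2(\#\text{cross-matched pairs})}d^{-(\#\text{pairs})}$, multiplied by the number of $(T,S)$ and automorphism choices compatible with it.

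The second step normalizes by $(\E[\similarity_{ii}])^2\approx|\cT|^2n^{2K}d^{2K}\rho^{4K}$, using \Cref{prop:mean-sep}. This turns the variance ratio into a sum over shapes of an ``excess'' weight. From this I would isolate the leading families discussed in \Cref{sect:main-intui}.

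\emph{Type (a), $T$--$S$ coincidence.} Here $T_1$ overlaps $S_1$ and $T_2$ overlaps $S_2$ along common branches. If $T$ and $S$ share $r$ isomorphic branches and these are overlaid, one gains a factor $n^{-rM}d^{-rM}$ from lost vertices and $|\cJ|$-type savings in counting. This yields terms like $(\beta/\rho^4)^{rM}$ after summing over the $\binom{D}{r}$ choices and using $|\cT|\ge\beta^{-K}$-type counts. The hypothesis $\beta^M|\cJ|\ge 7D^2$ makes the sum over $r$ geometric.

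\emph{Type (b), root-adjacent collisions.} These are the collisions in \eqref{eq:wide-root}: each branch whose first $d$-node is shared costs a factor $1/(\rho^2 d)$ or $1/n$, giving $(\poly(K)/\min\{d,n\})^{\#}$.

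For $j\ne i$, the root of every $X$-tree is $i$ and of every $Y$-tree is $j$. Nonvanishing moments then force every branch to be absorbed either by full coincidence, so $r=D$ and the term is $(\beta/\rho^4)^K$ as in \eqref{eq:match}, or by a root-adjacent collision in each of at least $D/2$ branches. The factor $D$ in \eqref{eq:var-bd-ij} accounts for mixtures of the two. For $j=i$, the configuration in which $(T_1,T_2)$ and $(S_1,S_2)$ are disjoint cancels against the squared mean, and the first nontrivial deviation involves a single branch. This gives the weaker rates $(\sqrt\beta/\rho^2)^M$ and $(\poly(K)/\min\{d,n\})^{1/8}$; the exponent $1/8$ comes from crudely charging each connecting overlap.

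The main obstacle is the general case: showing that every other shape is dominated by these. I would attack it with a vertex/edge accounting lemma. For a union $H$ of four trees with degree-$2$ $d$-nodes, the excess $2(2K+1)-|V_n(H)|$ of $n$-nodes and $2\cdot 2K-|V_d(H)|$ of $d$-nodes must each be at least proportional to the number of edges not cross-matched. Every overlap that is not a full branch coincidence then pays at least $\poly(K)/\min\{d,n\}$ per unit excess. This should follow by processing the trees branch by branch, using the fact that the $d$-nodes have degree $2$, so each circuit in the Weingarten decomposition has bounded structure. The remaining work is to count shapes with a given excess $e$ by at most $\poly(K)^e$. I expect the difficulties to be controlling the Weingarten error terms uniformly over $K=\Theta(\log n)$ pairings, and handling the automorphism weights $\aut(T)$ for partially overlapping trees; the bound $\aut\le e^{CM}$ per branch from \Cref{def:j} should absorb the latter.
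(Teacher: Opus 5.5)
Your outline identifies the right dominant configurations and target rates. However, the step you defer (that every other shape is dominated) is the entire content of the proposition, and the vertex-excess accounting you propose for it fails for two reasons.

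First, bounding each $Q$-moment by $d^{-N/2}$, with $N$ the number of bichromatic pairs, discards the factor $(4/d)^{(k-\kappa-\mu_1)/2}$ in \eqref{eq:bound-mu1}. Red and blue edge pairs can be chained into long alternating circuits, e.g.\ the $8$-circuit through $u,w,v,t$ in \Cref{fig:decomposition}. Along it every $n$-node lies in exactly two trees and all $d$-labels are distinct, so the circuit creates no vertex excess at all. Such rearrangements are numerous: compare the $(4K)^{K-\mu_1}$ count of \Cref{lem:bound-mu1-le-k-2}, which is beaten only by $(4/d)^{(K-\mu_1)/2}$ even in the mean computation. So ``excess $\gtrsim$ number of non-cross-matched edges'' is false.

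Second, excess produced by $2$-circuits is not penalized in $d$. A $2$-circuit contributes $\E[X_{ua}^2]=1$, whereas a $4$-circuit contributes $\rho^2/d$, so the $d$-labels lost to parallel edges are exactly repaid by the moment (compare the $\kappa$ in \eqref{eq:num-a} with the factor $(\rho^2/d)^{\ell M/2-\kappa/2}$). Hence a partial monochromatic overlap of $s$ edge pairs between $T_1$ and $S_1$ costs $\rho^{-O(s)}$ times a boundary factor independent of $s$, not $(\poly(K)/\min\{d,n\})^{\Omega(s)}$. Similarly, your type (a) loses no powers of $n$ or $d$ relative to $(\E[\similarity_{ii}])^2$. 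Its $(\beta/\rho^4)^M$ per shared branch is the cost $\rho^{-4M}$ set against the saving $\lesssim D^2/|\mathcal{J}|\le\beta^M$ in choosing $(T,S)$. So a penalty ``per unit excess'' is not available. Genuine factors of $1/\min\{d,n\}$ arise only at the boundaries of overlaps and from long circuits.

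The missing idea is to charge per branch rather than per unit of excess. The paper separates fully overlapping branch pairs from grafted branches. The fully overlapping pairs are counted exactly through $\mathcal{J}$, giving $(\sqrt\beta/\rho^2)^{mM}\beta^{sM/2}$ (\Cref{prop:uo-co}). For the grafted branches, \Cref{lmm:para-constr-ur} shows that each one either contains an edge of a circuit of length $>4$ or has a graft witness nearby. A witness is an overdecorated $n$-node (worth $n^{-1/2}$ by \eqref{eq:num-v}) or a crossroad $d$-node (worth $d^{-1/2}$ by \eqref{eq:num-a}), and each witness is charged at most four times. Every other per-branch cost is $e^{O(M)}=\poly(K)$ because $M=\Theta(\log K)$, and so is absorbed. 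These costs are $\rho^{-\kappa}$ from interior $2$-circuits, $\alpha^{-O(M)}$ shape counts, $2^{O(M)}$ decorations, and $\mathrm{aut}(J)\le e^{CM}$. The result is $(\poly(K)/\min\{d,n\})^{\ell/8}$ for $\ell$ grafted branches (\Cref{prop:ur-cr}).

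Your false-pair dichotomy and the true-pair exponent $1/8$ are what this gives through $2m+\ell=4D$ and $m+s+\ell\ge 1$. They do not follow from an excess count as you formulated it. A minor point: for $j=i$, what cancels against the squared mean is every decomposition in $\mathcal{C}(T_1\cup_n T_2)\times\mathcal{C}(S_1\cup_n S_2)$, up to the $\poly(K)/d$ factorization error, not only vertex-disjoint configurations.
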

The intuition behind the false-pair variance bound \eqref{eq:var-bd-ij} has already been discussed in \Cref{sect:main-intui}. In particular, the two dominant terms arise from the two configurations depicted in \Cref{fig:variance-illustrate}. The intuition behind the true-pair variance bound \eqref{eq:var-bd-ii} is similar, except that these structures can occur only at a branch pair; as a result, the exponents in the two terms are each reduced by a $\Theta(D)$ factor. The formal proof of \Cref{prop:var-control} involves intricate combinatorial arguments on trees, and is deferred to \Cref{sect:var}.

Together, \Cref{prop:mean-sep,prop:var-control} guarantee the success of the algorithm.
\begin{proof}[Proof of \Cref{thm:algo}]
We first show that, under the parameter specified in~\eqref{eq:wide-tree-1} with appropriate choice of constants $C_1,C_2,c_1>0$, \Cref{prop:var-control} implies the following bounds: 
\# 
& \frac{\var(\similarity_{ii})}{\left(\E[\similarity_{ii}]\right)^2} \le C\left[\left(\frac{\sqrt{\beta}}{\rho^2} \right)^{M} + \left(\frac{\poly(K)}{\min\{d, n\}}\right)^{1/8}\right] \le \frac{C}{\log^5 n}, \label{eq:var-bd-ii-pf}\\
& \frac{\var(\similarity_{ij})}{\left(\E[\similarity_{ii}]\right)^2} \le CD\left[\left(\frac{\beta}{\rho^4} \right)^{K} + \left(\frac{\poly(K)}{\min\{d, n\}}\right)^{D/2}\right] \le \frac{C}{n^3}, \quad \forall j \neq i. \label{eq:var-bd-ij-pf}
\#
First, note that $|\cJ| = (\alpha + o(1))^{-M} \ge [(\alpha + \epsilon/2)/(1+\epsilon/4)]^{-M}$ and recall that $M= K/D = (1/c_1)\log K$. Thus, if $c_1$ is sufficiently small so that $\log(1+\epsilon/4)/c_1 > 2$, then
\$ 
\beta^M |\cJ| \ge (1+\epsilon/4)^M = K^{\log(1+\epsilon/4)/c_1} \ge 7D^2. 
\$
Second, if $c_1 \le \log((\alpha+\epsilon)/(\alpha+\epsilon/2))/12= \log(\rho^2/\sqrt{\beta})/6$, then with $M = K/D = (1/c_1)\log K$ and $K=C_1 \log n$, we have 
\$ 
\left(\frac{\sqrt{\beta}}{\rho^2} \right)^{M} = K^{-\log(\rho^2/\sqrt{\beta})/c_1}
=(C_1 \log n)^{-\log(\rho^2/\sqrt{\beta})/c_1} \le \frac{1}{2\log^5 n}.
\$
Next, choose $C_1 \ge 4/\log((\alpha+\epsilon)/(\alpha+\epsilon/2)) = 4/\log(\rho^4/\beta)$. 
Then
\$ 
D\left(\frac{\beta}{\rho^4} \right)^{K} = \frac{c_1K}{\log K}\left(\frac{\beta}{\rho^4} \right)^{C_1 \log n }\le \frac{1}{2n^3}.
\$
Moreover, note that $d \ge (\log n)^{C_2}$ and $K=C_1 \log n.$
By choosing $C_2$ to be a sufficiently large constant, we have $\poly(K)/\min\{d, n\} \le (\log n)^{-C_2/2}$ and 
\$\left(\frac{\poly(K)}{\min\{d, n\}}\right)^{1/8} \le (\log n)^{-C_2/16} \le 
\frac{1}{2\log^5 n}.\$ 
Finally, 
substituting $K=C_1\log n$ and 
$D= c_1 K/\log K$ and choosing $C_2$ to be a sufficiently large constant, we get 
\$
D\left(\frac{\poly(K)}{\min\{d, n\}}\right)^{D/2} 
\le D( \log n)^{-C_2 D/4}
\le ( \log n)^{-C_2 C_1 c_1 \log n/(4 \log (C_1 \log n)) + 1}
\le \frac{1}{2n^3}.
\$

We now prove \Cref{thm:algo}. Recall that the constant $c \in (1/4, 3/4)$. Let $\nu = \E[\similarity_{ii}]$.
For any $j \neq i$, by Chebyshev's inequality and \eqref{eq:var-bd-ij-pf}, we obtain
\$ 
\prob{\similarity_{ij} \ge \tau} = \prob{\similarity_{ij} -\E[\similarity_{ij}]\ge (c-o(1))\nu} \le \frac{\var(\similarity_{ij})}{(c-o(1))^2\nu^2} = o(1/n^2).
\$
Applying a union bound over all pairs with $j \neq i$, we obtain
\$ 
\prob{\exists j\neq i\in[n]\colon \similarity_{ij} \ge \tau} = o(1).
\$
Thus, with probability at least $1-o(1)$, we have $\similarity_{ij} < \tau$ for all $j\neq i$ and all $i\in[n]$. By our construction of $\widehat{\pi}$ and $I$, it follows that $I = \{i\colon \similarity_{ii} >\tau \}$ and $\widehat{\pi}\,|\,_I = \pi$ with probability at least $1-o(1)$.
For the true pairs, Chebyshev's inequality yields
\$ 
\prob{\similarity_{ii} \le \tau} \le \prob{\left|\similarity_{ii} - \nu \right|\ge (1-c)\nu} \le \frac{\var(\similarity_{ii})}{(1-c)^2\nu^2} \triangleq \gamma,
\$
where $\gamma\le 1/\log^4 n$ by \eqref{eq:var-bd-ii-pf}.
Hence $\E\left[n-\left|I\right|\right] \le \gamma n$. 
Applying Markov's inequality, we have
\$ 
\prob{n-\left|I\right| \ge \sqrt{\gamma}n} \le \frac{\E\left[n-\left|I\right|\right]}{\sqrt{\gamma}n} \le \sqrt{\gamma} = \frac{1}{\log^2 n}.
\$
Consequently, we conclude that $|I| \ge n - n/\log^2 n$ with probability at least $1-o(1)$.
\end{proof}

\section{Joint Moments Calculation} \label{sect:join-mom}
We first establish a general result for the joint moments of $X$ and $Y$, which will be useful for bounding the mean and variance.  
Given any bipartite multigraphs $G$ and $H$ with the same number of edges, we compute the expectation
\#\label{eq:mean-t12}
\E\left[X^G \cdot Y^H\right] \,.
\#
We show that this expectation can be bounded in terms of combinatorial properties of the \emph{$n$-union graph} of $G$ and $H$, which we will denote by $G\cup_{n} H$ and define as follows: 
\begin{enumerate}
\item Edges and $d$-nodes from $G$ are colored red. Edges and $d$-nodes from $H$ are colored blue.
\item $n$-nodes from $G$ and $H$ are merged according to their labels; $d$-nodes from $G$ are kept distinct from $d$-nodes from $H$. 
\end{enumerate}

Specifically, we show that the joint moments can be bounded as a function of the \emph{alternating circuit decompositions} of $G\cup_{n}H$ (to be defined later).
To arrive at this decomposition, we first take expectation over the Gaussian random variables $X$ and $Z$ using Isserlis' (or Wick's) theorem, and then take expectation over the random orthogonal matrix $Q$ via Weingarten calculus.

\subsection{Joint Moments of Gaussian Random Variables} Since the entries of $X$ and $Z$ are symmetrically distributed around zero, their odd-degree moments vanish. Thus, we only need to consider even-degree moments. 
The following lemma, known as Isserlis' (or Wick's) theorem \citep{isserlis1918formula,wick1950evaluation}, provides a formula for computing the moments of joint Gaussian distributions. 
Fix an even integer $N$. 
We define $\cM(N)$ as the set of all perfect matchings (pair partitions) $\ttm$ of the index set $[N]$, where each matching $\ttm\in \cM(N)$ can be written as
\$
\ttm = \{\{\ttm(1), \ttm(2)\}, \cdots, \{\ttm(N-1), \ttm(N)\}\},
\$
with $\{\ttm(1), \cdots, \ttm(N)\}$ being a permutation of $[N]$.
\begin{lemma}[Isserlis'/Wick's Theorem]\label{lem:isserlis} Let $(\xi_1, \cdots, \xi_N)$ be a multivariate Gaussian random vector with zero mean and $N$ be even. 
Then we have
\$
\E\left[\prod_{i=1}^N \xi_i\right] = \sum_{\ttm \in \cM(N)} \prod_{\{i,j\} \in \ttm} \E[\xi_i \xi_j],
\$
where the summation runs over $\cM(N)$, the set of perfect matchings (pair partitions) $\ttm$ of the index set $[N]$, and each product is over the pairs defined by the matching $\ttm$.    
\end{lemma}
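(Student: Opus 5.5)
We would prove Isserlis' theorem by generating functions. Write $\Sigma_{ij}=\E[\xi_i\xi_j]$, so $(\xi_1,\dots,\xi_N)\sim\mathcal N(0,\Sigma)$ with $\Sigma$ possibly singular. The characteristic function (equivalently the moment generating function) is $\E[e^{\langle t,\xi\rangle}]=\exp(\tfrac12 t^\top\Sigma t)$ for $t\in\R^N$. Since Gaussian vectors have moments of all orders, both sides are entire in $t$. This lets us identify mixed moments with Taylor coefficients:
\[
\E\Big[\prod_{i=1}^N \xi_i\Big]=\frac{\partial^N}{\partial t_1\cdots\partial t_N}\exp\!\Big(\tfrac12 t^\top\Sigma t\Big)\Big|_{t=0}.
\]
The justification is that we may differentiate under the expectation, by dominated convergence using Gaussian tails.

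Next we expand the exponential as $\sum_{k\ge0}\frac{1}{k!\,2^k}(t^\top\Sigma t)^k$. The mixed derivative $\partial_{t_1}\cdots\partial_{t_N}$ at $t=0$ extracts the coefficient of the multilinear monomial $t_1t_2\cdots t_N$. A homogeneous polynomial of degree $2k$ contributes to this monomial only if $2k=N$. In particular, when $N$ is odd the moment vanishes, consistent with the remark in the paper. So the moment equals the coefficient of $t_1\cdots t_N$ in
\[
\frac{1}{(N/2)!\,2^{N/2}}\Big(\sum_{i,j}\Sigma_{ij}t_it_j\Big)^{N/2}.
\]

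The key step is the combinatorial count. Expanding the power, we choose an ordered list of $N/2$ ordered index pairs $(i_1,j_1),\dots,(i_{N/2},j_{N/2})$ whose entries together list each element of $[N]$ exactly once. Each such list contributes $\prod_r\Sigma_{i_rj_r}$. Every perfect matching $\ttm\in\cM(N)$ arises from exactly $(N/2)!\,2^{N/2}$ such lists: order the blocks in $(N/2)!$ ways, and orient each pair in $2$ ways. By the symmetry of $\Sigma$, each of these lists gives the same product $\prod_{\{i,j\}\in\ttm}\Sigma_{ij}$. This cancels the prefactor exactly and yields $\sum_{\ttm\in\cM(N)}\prod_{\{i,j\}\in\ttm}\E[\xi_i\xi_j]$.

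The only point needing care is this bookkeeping: the expansion must cover only lists that use each index exactly once, and terms with repeated indices must not contaminate the multilinear coefficient. They cannot, because such terms are not multilinear in $t_1,\dots,t_N$. Repeated variables among the $\xi_i$ (e.g.\ $\xi_1=\xi_2$) are handled automatically, since we work with a formal vector whose covariance may be degenerate.
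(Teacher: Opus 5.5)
Your proof is correct. The paper gives no proof of this lemma; it states it as the classical result of Isserlis and Wick and cites them, so your argument supplies a self-contained justification for something the paper uses as a black box.

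Your route is the standard generating-function one. You identify $\E[\xi_1\cdots\xi_N]$ with the coefficient of $t_1\cdots t_N$ in $\exp(\tfrac12 t^\top\Sigma t)$, then count exactly $(N/2)!\,2^{N/2}$ ordered lists per perfect matching. The other common route is Gaussian integration by parts, $\E[\xi_1 F(\xi)]=\sum_j \Sigma_{1j}\E[\partial_j F(\xi)]$. That gives the recursion $\E[\xi_1\cdots\xi_N]=\sum_{j\ge 2}\Sigma_{1j}\E[\prod_{i\ne 1,j}\xi_i]$, and the matching expansion then follows by induction. Your version produces the whole sum in one step, with an exact cancellation of the prefactor.

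As you note, your version also handles singular $\Sigma$ automatically. The paper needs exactly this: it applies the lemma to $(X,Y)$ conditioned on $Q$, to monomials $X^G Y^H$ in which the same entry can repeat because multigraphs carry parallel edges.

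Two small points of wording.
\begin{itemize}
\item The formula you write, $\E[e^{\langle t,\xi\rangle}]=\exp(\tfrac12 t^\top\Sigma t)$, is the moment generating function, not the characteristic function; the latter is $\exp(-\tfrac12 t^\top\Sigma t)$.
\item Having moments of all orders does not by itself make a moment generating function finite, let alone entire; the lognormal is a counterexample. What you actually need, and do invoke, is the Gaussian tail bound $\E[e^{c\|\xi\|}]<\infty$ for every $c>0$. That bound is what justifies differentiating under the expectation by dominated convergence.
\end{itemize}
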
 
Isserlis' theorem converts higher-order moments of Gaussian random variables into sums of products of pairwise covariances. This allows us to compute the joint moments of $X$ and $Y$ conditional on $Q$. 
Specifically, we define $\widetilde\cM(G, H)$ as the set of perfect matchings on the \emph{edge set} of the $n$-union graph $G\cup_n H$. For each $\ttm\in \widetilde\cM(G, H)$, let $\ttmbi = \ttm\cap (E(G)\times E(H))$ be the set of bichromatic pairs.\footnote{For clarity, we always write pairs $\{(u,a), (v,b)\} \in \ttm$ with one edge from $E(G)$ and the other from $E(H)$ in an ordered manner, where the first component is from $E(G)$ corresponding to $X$, and the second component is from $E(H)$ corresponding to $Y$. Such pairs are denoted by $\{(u,a), (v,b)\} \in E(G)\times E(H)$.} 
The remaining pairs in $\ttm\setminus\ttmbi$ are called monochromatic pairs, since they consist of two edges both in $E(G)$ or both in $E(H)$.
We further define a subset $\cM(G, H)\subset \widetilde\cM(G, H)$ consisting of those perfect matchings such that all bichromatic pairs of edges share an $n$-node in $G \cup_{n} H$, and all monochromatic pairs share both endpoints.
That is, for any $\ttm\in\cM(G, H)$: 
\begin{enumerate}
\item For each pair $\{(u,a), (v,b)\} \in \ttmbi$, it must hold that $u = v$. 
\item For each pair $\{(u,a), (v,b)\} \in \ttm\setminus\ttmbi$, it must hold that  $u = v$ and $a=b$. 
\end{enumerate} 
Note that if both $G$ and $H$ are simple graphs without parallel edges, then no monochromatic pairs exist, and ${\cM}(G,H) \neq \emptyset$ only if every $n$-node in $G\cup_n H$ is incident to the same number of edges from $G$ as from $H$.

The lemma below computes the joint moments in $X$ and $Y$, conditioned on $Q$, by applying Isserlis' theorem.
Since every matching $\ttm\in \cM(G,H)$ consists of pairs of edges sharing an $n$-node, we abuse notation and write $Q^{\ttm} = \prod_{\{(u,a),(u,b)\}\in \ttm}Q_{ab}$, conflating the set of pairs of edges $\ttm$ with the corresponding set of pairs of their $d$-endpoints.
\begin{lemma}\label{lem:isserlis-x-z}
    For any bipartite multigraphs $G$ and $H$, we have
\$
\E_{X,Z}\left[X^G \cdot Y^H \given Q\right] = \sum_{\ttm \in \cM(G, H)} \rho^{|\ttmbi|} Q^{\ttmbi}.
\$
\end{lemma}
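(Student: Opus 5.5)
Conditional on $Q$, the vector consisting of all entries of $X$ and $Y$ is jointly Gaussian with mean zero, so \Cref{lem:isserlis} applies directly to the product $X^G Y^H=\prod_{(u,a)\in E(G)}X_{ua}\prod_{(v,b)\in E(H)}Y_{vb}$. We regard the factors as indexed by the edge multiset of $G\cup_n H$, with parallel edges counted as distinct factors. Isserlis' theorem then expresses the conditional moment as a sum over all perfect matchings $\ttm\in\widetilde\cM(G,H)$ of the product of pairwise conditional covariances. The proof therefore reduces to computing three kinds of covariances given $Q$ and showing that a matching contributes zero unless it lies in $\cM(G,H)$.

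The covariances are as follows. For two red edges, $\E[X_{ua}X_{vb}]=\indc{u=v,a=b}$, since the entries of $X$ are i.i.d.\ standard normals. For two blue edges, we write $Y_v=\rho Q X_{\pi(v)}+\sqrt{1-\rho^2}Z_v$. Because $Q$ is orthogonal, $\cov(Y_u,Y_v\mid Q)=\indc{u=v}(\rho^2 QQ^\top+(1-\rho^2)I_d)=\indc{u=v}I_d$, which gives $\E[Y_{ua}Y_{vb}\mid Q]=\indc{u=v,a=b}$. For a bichromatic pair, $\E[X_{ua}Y_{vb}\mid Q]=\rho\sum_c Q_{bc}\E[X_{ua}X_{\pi(v)c}]=\rho\,\indc{u=\pi(v)}Q_{ba}$.

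Next comes the identification with $\cM(G,H)$. We work under the paper's standing convention $\pi=\id$ (equivalently, the $n$-nodes of $H$ are relabeled through $\pi$ before forming $\cup_n$). A matching then has nonzero weight only if every monochromatic pair shares both endpoints and every bichromatic pair shares its $n$-node. This is exactly the definition of $\cM(G,H)$. For such a matching, each monochromatic pair contributes $1$ and each bichromatic pair $\{(u,a),(u,b)\}$ contributes $\rho Q_{ab}$. The total weight is therefore $\rho^{|\ttmbi|}\prod Q_{ab}=\rho^{|\ttmbi|}Q^{\ttmbi}$.

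The only delicate point is index bookkeeping. First, the orientation of $Q$: the paper's convention writes bichromatic pairs with the $X$-edge first and uses $Q_{ab}$, while the computation above produced $Q_{ba}$. I expect to fix this by matching the convention, for instance by noting that $Q\stackrel{d}{=}Q^\top$ for Haar measure, or by adopting the transpose in the definition, since it only matters later under expectation. Second, parallel edges must be treated as distinct factors, which is what makes the Isserlis sum over $\widetilde\cM(G,H)$ legitimate. Beyond these two points, the argument is a direct application of \Cref{lem:isserlis}.
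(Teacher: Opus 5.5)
Your proposal is correct and matches the paper's proof: condition on $Q$, compute the three pairwise covariances, apply Isserlis' theorem, and observe that the resulting indicators cut $\widetilde{\mathcal{M}}(G,H)$ down to $\mathcal{M}(G,H)$. On the orientation point, the paper's proof implicitly uses the convention $Y=\rho\Pi XQ+\sqrt{1-\rho^2}Z$ (it writes $Y_{vb}=\rho\sum_{c}Q_{cb}X_{vc}+\sqrt{1-\rho^2}Z_{vb}$, matching the low-degree section), which gives $\rho Q_{ab}\indc{u=v}$ directly. Relabeling $Q\leftrightarrow Q^\top$ in the model is the right fix, because $Q\stackrel{d}{=}Q^\top$ alone cannot justify an identity that holds conditionally on $Q$.
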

\Cref{lem:isserlis-x-z} implies that any $G$ and $H$ with ${\cM}(G,H) = \emptyset$ yield a joint moment of $0$. 
\begin{proof}[Proof of \Cref{lem:isserlis-x-z}]
Since $X,Y \, | \, Q$ follow a joint Gaussian distribution with zero mean, we apply Isserlis' Theorem 
by first computing the pairwise correlation of entries. 
First, since all entries of $X$ are independent, we have 
\#\label{eq:pair-x}
\E[X_{ua} X_{vb}] = \Ind_{\{(u,a)=(v,b)\}}.
\#
Similarly, we have 
\#\label{eq:pair-y}
\E\left[Y_{ua} Y_{vb} \given Q\right] = \Ind_{\{(u,a)=(v,b)\}}.
\#
Next, we compute the conditional correlation between $X_{ua}$ and $Y_{vb}$ and obtain
\#\label{eq:pair-x-y}
\E\left[X_{ua} Y_{vb}\given Q\right] & = \E\left[X_{ua} \left(\rho \sum_{c \in [d]} Q_{cb} X_{vc} + \sqrt{1-\rho^2}Z_{v b}\right)\given Q\right] \notag\\
&= \rho\E\left[X_{ua} \cdot \sum_{c \in [d]} Q_{cb} X_{vc} \given Q\right]\notag\\
& = \rho Q_{ab} \Ind_{\{u=v\}},
\#
where the second equality uses independence of $X$ and $Z$ and the last equality follows from \eqref{eq:pair-x}.

Recall that $\widetilde\cM(G, H)$ is the set of perfect matchings among all edges in $G$ and $H$.
\Cref{lem:isserlis} and the pairwise covariances in \eqref{eq:pair-x}, \eqref{eq:pair-y}, and \eqref{eq:pair-x-y} imply that
\$
\E\left[X^G \cdot Y^H\given Q\right] 
& = \sum_{\ttm \in \widetilde\cM(G, H)} \prod_{\substack{\{e,f\} \in \ttm \setminus \ttmbi\\ e,f \in E(G)}}  \E\left[X_{e} X_{f}\right] \prod_{\substack{\{e,f\} \in \ttm \setminus \ttmbi\\ e,f \in  E(H)}}\E\left[Y_{e} Y_{f}\given Q\right] \prod_{\substack{\{e,f\} \in \ttmbi}} \E\left[X_{e} Y_{f}\given Q\right]  \notag\\
&  = \sum_{\ttm \in \widetilde\cM(G, H)} \prod_{\substack{\{e,f\} \in \ttm \setminus \ttmbi \\ e,f\in E(G)}} \Ind_{\{e=f\}} \prod_{\substack{\{e,f\} \in \ttm \setminus \ttmbi\\ e, f \in E(H)}} \Ind_{\{e=f\}} \prod_{\substack{\{(u,a), (v,b)\} \in \ttmbi}} \rho Q_{ab} \Ind_{\{u=v\}}.
\$
The conclusion now follows because the indicators enforce the conditions defining ${\cM}(G,H)$.
\end{proof}

It now remains to take the expectation over $Q$.

\subsection{Weingarten Calculus for Orthogonal Groups.}
Weingarten calculus is a method for computing moments of matrices sampled from the Haar measure over the Orthogonal or Unitary group.
We use Weingarten calculus to take the expectation over $Q$, as in a number of recent works in high-dimensional/computational statistics \cite{chen2022toward,kunisky2024tensor} (our specific use is somewhat different).
We now summarize several key results from Weingarten calculus for computing moments of entries of $Q$ \citep{collins2017weingarten}. 
Since the Haar measure over orthogonal matrices is symmetric about zero, all odd-degree moments vanish, and it suffices to consider only even-degree moments.

Suppose we are given a bipartite multigraph $J = ([d],[d],E)$, whose left-hand vertex set is identified with a ``red'' copy of $[d]$ and whose right-hand vertex set is identified with a ``blue'' copy of $[d]$. Recall that $Q^J$ denotes the product of matrix entries corresponding to the edges of $J$.
It will turn out that the moment $\E_{Q\sim \cO(d)}[ Q^J]$ can be expressed in terms of features of the \emph{circuit decompositions} of $J$. To describe this relation, we introduce the following notion.

\begin{definition}[Circuit Decomposition] 
Let $J$ be a multigraph, and in what follows treat multi-edges as distinct parallel edges.

A \emph{closed walk} is a sequence of edges $(e_1, e_2, \cdots, e_k)$ for which there exists a sequence of vertices $(v_1, v_2, \cdots, v_{k+1})$ such that $e_i$ is an edge between $v_i$ and $v_{i+1}$ for each $i \in [k]$, and $v_1 = v_{k+1}$.

A \emph{circuit} is a closed walk in which all edges are distinct.

Two closed walks that traverse the same set of edges in a different order are considered distinct as circuits, unless one can be obtained from the other by a cyclic shift or by reversing the order of traversal. In other words, the walks $(e_1, e_2, \cdots, e_k)$, $(e_i, e_{i+1}, \cdots, e_k, e_1, \cdots, e_{i-1})$, and $(e_i, e_{i-1}, \cdots, e_1, e_k, \cdots, e_{i+1})$ for any $i \in [k]$ are considered the same circuit.

A \emph{circuit decomposition} of $J$ is a set of circuits that partitions $E(J)$. Two circuit decompositions are considered distinct if they contain different circuits. We denote by $\cC(J)$ the set of all circuit decompositions of $J$.

\end{definition}

A graph $J$ admits a circuit decomposition if and only if all of its vertices have even degrees. \Cref{fig:weing-example} shows an example of a graph $J$ and its circuit decompositions.
\input{fig/weing-example}

\paragraph{Weingarten calculus.}
Consider any circuit decomposition $\bc$ in which every circuit has even length. 
For each integer $\ell\ge 1$, let $\mu_\ell(\bc)$ denote the number of circuits of length $2\ell$ in $\bc$, and write $\mu(\bc) = (\mu_\ell(\bc))_{\ell\ge 1}$ for the \emph{circuit type} of $\bc$. We take a vertex-disjoint union of these circuits in $\bc$, and assign each vertex a distinct label so that vertices belonging to different circuits have different labels.  Let $G$ be a bipartite multigraph obtained in this way.
We define the \emph{Weingarten function} associated with the circuit type $\mu(\bc)$ by 
\#\label{eq:weingarten-fnc}
\mathrm{Wg}(\mu(\bc), d) \triangleq \E_{Q\sim \cO(d)}\left[Q^G\right].
\#
By the bi-invariance of the Haar measure, the value in \eqref{eq:weingarten-fnc} is invariant under permutations of vertex labels and therefore depends only on the circuit type $\mu(\bc)$, rather than on the specific labeling of $G$.

Now consider an arbitrary bipartite graph $J = ([d],[d],E)$. In general, $J$ may admit multiple circuit decompositions, each consisting of circuits of even length, see, e.g., \Cref{fig:weing-example}. For any decomposition $\bc\in \cC(J)$, its circuit type $\mu(\bc)$ determines a Weingarten function value $\mathrm{Wg}(\mu(\bc), d)$ defined in \eqref{eq:weingarten-fnc}. The following Weingarten calculus expresses the moment of $Q^J$ as a sum over all such circuit decompositions.
\begin{lemma}[{\citep[Lemma~4.1]{collins2017weingarten}}]\label{lem:weing-1}
    For a bipartite graph $J = ([d],[d],E)$, we have
    \$
    \E_{Q\sim \cO(d)}\left[Q^J\right] = \sum_{\bc\in\cC(J)} \mathrm{Wg}(\mu(\bc), d),
    \$
    where $\mu(\bc)$ denotes the circuit type of the decomposition $\bc$, and $\mathrm{Wg}(\mu(\bc), d)$ is the corresponding Weingarten function.
\end{lemma}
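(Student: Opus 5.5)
We will derive the lemma from the standard Weingarten formula for the orthogonal group. Write the multiset of edges of $J$ as $E=\{(a_1,b_1),\dots,(a_{2k},b_{2k})\}$ with $a_s$ in the red copy of $[d]$ and $b_s$ in the blue copy; an odd number of edges gives zero on both sides, since $Q\mapsto -Q$ preserves Haar measure and odd graphs have no even circuit decomposition. The starting point is the Collins--\'Sniady/Matsumoto formula
\[
\E\Big[\prod_{s=1}^{2k} Q_{a_s b_s}\Big]=\sum_{\mathfrak{p},\mathfrak{q}\in\cM(2k)} \delta_{\mathfrak{p}}(\mathbf{a})\,\delta_{\mathfrak{q}}(\mathbf{b})\,\mathrm{Wg}^{O}(\mathfrak{p},\mathfrak{q};d).
\]
Here $\delta_{\mathfrak{p}}(\mathbf{a})=1$ if $a_s=a_t$ for every pair $\{s,t\}\in\mathfrak{p}$, and similarly for $\delta_{\mathfrak{q}}(\mathbf{b})$. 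The orthogonal Weingarten function $\mathrm{Wg}^O(\mathfrak{p},\mathfrak{q};d)$ depends only on the coset type of $(\mathfrak{p},\mathfrak{q})$. That coset type is the multiset of half-lengths of the cycles of the graph on $[2k]$ whose edge set is $\mathfrak{p}\cup\mathfrak{q}$.

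The plan is to reinterpret the nonzero terms combinatorially. A pair $(\mathfrak{p},\mathfrak{q})$ with $\delta_{\mathfrak{p}}(\mathbf{a})\delta_{\mathfrak{q}}(\mathbf{b})=1$ specifies, at every red vertex, a pairing of the incident edges of $J$, and at every blue vertex, another pairing of the incident edges. Such a choice of local pairings at all vertices (a ``transition system'') determines a circuit decomposition of $J$: starting from an edge, we leave each vertex through the edge paired with the arriving one. Each cycle of $\mathfrak{p}\cup\mathfrak{q}$ alternates red and blue pairings, and therefore corresponds exactly to one circuit of $J$ of length equal to the cycle length. That length is even because $J$ is bipartite. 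Hence the coset type of $(\mathfrak{p},\mathfrak{q})$ equals the circuit type $\mu(\bc)$.

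The key step is to check that the map from transition systems to circuit decompositions is a bijection, under the paper's convention that circuits are identified up to cyclic shift and reversal, with parallel edges treated as distinct. Two checks are needed.
\begin{itemize}
\item A circuit decomposition determines the transition at each visit to a vertex, namely the pair of consecutive edges through it. This recovers $\mathfrak{p}$ and $\mathfrak{q}$.
\item Cyclic shifts and reversals do not change the set of consecutive pairs, so the correspondence is well defined.
\end{itemize}
The delicate cases are circuits of length $2$, which consist of two parallel edges. There the two transitions, one at the red end and one at the blue end, are the same pair, and this must be matched against the convention that such a circuit is counted once. We will verify these small cases explicitly, since the bookkeeping of multi-edges is where off-by-factor errors would appear.

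Finally, we identify $\mathrm{Wg}^O$ of the coset type $\mu$ with the paper's definition \eqref{eq:weingarten-fnc}. Apply the same formula to the vertex-disjoint union $G$ of circuits of type $\mu$ with distinct labels. In $G$ every vertex has degree $2$, so there is exactly one admissible pair $(\mathfrak{p},\mathfrak{q})$, namely the one pairing the two edges at each vertex, and its coset type is $\mu$. Therefore $\E[Q^G]=\mathrm{Wg}^O(\mu;d)$, which is exactly \eqref{eq:weingarten-fnc}. Summing over transition systems of $J$ and using the bijection gives the lemma. The main obstacle is the bijection and counting step; the rest is direct substitution.
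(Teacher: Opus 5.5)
\textbf{Gap: the bijection step is false under the convention you adopt.} The paper gives no proof here; it only cites Collins--Matsumoto. Your route is the standard derivation behind that result: the orthogonal Weingarten formula over pairs of pair partitions, reading an admissible $(\mathfrak{p},\mathfrak{q})$ as a transition system, matching coset type with circuit type, and evaluating on a vertex-disjoint union of circuits. Most of it is correct, but the step you call key fails.

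Your first bullet assumes that a circuit, given as an edge sequence modulo cyclic shift and reversal, determines the vertex at which each consecutive pair of edges is joined. That is what lets its consecutive pairs split into red transitions (giving $\mathfrak{p}$) and blue transitions (giving $\mathfrak{q}$). This holds unless every edge of the circuit is a parallel copy of one edge between a red vertex $a$ and a blue vertex $b$. In that case $(e_1,\dots,e_{2\ell})$ can be traversed starting at $a$ or starting at $b$. For $\ell\ge 2$, the two traversals assign the pairings $\{e_1e_2,e_3e_4,\dots\}$ and $\{e_2e_3,\dots,e_{2\ell}e_1\}$ to $a$ and $b$ in opposite ways. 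So two different transition systems give the same circuit, and your map is $2$-to-$1$ on such circuits. The length-$2$ circuits you flagged are harmless, since both traversals yield the same transitions.

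\textbf{The identity itself fails with this convention.} Take $J$ to be four parallel edges between one red and one blue vertex, so $Q^J=Q_{11}^4$ and $\E[Q_{11}^4]=3/(d(d+2))$. Write $w_{11}=\frac{d+1}{d(d-1)(d+2)}$ and $w_{2}=\frac{-1}{d(d-1)(d+2)}$ for the Weingarten values of the types ``two $2$-circuits'' and ``one $4$-circuit''.
\begin{itemize}
\item The Weingarten formula gives $3w_{11}+6w_2=3/(d(d+2))$: three pairs have $\mathfrak{p}=\mathfrak{q}$ and six have $\mathfrak{p}\neq\mathfrak{q}$.
\item $J$ has only three $4$-circuits in the edge-sequence sense: $(e_1e_2e_3e_4)$, $(e_1e_2e_4e_3)$ and $(e_1e_3e_2e_4)$. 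The sum over circuit decompositions is therefore $3w_{11}+3w_2=3/((d-1)(d+2))$, which is wrong.
\end{itemize}
So the bijection you plan to verify cannot be verified.

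\textbf{How to repair it.} What your argument actually proves is the version in which a circuit carries its vertex sequence, equivalently a sum over transition systems. In the paper's convention, this is the same as inserting a factor $2$ for every circuit of length at least $4$ consisting of parallel copies of a single edge. With that convention fixed, the rest of your proof goes through: cycles of $\mathfrak{p}\cup\mathfrak{q}$ give circuits of the same length, coset type equals circuit type, the disjoint union $G$ has a unique admissible pair, and the odd cases vanish on both sides.
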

In the special case where every vertex of $J$ has degree exactly $2$, the graph $J$ itself is a disjoint union of cycles and thus admits a unique circuit decomposition $\bc$. In this case, \Cref{lem:weing-1} reduces to the definition in \eqref{eq:weingarten-fnc}.

\paragraph{Asymptotic values of 
the Weingarten function.}
Explicit formulas for $\mathrm{Wg}(\mu(\bc), d)$ can be derived directly from \eqref{eq:weingarten-fnc}.  
For our purposes, however, it suffices to use asymptotic bounds. The total number of edges of the graph is $\sum_{\ell\ge 1} 2\ell \mu_\ell(\bc)$, since each of the $\mu_\ell(\bc)$ circuits contains $2\ell$ edges. We denote this quantity by $2k$.
Define $\gamma_{\mu(\bc)} = \prod_{\ell\ge 1} \big[\Cat(\ell-1)\big]^{\mu_\ell(\bc)}$, where $\Cat(i) = \frac{1}{i+1}\binom{2i}{i}$ for $i\ge 0$ denotes the Catalan number. Since $\Cat(i) \le 4^i$ for all $i$, we have $1 \le \gamma_{\mu(\bc)} \le 4^{k-\sum_{\ell\ge 1} \mu_{\ell}(\bc)}$. The Weingarten function $\mathrm{Wg}(\mu(\bc), d)$ can be bounded as follows.

\begin{lemma}[{\citep[Theorem 4.11, Lemma 4.12]{collins2017weingarten}}]\label{lem:weing-fnc}
Given any circuit type $\mu(\bc)$, let $k\triangleq\sum_{\ell\ge 1} \ell \mu_\ell(\bc)$. So long as $d> 12 k^{7/2}$, we have
\$
\mathrm{Wg}(\mu(\bc),d) = \left(1 \pm  \frac{\poly(k)}{d}  \right)\gamma_{\mu(\bc)} (-1)^{k-\sum_{\ell\ge 1} \mu_{\ell}(\bc)} d^{-2k+\sum_{\ell\ge 1} \mu_{\ell}(\bc)}. 
\$
\end{lemma}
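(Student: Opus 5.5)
Since the statement is quoted from \citep{collins2017weingarten}, the plan is to reduce it to the standard orthogonal Weingarten function on pair partitions and then run the usual Neumann-series argument. Throughout, $k=\sum_\ell \ell\mu_\ell(\bc)$ and the graph $G$ in \eqref{eq:weingarten-fnc} has $2k$ edges.

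\textbf{Step 1: reduce to a single Weingarten value.} Write $G$ as a disjoint union of circuits with distinct vertex labels. Each red vertex $i$ and each blue vertex $j$ of $G$ has degree exactly $2$. I would apply the orthogonal Weingarten formula
\[
\E\Big[\prod_{s=1}^{2k} Q_{i_s j_s}\Big]=\sum_{p,q\in\cM(2k)}\delta_p(\mathbf i)\,\delta_q(\mathbf j)\,\mathrm{Wg}^{O}(p,q;d),
\]
where $\delta_p(\mathbf i)=1$ exactly when $i_s=i_t$ for every pair $\{s,t\}\in p$. Because labels are distinct within and across circuits, $\delta_p(\mathbf i)=1$ forces $p$ to pair the two edges at each red vertex, so only one $p_0$ survives. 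Likewise only one $q_0$ survives on the blue side. Hence $\E[Q^G]=\mathrm{Wg}^O(p_0,q_0;d)$.

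The union $p_0\cup q_0$, viewed as a graph on the $2k$ edges, alternates red pairs and blue pairs. Its loops are exactly the circuits of $\bc$, with a circuit of length $2\ell$ giving a loop through $2\ell$ edges. So the coset type of $(p_0,q_0)$ is the partition $\mu(\bc)$. This also confirms that the value depends only on $\mu(\bc)$.

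\textbf{Step 2: invert the Gram matrix.} $\mathrm{Wg}^O$ is the inverse of the $(2k-1)!!\times(2k-1)!!$ Gram matrix $\Phi(p,q)=d^{\mathrm{loops}(p,q)}$. I would write $\Phi=d^{k}(I+B)$, where
\[
B(p,q)=d^{-(k-\mathrm{loops}(p,q))}\ \text{for } p\neq q,\qquad B(p,p)=0 .
\]
Define the distance $|p^{-1}q|=k-\mathrm{loops}(p,q)$; it is a metric on pairings. Then
\[
\mathrm{Wg}^O=d^{-k}\sum_{r\ge0}(-B)^r .
\]
The $(p_0,q_0)$ entry of $(-B)^r$ is a signed sum over chains $p_0=r_0,r_1,\dots,r_r=q_0$ with consecutive elements distinct, each weighted by $d^{-\sum_t|r_{t-1}^{-1}r_t|}$. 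By the triangle inequality every chain has total length at least $|p_0^{-1}q_0|=k-\sum_\ell\mu_\ell$.

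\textbf{Step 3: leading term.} The minimal-length chains are the geodesic chains. Collecting them gives $(-1)^{k-\#}$ times the Möbius function of the noncrossing-partition lattice, which factors over the loops of $(p_0,q_0)$. A loop of length $2\ell$ contributes $(-1)^{\ell-1}\Cat(\ell-1)$. Together with the prefactor $d^{-k}$, this yields the main term
\[
\gamma_{\mu(\bc)}(-1)^{k-\sum_\ell\mu_\ell}\,d^{-2k+\sum_\ell\mu_\ell}.
\]

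\textbf{Step 4: error term (main obstacle).} I must show that all non-geodesic chains contribute at most a $\poly(k)/d$ fraction of the main term, uniformly in $k$. First I would count pairings at distance $m$ from a fixed pairing; this number is at most $(ck^2)^m$, since each unit step is a transposition-type move with at most $O(k^2)$ choices. Then every excess unit of chain length costs a factor $O(k^{2})/d$, up to polynomial entropy from where the excess is inserted. Summing the geometric series requires $d\gg k^{7/2}$; this is where the hypothesis $d>12k^{7/2}$ enters. I would follow Collins--Matsumoto's bound on $\sum_{r}|B^r|$ rather than recompute these constants. The delicate point is that the main term is only $\gamma_\mu d^{-2k+\#}$, not larger, so the error must be measured relative to it: this needs $\gamma_\mu\ge1$ together with the sign-free bound on the non-geodesic terms.
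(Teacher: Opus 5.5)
The paper gives no proof of this lemma; it quotes Collins--Matsumoto directly. So the real question is whether your sketch would establish the bound.

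Steps 1--3 are sound and standard. Because all labels are distinct, only one pair $(p_0,q_0)$ survives, and its coset type is $\mu(\bc)$. The geodesic chains of the Neumann series then sum to the M\"obius function of the interval $[p_0,q_0]$, which is a product of noncrossing-partition lattices. This gives $\prod_\ell\bigl((-1)^{\ell-1}\mathrm{Cat}(\ell-1)\bigr)^{\mu_\ell}=(-1)^{k-\sum_\ell\mu_\ell}\gamma_{\mu(\bc)}$. Note that this value already carries the sign, so you should not multiply by $(-1)^{k-\#}$ a second time.

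The genuine gap is Step 4, which is the entire content of the cited result. You need a relative error $\mathrm{poly}(k)/d$ uniformly in $k$, and the paper uses it with $k=\Theta(\log n)$ and $d$ only $\mathrm{polylog}(n)$.

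The mechanism you describe cannot give that bound. In the plain expansion $d^{-k}\sum_r(-B)^r$, the leading coefficient $\gamma_{\mu}$ emerges only after massive cancellation. For a single loop, $\mu=(k)$, the interval is $\mathrm{NC}(k)$. Its M\"obius function has modulus $\mathrm{Cat}(k-1)\le 4^k$, yet it has $k^{k-2}$ maximal chains. Any sign-free bound on the non-geodesic chains is therefore measured against the unsigned chain count, not against $\gamma_\mu$.

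Concretely, consecutive pairings on a maximal geodesic chain differ on a single $4$-loop. Inserting the third pairing of those four points at any step gives a chain of excess exactly one. This produces $\gtrsim k^{k-2}$ chains, each of weight $d^{-2k}$, against a main term $\gamma_\mu d^{-2k+1}$. The sign-free error is thus at least of relative order $k^{k-2}/(4^k d)$. So this route needs $d$ superpolynomial in $k$ just to make the error small, and it never yields the $\mathrm{poly}(k)/d$ form. Two further symptoms: a per-step cost of $O(k^2)/d$ would suggest a threshold $d\gg k^2$, so the exponent $7/2$ cannot come out of your heuristic; and there is no Collins--Matsumoto bound on $\sum_r|B^r|$ to defer to.

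The missing idea is an expansion whose leading term involves no cancellation; as far as I know, this is the mechanism behind the cited uniform bounds. Via the orthogonality relations, equivalently by factoring the Gram matrix through odd Jucys--Murphy elements, one gets $\mathrm{Wg}^O(p_0,q_0;d)=d^{-k}\sum_{r}(-1)^r N_r\,d^{-r}$, where $N_r\ge 0$ counts \emph{monotone} walks. The first nonzero count is exactly $N_{k-\sum_\ell\mu_\ell}=\gamma_{\mu}$. An induction on the orthogonality recursion then bounds each higher $N_r$ by $\gamma_\mu$ times a factor polynomial in $k$ per unit of excess. Summing the resulting geometric series is where a condition like $d>12k^{7/2}$ enters.
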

The leading factor $d^{-2k+\sum_{\ell\ge 1} \mu_{\ell}(\bc)}$ in \Cref{lem:weing-fnc} implies that circuit decompositions $\bc$ containing a larger number of circuits $\sum_{\ell\ge 1} \mu_{\ell}(\bc)$ tend to yield Weingarten function values of larger magnitude.

In addition, \Cref{lem:weing-fnc} implies that the Weingarten function admits an approximate factorization over edge-disjoint graphs. Specifically, for two edge-disjoint graphs $J_1$ and $J_2$ with union $J=J_1\cup J_2$, and their circuit decompositions $\bc_1$ and $\bc_2$ with union $\bc = \bc_1\cup \bc_2$, it holds that 
\#\label{eq:twg-mult}
\wg(\mu(\bc),d)= (1 + o(1))\wg(\mu(\bc_1),d)\wg(\mu(\bc_2),d).
\# 
This follows from the fact that, in the approximation, the quantities $\mu_\ell(\bc)$ appear only in the exponents, and the circuit counts are additive for edge-disjoint parts; that is, $\mu_\ell(\bc) = \mu_\ell(\bc_1) + \mu_\ell(\bc_2)$ for all $\ell\ge 1$.

\paragraph{Joint moments via Weingarten calculus.}
We now apply the Weingarten calculus to compute joint moments. Recall from \Cref{lem:isserlis-x-z} that each matching $\ttm \in \cM(G,H)$ corresponds to a monomial $Q^{\ttmbi}=\prod_{\{(u,a), (u,b)\} \in \ttmbi} Q_{ab}$.
We represent this monomial using a bipartite multigraph $J_{\ttmbi} = ([d], [d], E(J_{\ttmbi}))$, whose edge multiset is defined by 
\#\label{eq:def-J-m*}
E(J_{\ttmbi}) = \{(a,b)\in [d]\times [d]\colon 
\{(u,a), (u,b)\} \in \ttmbi \}.
\#
Since each pair $\{(u,a),(u,b)\} \in \ttmbi$ corresponds to a bichromatic $2$-path in $G \cup_{n} H$, we can think of $J_{\ttmbi}$ as the bipartite graph obtained by contracting each such $2$-path into a single edge with one red and one blue $d$-endpoint.
With this representation, the joint moments in \eqref{eq:mean-t12} can be expressed using circuit decompositions of $J_{\ttmbi}$ as follows.
\begin{proposition}\label{prop:mom-cir}
For any bipartite multigraphs $G$ and $H$, we have
\$ 
\E\left[X^G \cdot Y^H\right]  = \sum_{\ttm \in \cM(G, H)} \rho^{|\ttmbi|} \cdot \sum_{\bc\in\cC(J_{\ttmbi})} \mathrm{Wg}(\mu(\bc), d).
\$
\end{proposition}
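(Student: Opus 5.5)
The plan is to combine \Cref{lem:isserlis-x-z} with \Cref{lem:weing-1} via the tower property. First, condition on $Q$. Since $Q$ is independent of $(X,Z)$, we have $\E[X^G Y^H] = \E_Q\big[\E_{X,Z}[X^G Y^H \mid Q]\big]$. By \Cref{lem:isserlis-x-z}, the inner expectation equals $\sum_{\ttm\in\cM(G,H)} \rho^{|\ttmbi|} Q^{\ttmbi}$. This is a finite sum, so linearity of expectation lets us pull the sum and the deterministic factors $\rho^{|\ttmbi|}$ outside. We are left with evaluating $\E_{Q\sim\cO(d)}[Q^{\ttmbi}]$ for each fixed matching $\ttm$.

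Second, identify $Q^{\ttmbi}$ with $Q^{J_{\ttmbi}}$. By definition, $Q^{\ttmbi}=\prod_{\{(u,a),(u,b)\}\in\ttmbi} Q_{ab}$, where $a$ is the red ($X$-side) $d$-endpoint and $b$ is the blue ($Y$-side) one, following the ordering convention in the footnote. The multigraph $J_{\ttmbi}$ in \eqref{eq:def-J-m*} has exactly one edge $(a,b)$, with $a$ in the red copy of $[d]$ and $b$ in the blue copy, for each such pair. Multiplicities are retained as parallel edges, so $Q^{J_{\ttmbi}}=\prod_{(a,b)\in E(J_{\ttmbi})}Q_{ab}=Q^{\ttmbi}$ exactly. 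The one point needing care is that the row index of $Q$ must come from the $X$-edge and the column index from the $Y$-edge. This matches \eqref{eq:pair-x-y}, where $\E[X_{ua}Y_{vb}\mid Q]=\rho Q_{ab}\Ind_{\{u=v\}}$, so $J_{\ttmbi}$ is a genuine bipartite graph between the red and blue copies of $[d]$, as \Cref{lem:weing-1} requires.

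Third, apply \Cref{lem:weing-1} to each $J_{\ttmbi}$, which gives $\E_Q[Q^{J_{\ttmbi}}]=\sum_{\bc\in\cC(J_{\ttmbi})}\mathrm{Wg}(\mu(\bc),d)$. Substituting back yields the claim. We should check the degenerate cases. If some vertex of $J_{\ttmbi}$ has odd degree, then $\cC(J_{\ttmbi})=\emptyset$ and the inner sum is $0$. This agrees with the vanishing of the corresponding moment, which \Cref{lem:weing-1} already covers. If $\ttmbi=\emptyset$, then $J$ is empty, the empty decomposition contributes $\mathrm{Wg}(\emptyset,d)=1$, and the term is $\rho^0=1$, as it should be.

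There is no real obstacle. The only step needing attention is the bookkeeping in the second step: making sure the orientation of the indices of $Q$ is red-to-blue, and that parallel bichromatic pairs become parallel edges of $J_{\ttmbi}$ rather than being merged.
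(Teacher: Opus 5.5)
Your proof is correct and follows the paper's argument. The paper likewise conditions on $Q$, uses \Cref{lem:isserlis-x-z} to reduce the moment to $\sum_{\ttm\in\cM(G,H)}\rho^{|\ttmbi|}\,\E[Q^{\ttmbi}]$, and then applies \Cref{lem:weing-1} to the multigraph $J_{\ttmbi}$. Your checks on the red-to-blue orientation of the indices of $Q$, the retention of parallel edges, and the degenerate cases are left implicit in the paper.
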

\begin{proof}
Following from \eqref{eq:mean-t12} and \Cref{lem:isserlis-x-z}, we have
\$ 
\E_{X,Y}\left[X^G \cdot Y^H\right] 
& = \sum_{\ttm \in \cM(G, H)} \rho^{|\ttmbi|} \cdot \E\left[Q^{\ttmbi}\right].
\$
For each $\ttm\in \cM(G, H)$, the Weingarten calculus in  \Cref{lem:weing-1} 
implies that
\$ 
\E\left[Q^{\ttmbi}\right] & = \sum_{\bc\in\cC(J_{\ttmbi})} \mathrm{Wg}(\mu(\bc), d),
\$
where $J_{\ttmbi}$ is the bipartite multigraph defined in \eqref{eq:def-J-m*}.
\end{proof}

Combining \Cref{prop:mom-cir} with \Cref{lem:weing-fnc}, we can further obtain bounds for the joint moments. To simplify the presentation, we state the final result in the next section using \emph{alternating circuit decompositions} of the $n$-union of $G$ and $H$.

\subsection{Joint Moments Calculation via Alternating Circuit Decompositions}

For each $\ttm \in \cM(G,H)$, each circuit decomposition $\bc$ of $J_{\ttmbi}$ induces a circuit decomposition of the $n$-union $G \cup_{n} H$ as follows: 
\begin{enumerate}
    \item Each circuit of length $2\ell$ maps to a circuit of length $4\ell$ in $G\cup_{n} H$, by un-contracting each edge of $J_{\ttmbi}$ to the original bichromatic 2-path from which it arose. 
    These circuits are ``alternating'' in that whenever they reach an $n$-node, the edge color alternates from red to blue or vice-versa.
    \item Each pair $\{(u,a),(u,a)\} \in \ttm \setminus \ttmbi$ gives a circuit of length $2$ in $G\cup_{n}H$.
\end{enumerate}
We define the set of circuit decompositions that can be induced in $G\cup_{n}H$ by such $J_{\ttmbi}$ \emph{alternating circuit decompositions}, denoted by $\cC(G\cup_{n} H)$, where ``alternating'' refers to the fact that the edge color in circuits of length $\ge 4$ alternates between blue and red every time an $n$-node is reached.

\begin{definition}[Alternating Circuit Decompositions] \label{def:alt-cir-decomp}
An alternating circuit decomposition of the $n$-union $G\cup_{n} H$ is a circuit decomposition where each circuit satisfies:
\begin{enumerate}
\item Within each circuit of length $4$ or greater, any two edges incident to each $n$-node have distinct colors;
\item Circuits of length $2$ are multi-edges in $G\cup_{n} H$, where both edges have the same color.
\end{enumerate}    
\end{definition}
Recall that we treat each parallel copy of a multi-edge as distinct. Thus, two alternating circuit decompositions whose circuits differ only by the choice or order of edges from a multi-edge are considered distinct in the set $\cC(G\cup_{n} H)$. For $\cC(G\cup_{n} H)$ to be nonempty, graphs $G$ and $H$ must satisfy the following conditions:
\begin{enumerate}
\item All $d$-nodes from both $G$ and $H$ have even degrees;
\item After removing all $2$-circuits, the remaining degree sequences of $n$-nodes from $G$ and $H$ are the same. 
\end{enumerate}

With a slight abuse of notation, for any $\bc\in \cC(G\cup_{n} H)$, we let $\mu_{\ell}(\bc)$ denote the number of alternating circuits of length $4\ell$ in $G\cup_{n} H$ for $\ell\ge 1$, and $\mu(\bc)= \{\mu_{\ell}(\bc)\}_{\ell\ge 1}$ for the circuit type. Each alternating circuit of length $4\ell$ arises from a circuit of length $2\ell$ in $J_{\ttmbi}$. Thus, $\mu(\bc)$ also corresponds to the circuit type of the original circuit decomposition of  $J_{\ttmbi}$. Accordingly, we denote by $\mathrm{Wg}(\mu(\bc), d)$ the Weingarten function associated with this circuit type.

We consider $G$ and $H$ both with $2k$ edges.  Let $2\kappa(\bc)$ be the number of circuits of length $2$. 
Since every circuit of length $\ge 4$ alternates between edges of $G$ and $H$, each such $\bc$ contains exactly $\kappa(\bc)$ $2$-edge circuits from $G$ and $\kappa(\bc)$ $2$-edge circuits from $H$.  
The following theorem summarizes the joint moments computation via alternating circuit decompositions.

\begin{theorem}[Joint Moments Calculation] \label{thm:altcir}
For any bipartite multigraphs $G$ and $H$ both with $2k$ number of
 edges, 
\$ 
\E\left[X^G \cdot Y^H\right] & = \sum_{\bc \in \cC(G\cup_{n} H)} \rho^{2(k-\kappa(\bc))} \mathrm{Wg}(\mu(\bc),d ),\\
\intertext{and as long as $d > 12k^{7/2}$, }
& = \left(1 \pm \frac{\poly(k)}{d}\right)\sum_{\bc \in \cC(G\cup_{n} H)} \gamma_{\mu(\bc)}  \left(\frac{\rho^2}{d}\right)^{k-\kappa(\bc)}\left(-\frac{1}{d}\right)^{k - \kappa(\bc)-\sum_{\ell\ge 1}\mu_\ell(\bc)}.
\$
\end{theorem}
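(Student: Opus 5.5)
The plan is to start from \Cref{prop:mom-cir} and build a bijection between its index set and $\cC(G\cup_n H)$. \Cref{prop:mom-cir} writes the joint moment as a double sum over $\ttm\in\cM(G,H)$ and over $\bc'\in\cC(J_{\ttmbi})$. I will show that the map $(\ttm,\bc')\mapsto\bc$ described just before \Cref{def:alt-cir-decomp} is a bijection onto $\cC(G\cup_n H)$, and that it preserves the relevant statistics. Concretely, this map un-contracts each edge $(a,b)$ of $J_{\ttmbi}$ into its bichromatic $2$-path $(a,u,b)$ and turns each monochromatic pair into a $2$-circuit.

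\emph{Well-definedness.} Take a circuit of length $2\ell$ in $J_{\ttmbi}$. It alternates between red and blue $d$-nodes, since $J_{\ttmbi}$ is bipartite with a red copy and a blue copy of $[d]$. Replacing each edge by its $2$-path gives a closed walk of length $4\ell$ with distinct edges. At every $n$-node the walk enters through one color and leaves through the other, so it is alternating. Each monochromatic pair $\{(u,a),(u,a)\}$ is a parallel pair of the same color, so it is a valid $2$-circuit. The resulting circuits partition $E(G\cup_n H)$ because $\ttm$ is a perfect matching.

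\emph{Inverse.} Given $\bc\in\cC(G\cup_n H)$, read off $\ttm$ as follows. Each $2$-circuit contributes a monochromatic pair. In each longer circuit, the two edges consecutive at an $n$-node contribute a bichromatic pair sharing that $n$-node; alternation guarantees that they have different colors. This $\ttm$ lies in $\cM(G,H)$. Contracting the $2$-paths then gives a circuit decomposition $\bc'$ of $J_{\ttmbi}$. One has to check that the two maps are mutually inverse under the paper's conventions: parallel edges are distinguished, and circuits are taken up to cyclic shift and reversal. Both conventions transfer consistently, because a cyclic shift or reversal of a $4\ell$-circuit that respects the pairing at $n$-nodes corresponds exactly to a cyclic shift or reversal of the $2\ell$-circuit.

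The one delicate point, which I expect to be the main obstacle, is the pairing at $n$-nodes. A $4\ell$-circuit could in principle be shifted by an odd amount, so that the $2$-paths are read as centered at $d$-nodes. Such a shift does not produce a new decomposition: the circuit as an edge set and cyclic order is unchanged, and its $n$-node pairings are intrinsic, given by consecutive edges meeting at $n$-nodes. So the pairing is determined by $\bc$ alone, and the map is injective. For bookkeeping, the circuit type is preserved, with $\mu_\ell$ counting the $4\ell$-circuits, and $|\ttmbi|=2k-2\kappa(\bc)$. The latter holds because there are $4k$ edges in total, and $4\kappa$ of them lie in $2\kappa$ monochromatic pairs. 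Hence $\rho^{|\ttmbi|}=\rho^{2(k-\kappa(\bc))}$, and the first identity follows.

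For the asymptotic form, apply \Cref{lem:weing-fnc} to each term. The relevant $k'$ is the half-size of $J_{\ttmbi}$, namely $k'=k-\kappa(\bc)\le k$, so the condition $d>12k^{7/2}\ge 12k'^{7/2}$ suffices. This gives $\mathrm{Wg}=(1\pm\poly(k)/d)\gamma_{\mu(\bc)}(-1)^{k'-\sum\mu_\ell}d^{-2k'+\sum\mu_\ell}$. Multiplying by $\rho^{2k'}$ and splitting $d^{-2k'+\sum\mu_\ell}=d^{-k'}\cdot d^{-(k'-\sum\mu_\ell)}$ produces exactly $(\rho^2/d)^{k'}(-1/d)^{k'-\sum\mu_\ell}$. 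The uniform relative error factor can then be pulled outside the sum, interpreting the $\pm$ termwise, or equivalently as a bound in absolute values. This is consistent with how the $(1\pm\cdot)$ notation is used in the statement.
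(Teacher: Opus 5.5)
Your proposal is correct and follows the paper's route. The paper also combines \Cref{prop:mom-cir} with the correspondence $(\ttm,\bc')\leftrightarrow\bc$; it builds this correspondence into the definition of $\mathcal{C}(G\cup_n H)$ as the decompositions induced by the $J_{\ttmbi}$, so your explicit bijection check fills in the step the paper calls immediate. It then uses the edge count $4\sum_{\ell}\ell\mu_\ell(\bc)=4k-4\kappa(\bc)$ and applies \Cref{lem:weing-fnc} with half-size $k-\kappa(\bc)\le k$, exactly as you do. Your remark that the $(1\pm \poly(k)/d)$ factor must be read termwise (or in absolute value) matches how the paper later uses the theorem, for example in the mean calculation.
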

The proof follows immediately from combining~\Cref{prop:mom-cir} and~\Cref{lem:weing-fnc}, noting that the total number of edges in alternating circuits of length $\ge 4$ is $4\sum_{\ell\ge 1}\ell\mu_\ell(\bc) = 4k-4\kappa(\bc)$. Moreover, we have the following bound on the total number of alternating circuits:
\$
\sum_{\ell\ge 1}\mu_\ell(\bc) = \mu_1(\bc) + \sum_{\ell\ge 2}\mu_\ell(\bc) \le  \mu_1(\bc) + \sum_{\ell\ge 2}\ell\mu_\ell(\bc)/2 = \mu_1(\bc) + (k - \kappa(\bc) - \mu_1(\bc))/2.
\$
Consequently, we obtain $k - \kappa(\bc) -\sum_{\ell\ge 1}\mu_\ell(\bc) \ge (k - \kappa(\bc) - \mu_1(\bc))/2$. Recall also that $\gamma_{\mu(\bc)} \le 4^{k - \kappa(\bc) - \sum_{\ell\ge 1}\mu_\ell(\bc)}$. Thus, each summand in \Cref{thm:altcir} is bounded by
\#\label{eq:bound-mu1}
\rho^{2(k-\kappa(\bc))} \left|\mathrm{Wg}(\mu(\bc),d )\right| \le \left(1 + \frac{\poly(k)}{d}\right)\left(\frac{\rho^2}{d}\right)^{k-\kappa(\bc)} \left(\frac{4}{d}\right)^{(k - \kappa(\bc) - \mu_1(\bc))/2}.
\#
Therefore, the key parameter determining each decomposition's contribution is $k - \kappa(\bc) - \mu_1(\bc)$. In what follows, we will show that decompositions $\bc$ satisfying $\mu_1(\bc) = k - \kappa(\bc)$ yield the leading contribution to the mean or variance, while those with $\mu_1(\bc) < k - \kappa(\bc)$ contribute negligibly. 

Since \Cref{thm:altcir} will be used repeatedly throughout the paper to approximate the Weingarten function for graphs with $2K$ or $4K$ edges, where $2K$ is the number of edges in graphs from $\cT$ (see \Cref{def:t}), we henceforth assume that $d > 12 (2K)^{7/2}$. For brevity, we omit this standing assumption from subsequent theorem and proposition statements in \Cref{sect:mean,sect:var}.

\section{Mean Calculation} \label{sect:mean} 
This section establishes \Cref{prop:mean-sep}.
We prove a slightly more general result. Let $\cH$ be any set of rooted simple bipartite graphs on $[n] \times [d]$ with $2K$ edges, such that the root node is an $n$-node with strictly larger degree than every non-root node, and every $d$-node has degree $2$. The tree family $\cT$ from \Cref{def:t} is one such set. We compute the mean of $\similarity_{ij}^{\cH}$, defined in \eqref{def:score-ij}, separately for true pairs ($j = i$) and false pairs ($j \neq i$). Then \Cref{prop:mean-sep} follows directly by taking $\cH=\cT$.

\subsection{True Pairs}
For any $i\in[n]$, we have
\#\label{eq:mean-1}
\E\left[\similarity_{ii}^{\cH}\right] 
& = \sum_{T \in \cH} \aut(T) 
\sum_{\substack{T_1,T_2 \cong T\\\mathrm{root}(T_1) = \mathrm{root}(T_2) =  i}}\E\left[X^{T_1} Y^{T_2}\right] \notag\\
& = \sum_{T \in \cH} \aut(T) 
\sum_{\substack{T_1,T_2 \cong T\\\mathrm{root}(T_1) = \mathrm{root}(T_2) =  i}} \sum_{\bc\in \cC(T_1\cup_n T_2)} \rho^{2K} \wg(\mu(\bc), d), 
\#
where the last equality follows from the Weingarten calculus in \Cref{thm:altcir}, noting that each of $T_1$ and $T_2$ has $2K$ edges. Moreover, since $T_1$ and $T_2$ are simple graphs, their $n$-union is also simple (recall the $d$-nodes in $T_1$ and $T_2$ have distinct colors, so no multi-edge can be formed), and thus contains no $2$-circuits. Hence, every $\bc \in \cC(T_1\cup_n T_2)$ satisfies $\kappa(\bc)=0$.

Recall the intuition following \Cref{lem:weing-fnc} and \eqref{eq:bound-mu1}: among all decompositions $\bc$, those with the largest $\mu_{1}(\bc)$, equivalently containing the most $4$-circuits, make the dominant contribution to the mean, while the contributions of the remaining decompositions are negligible. We now use this observation to derive the order of $\E[\similarity_{ii}^{\cH}]$. 
We also define 
\#\label{eq:exp-score-abs}
\nu^{\cH}\triangleq \sum_{T \in \cH} \aut(T) 
\sum_{\substack{T_1,T_2 \cong T\\\mathrm{root}(T_1) = \mathrm{root}(T_2)}} \sum_{\bc\in \cC(T_1\cup_n T_2)} \rho^{2K} \left|\wg(\mu(\bc), d)\right|
\#
by replacing $\wg(\cdot)$ in \eqref{eq:mean-1} with its absolute value, which will be used in the variance bound.

\begin{proposition}\label{prop:mean-true}
Fix any $i\in[n]$. 
If $K=o(\sqrt{n})$, we have
\$ 
\E\left[\similarity_{ii}^{\cH}\right] & = \left(1 \pm \frac{\poly(K)}{d}\right) 
\nu^{\cH} = \left(1 \pm \frac{\poly(K)}{\min\{n,d\}}\right) |\cH|n^{K} d^{K} \rho^{2K}.
\$
\end{proposition}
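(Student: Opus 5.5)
Starting from \eqref{eq:mean-1}, I classify each term by the pair $(T_1,T_2)$ and the alternating circuit decomposition $\bc\in\cC(T_1\cup_n T_2)$. Since both graphs are simple, $\kappa(\bc)=0$, so \eqref{eq:bound-mu1} bounds each summand by $(1+\poly(K)/d)(\rho^2/d)^K(4/d)^{(K-\mu_1(\bc))/2}$. The \emph{leading terms} are those with $\mu_1(\bc)=K$, i.e.\ every alternating circuit is a $4$-circuit. For these the Weingarten factor from \Cref{lem:weing-fnc} is $(1\pm\poly(K)/d)d^{-K}$ with positive sign, because $\gamma=1$ and the exponent $k-\sum\mu_\ell=0$. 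All remaining terms are error terms.

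\textbf{Step 1: characterize the leading terms.} A $4$-circuit has the form $(u,a),(u,b),(v,b),(v,a)$ with $a$ red and $b$ blue. Each red $d$-node has degree $2$, so the whole red edge pair at $a$ must be matched with the whole blue edge pair at some $b$ with the same pair of $n$-endpoints. Thus $\mu_1=K$ forces the bijection between red and blue $d$-nodes to preserve the set of $n$-neighbours. In particular the graphs obtained by contracting each degree-$2$ $d$-node into an edge on $n$-nodes coincide as labeled (multi)graphs. Since $T_1$ is simple, this contracted graph is simple, so the decomposition $\bc$ is unique given $(T_1,T_2)$.

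\textbf{Step 2: count the leading terms.} Fix $T\in\cH$. Labeled copies $T_1$ rooted at $i$ number $(n-1)_K (d)_K/\aut(T)$. Given $T_1$, the admissible $T_2$ are determined by the same $n$-labeling and an arbitrary injective relabeling of the $d$-nodes, giving $(d)_K$ choices. So this part of the $T$-summand equals
\[
\aut(T)\cdot\frac{(n-1)_K(d)_K}{\aut(T)}\cdot(d)_K\cdot\rho^{2K}d^{-K}(1\pm\poly(K)/d)=(1\pm\poly(K)/\min\{n,d\})\,n^Kd^K\rho^{2K}.
\]
Here I use $(n-1)_K=n^K(1-O(K^2/n))$ and $(d)_K=d^K(1-O(K^2/d))$; this is where $K=o(\sqrt n)$ enters. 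Summing over $T$ gives $|\cH|n^Kd^K\rho^{2K}$.

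\textbf{Step 3: bound the non-leading terms (the main obstacle).} I group the terms by $s=K-\mu_1(\bc)\ge1$ and need their total absolute contribution to be at most $\poly(K)/d$ times the leading term. Each such term carries an extra factor $(4/d)^{s/2}$. I must show that the number of $(T_2,\bc)$ with a given $T_1$ and deficiency $s$ is at most $d^K\cdot(\poly(K))^{s}$, possibly times $d^{cs}$ with $c<1/2$.

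The plan is an encoding argument. The $n$-labels of $T_2$ are forced: the alternating condition makes the red and blue degrees agree at every $n$-node, so the $n$-vertex set and degrees coincide with those of $T_1$. Moreover the root is uniquely the vertex of maximal degree. Blue $d$-nodes lying in $4$-circuits are determined up to a free label ($d$ choices each, as before). The $s$ "defective" $d$-node pairs lie in longer circuits; the structure of these circuits is specified by $O(s\log K)$ bits, i.e.\ $\poly(K)^s$ choices, while their $d$-labels still contribute at most $d$ each. The resulting bound is $\sum_{s\ge1}(\poly(K)\cdot 4/d)^{s/2}$. That is only $\poly(K)/\sqrt d$ rather than $\poly(K)/d$, so I will refine it by noting that a circuit of length $4\ell$ with $\ell\ge2$ costs $d^{-(\ell-1)}$ in the Weingarten factor rather than $d^{-(\ell-1)/2}$. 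The sharper exponent $k-\sum\mu_\ell$ gives a gain of $1/d$ per unit of $s$ whenever circuits have length $8$, and $s\ge \ell$ per long circuit. This yields the first equality with $\nu^{\cH}$: the leading terms are positive, so the absolute and signed sums agree up to the same error. The second equality then follows from Step 2.
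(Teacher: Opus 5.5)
Your route is the paper's: isolate the all-$4$-circuit decompositions ($\mu_1=K$) and count them exactly. Then bound the rest using \eqref{eq:bound-mu1} and an encoding count of the pairs $(T_2,\bc)$. That count is exactly \Cref{lem:bound-mu1-le-k-2}: at most $(4K)^{K-\mu_1}$ unlabeled choices, times $\binom{d}{K}K!$ labelings of the blue $d$-nodes. Steps 1--2 agree with the paper, with one small caveat. Uniqueness of $\bc$ needs that contracting the degree-$2$ $d$-nodes of $T_1$ gives a simple graph, and simplicity of $T_1$ alone does not imply this: two $d$-nodes with the same two $n$-neighbours are allowed. What you really use is that $T_1$ is a tree, which the paper also assumes implicitly.

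The real issue is in Step 3, where you misdiagnose the difficulty. Since $s=K-\mu_1(\bc)=\sum_{\ell\ge2}\ell\,\mu_\ell(\bc)$, the value $s=1$ never occurs. Every non-leading decomposition therefore has $s\ge2$, i.e.\ $\mu_1\le K-2$, which is the observation the paper uses. Consequently the crude bound you already have gives $\sum_{s\ge2}(\poly(K)/d)^{s/2}=O(\poly(K)/d)$, and no refinement is needed.

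The justification you give for the refinement is also wrong as stated. An $8$-circuit contributes $2$ to $s$, and its exact Weingarten cost $d^{-(\ell-1)}=d^{-1}$ equals the crude cost $d^{-\ell/2}$ from \eqref{eq:bound-mu1}. So the gain is $d^{-1/2}$ per unit of $s$, not $d^{-1}$. The exact exponent does work, but only because of the same fact in disguise: a non-leading term contains at least one circuit of length $\ge 8$. Hence $t=\sum_{\ell\ge2}(\ell-1)\mu_\ell\ge1$ and $s\le 2t$, so each such term has size $(\poly(K)/d)^{t}$ relative to the leading term. With this correction your argument is complete and coincides with the paper's.
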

\begin{proof}
We suppress the root index $i$ to simplify notation. By \eqref{eq:mean-1}, we have
\$
\E\left[\similarity_{ii}^\cH\right] = \sum_{T \in \cH} \aut(T) 
\sum_{\substack{T_1,T_2 \cong T}} \sum_{\bc\in \cC(T_1\cup_n T_2)} \rho^{2K} \wg(\mu(\bc), d). 
\$ 
Fix $T\in \cH$. Note that $T$ has $2K$ edges and recall from \Cref{thm:altcir} that
\$
\rho^{2K} \wg(\mu(\bc), d) = \left(1\pm \frac{\poly(K)}{d}\right) \left(\frac{\rho^2}{d}\right)^{K} \gamma_{\mu(\bc)}  \left(-\frac{1}{d}\right)^{K-\sum_{\ell\ge 1}\mu_\ell(\bc)}.
\$
We now analyze the contribution according to the value of $\mu_1(\bc)$. 
\paragraph{Leading term: $\mu_1(\bc)=K$.}
We first show that the dominant contribution arises from decompositions $\bc$ satisfying $\mu_1(\bc) = K$, as illustrated in \Cref{fig:mean-sep}.
Whenever $\mu_1(\bc)=K$, we have $\gamma_{\mu(\bc)} = 1$ and $\sum_{\ell\ge 1}\mu_\ell(\bc) = K$ and therefore $\gamma_{\mu(\bc)}(-{1}/{d})^{K-\sum_{\ell\ge 1}\mu_\ell(\bc)} = 1$. Thus, for such $\bc$, we have
\#\label{eq:mean-ii-mu1-k} 
\rho^{2K} \wg(\mu(\bc), d) = \left(1\pm \frac{\poly(K)}{d}\right) \left(\frac{\rho^2}{d}\right)^{K}.
\#
It follows that
\$ 
& \sum_{T \in \cH} \aut(T) 
\sum_{\substack{T_1,T_2 \cong T}} \sum_{\substack{\bc\in \cC(T_1\cup_n T_2) \\ \mu_1(\bc) = K}} \rho^{2K} \wg(\mu(\bc), d) \\
& \qquad = \left(1\pm \frac{\poly(K)}{d}\right)  \left(\frac{\rho^2}{d}\right)^{K} \sum_{T \in \cH} \aut(T) 
\sum_{T_1 \cong T} \left|\{(T_2, \bc)\colon T_2\cong T_1, \bc \in \cC(T_1\cup_{n} T_2), \mu_1(\bc) = K \}\right|.
\$

For a fixed labeled tree $T_1$, the condition $\mu_1(\bc) = K$ forces complete overlap of all $K$ edge pairs, each sharing a common $d$-node. Hence $T_2$ must coincide with $T_1$ on all $n$-nodes, and there is a unique decomposition $\bc$ with $\mu_1(\bc) = K$. Thus, the only freedom in choosing $(T_2, \bc)$ lies in labeling the $K$ $d$-nodes of $T_2$, giving
\$ 
\left|\{(T_2, \bc)\colon T_2\cong T_1, \bc \in \cC(T_1\cup_{n} T_2), \mu_1(\bc) = K \}\right| = \binom{d}{K}K!.
\$
It remains to count the number of labeled trees $T_1 \cong T$. We first label the $K$ $n$-nodes using labels from $[n]\setminus \{i\}$, excluding the root label $i$, with $\binom{n-1}{K}K!/\aut(T)$ distinct choices. For each such labeling, we label the $K$ $d$-nodes, with additional $\binom{d}{K}K!$ choices. Thus, we have
\begin{align}
\left|\{T_1 \cong T\}\right| = \frac{\binom{n-1}{K} K!}{\aut(T)} \binom{d}{K}K!. \label{eq:counting_T_1}
\end{align}
Combining the above, we conclude that
\#\label{eq:mean-true-K-lead}
& \sum_{T \in \cH} \aut(T) 
\sum_{\substack{T_1,T_2 \cong T}} \sum_{\substack{\bc\in \cC(T_1\cup_n T_2) \\ \mu_1(\bc) = K}} \rho^{2K} \wg(\mu(\bc), d) \notag\\ 
&\qquad  = \left(1\pm \frac{\poly(K)}{d}\right)  \left(\frac{\rho^2}{d}\right)^{K}\sum_{T \in \cH}  \aut(T) \frac{\binom{n-1}{K} K!}{\aut(T)} \binom{d}{K}K! 
 \binom{d}{K}K! \notag \\
&\qquad = \left(1\pm \frac{\poly(K)}{d}\right)|\cH| \left(\frac{\rho^2}{d}\right)^{K}\binom{n-1}{K} K! \left[\binom{d}{K}K!\right]^2.
\#
Since the right-hand side of \eqref{eq:mean-ii-mu1-k} is positive, it follows that $\wg(\mu(\bc), d) > 0$, and hence $\wg(\mu(\bc), d) = \left|\wg(\mu(\bc), d)\right|$ for all $\bc$ such that $\mu_1(\bc) = K$. Therefore, \eqref{eq:mean-true-K-lead} continues to hold when $\wg(\cdot)$ is replaced by its absolute value.

\paragraph{Remaining terms: $\mu_1(C)\le K -2$.}
We next bound the contribution of all terms with $\mu_1(C) \le K-2$. In this case, $K-\sum_{r\ge1}\mu_r(C) > 0$, so each such term gets an additional factor of at least $1/d$, making it negligible compared to the leading term. By \eqref{eq:bound-mu1}, we have
\$ 
\rho^{2K} \left|\mathrm{Wg}(\mu(\bc),d )\right| \le \left(1+ \frac{\poly(K)}{d}\right)\left(\frac{\rho^2}{d}\right)^{K} \left(\frac{4}{d}\right)^{(K - \mu_1(\bc))/2}.
\$
Thus, for any $\mu_1\le K-2$, we have
\#\label{eq:mean-small}
\sum_{T \in \cH} \aut(T) 
\sum_{\substack{T_1,T_2 \cong T}} &  \sum_{\substack{\bc\in \cC(T_1\cup_n T_2) \\ \mu_1(\bc) = \mu_1}} \rho^{2K} \left|\mathrm{Wg}(\mu(\bc),d )\right| \le \left(1+ \frac{\poly(K)}{d}\right) \left(\frac{\rho^2}{d}\right)^{K} \left(\frac{4}{d}\right)^{(K - \mu_1)/2} 
\\
& \qquad \times \sum_{T \in \cH} \aut(T) \sum_{T_1 \cong T} \left|\{(T_2, \bc)\colon T_2\cong T_1, \bc \in \cC(T_1\cup_{n} T_2), \mu_1(\bc) = \mu_1 \}\right|. \notag
\#
Fix any $T_1$. \Cref{lem:bound-mu1-le-k-2} bounds the number of feasible pairs $(T_2, \bc)$, where the $d$-nodes in $T_2$ are unlabeled. Since there are $\binom{d}{K}K!$ ways to label the $K$ $d$-nodes in $T_2$, applying \Cref{lem:bound-mu1-le-k-2} yields
\#\label{eq:mean-t2} 
\left|\{(T_2, \bc)\colon T_2\cong T_1, \bc \in \cC(T_1\cup_{n} T_2), \mu_1(\bc) = \mu_1 \}\right| \le (4K)^{K-\mu_1} \binom{d}{K}K!.
\#
Recalling that $|\{T_1 \cong T\}| = \binom{n-1}{K} K! \binom{d}{K}K!/\aut(T)$ given in~\eqref{eq:counting_T_1}, we have
\# \label{eq:mean-22}
& \aut(T) \sum_{T_1 \cong T} \left|\{(T_2, \bc)\colon T_2\cong T_1, \bc \in \cC(T_1\cup_{n} T_2), \mu_1(\bc) = \mu_1 \}\right| \notag\\
& \qquad \le \aut(T) \cdot \frac{\binom{n-1}{K} K!}{\aut(T)} \binom{d}{K}K! \cdot (4K)^{K-\mu_1} \binom{d}{K}K! \notag\\
& \qquad = \binom{n-1}{K} K! \bigg[\binom{d}{K}K!\bigg]^2 (4K)^{K-\mu_1}. 
\#
Thus, combining \eqref{eq:mean-small} and \eqref{eq:mean-22}, and summing over all $\mu_1\le K-2$, we obtain 
\#\label{eq:bound-mu1-small} 
& \sum_{T \in \cH} \aut(T) 
\sum_{\substack{T_1,T_2 \cong T}}   \sum_{\substack{\bc\in \cC(T_1\cup_n T_2) \\ \mu_1(\bc) \le K-2}} \rho^{2K} \left|\mathrm{Wg}(\mu(\bc),d )\right|  \notag\\
& \qquad \qquad \le |\cH|  \left(1 + \frac{\poly(K)}{d}\right) \left(\frac{\rho^2}{d}\right)^{K}  \binom{n-1}{K} K! \bigg[\binom{d}{K}K!\bigg]^2 \sum_{\mu_1=0}^{K-2} \left(\frac{4(4K)^2}{d}\right)^{(K - \mu_1)/2} \notag\\
& \qquad \qquad \le \frac{8(4K)^2}{d}  |\cH| \left(\frac{\rho^2}{d}\right)^{K}  \binom{n-1}{K} K! \bigg[\binom{d}{K}K!\bigg]^2,
\#
where the last line follows from summing the geometric series with ratio ${4(4K)^2}/{d} \le 1/3$. 

Finally, combining the results from \eqref{eq:mean-true-K-lead} and \eqref{eq:bound-mu1-small}, since
\$ 
& \left|\E\left[\similarity_{ii}^\cH\right] - \sum_{T \in \cH} \aut(T) \sum_{T_1,T_2 \cong T}\sum_{\substack{\bc \in \cC(T_1\cup_{n} T_2) \\ \mu_1(\bc) = K}}  \rho^{2K} \wg(\mu(\bc), d) \right| \\
& \qquad \qquad \qquad \qquad \le \sum_{T \in \cH} \aut(T) \sum_{T_1,T_2 \cong T}\sum_{\substack{\bc \in \cC(T_1\cup_{n} T_2) \\ \mu_1(\bc) \le K-2}} \rho^{2K} \left| \wg(\mu(\bc), d) \right|, 
\$
we obtain
\$ 
\E\left[\similarity_{ii}^\cH\right] 
& = \left(1\pm \frac{\poly(K)}{d}\right) |\cH| \left(\frac{\rho^2}{d}\right)^{K} \binom{n-1}{K} K! \left[\binom{d}{K}K!\right]^2. 
\$
Furthermore, note that $\binom{n-1}{K} K! = (1+O(K^2/n)) n^K = (1+ o(1))n^K$ if $K=o(\sqrt{n})$.
Similarly, $\binom{d}{K} K! = \left(1 + o(1)\right) d^K $ if $K=o(\sqrt{d})$. Therefore, we conclude that
\$ 
\E\left[\similarity_{ii}^\cH\right] = \left(1 \pm \frac{\poly(K)}{\min\{n,d\}}\right) |\cH| n^{K} d^{K} \rho^{2K}
\$
Recall that $\wg(\mu(\bc), d) = \left|\wg(\mu(\bc), d)\right|$ for all leading terms where $\bc$ with $\mu_1(\bc) = K$. Thus, after replacing $\wg(\cdot)$ by its absolute value, we have $
\nu^\cH= (1+ o(1)) \E\left[\similarity_{ii}^\cH\right]$.
\end{proof}

We now bound the number of feasible pairs $(T_2,\bc)$ in \eqref{eq:mean-t2}. To this end, we drop the isomorphism constraint $T_2 \cong T_1$, obtaining a result that will also be used later in \Cref{sect:var}. In this setting, we are given a graph $G$ consisting of red edges in $[n]\times [d]$, paired so that each pair shares a common $d$-node. We aim to bound the number of $(H, \bc)$, where $H$ is a union of blue edges with unlabeled $d$-nodes, and $\bc \in \cC(G \cup_n H)$ satisfies $\mu_1(\bc) = \mu_1$. Intuitively, a larger value of $\mu_1$ (\ie, more $4$-circuits) implies that more blue edges form pairs of parallel edges with the red edge pairs. This imposes stronger structural constraints on $H$ and $\bc$, thereby reducing the number of feasible pairs.

\begin{lemma}\label{lem:bound-mu1-le-k-2}
Fix a (possibly disconnected) graph $G$ consisting of $2k$ red edges in $[n]\times [d]$, where the edges are paired so that each pair shares a common $d$-node.
Consider all unions $H$ of $k$ blue edge pairs (with unlabeled $d$-nodes), where each pair consists of two edges in $[n]\times[d]$ sharing a common $d$-node. For each such $H$, form the $n$-union $G\cup_n H$ and consider alternating circuit decompositions $\bc \in \cC(G\cup_n H)$ with no $2$-circuits, satisfying $\kappa(\bc)=0$ and $\mu_1(\bc)=\mu_1$. Then for the fixed $G$,
\$ 
\left|\{(H, \bc) \colon \bc \in \cC(G\cup_{n} H), \mu_1(\bc) =\mu_1 \}\right| \le (4k)^{k-\mu_1}.
\$
\end{lemma}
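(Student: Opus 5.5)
We encode each pair $(H,\bc)$ by a short sequence of local choices made while walking through the red graph $G$, and show that the encoding is injective. Fix $G$ with its $k$ red edge pairs $P_1,\dots,P_k$, where $P_s=\{(u_s,a_s),(v_s,a_s)\}$. Since $\kappa(\bc)=0$, every circuit of $\bc$ alternates: each red pair is traversed through its $d$-node $a_s$, and each time the circuit reaches an $n$-node it switches to a blue edge. Hence every circuit of length $4\ell$ is a cyclic sequence of $\ell$ red pairs and $\ell$ blue pairs, with consecutive pairs sharing an $n$-node. We record $\bc$ as a successor map: traverse each red pair in a fixed orientation, say leaving through $v_s$. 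At $v_s$ the circuit continues along a blue pair $\{(v_s,b),(w,b)\}$, arrives at $w$, and then enters the next red pair. That red pair must contain $w$ as an endpoint, so specifying it also fixes its orientation, up to at most a factor $2$ when the red pair is a double edge at $w$.

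The encoding has two kinds of steps. A red pair whose circuit is a $4$-circuit ($\ell=1$) must be matched to a blue pair with the same two $n$-endpoints $\{u_s,v_s\}$, closing back at $u_s$. Such a pair needs no further information: we just record that it is a $4$-circuit, and the blue pair is then determined up to its unlabeled $d$-node. Because blue $d$-nodes are unlabeled and every blue pair appears in exactly one circuit, $H$ is recovered as the multiset of blue pairs read off from the circuits. The remaining $k-\mu_1$ red pairs lie in circuits of length $\ge 8$. Each of these red pairs has a successor, which is another red pair among these $k-\mu_1$ together with an orientation. We encode this successor by its index and the endpoint through which it is entered, giving at most $2k$ choices, and possibly another factor $2$ for the orientation within a double edge. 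Together these determine the blue pair joining the two red pairs: its $n$-endpoints are the exit node of the first and the entry node of the second. The circuit structure is exactly the cycle structure of the successor map. So $(H,\bc)$ is determined by three pieces of data: the set $S$ of red pairs lying in $4$-circuits, the successor choices on the complement, and the orientation conventions.

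Now we count. The successor map on the $k-\mu_1$ non-$4$-circuit pairs contributes at most $(2k)^{k-\mu_1}$ choices, or $(4k)^{k-\mu_1}$ once the orientation factor $2$ is absorbed. The remaining issue, and the main obstacle, is the choice of the set $S$, which contributes $\binom{k}{\mu_1}=\binom{k}{k-\mu_1}\le k^{k-\mu_1}$ and so seems to give $(4k)^{k-\mu_1}k^{k-\mu_1}$, which is too large. To avoid this, we do not encode $S$ separately. Instead we merge it into the successor data: for each red pair, its successor is either ``itself, via the parallel blue pair'' (a $4$-circuit) or some other oriented red pair. We then go over the count again with this merged description. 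Per non-trivial red pair the choices are bounded by $4k$, and pairs with trivial successors contribute factor $1$. Knowing which pairs are trivial still costs $\binom{k}{\mu_1}$, but if we instead list only the successors of the $k-\mu_1$ nontrivial pairs, the nontrivial set is itself the union of the images. A cleaner route is to charge $\binom{k}{\mu_1}$ against the $d^{-(k-\mu_1)/2}$ slack used in \eqref{eq:mean-small}. I would first try to make the direct injection work with exactly $4k$ choices per nontrivial pair, by noting that a successor choice among $k$ indices times $2$ endpoints times $2$ parallel copies already covers ``which pairs are nontrivial'', since the sequence of nontrivial pairs is read off as orbits under the successor map started from the smallest unvisited index. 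If that bookkeeping fails, I would settle for $(4k)^{2(k-\mu_1)}$, which still suffices in \eqref{eq:bound-mu1-small} after adjusting the geometric ratio to $4(4k)^4/d$.

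The degenerate case $\mu_1=k-1$ cannot occur. If every red pair but one were matched in a $4$-circuit, the last pair would have to close into a $4$-circuit too. This is consistent with the statement (whose bound then holds trivially), and with the paper's restriction to $\mu_1\le K-2$.
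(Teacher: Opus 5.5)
Your strategy (encode $\bc$ by the cyclic order of red pairs in each alternating circuit plus one orientation per red pair, then read $H$ off from $\bc$) is the paper's, but the proposal does not prove the stated bound. You pay $\binom{k}{\mu_1}\le k^{k-\mu_1}$ for the set $S$ of red pairs lying in $4$-circuits, and then up to $4k$ independent choices for the successor of each remaining pair. That gives $(4k^2)^{k-\mu_1}$. Your proposed repairs are left unfinished, and the fallback $(4k)^{2(k-\mu_1)}$ is strictly weaker than the lemma. The orbit idea is also circular as phrased: the decoder does not know which indices are nontrivial, so it cannot locate ``the smallest unvisited'' one.

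The missing observation is that the successor map, restricted to the $m=k-\mu_1$ nontrivial pairs, is a fixed-point-free \emph{bijection} of that set, not an arbitrary map into $[k]$. Hence $S$ together with this permutation admits at most $\binom{k}{\mu_1}\,m! = k!/\mu_1! = k(k-1)\cdots(\mu_1+1)\le k^{m}$ choices. Concretely, list the nontrivial pairs as a sequence of distinct indices in visitation order; this encodes $S$ for free. Multiply by $2^{m}$ for the orientations of these pairs. One bit per pair suffices, because the entry-endpoint factor and the parallel-copy factor you allow are never both needed for the same pair. Then multiply by $2^{m}$ for marking where each new circuit starts; the paper instead sums over the at most $2^{m}$ partitions $(\mu_\ell)_{\ell\ge 2}$ of $m$. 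This yields exactly $(4k)^{m}$.

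Separately, you need to fix the orientation convention. You cannot orient every red pair globally (``leaving through $v_s$'') and record only its neighbour on the $v_s$ side. Take blue pairs joining $v_1$--$v_2$, $v_3$--$v_4$, and either $u_1$--$u_3$, $u_2$--$u_4$ or $u_1$--$u_4$, $u_2$--$u_3$. The recorded data coincide, although the two decompositions (each a single $16$-circuit) differ, and the recorded map has $2$-cycles that are not circuits. The orientation of each red pair must be induced by a chosen traversal direction of its circuit, as in the paper's ``first and second edge in each pair.'' Only then is the successor map a permutation whose cycles are exactly the circuits.
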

\begin{proof} 
We obtain the upper bound by counting valid circuit decompositions $\bc$ rather than counting $H$ directly.
Indeed, once $\bc \in \cC(G\cup_n H)$ is fixed, the blue edge union $H$ (with unlabeled $d$-nodes) is uniquely determined by $\bc$. Thus it suffices to bound the number of such decompositions $\bc$. 

Let $\mu_\ell$ denote the number of alternating circuits of length $4\ell$ in $\bc$, for $\ell\ge1$. The total number of red and blue edges satisfies $4\sum_{\ell\ge1}\ell\mu_\ell = 4k$. In particular, $\sum_{\ell\ge2}\ell\mu_\ell = k-\mu_1$. Note that each circuit of length $4\ell$ uses $\ell$ pairs of red edges and $\ell$ pairs of blue edges.
For each $\ell\ge 1$, we choose $\mu_\ell$ (ordered) sequences of $\ell$ pairs out of the $k$ red pairs; this gives a total of $\frac{k!}{\prod_{\ell \ge 1} \mu_\ell!}$ choices (we can globally order all of the pairs, split the global ordering into sequences of the desired lengths: $\mu_1$ sequences of length $1$, $\mu_2$ sequences of length $2$, etc; and finally divide by the number of orderings of the $\mu_\ell$ $\ell$-sequences for each $\ell$).
For $\ell=1$, each sequence contains just one pair of red edges: we simply concatenate a blue path of length $2$ (with an unlabeled blue $d$-node in the middle) to each pair.
For each $\ell \ge 2$, for each sequence of $\ell$ pairs, we choose a first and second edge in each pair (a total of $2^{k-\mu_1}$ choices), and according to this order form an alternating cycle by concatenating each consecutive ordered red pair with a blue $2$-path.

The number of $\{\mu_\ell\}_{\ell \ge 1}$ is no more than the number of integer partitions of $k-\mu_1$, which is at most $2^{k-\mu_1}$.
For each such partition, the number of compatible $\bc$ is upper bounded by
\begin{align*}
\frac{k!}{\prod_{\ell \ge 1}\mu_\ell!} \cdot 2^{k-\mu_1} 
\le \frac{k!}{\mu_1!}\cdot 2^{k-\mu_1} 
&\le k^{k-\mu_1} \cdot 2^{k-\mu_1}. 
\end{align*}
Thus, multiplying by the $2^{k-\mu_1}$ upper bound on the number of partitions, we have our result.
\end{proof}

\subsection{False Pairs}
For $i\neq j$, we have
\$
\E\left[\similarity_{ij}^{\cH}\right] 
& = \sum_{T \in \cH} \aut(T) 
\sum_{\substack{T_1,T_2 \cong T\\\mathrm{root}(T_1) = i\\ \mathrm{root}(T_2) = j}}\E\left[X^{T_1} Y^{T_2}\right].
\$
For a pair $T_1$ and $T_2$, their roots are distinct. Since the root of a tree in $\cH$ has strictly larger degree than any non-root node, the root $i$ of $T_1$ is incident to more edges from $T_1$ than from $T_2$ (where it can appear only as a non-root node) in $T_1\cup_n T_2$. On the other hand, by the remarks following \Cref{lem:isserlis}, $\E[X^{T_1} Y^{T_2}]$ can be nonzero only if every $n$-node in $T_1\cup_n T_2$ is incident to the same number of edges from $T_1$ as from $T_2$. This condition therefore fails at the root $i$, implying that $\E[X^{T_1} Y^{T_2}] = 0$. Since this holds for every pair $T_1$ and $T_2$, we conclude that $\E[\similarity_{ij}] = 0$.

\section{Variance Calculation}\label{sect:var}
In this section, we derive upper bounds on the variance of the scores and prove \Cref{prop:var-control}. Recall from \eqref{eq:var1} that
\#\label{eq:var}
\var(\similarity_{ij}) 
&= \sum_{T,S\in \calT}
\aut(T) \aut(S) \sum_{\substack{T_1, T_2 \cong T, S_1, S_2 \cong S\\ \mathrm{root}(T_1) = \mathrm{root}(S_1) = i\\ \mathrm{root}(T_2) = \mathrm{root}(S_2) = j}}
\cov\left(X^{T_1} Y^{T_2}, X^{S_1} Y^{S_2} \right).
\#
For labeled trees $T_1, T_2, S_1, S_2$, the Weingarten calculus in \Cref{thm:altcir} implies that the covariance term depends on the alternating circuit decomposition of the $n$-union graph $(T_1 \cup S_1)\cup_n(T_2 \cup S_2)$.
For ease of notation, we write 
$U \triangleq (T_1 \cup S_1)\cup_n(T_2 \cup S_2)$
for the \emph{decorated union graph} obtained from these four trees. The graph $U$ is a multigraph in which edges between the same pair of nodes that originate from different trees are retained as parallel edges.
Each edge in $U$ carries a decoration in $\{T_1, S_1, T_2, S_2\}$ indicating its origin. When forming alternating circuits, we color edges originating from $T_1$ or $S_1$ (from $X$) red, and edges originating from $T_2$ or $S_2$ (from $Y$) blue. 
We denote by $\cU_{ij}$ the set of all pairs $(U, \bc)$, where $U$ is a decorated union graph formed by some trees $T_1 \cong T_2 \in \calT$ and $S_1 \cong S_2 \in \calT$ with roots $i$ and $j$, respectively, and $\bc \in \mathcal{C}(U)$ is a circuit decomposition of $U$. 
By construction, there is a natural one-to-one correspondence between 
$(T_1, S_1, T_2, S_2)$ and $U$. 

To rewrite the variance in terms of $U$, we define the weight $w(\cdot)$ as follows. 
For a subgraph 
$H\subset U$ and a tree $L\in\{T_1, S_1, T_2, S_2\}$, let $\cB(L)\subset \cJ$ denote the set of $L$'s branches, and define
\#\label{eq:weight-u-aut} 
w_{L}(H) \triangleq \prod_{J\in \cB(L), J\subset H} \sqrt{\aut(J)}, \quad
w(H) \triangleq w_{T_1}(H) w_{S_1}(H)w_{T_2}(H)w_{S_2}(H).
\#
We now verify that
$w(U) = \aut(T) \aut(S)$. 
Since $T \simeq T_1 \simeq T_2$ and $S \simeq S_1 \simeq S_2$, we have
\#\label{eq:aut-ts}
\aut(T) \aut(S)= \sqrt{\aut(T_1) \aut(S_1)\aut(T_2) \aut(S_2)}. 
\# 
Moreover, since $T_1$ is a tree with non-isomorphic branches, we have
\[
\aut(T_1) = \prod_{J \in \cB(T_1)} \aut(J),
\]
and the same identity holds for $S_1$, $T_2$, and $S_2$. Therefore, the right-hand side of \eqref{eq:aut-ts} equals the product of $\sqrt{\aut(J)}$ over all branches $J$ appearing in the four trees, which is precisely $w(U)$.  

The next lemma applies the approximate factorization property of the Weingarten function from \eqref{eq:twg-mult} and rewrites the variance bound in terms of decorated union graphs. For $U = (T_1\cup S_1) \cup_{n}(T_2\cup S_2)$, let 
$\cC^\dag(U) = \cC(U)\setminus \calC(T_1 \cup T_2)\times \calC(S_1 \cup S_2)$ denote the set of alternating circuit decompositions, excluding those in which every circuit lies within either $T_1 \cup_{n} T_2$ or $S_1 \cup_{n} S_2$.
This exclusion corresponds to subtracting the mean square from the second moment.
\begin{lemma}\label{lmm:var-step1}
For any $i\neq j \in [n]$, with $\nu = \nu^\cT$ from \eqref{eq:exp-score-abs}, we have
\# 
& \var(\similarity_{ii}) \le \sum_{(U,\bc)\in \cU_{ii} \colon \bc\in \cC^\dag(U)} w(U) \rho^{2(2K-\kappa(\bc))}  \wg(\mu(\bc),d) + \frac{\poly(K)}{d} \nu^2, \label{eq:var-ii-pf1}  \\
& \var(\similarity_{ij})\le \sum_{(U,\bc)\in \cU_{ij}} w(U) \rho^{2(2K-\kappa(\bc))}  \wg(\mu(\bc),d). \label{eq:var-ij-pf1} 
\#
\end{lemma}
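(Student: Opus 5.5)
We start from the covariance expansion \eqref{eq:var}. For each labeled quadruple $(T_1,S_1,T_2,S_2)$, apply \Cref{thm:altcir} to the pair $(T_1\cup S_1,\,T_2\cup S_2)$, viewed as red and blue multigraphs with $4K$ edges each. This gives
\[
\E\bigl[X^{T_1}Y^{T_2}X^{S_1}Y^{S_2}\bigr]=\sum_{\bc\in\cC(U)}\rho^{2(2K-\kappa(\bc))}\wg(\mu(\bc),d).
\]
Applying the same theorem to $T_1\cup_n T_2$ and to $S_1\cup_n S_2$ gives the two first moments, and since these unions are simple, $\kappa=0$ for them. The weights combine correctly because $w(U)=\aut(T)\aut(S)$, as verified just before the lemma. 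Summing over labeled quadruples is then the same as summing over $(U,\bc)\in\cU_{ij}$.

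\textbf{False pairs.} For $j\neq i$ we have $\E[\similarity_{ij}]=0$ by \Cref{prop:mean-sep}, so the variance equals the second moment. The covariance can therefore be replaced by the full fourth moment, and \eqref{eq:var-ij-pf1} holds with equality. Strictly speaking it is an identity, so the stated ``$\le$'' is trivially true.

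\textbf{True pairs.} Every circuit decomposition of the edge-disjoint union $(T_1\cup_nT_2)\sqcup(S_1\cup_nS_2)$ that respects the split is a pair $(\bc_1,\bc_2)\in\cC(T_1\cup_nT_2)\times\cC(S_1\cup_nS_2)$, and its $\kappa$ is $0$. So the fourth moment splits into two parts:
\begin{itemize}
\item the sum over $\cC^\dag(U)$, which we keep as is;
\item the sum over split decompositions, namely $\sum_{\bc_1,\bc_2}\rho^{4K}\wg(\mu(\bc_1\cup\bc_2),d)$.
\end{itemize}
The product of means is $\sum_{\bc_1,\bc_2}\rho^{4K}\wg(\mu(\bc_1),d)\wg(\mu(\bc_2),d)$. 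By the approximate factorization \eqref{eq:twg-mult}, made quantitative through \Cref{lem:weing-fnc} with $\poly(K)/d$ errors, the exponents and the $\gamma$ factors are additive. Hence
\[
\bigl|\wg(\mu(\bc_1\cup\bc_2),d)-\wg(\mu(\bc_1),d)\wg(\mu(\bc_2),d)\bigr|\le\frac{\poly(K)}{d}\,|\wg(\mu(\bc_1),d)|\,|\wg(\mu(\bc_2),d)|.
\]
The requirement $d>12(4K)^{7/2}$ holds under the standing assumption, up to adjusting constants.

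Multiply this bound by $w(U)\rho^{4K}$ and sum over all labeled quadruples and all $(\bc_1,\bc_2)$. The sum factorizes into the product of two copies of the absolute-value sum defining $\nu$ in \eqref{eq:exp-score-abs}. The split part therefore differs from $(\E\similarity_{ii})^2$ by at most $\frac{\poly(K)}{d}\nu^2$, which gives \eqref{eq:var-ii-pf1}.

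The only delicate point is making \eqref{eq:twg-mult} quantitative with a uniform $\poly(K)/d$ error. I would handle this by writing each $\wg$ as $(1\pm\epsilon)$ times its leading term, with $\epsilon=\poly(K)/d$. The leading terms multiply exactly, so the discrepancy is $O(\epsilon)$ times the product of the absolute values, given $\epsilon\le1$.
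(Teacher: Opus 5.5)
Your proposal is correct and follows the paper's proof step for step. You expand each covariance with \Cref{thm:altcir} and separate the split decompositions $\mathcal{C}(T_1\cup_n T_2)\times\mathcal{C}(S_1\cup_n S_2)$, which have $\kappa=0$, from $\mathcal{C}^\dag(U)$. You then bound their discrepancy with the product of means via the approximate factorization \eqref{eq:twg-mult} and sum to get $\frac{\poly(K)}{d}\nu^2$; for $j\neq i$ you bound the variance by (here, equal to) the second moment. One small correction: for the union decomposition the parameter in \Cref{lem:weing-fnc} is $k=\sum_\ell \ell\mu_\ell=2K$, so the standing assumption $d>12(2K)^{7/2}$ applies exactly as stated, with no need for $d>12(4K)^{7/2}$ or any adjustment of constants.
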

\begin{proof}
We consider true pairs ($i=j$) and false pairs ($i \neq j$) separately.

\paragraph{True pairs.} 
Following from \eqref{eq:var},   for fixed labeled trees $T_1, T_2, S_1, S_2$, we compute the covariance
\#\label{eq:cov-t}
\cov\left(X^{T_1}Y^{T_2}, X^{S_1} Y^{S_2} \right) = 
\E\left[X^{T_1}Y^{T_2}X^{S_1} Y^{S_2} \right] - \E\left[X^{T_1} Y^{T_2} \right] \E\left[ X^{S_1} Y^{S_2}\right].
\#
Let $U = (T_1\cup S_1) \cup_{n}(T_2\cup S_2)$.
Applying the Weingarten calculus from \Cref{thm:altcir}, and noting that each of the four trees contains $2K$ edges, the second moment is given by
\#\label{eq:cov-1}
\E\left[X^{T_1} Y^{T_2}X^{S_1} Y^{S_2} \right] = \sum_{\bc\in \cC(U)} \rho^{2(2K-\kappa(\bc))}  \wg(\mu(\bc),d).
\#
In addition, since $T_1\cup_n T_2$ and $S_1\cup_n S_2$ are simple graphs that do not contain $2$-circuits, we have
\#\label{eq:cov-2}
\E\left[X^{T_1}Y^{T_2}\right]\E
\left[ X^{S_1} Y^{S_2} \right] & = \sum_{\substack{\bc_1\in \cC(T_1\cup_n T_2) \\ \bc_2\in \cC(S_1\cup_n S_2)}} \rho^{4K}\wg(\mu(\bc_1),d)\wg(\mu(\bc_2),d). 
\#
Combining \eqref{eq:cov-t}, \eqref{eq:cov-1}, and \eqref{eq:cov-2}, and using the definition of $\mathcal{C}^{\dagger}(U)$, we obtain
\#\label{eq:cov} 
& \left|\cov\left(X^{T_1} Y^{T_2}, X^{S_1} Y^{S_2} \right) - \sum_{\bc\in \cC^\dag(U)} \rho^{2(2K-\kappa(\bc))}  \wg(\mu(\bc),d) \right| \notag\\
& \qquad = 
\left|\sum_{\substack{\bc_1\in \cC(T_1\cup_n T_2) \\ \bc_2\in \cC(S_1\cup_n S_2)}} 
\rho^{4K} \left[ \wg(\mu(\bc_1\cup\bc_2),d) - \wg(\mu(\bc_1),d)\wg(\mu(\bc_2),d)  \right] 
\right| \notag\\
& \qquad \le \frac{\poly(K)}{d} \sum_{\substack{\bc_1\in \cC(T_1\cup_n T_2) \\ \bc_2\in \cC(S_1\cup_n S_2)}} \rho^{4K}\left|\wg(\mu(\bc_1),d)\right|\left|\wg(\mu(\bc_2),d)\right|,
\#
where the last inequality follows from the approximate factorization of the Weingarten function in \eqref{eq:twg-mult}.
Summing \eqref{eq:cov} over $T,S\in\cT$, and then over $T_1,T_2\cong T$, $S_1,S_2\cong S$ (equivalently, over all possible decorated union graphs $U$), and using $w(U) = \aut(T)\aut(S)$,  
we conclude that
\$ 
\var(\similarity_{ii}) \le \sum_{(U,c)\in \cU_{ii} \colon \bc\in \cC^\dag(U)} w(U) \rho^{2(2K-\kappa(\bc))}  \wg(\mu(\bc),d) +  \frac{\poly(K)}{d}\nu^2.
\$
\paragraph{False pairs.} For $i\neq j$, it suffices to bound the variance by the second moment. Summing \eqref{eq:cov-1} over all decorated union graphs $U$ yields
\#\label{eq:var-ij-ratio-1}
\var(\similarity_{ij})\le \sum_{(U,\bc)\in \cU_{ij}} w(U) \rho^{2(2K-\kappa(\bc))}  \wg(\mu(\bc),d).
\#
\end{proof}

Introducing such decorated union graphs will also simplify the subsequent counting by allowing us to first count the union graph and then retrieve the individual trees via edge decorations.

\subsection{Decomposition of Decorated Union Graphs}
The approximate factorization of the Weingarten function in \eqref{eq:twg-mult} allows us to apply a divide-and-conquer strategy: we decompose each $(U,\bc)$ into edge-disjoint components and study them separately. Specifically, we consider $U = \uo \cup \ur$ and $\bc = \bco \cup \bcr$, where first component, $(\uo,\bco)$, is the \emph{fully overlapping part}, and the second, $(\ur,\bcr)$, is the \emph{grafted part}.

The fully overlapping part consists of those branches that fully overlap between exactly two of the four trees and whose circuit decomposition contains only $2$- or $4$-circuits.
As illustrated in \Cref{fig:fully-overlap}, two rooted branches are said to be fully overlapping in $(U, \bc)$ if they \emph{share the same root node} and one of the following holds:
\begin{enumerate}
\item[(i)] \textbf{Monochromatic case.} There are two parallel edges of the same color between each pair of adjacent $n$-node and $d$-node. These edges form only $2$-circuits. The two rooted branches are decorated by either $(T_1, S_1)$ or $(T_2, S_2)$.
\item[(ii)] \textbf{Bichromatic case.}
There are two parallel pairs of edges between each pair of adjacent $n$-nodes. Within each edge pair, the two edges have the same color and share a common $d$-node, whereas the two edge pairs have different colors and distinct $d$-nodes. In this case, these edges form only $4$-circuits. We further distinguish the \emph{cross-isomorphism} pairs $(T_1,S_2)$ or $(T_2,S_1)$, and the \emph{within-isomorphism} pairs $(T_1,T_2)$ or $(S_1,S_2)$.
\end{enumerate} 
\input{fig/fully-overlap}
We define $\uo$ as the union of all branches in $U$ that fully overlap between exactly two of the four trees, and $\bco$ as the set of circuits in $\uo$ (all of length $2$ or $4$). The grafted part is then defined by $\ur = U\setminus \uo$ and $\bcr = \bc\setminus \bco$, consisting of the remaining branches of the four trees that do not fully overlap with one another. 

We provide some intuition for this decomposition. The fully overlapping part $\uo$ consists of branches taken directly from $\cJ$, 
which allows for a more precise and explicit counting.
The grafted part $\ur$, by contrast, requires a more delicate analysis: unlike in $\uo$, the branches here intertwine without fully overlapping, leading to more intricate union structures. 
However, an important observation that facilitates counting is that each branch in the grafted part either shares nodes with other branches or participates in longer circuits, which helps constrain the number of admissible grafted structures. 

We now introduce parameters describing each part, as illustrated in \Cref{fig:decomposition}. 
Let $\cUo(r, m, s)$ denote the set of fully overlapping parts $(\uo, \bco)$ containing a total of $r$ fully overlapping branch pairs, among which $m$ are monochromatic pairs and $s$ are bichromatic cross-isomorphism pairs.
Let $\cU_\rr(\ell)$ denote the set of grafted parts $(\ur, \bcr)$ 
containing $\ell$ branches in total from the four trees $T_1,T_2,S_1,S_2$. 
Given $(U,\bc)\in\cU$, its two parts $(\uo,\bco)$ and $(\ur,\bcr)$ belong to some pair of
$\cUo(r, m,s)$ and $\cU_\rr(\ell)$, respectively, where $r,\ell$ satisfy
\# 
& 2r + \ell = 4D, \label{eq:r-l-D} 
\#
since the four trees contain $4D$ branches in total, while $\uo$ and $\ur$ contain $2r$ and $\ell$ branches, respectively. 

\input{fig/decomposition-u}

With this notation, we decompose the variance into contributions from the fully overlapping part and the grafted part. Specifically, with the weight $w(\cdot)$ defined in \eqref{eq:weight-u-aut}, we define 
\#
& \ov(r,m,s) = \frac{1}{|\cT|^2(nd\rho^2)^{rM}}\sum_{(\uo, \bco)\in \cUo(r,m,s)} w(\uo) \rho^{2(rM-\kappa(\bco))}  \left|\wg(\mu(\bco),d) \right|, \label{eq:ov-def} \\
& \re(\ell) = \frac{1}{(nd\rho^2)^{\ell M/2}} \sum_{(\ur, \bcr)\in \cU_\rr(\ell)} w(\ur) \rho^{2(\ell M/2-\kappa(\bcr))}  \left|\wg(\mu(\bcr),d) \right|, \label{eq:re-def}
\#
with the conventions $\ov(0,0,0) = \re(0)=1$.
The next lemma formalizes this decomposition.
\begin{lemma}\label{lmm:var-ratio} 
For $\ov(r,m,s)$ and $\re(\ell)$ defined in \eqref{eq:ov-def}
and \eqref{eq:re-def}, we have
\$
& \frac{\var(\similarity_{ii})}{|\cT|^2(nd\rho^2)^{2K}} \le \left(1 + o(1)\right) \sum_{\substack{r\ge m,s\ge 0, \, \ell\ge 0  \\ 2r+\ell = 4D \\ m+s+ \ell \ge 1}} \ov(r,m,s) \re(\ell) + \frac{\poly(K)}{d},  \\
& \frac{\var(\similarity_{ij})}{|\cT|^2(nd\rho^2)^{2K}} \le \left(1 + o(1)\right) \sum_{\substack{r= m\ge 0, \, \ell\ge 0  \\ 2m+\ell = 4D}} \ov(r,m,0) \re(\ell), \quad \forall j\neq i.
\$ 
\end{lemma}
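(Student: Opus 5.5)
Starting from \Cref{lmm:var-step1}, I will regroup the sums over $(U,\bc)$ according to the decomposition $U=\uo\cup\ur$, $\bc=\bco\cup\bcr$. Every pair $(U,\bc)\in\cU_{ij}$ splits uniquely into a fully overlapping part in some $\cUo(r,m,s)$ and a grafted part in some $\cU_\rr(\ell)$ with $2r+\ell=4D$. The uniqueness holds because full overlap of a branch pair is a property of the pair (together with its circuits), and each branch overlaps with at most one other branch. I will bound the map $(U,\bc)\mapsto((\uo,\bco),(\ur,\bcr))$ as injective up to the constant bookkeeping of which tree each branch comes from. That bookkeeping is already encoded in the decorations, so summing over all decorated parts overcounts rather than undercounts. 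The parts are edge-disjoint, so three quantities factor. First, $w(U)=w(\uo)w(\ur)$, since each branch lies entirely in one part. Second, $\kappa(\bc)=\kappa(\bco)+\kappa(\bcr)$. Third, the edge counts satisfy $rM+\ell M/2=2K$, because $\uo$ contains $2r$ branches of $2M$ edges and $\ur$ contains $\ell$ branches. By the approximate factorization \eqref{eq:twg-mult}, $|\wg(\mu(\bc),d)|\le(1+o(1))|\wg(\mu(\bco),d)||\wg(\mu(\bcr),d)|$. I will take absolute values throughout. Finally, $(nd\rho^2)^{2K}=(nd\rho^2)^{rM}(nd\rho^2)^{\ell M/2}$, and the $|\cT|^2$ normalization is attached to $\ov$. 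Together these give $\sum_{(U,\bc)}\le(1+o(1))\sum\ov(r,m,s)\re(\ell)$ over admissible $(r,m,s,\ell)$.

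Next I will restrict the admissible indices. For false pairs $i\ne j$, the roots of $T_1,S_1$ differ from those of $T_2,S_2$. A fully overlapping pair must share its root node, so only the monochromatic pairs $(T_1,S_1)$ and $(T_2,S_2)$ are possible, and there are no bichromatic pairs. Hence $r=m$ and $s=0$, which is the second bound. For true pairs, the first bound excludes the term $m=s=\ell=0$. Then $r=2D$, every branch overlaps bichromatically, and there are no monochromatic pairs, so all overlaps are within-isomorphism pairs $(T_1,T_2)$ and $(S_1,S_2)$. With only $4$-circuits inside $T_1\cup_nT_2$ and $S_1\cup_nS_2$, we get $\bc\in\calC(T_1\cup T_2)\times\calC(S_1\cup S_2)$, which is excluded from $\cC^\dag(U)$. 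The error term $\frac{\poly(K)}{d}\nu^2$ is handled by \Cref{prop:mean-true}: there $\nu=(1+o(1))|\cT|n^Kd^K\rho^{2K}$, so dividing gives $\poly(K)/d$.

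The main obstacle is the uniqueness and factorization step. I must check that the definition of full overlap, in which branches share a root, have parallel structure, and use only $2$- or $4$-circuits of exactly two trees, produces no circuit crossing between $\uo$ and $\ur$. This has to hold for $\bco$ to be a decomposition of $\uo$ alone. It should follow because all edges of an overlapping branch pair are consumed by their own $2$/$4$-circuits, which by definition belong to $\bco$. A second point is to ensure that $w$ splits correctly when branches of different trees coincide. I will handle this by defining $w$ per branch, as in \eqref{eq:weight-u-aut}, so the split is exact.
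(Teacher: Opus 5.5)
Your proposal is correct and follows the paper's proof essentially step for step. You split each $(U,\bc)$ into its fully overlapping and grafted parts. You then factor the weight $w$, the $2$-circuit count, the power of $\rho$ (using $rM+\ell M/2=2K$), and the Weingarten function via \eqref{eq:twg-mult}, and normalize with \Cref{prop:mean-true}. Your index restrictions are exactly the paper's: $r=m$ and $s=0$ for false pairs, because bichromatic overlaps need a common root; and $m+s+\ell\ge 1$ for true pairs, because otherwise every circuit lies in $T_1\cup_n T_2$ or $S_1\cup_n S_2$ and the term was already removed by subtracting the squared mean.
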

\begin{proof}
We consider true pairs ($i=j$) and false pairs ($i \neq j$) separately.

\paragraph{True pairs.}
For $i=j$, we recall the bound on $\var(\similarity_{ii})$ from \eqref{eq:var-ii-pf1}.
Every $(U,\bc)\in\cU_{ii}$ admits a decomposition into $(\uo,\bco)\in \cUo(r, m, s)$ and $(\ur,\bcr) \in \cU_\rr(\ell)$. The condition $\bc\in \cC^\dag(U)$ requires that at least one circuit contain edges with decorations from both $\{T_1, T_2\}$ and $\{S_1, S_2\}$, rather than lying within either $T_1\cup_n T_2$ or $S_1\cup_n S_2$. This is only possible if the parameters satisfy
\#\label{eq:s-ell-1}
m + s + \ell \ge 1.
\#
Otherwise, when $m=s=\ell=0$, every branch of $U$ belongs to a bichromatic within-isomorphism fully overlapping pair, and therefore every circuit of $U$ lies entirely within either $T_1\cup_n T_2$ or $S_1\cup_n S_2$.
 
Since $\uo$ and $\ur$ are edge-disjoint and each branch of the four trees belongs to exactly one of them, the weight factorizes as
\[
w(U) = w(\uo) w(\ur). 
\]
Using $\kappa(\bc) = \kappa(\bco) + \kappa(\bcr)$ and $2K = rM + \ell M/2$ from \eqref{eq:r-l-D}, we split the factor 
\$
(\rho^2)^{2K-\kappa(\bc)} = (\rho^2)^{rM-\kappa(\bco)}\cdot (\rho^2)^{\ell M/2-\kappa(\bcr)}.
\$ 
Furthermore, the approximate factorization property in \eqref{eq:twg-mult} yields \$\left|\wg(\mu(\bc),d)\right| \le \left(1 + o(1)\right)\left|\wg(\mu(\bco),d)\right|\left|\wg(\mu(\bcr),d)\right|.\$ 
Combining them and summing over $(r,s,\ell)$ satisfying \eqref{eq:r-l-D} and \eqref{eq:s-ell-1},
we may bound \eqref{eq:var-ii-pf1} as
\$
\var(\similarity_{ii})
& \le \left(1 + o(1)\right)\sum_{\substack{r\ge m,s\ge 0, \, \ell\ge 0  \\ 2r+\ell = 4D \\ m+s+\ell \ge 1}} \left(\sum_{(\uo, \bco)\in \cUo(r,m,s)} w(\uo) \rho^{2(rM-\kappa(\bco))}  \left|\wg(\mu(\bco),d) \right| \right) \notag\\
& \qquad \qquad \qquad \qquad
\times \left( \sum_{(\ur, \bcr)\in \cU_\rr(\ell)} w(\ur) \rho^{2(\ell M/2-\kappa(\bcr))}  \left|\wg(\mu(\bcr),d) \right| \right) + \frac{\poly(K)}{d} \nu^2.
\$
Finally, dividing both sides by $|\cT|^2 (nd\rho^2)^{2K} = |\cT|^2 (nd\rho^2)^{rM} \cdot (nd\rho^2)^{\ell M/2}$, recalling  $\ov(r,m,s)$ and $\re(\ell)$ defined in \eqref{eq:ov-def} and \eqref{eq:re-def}, and using $\nu = (1+o(1))|\cT| (nd\rho^2)^{K}$ from \Cref{prop:mean-true}, we obtain the claimed bound.

\paragraph{False pairs.} For $i \neq j$, bichromatic overlapping branches cannot occur by definition, since branches of different colors are rooted at distinct vertices. 
Thus, we must have $r = m$ and $s=0$; by \eqref{eq:r-l-D},
\#\label{eq:para-consts-r-o-ij}
2m + \ell = 4D.
\#
The remainder of the argument is the same as the true-pair case. We decompose each $(U,\bc)\in\cU_{ij}$ into $(\uo,\bco)\in \cUo(r, m, s)$ and $(\ur,\bcr) \in \cU_\rr(\ell)$, apply the factorizations, and finally normalize by $|\cT|^2 (nd\rho^2)^{2K}$. By \eqref{eq:var-ij-pf1}, with the additional constraint in \eqref{eq:para-consts-r-o-ij}, we obtain 
\#\label{eq:var-ij-ratio-2}
\frac{\var(\similarity_{ij})}{|\cT|^2(nd\rho^2)^{2K}} \le \left(1 + o(1)\right) \sum_{\substack{r= m\ge 0, \, \ell\ge 0  \\ 2m+\ell = 4D}} \ov(r,m,0) \re(\ell).
\#
\end{proof}

By \Cref{lmm:var-ratio}, it suffices to bound $\ov(r,m,s)$ and $\re(\ell)$ separately. The results are given in \Cref{prop:uo-co,prop:ur-cr}. 
\begin{proposition}[Fully overlapping part]\label{prop:uo-co}
Suppose that $\beta^{M} |\cJ|\ge 7D^2$.
For any $0\le m\le r$ and $0\le s\le r$, $\ov(r,m,s)$ defined in \eqref{eq:ov-def} satisfies:
\$ 
\ov(r,m,s)
& \le  C \left(\frac{\sqrt{\beta}}{\rho^2}\right)^{mM} \beta^{sM/2}.
\$    
\end{proposition}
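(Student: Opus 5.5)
We would prove \Cref{prop:uo-co} by directly enumerating the fully overlapping parts and computing each contribution exactly, since here the circuit structure is rigid. First we fix the combinatorial skeleton. Choose which $r$ branch pairs overlap and which two trees each pair comes from. There are $m$ monochromatic pairs, decorated $(T_1,S_1)$ or $(T_2,S_2)$; $s$ cross pairs, decorated $(T_1,S_2)$ or $(T_2,S_1)$; and $r-m-s$ within pairs, decorated $(T_1,T_2)$ or $(S_1,S_2)$. Each overlapping pair uses two isomorphic branches from $\cJ$, one branch type $J$ per pair. Then label the nodes: each pair has $M$ shared non-root $n$-nodes, giving $n^{M}$ choices. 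For $d$-nodes, a monochromatic pair has $M$ shared $d$-nodes, while a bichromatic pair has $2M$ distinct $d$-nodes. Finally, the circuit decomposition within an overlapping pair is forced up to the $\sqrt{\aut(J)}^2=\aut(J)$ relabelings, which are compensated by the weight $w$.

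Second, we compute the weight of each pair. For a monochromatic pair, all $2M$ circuits are $2$-circuits, so $\kappa$ increases by $M$, the $\rho$ factor is $1$, and there is no $Q$ factor. Its contribution is about $\aut(J)\cdot n^M d^M/\aut(J)$, which normalized by $(nd\rho^2)^M$ gives $\rho^{-2M}$. For a bichromatic pair, there are $M$ $4$-circuits, each contributing $\rho^2/d$ by \Cref{thm:altcir} with $\mu_1$ maximal. Together with the $n^Md^{2M}$ labels, the normalized contribution is $1$ per pair. Multiplicativity up to $1+o(1)$ follows from \eqref{eq:twg-mult}.

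The key step is counting branch types against $|\cT|^2$. The trees $T$ and $S$ each consist of $D$ distinct branches. For within pairs, the same branch appears in both $T_1$ and $T_2$, which is automatic. A cross or monochromatic pair, however, forces a branch of $T$ to equal a branch of $S$. So with $m+s$ such coincidences, the number of $(T,S)$ is at most $|\cT|^2\cdot (D^2/|\cJ|)^{m+s}$, times combinatorial factors for choosing which positions coincide. Choosing the positions gives at most $\binom{D}{m+s}^2(m+s)!\le D^{2(m+s)}$. This yields $\ov\lesssim (\rho^{-2M}\cdot D^2/|\cJ|)^{m}(D^2/|\cJ|)^{s}$, up to factors $2^{r}$ for the color choices. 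Those $2^r$ factors must be absorbed; within pairs between fixed positions are determined once the trees are fixed, so only $m+s$ choices actually carry such factors.

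Finally we use the hypothesis $\beta^M|\cJ|\ge 7D^2$. It gives $D^2/|\cJ|\le \beta^M/7$, hence each monochromatic factor is at most $(\beta/\rho^2)^{M}/7\le(\sqrt\beta/\rho^2)^M/7$, using $\beta<1$, and each cross factor is at most $\beta^{M}/7\le \beta^{M/2}/7$. The remaining combinatorial multiplicities, namely the choice of which trees and colors and the interleaving of $m$ and $s$, grow at most like $c^{m+s}$ with $c<7$, so summing the geometric series gives the constant $C$. The main obstacle is the third step: showing rigorously that the branch-coincidence count gives $(D^2/|\cJ|)^{m+s}$ relative to $|\cT|^2=\binom{|\cJ|}{D}^2$. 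Concretely, one must verify that $\binom{|\cJ|-k}{D-k}/\binom{|\cJ|}{D}\le (D/|\cJ|)^k$ and that automorphism weights match the labeling counts exactly, without any residual $\aut$ factors.
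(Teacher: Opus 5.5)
Your per-pair weights (normalized factor $\rho^{-2M}$ per monochromatic pair, $1$ per bichromatic pair) and the cancellation of $\mathrm{aut}(J)$ between $w$ and the labeling count agree with the paper. The key counting step, however, is wrong.

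You assert that each of the $m+s$ monochromatic or cross pairs forces its own coincidence between a branch of $T$ and a branch of $S$, which would give a factor $(D^2/|\mathcal{J}|)^{m+s}$. But $T_1\cong T_2\cong T$ and $S_1\cong S_2\cong S$. So a single branch type $J$ shared by $T$ and $S$ has four copies, one in each of $T_1,T_2,S_1,S_2$. These can be paired as $(T_1,S_1),(T_2,S_2)$, giving two monochromatic pairs, or as $(T_1,S_2),(T_2,S_1)$, giving two cross pairs. Hence only $\lceil (m+s)/2\rceil$ coincidences are forced. This is exactly the paper's observation that $T$ and $S$ share at least $(m+s)/2$ branches, so that $r'\le 2D-(m+s)/2$.

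Your intermediate bound $(\rho^{-2M}D^2/|\mathcal{J}|)^{m}(D^2/|\mathcal{J}|)^{s}$ is in fact false. The configuration of \Cref{fig:var-bottom} ($T=S$, $T_1=S_1$, $T_2=S_2$) has $m=2D$ and $s=0$. By \eqref{eq:match} it contributes $\rho^{-4K}/|\mathcal{T}|\ge \rho^{-4K}(D/(e|\mathcal{J}|))^{D}$. Your bound, even allowing a factor $c^{2D}$, gives $\rho^{-4K}(c^2D^4/|\mathcal{J}|^2)^{D}$, which is smaller once $|\mathcal{J}|\gg D^3$. That is precisely the regime in which the proposition is applied ($M=\Theta(\log K)$, $|\mathcal{J}|=(\alpha+o(1))^{-M}$). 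The $\sqrt{\beta}$ in the statement is the trace of this halving; weakening $\beta^M$ to $\beta^{M/2}$ via $\beta<1$ only hides that your estimate cannot hold.

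With the correct count, the saving is $(D^2/|\mathcal{J}|)^{(m+s)/2}\le(\beta^M/7)^{(m+s)/2}$. This gives the stated exponent but leaves only about $\sqrt{7}$ of slack per pair, so your plan to absorb multiplicities $c^{m+s}$ with $c<7$ no longer works. The decoration choices have to be charged per shared branch type instead. In the paper, the factor $6=\binom{4}{2}$ sits inside the ratio $6D^2/(|\mathcal{J}|-2D)\le\beta^M$, and that ratio is raised to the power $2D-r'\ge (m+s)/2$.

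There is a second problem. Within pairs are determined by $(T,S)$ only when $r=2D$. For $r<2D$ the remaining branches belong to the grafted part. Lifting the overlapping part to full trees $(T,S)$ and then choosing which $a$ branches of $T$ and $b$ branches of $S$ form within pairs costs a factor up to $\binom{D}{a}\binom{D}{b}$. After normalizing by $|\mathcal{T}|^2$, this is not $O(1)$ even when $m=s=0$.

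The paper avoids both issues by counting only the branch templates actually present in the overlapping part: $r'$ distinct types with $r/2\le r'\le\min\{r,2D-(m+s)/2\}$, giving $f(r',r)$ in \eqref{eq:def-f-uo}. It then compares $\sum_{r'}f(r',r)$ with $\binom{|\mathcal{J}|}{D}^2$ through the ratio bounds of \Cref{lmm:f-r-r}.
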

By \Cref{prop:uo-co}, if $\rho^2> \sqrt{\beta}$, the bound on
$\ov(r,m,s)$ decays geometrically in both the number of monochromatic pairs $m$ and the number of bichromatic cross-isomorphism pairs $s$.

\begin{proposition}[Grafted part]\label{prop:ur-cr}
For any $\ell\ge0$, $\re(\ell)$ defined in \eqref{eq:re-def} satisfies
\$ 
\re(\ell) \le C\left(\frac{\poly(K)}{\min\{d, n\}}\right)^{\ell/8}.
\$    
\end{proposition}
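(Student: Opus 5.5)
A grafted part $(\ur,\bcr)\in\cU_\rr(\ell)$ is a union of $\ell$ branches taken from the four trees in which no two branches fully overlap. The plan is to show that this forces a polynomial loss per branch relative to the normalization $(nd\rho^2)^{M}$ per branch pair. I would proceed by an excess/deficit counting argument. To each $(\ur,\bcr)$ associate the number $v_n$ of distinct non-root $n$-nodes, the number $v_d$ of distinct $d$-nodes (red and blue counted separately), the number of $2$-circuits $\kappa(\bcr)$, and the number of $4$-circuits $\mu_1(\bcr)$. The $\ell$ branches contain $2\ell M$ edges in total, so $k=\ell M/2$ in the notation of \Cref{thm:altcir}. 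By \eqref{eq:bound-mu1}, each term is at most
\[
(\rho^2/d)^{k-\kappa}(4/d)^{(k-\kappa-\mu_1)/2}.
\]
The number of labelings is at most $n^{v_n}d^{v_d}$ times a combinatorial factor counting the unlabeled shapes. Normalizing by $(nd\rho^2)^{k}$, each term contributes roughly
\[
n^{v_n-k}\,d^{\,v_d-2k+\kappa+(k-\kappa-\mu_1)/2\cdot(-1)}\,\rho^{-2\kappa}\times(\text{shape count}).
\]

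The key structural step is a deficit lemma. Let $\mathrm{def}=(k-v_n)+\bigl(2k-\kappa-v_d+(k-\kappa-\mu_1)/2\bigr)$, measured in powers of $\min\{n,d\}$. The lemma would state that $\mathrm{def}\ge \ell/8$ whenever no branch pair fully overlaps, with a stronger deficit per branch in ``most'' cases.

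The intuition for the lemma is as follows. In a perfectly matched configuration (the mean picture) each branch pair attains $v_n=M$ and $v_d=2M$ with all circuits being $4$-circuits, giving zero deficit, but that configuration is exactly a full overlap. In a non-fully-overlapping configuration, each branch must either share an $n$-node with another branch beyond the root coincidence, or have its $d$-nodes merged, or participate in a circuit of length at least $8$. To charge this, I would explore each branch from its root along the tree, with the branches attached in a fixed order. Consider the first edge pair at which a branch deviates from being a perfect copy of a single partner, that is, where its edges are not paired into $4$-circuits (or $2$-circuits) with the edges of one fixed partner branch. At that point either a new $n$-node is not created, losing a factor $n$, or a $d$-node is reused or a long circuit begins, losing at least a factor $d^{1/2}$. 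Each such event is shared among at most a bounded number of branches, and $2$-circuits cost a $d$-node but gain $\rho^{-2}$, which is harmless since each branch with a $2$-circuit also must lose an $n$- or $d$-node elsewhere. Together these facts yield a loss of at least a $1/8$ power per branch; the constant $1/8$ is chosen to absorb this sharing.

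The shape count is handled exactly as in \Cref{lem:bound-mu1-le-k-2}. Encoding $(\ur,\bcr)$ by the sequence of choices made during the exploration, each deviation costs at most a $\poly(K)$ factor: a choice among at most $4K$ previously seen vertices, plus a choice of circuit continuation. The non-deviating steps are determined by the partner branch. The sum over branch shapes from $\cJ$, weighted by $w(\ur)$, should be paid for because the automorphism weights $\sqrt{\aut(J)}$ from the two trees containing a branch combine with the $1/\aut$ labeling symmetry. Moreover, $|\cT|^{-2}$ is absent from the normalization of $\re$, so I need the branch types to be essentially forced by the partner up to $\poly(K)$ choices at deviations. This point needs care: for a branch with no partner at all, the branch type is free, and I would bound $|\cJ|\le \alpha^{-M}\cdot$ (tiny) $\le K^{O(1)}$ since $M=\Theta(\log K)$, so it is again absorbed in $\poly(K)$.

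Summing over $\ell$-branch configurations then gives $\sum_{\mathrm{def}\ge\ell/8}(\poly(K)/\min\{n,d\})^{\mathrm{def}}\le C(\poly(K)/\min\{n,d\})^{\ell/8}$ as a geometric series. I expect the main obstacle to be the deficit lemma: rigorously showing that every non-fully-overlapping branch forces a constant fraction of a power loss, uniformly over the intricate ways four trees can interweave through long alternating circuits and shared nodes. In particular, one must rule out a partial overlap, such as a branch agreeing with its partner until deep inside, that costs only $O(1)$ total loss for many branches simultaneously. I would handle this by charging each branch individually to its first deviation and using that trees have no cycles, so that any reuse of a vertex strictly reduces the vertex count.
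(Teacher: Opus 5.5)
Your architecture matches the paper's. You bound each term by \eqref{eq:bound-mu1} and compare labelings with $(nd\rho^2)^{\ell M/2}$. You charge every grafted branch to a deficit event shared by at most four branches, which gives the exponent $\ell/8$. Finally, you absorb $|\mathcal{J}|^{\ell}$, the weight $w\le e^{C\ell M}$ and the powers of $\rho^{-1}$ from $2$-circuits into $\poly(K)^{\ell}$ via $M=\Theta(\log K)$.

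The gap is the deficit lemma itself, which is the entire content of the proof, and your proposed mechanism fails as stated. Your working dichotomy (``perfect copy of one fixed partner'' versus ``has a first deviation'') is not the dichotomy ``fully overlapping'' versus ``grafted''. Here is a counterexample for a false pair $i\neq j$, with $T=S$ containing the path branch:
\begin{enumerate}
\item the path branch of $T_1$ runs from $i$ to $j$, and that of $T_2$ runs over the same $n$-nodes from $j$ to $i$, each red edge pair in a $4$-circuit with the corresponding blue pair;
\item $S_1,S_2$ do the same along a second path;
\item the remaining $D-1$ branches overlap monochromatically.
\end{enumerate}
No grafted branch ever deviates from its partner, yet none fully overlaps it because the roots differ. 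The only deficit is that $i$ and $j$ reappear as non-root nodes. Relatedly, ``any reuse of a vertex reduces the count'' is the wrong yardstick, since in a perfect copy every $n$-node is reused. The saving comes only from nodes lying in three or more trees, or from a root lying in a tree of the other color, measured against $\sum_u m(u)=\ell M$.

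The paper therefore uses static witnesses rather than an exploration order: overdecorated $n$-nodes (which include reused roots, via the $m(i)+m(j)$ term in the proof of \eqref{eq:num-v}) and crossroad $d$-nodes. \Cref{lmm:para-constr-ur} then proves the contrapositive. Suppose every edge of $J$ lies in a circuit of length at most $4$ and $\widetilde V(J)\setminus\{\mathrm{root}(J)\}$ has no witness. Map each edge pair of $J$ to an $n$-edge decorated by $\{T_1,R_{uv}\}$. Then $R_{uv}$ is constant along $J$ and the partner has the same root, so $J$ is fully overlapping.

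A second piece is also missing. Even granting $\mathrm{def}\ge\ell/8$, your closing geometric sum needs the number of configurations with a given deficit to be at most $\poly(K)^{\ell+\mathrm{def}}$ times the labelings. Your encoding pays $\poly(K)$ per deviation, but you only extract deficit from the first deviation of each branch. If a branch could deviate $\Theta(M)$ times at no cost, the encoding would be $K^{\Theta(\ell\log K)}$, which cannot be absorbed. Every encoding choice has to be paid for by its own unit of deficit, and \Cref{lmm:wr-b} arranges exactly this:
\begin{enumerate}
\item the red part is a spanning tree plus at most $2(\ovd+\xr)$ extra edges (by \eqref{eq:bound-v1}) and at most $3^{\xr}$ pairings;
\item blue edges in circuits of length at least $4$ are forced by the alternating circuits, at total cost $(4K)^{\ell M-\kappa-2\mu_1}$ via \Cref{lem:bound-mu1-le-k-2};
\item blue $2$-circuits form a forest that attaches to the rest only at witness nodes;
\item the labels obey \eqref{eq:num-v}--\eqref{eq:num-a}.
\end{enumerate}
Summing over $\ovd+\xr\ge\ell/4-(\ell M/2-\kappa/2-\mu_1)$ then gives the claimed bound.
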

By \Cref{prop:ur-cr}, the upper bound on $\re(\ell)$ decays geometrically as the number of grafted branches $\ell$ increases. In what follows, we first combine \Cref{prop:uo-co,prop:ur-cr} to complete the proof of \Cref{prop:var-control}. We then analyze the more intricate grafted part in \Cref{sect:ur}, and the fully overlapping part in \Cref{sect:uo}.

\subsection{Proof of \Cref{prop:var-control}} 
We now combine \Cref{prop:uo-co,prop:ur-cr} to prove \Cref{prop:var-control}.

\paragraph{True pairs.} 
We first prove \eqref{eq:var-bd-ii} for true pairs $i=j$.
\begin{proof}[Proof of \eqref{eq:var-bd-ii}]
Substituting \Cref{prop:uo-co,prop:ur-cr} into \Cref{lmm:var-ratio}, when $\rho^2 > \sqrt{\beta}$, we obtain  
\#\label{var-ii-final}
\frac{\var(\similarity_{ii})}{|\cT|^2(nd\rho^2)^{2K}} & \le \left(1 + o(1)\right) \sum_{\substack{r\ge m,s\ge 0, \, \ell\ge 0  \\ 2r+\ell = 4D \\ m+s+ \ell \ge 1}} \ov(r,m,s) \re(\ell) + \frac{\poly(K)}{d}\notag\\
& \le C\sum_{\substack{m,s, \ell\ge 0 \\ m+s + \ell \ge 1}} \left(\frac{\sqrt{\beta}}{\rho^2} \right)^{mM} \beta^{sM/2} \left(\frac{\poly(K)}{\min\{d, n\}}\right)^{\ell/8} + \frac{\poly(K)}{d} \notag \\
& \le C \left[\left(\frac{\sqrt{\beta}}{\rho^2} \right)^{M} + \left(\frac{\poly(K)}{\min\{d, n\}}\right)^{1/8}\right],
\#
where the last inequality follows from $\sum_{m,s,\ell\ge 0, m+s+\ell \ge 1} x^m y^s z^\ell = {1}/{[(1-x)(1-y)(1-z)]} -1 
\le 14(x+y+z)/3$ when $x = (\sqrt{\beta}/\rho^2)^M$, $y=\beta^{M/2}$, and $z = (\poly(K)/\min\{d, n\})^{1/8}$ satisfy $x,y,z\in [0,1/2]$; and $y \le x$. 
\end{proof}

\paragraph{False pairs.} 
We now prove \eqref{eq:var-bd-ij} for false pairs $i\neq j$.
\begin{proof}[Proof of \eqref{eq:var-bd-ij}]
Substituting \Cref{prop:uo-co,prop:ur-cr} into \Cref{lmm:var-ratio}, when $\rho^2 > \sqrt{\beta}$, we have
\#\label{eq:2nd-mmnt-bd-ij} 
\frac{\var(\similarity_{ij})}{|\cT|^2(nd\rho^2)^{2K}} & \le \left(1 + o(1)\right) \sum_{\substack{r= m\ge 0, \, \ell\ge 0  \\ 2m+\ell = 4D}} \ov(r,m,0) \re(\ell) \notag \\
& \le C \sum_{\substack{m, \ell\ge 0 \\ 2m + \ell = 4D}} \left(\frac{\sqrt{\beta}}{\rho^2} \right)^{mM} \left(\frac{\poly(K)}{\min\{d, n\}}\right)^{\ell/8} 
\notag\\
& \le C(2D+1)\max\left\{\left(\frac{\beta}{\rho^4} \right)^{K}, \; \left(\frac{\poly(K)}{\min\{d, n\}}\right)^{D/2}\right\} ,
\# 
where the last inequality follows by applying $\sum_{m,\ell \ge0, 2m+\ell =4 D} x^{2m} y^{\ell} \le (2D+1) \max\{x, y\}^{4D}$ for $x,y>0$ with $x=(\sqrt{\beta}/\rho^2)^{M/2}$ and 
$y=(\poly(K)/\min\{d,n\})^{1/8}$, where $K = MD$. 
\end{proof}

\subsection{Bounding the Grafted Part} \label{sect:ur}
In this section, we prove \Cref{prop:ur-cr}. 
A key step is to bound $|\cUr(\ell)|$ by counting the admissible grafted parts 
$\ur$ and, for each such part, the corresponding alternating circuit decompositions. 
Throughout the counting argument, we will keep track of the following two types of nodes.

\begin{definition}[Overdecorated node]
We say an $n$-node, $u$, is \emph{overdecorated} if $u$'s incident edges carry at least 3 different decorations from the set $\{S_1,S_2,T_1,T_2\}$, and there is at least one tree in $\{S_1,S_2,T_1,T_2\}$ for which $u$ is not the root.
We use $\ovd$ to denote the number of overdecorated nodes.
\end{definition}

\begin{definition}[Crossroad node]
We say a $d$-node, $a$, is a \emph{crossroad node} with respect to a circuit decomposition $\bc$ if $a$ has degree 4 and participates in at least one circuit of size $\ge 4$ in $\bc$.
We use $\xr$ to denote the number of crossroad nodes.
\end{definition}
Collectively, we refer to overdecorated and crossroad nodes as \emph{graft witness} nodes.

Consider, for example, the grafted part in \Cref{fig:decomposition}. The node $a$ is a crossroad node. Moreover, the edge pairs corresponding to $(u,w)$, $(w,v)$, $(v,t)$, and $(t,u)$ form an alternating circuit of length $8$. Thus, each of the four grafted branches contains an edge lying in an alternating circuit of length greater than $4$. More generally, the size and structural complexity of the grafted part can be controlled by the numbers of overdecorated nodes, crossroad nodes, and edges lying in alternating circuits of length greater than $4$. The following lemma gives one concrete formulation of this observation; we will use more later.

\begin{lemma}\label{lmm:para-constr-ur}
For any $(\ur, \bc)$ with $\ell$ grafted branches, $\kappa$ 2-circuits, $\mu_1$ $4$-circuits, $\ovd$ overdecorated nodes and $\xr$ crossroad nodes,
\$
\ell \le 4\left(\ell M/2- \kappa /2- \mu_1 + \ovd + \xr \right).
\$
\end{lemma}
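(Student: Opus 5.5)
Rewrite the right-hand side as a sum of nonnegative local quantities and charge each grafted branch to one of them. The grafted part $\ur$ has $\ell M$ edges in total: each of its $\ell$ branches has $2M$ edges, so there are $\ell M/2$ red edge pairs and $\ell M/2$ blue edge pairs in the sense of \Cref{thm:altcir} with $2k=\ell M$. Since $\ell M/2=\kappa+\sum_{\ell'\ge1}\ell'\mu_{\ell'}$, we get
\[
\ell M/2-\kappa/2-\mu_1 \;=\; \kappa/2+\sum_{\ell'\ge 2}\ell'\mu_{\ell'}.
\]
The quantity $\ell M/2-\kappa/2-\mu_1+\ovd+\xr$ is therefore the sum of four nonnegative terms: half the number of $2$-circuits, the number of edge pairs lying in alternating circuits of length $\ge 8$, the number of overdecorated nodes, and the number of crossroad nodes. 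It suffices to assign to each grafted branch at least one unit of witness and check that no unit is used by more than $4$ branches.

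The key step is a structural dichotomy. Fix a grafted branch $J$ with decoration $L\in\{T_1,S_1,T_2,S_2\}$. One of the following holds:
\begin{enumerate}
\item[(a)] some edge of $J$ lies in a $2$-circuit;
\item[(b)] some edge of $J$ lies in an alternating circuit of length $\ge 8$;
\item[(c)] some $d$-node of $J$ is a crossroad node;
\item[(d)] some non-root $n$-node of $J$ is overdecorated.
\end{enumerate}
Suppose none of (a)--(c) holds. Then every edge pair of $J$ lies in a $4$-circuit with a single partner pair of the opposite color. Because $d$-nodes are not crossroads and there are no $2$-circuits, this partner pair has its own $d$-node of degree $2$. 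Hence the $4$-circuit forces exactly one other edge pair, with the same two $n$-endpoints, from one other tree.

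Now walk from the root of $J$ outward along its underlying $n$-tree. If all edge pairs of $J$ were matched to edge pairs of a single other tree $L'$, and these formed a full branch of $L'$ with the same root, then $J$ and that branch would be fully overlapping (bichromatic case), contradicting $J\in\ur$. So either the partner tree changes somewhere along $J$, or the partner edges do not form exactly one branch of $L'$. In either case some $n$-node $u$ of $J$ meets edges of $L$, of the partner of an incoming pair, and of a different partner or tree. That node carries at least $3$ decorations and is not the root of all four trees, so it is overdecorated.

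This needs care in two places. When $u$ is the root of $J$, we need the second clause of the overdecorated definition: for false pairs $i\ne j$ the roots differ, and for true pairs a mismatch at the root means the partner branch structure is not a full branch. The monochromatic partner case, where a $4$-circuit cannot arise, falls under (a). I expect this case analysis, ruling out every way a branch can be ``locally matched everywhere yet not fully overlapping'', to be the main obstacle.

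Next we bound the multiplicities.
\begin{itemize}
\item Each $2$-circuit contains edges of at most $2$ branches, so $\kappa$ $2$-circuits cover at most $2\kappa=4(\kappa/2)$ branches.
\item Each edge pair in a circuit of length $\ge8$ belongs to one branch, so the count $\sum_{\ell'\ge2}2\ell'\mu_{\ell'}\le 4\sum_{\ell'\ge 2}\ell'\mu_{\ell'}$ (red and blue pairs together) covers at most that many branches.
\item A crossroad $d$-node has degree $4$, hence lies in at most $2$ branches, or $4$ counting both endpoints generously.
\item An overdecorated $n$-node lies in at most one branch per tree as a non-root node. If it is a root of some trees it could lie in many branches of those trees. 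In that case charge only the branches in which it is a non-root node, at most $4$, and argue, as in the dichotomy, that the witness for the other branches can always be found at a non-root node or via (a)--(c).
\end{itemize}
Summing gives $\ell\le 4(\kappa/2+\sum_{\ell'\ge2}\ell'\mu_{\ell'}+\ovd+\xr)$, which is the claim after the identity in the first paragraph.
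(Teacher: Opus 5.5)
Your accounting does not close, and the shortfall is structural rather than a typo. The grafted part has $2\ell M$ edges, not $\ell M$, since each of the $\ell$ branches has $2M$. Take $\kappa$ to be the number of $2$-circuits, which is how the lemma defines it and how you use it when you say $\kappa$ $2$-circuits cover at most $2\kappa$ branches. Then $2\ell M=2\kappa+4\sum_{\ell'\ge1}\ell'\mu_{\ell'}$, hence
\[
4\Bigl(\frac{\ell M}{2}-\frac{\kappa}{2}-\mu_1\Bigr)=4\sum_{\ell'\ge 2}\ell'\mu_{\ell'},
\]
which is exactly the number of edges lying in circuits of length at least $8$. There is no $\kappa/2$ term to spend.

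Your identity $\ell M/2=\kappa+\sum_{\ell'\ge1}\ell'\mu_{\ell'}$ comes from the convention of \Cref{thm:altcir}, where $2\kappa$ is the number of $2$-circuits. Under that convention, up to $4\kappa$ branches could be charged to $2$-circuits against a budget of only $2\kappa$. In either reading, your charging yields a bound exceeding the lemma's right-hand side by $2\kappa$. So case (a) cannot serve as a terminal witness.

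The missing idea is that $2$-circuits are never needed as witnesses. The propagation argument must treat monochromatic partners exactly like bichromatic ones. Let $J$ be a grafted branch with all edges in circuits of length at most $4$. Suppose no witness lies among its non-root $n$-nodes, its $d$-nodes, or the $d$-nodes of $4$-circuits through its edges. Take an edge pair $(u,a),(v,a)$ of $J$. There are two cases.

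\begin{enumerate}
\item If $a$ has degree $2$, the pair lies in a $4$-circuit with $(u,b),(v,b)$. Then $b$ must have degree $2$, since otherwise $b$ is a crossroad. So both partner edges come from a single opposite-color tree.
\item If $a$ has degree $4$, then, not being a crossroad, all four of its edges lie in $2$-circuits. So both edges are doubled by the other same-color tree.
\end{enumerate}

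Either way the pair has a single partner tree $R_{uv}$, and your overdecoration argument then applies. A change of partner occurs at a node shared by two edges of $J$, which is a non-root node because the root of $J$ has degree one in $J$. A root mismatch makes the partner's root an overdecorated non-root node of $J$. Hence $J$ fully overlaps a branch with the same root, monochromatically or bichromatically, a contradiction.

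This also closes a second hole in your dichotomy. Your (c) inspects only $d$-nodes of $J$, but $b$ has the opposite color and is not in $J$. If $\deg b=4$, the edges $(u,b),(v,b)$ may come from two different trees, and your claim that the $4$-circuit forces a pair from one other tree fails.

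With the enlarged witness set, each grafted branch is charged in one of two ways:
\begin{itemize}
\item to one of its own edges in a circuit of length greater than $4$ (count edges, not circuit pairs, since a circuit may pair edges of different trees at a crossroad);
\item to a witness node, charged at most four times. A crossroad, for instance, is charged by the at most two branches containing it and by at most two opposite-color branches through the at most two $4$-circuits at it.
\end{itemize}
This gives $\ell\le(2\ell M-2\kappa-4\mu_1)+4(o+x)$, which is the lemma.
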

\begin{proof}
Fix a pair $(\ur,\bc)$. 
The total number of edges (counted with multiplicity) in $\ur$ is $2\ell M$, of which $2\ell M - 2\kappa - 4\mu_1$ belong to circuits of length $> 4$.
Hence, there are at most $2\ell M - 2\kappa -4\mu_1$ branches that include an edge participating in a circuit of length $> 4$.

We now show that every other grafted branch must correspond to a graft witness node.
Consider a branch $J$ from, say, $T_1$, in which all edges participate in circuits of length $\le 4$. 
Let $\widetilde V(J)$ be the node set including all $n$-nodes and $d$-nodes in $J$ and all $d$-nodes that lie in $4$-circuits involving edges in $J$. 
We claim that if $\widetilde V(J) \setminus \{\text{root}(J)\}$ has no graft witness nodes, then $J$ is actually not a grafted branch.
Indeed, consider any pair of edges $(u,a),(v,a)$ from $J$ (with $u,v \in [n]$ and $a \in [d]$). 

\begin{enumerate}
    \item
If $a$ has degree $2$ in $\ur$, the fact that all circuits are of length at most $4$ implies that $(u,a),(v,a)$ are in a $4$-circuit with blue edges $(u,a'),(v,a')$. If $a'$ has degree $4$, since its incident edges $(u,a'),(v,a')$ appear in a $4$-circuit, it must be a crossroad node by definition. Therefore, $a'$ must have degree $2$ and the edges $(u,a'),(v,a')$ are decorated by some $R_{uv} \in \{S_2, T_2\}$.
\item Otherwise, if $a$ has degree $4$ in $\ur$, the lack of crossroad nodes implies that $(u,a),(v,a)$ is covered by two $2$-circuits with edges $(u,a),(v,a)$ of decoration $R_{uv} = S_1$.
\end{enumerate}

In both cases, we may map each pair of edges $(u,a),(v,a)$ in $J$ to an edge $(u,v)$ in a tree $J^{\mathrm{map}}$ on $n$-nodes which is decorated with two decorations, $T_1$ and $R_{uv}$. 
Now, if there exist some $(u,v)$ and $(u',v')$ such that $R_{uv} \neq R_{u'v'}$, take a path in $J^{\mathrm{map}}$ from $u$ to $u'$ (such a path must exist because $J$ was connected).
Since the first endpoint is decorated with $T_1,R_{uv}$ and the second endpoint is decorated with $T_1,R_{u'v'}$, at least one non-root $n$-node in this path has to be incident to edges of three types of labels within $J$.
But then this $n$-node would be overdecorated, a contradiction.
Hence $R_{uv} = R$ for all edges $(u,v)$, so $J$ completely overlaps with a branch $B$ decorated by $R$. Moreover, branches $J$ and $B$ must share the same root; otherwise, the root of $B$ would appear as a non-root node in $J$ whose incident edges carry at least $3$ distinct decorations, and hence would be overdecorated. Thus, if $\widetilde V(J) \setminus \{\text{root}(J)\}$ has no graft witness nodes, then $J$ and $B$ form a fully-overlapping branch pair.

Therefore, we charge every 
grafted branch $J$ containing no edge in a circuit of length $> 4$ to a graft witness node in $\widetilde V(J) \setminus \{\text{root}(J)\}$. Each overdecorated node can be charged at most four times, since it belongs to at most one such branch in each of the four trees. 
Moreover, each crossroad node belongs to ${V}(J)$ for at most two branches $J$, and hence to $\widetilde{V}(J)$ for at most four branches $J$. Thus, each graft witness node is charged at most four times, giving at most $4(o+x)$ such branches. Combining this with the previous bound completes the proof.
\end{proof}

The following lemma is another example by which the graft witness nodes allow us to control the structure of the grafted part. Here $V_n(T_1)$ (resp. $V_n(S_1)$) denotes the set of $n$-nodes incident to $T_1$-decorated edges (resp. $S_1$-decorated edges) in $\ur$.
\begin{lemma}
For any $(\ur, \bc)$ with 
$\ovd$ overdecorated nodes, $\xr$ crossroad nodes, and $\kappa_1$ red $2$-circuits,
\begin{equation}
 |V_n(T_1)\cap V_n(S_1)\setminus\{i\}| \le \ovd + \xr +\kappa_1/2. \label{eq:bound-v1}
\end{equation}
\end{lemma}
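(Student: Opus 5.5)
We need to bound the number of non-root $n$-nodes shared by $T_1$ and $S_1$ in the grafted part by $\ovd+\xr+\kappa_1/2$. The plan is an injective charging argument. Every shared non-root node is either overdecorated, or is charged to a crossroad node or to a red $2$-circuit, with each crossroad node charged at most once and each red $2$-circuit at most half a time.

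Fix $u\in V_n(T_1)\cap V_n(S_1)\setminus\{i\}$ that is not overdecorated. Since $u\neq i$ and $i$ is the common root of $T_1,S_1$, the node $u$ is a non-root node of $T_1$. So by the overdecoration definition, $u$'s incident edges carry exactly the two decorations $T_1,S_1$, and in particular no blue edge meets $u$. Now use the alternating-circuit structure. Every circuit of length $\ge 4$ alternates colors at each $n$-node it passes through. Because $u$ has no blue edges, no circuit of length $\ge 4$ can pass through $u$, so every edge at $u$ lies in a red $2$-circuit. A red $2$-circuit is a pair of parallel red edges. Since $T_1$ and $S_1$ are simple, these two parallel edges at $u$ must be one $T_1$-edge and one $S_1$-edge, both going to the same $d$-node $a$.

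Hence all edges at $u$ lie in red $2$-circuits. Take the $T_1$-edge from $u$ toward its parent (in $T_1$), with middle $d$-node $a$; it is doubled by an $S_1$-edge $(u,a)$. We charge $u$ to this red $2$-circuit at $(u,a)$. This parent edge is unique to $u$, so distinct $u$'s receive distinct $2$-circuits. This alone only gives the bound $\kappa_1$; the claimed bound is $\kappa_1/2$, so more work is needed.

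To improve to $\kappa_1/2$, look at the other edge at $a$. In $T_1$, $a$ has a second neighbor $v$ (the parent of $u$). In $S_1$, $a$ also has degree $2$, with some second neighbor $v'$. There are two cases. If $v=v'$, then $(v,a)$ is a second red $2$-circuit, and we charge $u$ to the pair of $2$-circuits at $a$. Distinct $u$ give distinct $d$-nodes $a$, so this contributes at most $\kappa_1/2$ in total. If $v\neq v'$, then $a$ has degree $4$ with edges $(u,a)$ twice, $(v,a)$ from $T_1$, and $(v',a)$ from $S_1$. The edges $(v,a)$ and $(v',a)$ are red but not parallel, so they cannot form $2$-circuits and must lie in circuits of length $\ge 4$. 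Then $a$ is a crossroad node, and we charge $u$ to $a$. Each $a$ has at most two $n$-neighbors with doubled edges, and only the child endpoint is charged, so the charges are injective.

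The main obstacle is this second bookkeeping step. We must make sure that the two $2$-circuits at $a$ in the case $v=v'$ are not also counted when charging $v$ itself. This holds because $v$ charges through its own parent edge, which involves a different $d$-node. We must also handle the boundary case where $v$ is the root $i$; there the charge still goes to $a$, so nothing changes.
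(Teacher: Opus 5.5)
Your charging scheme is the paper's, with the $T_1$-parent $a$ of $u$ in place of its $S_1$-parent. Several of your steps are correct:
\begin{itemize}
\item a non-overdecorated $u\neq i$ carries only the decorations $T_1,S_1$;
\item hence no alternating circuit passes through $u$, and all its edges lie in red $2$-circuits;
\item your case $v\neq v'$, where $a$ is forced to be a crossroad node, is fine.
\end{itemize}
The gap is the case $v=v'$. There you assert that the two parallel edges $(v,a)_{T_1},(v,a)_{S_1}$ form a second red $2$-circuit, but nothing forces this, because alternating circuits may revisit an $n$-node. Two bichromatic pairs at $v$, say $\{(v,a)_{T_1},(v,b')\}$ and $\{(v,a)_{S_1},(v,b)\}$, give edges $(a,b')$ and $(a,b)$ of $J_{\ttmbi}$. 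These can be consecutive in a circuit of $J_{\ttmbi}$, which un-contracts to a circuit $\cdots b'\to v\to a\to v\to b\cdots$ of length at least $4$. Concretely, take $v=i=j$, red $d$-nodes $a,c$ shared by $T_1,S_1$, and blue $d$-nodes $b\in T_2$, $b'\in S_2$, each adjacent to $i$ and $w$. Then the closed walk $i\to a\to i\to b\to w\to c\to w\to b'\to i$ is a legitimate alternating $8$-circuit. In such a decomposition there is no pair of red $2$-circuits at $a$ to charge $u$ to, so your map is undefined there.

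The repair is the paper's case split: distinguish whether $a$ is a crossroad node, not whether $v=v'$.
\begin{itemize}
\item If $a$ is a crossroad node, charge $u$ to $a$. This is injective because $u$ is the unique $T_1$-child of $a$, and it subsumes your case $v\neq v'$.
\item If $a$ is not a crossroad node, every edge at the degree-$4$ node $a$ lies in a $2$-circuit. This forces $v=v'$ and makes both $(u,a)$ and $(v,a)$ doubled, so you charge $u$ to this pair of red $2$-circuits. Distinct $u$ give distinct $a$, so there are at most $\kappa_1/2$ such charges.
\end{itemize}
With this change your argument is complete and coincides with the paper's.
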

\begin{proof}
We prove the result by constructing a one-to-one map from the set of nodes $u \in V_n(T_1)\cap V_n(S_1)$ with $u\neq i$ to the union of the set of overdecorated nodes, the set of crossroad nodes, and the set of pairs of red $2$-circuits that share a $d$-vertex in $S_1$.

Fix such a node $u$. If $u$ is overdecorated, we map $u$ to itself. 
Suppose next that $u$ is not overdecorated. Then by virtue of being in $S_1 \cap T_1$ it is incident only to red edges, and must participate exclusively in red $2$-circuits.
By assumption $u$ is not the root node, so let $a \in [d]$ be the parent of $u$ in $S_1$, and let $v \in [n]$ be the parent of $a$ in $S_1$. If $a$ is a crossroad node, we map $u$ to $a$. No other $n$-node has parent $a$ in $S_1$.
If $a$ is not a crossroad node, then both $(u,a)$ and $(v,a)$ must be in red $2$-circuits. We map $u$ to the pair of red $2$-circuits formed by these two edges. No other $n$-node has the same pair of parent edges.
\end{proof}

On the other hand, a straightforward counting argument shows that when there are many overdecorated and crossroad nodes, there are fewer ways to label the vertices.
This will allow us to account for the combinatorial complexity of the many possible grafted configurations. 

\begin{lemma}\label{lmm:para-constr}
For any $(\ur, \bc)$ with $\ell$ grafted branches, $\kappa$ 2-circuits, 
$\ovd$ overdecorated nodes, and $\xr$ crossroad nodes,
the following bounds hold:
\#
& |V_n(\ur)\setminus\{i,j\}| \le \ell M/2 - \ovd/2, \label{eq:num-v} \\
& |V_d(\ur)| \le \ell M - (\kappa + \xr)/2. \label{eq:num-a}
\#
\end{lemma}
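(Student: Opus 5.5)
Both bounds will come from a double-counting argument on vertex–branch incidences in $\ur$. The two inputs are that each grafted branch contributes exactly $M$ non-root $n$-nodes and $M$ $d$-nodes (the $n$-node tree underlying a branch has $M$ edges and the root is shared), and that every node in $\ur$ has bounded multiplicity.

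\textbf{Bound \eqref{eq:num-v}.} Sum over the $\ell$ grafted branches the number of non-root $n$-nodes they contain. This gives exactly $\ell M$ incidences, since each branch of $\cJ$ has $M$ $n$-nodes other than its root. Every non-root $n$-node $u\in V_n(\ur)\setminus\{i,j\}$ lies in branches of the four trees $T_1,S_1,T_2,S_2$, and within a single tree it lies in exactly one branch. Its incidence count therefore equals the number of distinct decorations on its incident edges.

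Next, argue that every non-overdecorated node has at least two decorations. If $u$ carried only one decoration, say $T_1$, its $T_1$-edges would form a simple star with distinct $d$-nodes. There are then no 2-circuits, so the edges at $u$ must alternate red/blue in their circuits. That requires blue edges at $u$, a contradiction. Hence each such node is counted at least twice, and each overdecorated node at least three times. This gives
\[
2(|V_n|-\ovd)+3\ovd\le \ell M,
\qquad\text{so}\qquad
|V_n|\le \ell M/2-\ovd/2 .
\]
There is one point to check carefully. Nodes $i,j$ are excluded, but a node equal to $i$ or $j$ could be a non-root node of the other trees. Since those are removed from the left side anyway, only the inequality direction matters, and it is fine.

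\textbf{Bound \eqref{eq:num-a}.} Proceed similarly. Each branch has $M$ $d$-nodes, and each $d$-node is the middle of one edge pair in one tree. So the total number of (decorated) $d$-node copies is $\ell M$, and each labeled $d$-node in $V_d(\ur)$ carries $2$ or more of them. Red and blue $d$-nodes are kept distinct in the $n$-union, so a $d$-node of degree $2$ comes from a single tree, while a $d$-node of degree $4$ is shared by two same-color trees. Writing $V_d^{(4)}$ for the set of degree-$4$ $d$-nodes, this gives
\[
|V_d(\ur)| = \ell M - |V_d^{(4)}|
\]
(degrees above $4$ are impossible, since only two same-color trees exist).

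It remains to show $|V_d^{(4)}|\ge(\kappa+\xr)/2$. Each crossroad node is by definition a degree-$4$ $d$-node. Each 2-circuit is a doubled edge $(u,a)$, which forces $a$ to have degree $4$. A degree-$4$ $d$-node is incident to at most two 2-circuits (its four edges pair up), and if it is a crossroad node then at most one of its pairs is a 2-circuit. In every case, each degree-$4$ node accounts for at most $2$ units of $\kappa+\xr$. Hence $\kappa+\xr\le 2|V_d^{(4)}|$, which gives \eqref{eq:num-a}.

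The main obstacle is the claim in the first part that every non-overdecorated $n$-node in $\ur$ carries at least two decorations, especially for nodes adjacent to root configurations. The check above, that degree balance forces a second tree, should settle it. I will verify it separately for nodes touched only by red edges, which must come from both $T_1$ and $S_1$ via 2-circuits.
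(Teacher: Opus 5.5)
Your proof of \eqref{eq:num-a} is correct and matches the paper's argument. You use $|V_d(\ur)|=\ell M-|V_d^{(4)}|$, and then observe that each degree-$4$ $d$-node absorbs at most two units of $\kappa+\xr$; the paper phrases this as $|\mathcal{D}_4|\ge \xr+(\kappa-\xr)/2$. One wording slip: a degree-$2$ $d$-node carries exactly one tree-copy, not ``2 or more''. The identity you write next is still right.

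For \eqref{eq:num-v} there is a gap, at exactly the point you raised and then dismissed. The step $2(|V_n|-\ovd)+3\ovd\le \ell M$, with $|V_n|=|V_n(\ur)\setminus\{i,j\}|$, assumes every overdecorated node lies in $V_n(\ur)\setminus\{i,j\}$. For a false pair this can fail. The node $i$ is the root of $T_1$ and $S_1$ only, so it is overdecorated as soon as its edges in $\ur$ carry three decorations. This happens, e.g., when $i$ occurs as a non-root node of $T_2$, and one $T_1$ root edge and one $S_1$ root edge at $i$ close up with the two $T_2$-edges at $i$ in alternating circuits. Such a node is counted in $\ovd$ but not in $|V_n|$. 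Your count therefore only gives $2|V_n|+\ovd'\le\ell M$, where $\ovd'$ counts overdecorated non-roots.

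The incidences at $i$ and $j$ that you drop as harmless are exactly what is needed to close this. Let $m(u)$ be the number of grafted branches containing $u$ as a non-root node. Then $2|V_n|+\ovd'+m(i)+m(j)\le\sum_u m(u)=\ell M$. An overdecorated root carries at least three decorations but is the root of at most two trees, so its $m$-value is at least $1$. Hence $\ovd\le \ovd'+m(i)+m(j)$, and the bound follows. This is precisely the bookkeeping in the paper's proof, and with it your argument coincides with the paper's.
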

\begin{proof} 
Both statements are a consequence of simple accounting.
To prove \eqref{eq:num-v}, let $m(u)$ be the number of trees in $\{S_1,T_1,S_2,T_2\}$ that contain node $u$ as a non-root.
Since each branch contains $M$ $n$-nodes, and because a valid circuit decomposition requires $m(u) \ge 2$ for each node $u$ appearing as a non-root,
\[
2|V_n(\ur)\setminus\{i,j\}| + |\{u \not\in \{i,j\} : m(u) \ge 3 \}| + m(i) + m(j) \le \sum_u m(u) = \ell M.
\]
By definition of overdecorated nodes, $\ovd \le |\{u : m(u) \ge 3\}| + m(i) + m(j)$.
Hence
\[
2|V_n(\ur)\setminus\{i,j\}| + \ovd \le \ell M 
\]
And rearranging gives \eqref{eq:num-v}. 

To prove \eqref{eq:num-a}, let $\cD_4$ be the set of $d$-nodes in $\ur$ of degree $4$. 
Since every $d$ node has degree $2$ before taking the union and degree $2$ or $4$ in $\ur$,
\[
2\ell M=\sum_{a \in V_d(\ur)}\deg(a)=2|V_d(\ur)|+2|\cD_4| \implies 
|\cD_4|=\ell M-|V_d(\ur)|.
\]
Now, each crossroad node has degree $4$ and is part of $\cD_4$.
There are also at least $(\kappa - \xr)/2$ degree-$4$ non-crossroad nodes, because there are at least $\kappa - \xr$ 2-circuits which are not incident on any crossroad node (each crossroad node can be incident on at most one $2$-circuit) but must be incident to a $d$-node of degree $4$, and any non-crossroad node of degree $4$ is incident on two $2$-circuits. 
Therefore
\[
|\calD_4| \ge \xr + (\kappa-\xr)/2 \ge (\kappa + \xr)/2.
\]
Combining this with the previous display gives \eqref{eq:num-a}.
\end{proof}

\subsubsection{Counting the Grafted Part}

We now give an upper bound on the number of pairs $(\ur,\bc)$ with a given number of $2$- and $4$-circuits and a given number of graft witness nodes.
We will first construct the union of $S_1,T_1$ and the red $2$-circuits, then appeal to \Cref{lem:bound-mu1-le-k-2} as we did for the mean calculation to add in the structure of $S_2,T_2$ and the alternating circuits, and finally add in the blue $2$-circuits (whose admissible locations will be constrained by the graft witness nodes, as we shall soon see).

\begin{lemma}\label{lmm:wr-b}
Let $\cUr(\ell,\kappa,\mu_1,\ovd,\xr)$ denote the set of all pairs $(\ur,\bc)$ with $\ell$ grafted branches, $\kappa$ 2-circuits, $\mu_1$ 4-circuits, $\ovd$ overdecorated nodes and $\xr$ crossroad nodes.
Then there exists a universal constant $C$ such that for all $n$ sufficiently large, for any $\ell, \kappa, \mu_1, \ovd, \xr\in \N$, 
\$ 
|\cUr(\ell, \kappa, \mu_1, \ovd, \xr)| 
\le C K^4 \left(Cn^{1/2} d\right)^{\ell M}\left(\frac{C K^{12}}{n} \right)^{\ovd/2}\left(\frac{CK^{12}}{d} \right)^{\xr/2} \left(\frac{C}{d}\right)^{\kappa/2} \left(CK\right)^{\ell M- \kappa - 2\mu_1}.
\$
\end{lemma}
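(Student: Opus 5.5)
We bound $|\cUr(\ell,\kappa,\mu_1,\ovd,\xr)|$ by an encoding argument that builds $(\ur,\bc)$ in three stages, following the roadmap stated just before the lemma. We first build the red part: the union of the grafted branches of $T_1$ and $S_1$, together with the red $2$-circuits. We then attach the blue edges and alternating circuits via \Cref{lem:bound-mu1-le-k-2}. Finally we place the blue $2$-circuits. The total count is the product of the counts at each stage, times a $CK^4$ factor for the number of ways to split $\ell$ among the four trees and to choose which branches are grafted. The target bound is organized so that each label carries one factor: $n$-node labels give $n^{1/2}$ per (branch $\times M$) unit, $d$-node labels give $d$ per unit, and every graft witness or $2$-circuit costs a $1/\sqrt{n}$ or $1/\sqrt{d}$ saving. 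These savings come directly from \eqref{eq:num-v} and \eqref{eq:num-a} in \Cref{lmm:para-constr}.

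\textbf{Stage 1 (red skeleton).} Let $\ell_1$ be the number of red grafted branches. We choose the unlabeled shapes of these branches; this is absorbed because the weight $w$ and the $|\cJ|$ factors are handled separately, and otherwise the shapes contribute at most $C^{\ell M}$ rooted trees. We then decide which non-root $n$-nodes of $S_1$ coincide with nodes of $T_1$. By \eqref{eq:bound-v1} there are at most $\ovd+\xr+\kappa_1/2$ such coincidences, and each is specified by a pointer into the other tree, costing $(CK)^{\ovd+\xr+\kappa_1/2}$. The red $2$-circuits (red $d$-nodes of degree $4$) and red crossroad nodes are encoded the same way: we specify which $d$-node of $T_1$ is identified with which $d$-node of $S_1$, paying $\poly(K)$ per identification.

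\textbf{Stage 2 (blue part via alternating circuits).} Given the red skeleton, we apply \Cref{lem:bound-mu1-le-k-2} to the red edge pairs lying in circuits of length $\ge 4$. This yields at most $(4k)^{k-\mu_1}\le (CK)^{\ell M-\kappa-2\mu_1}$ choices for the blue edge pairs lying in alternating circuits and for the decomposition, since the number of red pairs outside red $2$-circuits is $(\ell M-\kappa)/2$. We then assemble the blue pairs into blue trees $T_2,S_2$. The shape is fixed up to $C^{\ell M}$ choices, with blue crossroads or blue overlaps again encoded by $\poly(K)$ pointers charged to witnesses.

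\textbf{Stage 3 (blue $2$-circuits) and labeling.} Blue parallel pairs must sit on top of existing blue edges, specified by pointers of cost $\poly(K)$ each; there are $\kappa/2$ such pairs. Once the whole unlabeled structure is fixed, we label the vertices. There are at most $n^{|V_n(\ur)\setminus\{i,j\}|}\le n^{\ell M/2-\ovd/2}$ ways to label the $n$-nodes and at most $d^{|V_d(\ur)|}\le d^{\ell M-(\kappa+\xr)/2}$ ways to label the $d$-nodes, by \Cref{lmm:para-constr}. Multiplying all factors and collecting the pointer costs as $K^{12}$ per witness and $C$ per $2$-circuit gives the claimed bound.

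\textbf{Main obstacle.} The delicate step is ensuring that every identification among the four trees beyond those produced by \Cref{lem:bound-mu1-le-k-2} is charged to a witness or to a $2$-circuit, so that the pointer costs are at most $\poly(K)$ per witness. In particular, the blue trees must be reconstructible from the alternating circuits plus only $O(\ovd+\xr+\kappa)$ extra pointers. We would use the non-overdecoration of ordinary nodes to argue that a blue edge pair's tree membership and parent are forced except at witnesses. If this fails, the fallback is to accept a cruder $(CK)^{O(\ovd+\xr)}$ factor, which the $K^{12}$ slack is meant to absorb.
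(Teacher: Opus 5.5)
There is a genuine gap, and it is in how you pay for the $2$-circuits. In Stage~1 you spend a $CK$-pointer on each of the up to $\ovd+\xr+\kappa_1/2$ coincidences allowed by \eqref{eq:bound-v1}, plus $\poly(K)$ on each red $2$-circuit identification. In Stage~3 you spend $\poly(K)$ on each of the $\kappa/2$ blue parallel pairs. That is a factor $K^{\Theta(\kappa)}$, and the claimed bound has no room for it. Apart from the prefactor $CK^4$ and constants per edge, it allows only $K^{12}$ per graft witness and $(CK)^{\ell M-\kappa-2\mu_1}$, while $2$-circuits get $(C/d)^{\kappa/2}$ with an absolute constant. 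Nor is $\kappa$ controlled by the other exponents. Take a red doubled path and a blue doubled path from the roots, each with $\Theta(M)$ edges, meeting at one overdecorated node below which the four branches are matched by $4$-circuits. Then $\kappa=\Theta(M)$, $\ovd+\xr=1$ and $\ell M-\kappa-2\mu_1=0$. So your final step (``$C$ per $2$-circuit'') contradicts the encoding you actually describe. This is not cosmetic. In \Cref{prop:ur-cr} the label saving $d^{-\kappa/2}$ from \eqref{eq:num-a} is exactly cancelled by the Weingarten factor $(\rho^2/d)^{\ell M/2-\kappa/2}$. A leftover $K^{\kappa/2}$ with $\kappa$ up to $\ell M$ and $M=\Theta(\log K)$ is then $K^{\Theta(\ell\log K)}$, which the $(\poly(K)/\min\{d,n\})^{\ell/8}$ gain cannot absorb when $d$ is only polylogarithmic.

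The missing idea is to encode $2$-circuits structurally, at $O(1)$ cost per edge, and to spend $\poly(K)$ pointers only on witnesses.

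\textbf{Red part.} Collapse each red $2$-circuit to a single edge and count the union graph, not $T_1$ and $S_1$ separately. The union is a spanning tree on its $v_1+1$ $n$-nodes (at most $C\alpha^{-(v_1+1)}$ shapes, with $v_1\le \ell M/2$ by \eqref{eq:num-v}) plus $m=e_1-2v_1-\kappa_1$ extra edges, where $e_1$ counts red edges with multiplicity. Since $e_1/2-v_1=|V_n(T_1)\cap V_n(S_1)\setminus\{i\}|\le \ovd+\xr+\kappa_1/2$, the $\kappa_1$ term cancels and $m\le 2(\ovd+\xr)$, costing $(10K)^{4(\ovd+\xr)}$. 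Marking which edges are doubled and the $T_1$/$S_1$ decorations costs only $2^{O(\ell M)}$, and the pairings at crossroads cost $3^{\xr}$. So coincidences created by red $2$-circuits are free.

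\textbf{Blue $2$-circuits.} They are edge-disjoint from the alternating circuits of Stage~2, since a blue $d$-node touching both is a crossroad. The $2$-circuit edges of $T_2$ form a forest, whose unlabeled shape costs $C\alpha^{-2\ell M}$. This forest meets the rest only at overdecorated $n$-nodes, crossroad $d$-nodes, or the two roots, so attaching it costs $(5K)^{2(\ovd+\xr+2)}$. This is where $K^4$ comes from, not from choosing which branches are grafted (that is not part of the data).

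With these changes and one decoration bit per blue edge to separate $T_2$ from $S_2$, your Stage~2 via \Cref{lem:bound-mu1-le-k-2} and your labeling step via \Cref{lmm:para-constr} give the claimed bound.
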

\begin{proof}
The proof will be via an encoding argument. 
As in our bounds for the expectation in \Cref{prop:mean-true}, we will start by encoding the structure of the red part, and then simultaneously encode the alternating circuit decomposition and blue part.

\paragraph{The red part.} We first encode the configuration of the red part, $T_1 \cup S_1$.
Let $v_1+1$ denote the number of $n$-nodes (including the root) in $S_1\cup T_1$.
Since $T_1$ and $S_1$ are trees sharing a common root $i$, the graph $T_1 \cup S_1$ is connected and must admit a rooted spanning tree.
We choose an arbitrary minimum spanning tree of its $v_1+1$ $n$-nodes, for which there are at most $C\alpha^{-(v_1+1)}$ choices;
each edge in this spanning tree is subdivided into two edges by adding a $d$-node in the middle.

Next, we record the location of the remaining red edges.
We first place the $m$ remaining red edges, treating each $2$-circuit as a single edge, and then specify which red edges to double to form the red $2$-circuits.
There are fewer than $10K$ nodes in the union of all $4$ trees, and the number of possible simple edges between them is at most $(10K)^2$. Hence there are at most $(10K)^{2m}$ ways to add the $m$ red edges.
It remains to specify which edges are doubled to create the red $2$-circuits. Since there are at most $2\ell M$ red edges in total, this can be done in at most $2^{2\ell M}$ ways.

 Finally, given any graph constructed above, we specify the pairing of all red edges in circuits of length $> 2$ so that each pair shares a common $d$-node. 
At each $d$-node of degree $2$, the pairing is uniquely determined. 
The only nontrivial choices arise at $d$-nodes of degree $4$ that are not exclusively incident to red $2$-circuits. 
By definition, these are the crossroad nodes, and there are at most $\xr$ of them. 
At each crossroad node, the four incident edges are partitioned into two unordered pairs. The number of such pairings is $\binom{4}{2}/2 = 3$. Hence, the total number of possible pairings is at most
$ 
3^{\xr}.
$
Hence, the total number of possible $S_1\cup T_1$ structures is no more than
\[
C\alpha^{-(v_1+1)} \cdot (10K)^{2m} \cdot 2^{2\ell M} \cdot 3^{\xr}.
\]

We now claim that $m \le 2(\ovd + \xr)$. To see this, let $e_1$ and $\kappa_1$ denote, respectively, the number of red edges in $S_1\cup T_1$ counted with multiplicity, and the number of red $2$-circuits. The spanning tree constructed in the first step contains exactly $2 v_1$ red edges, so in the second step we must first place $m = e_1-2v_1-\kappa_1$ remaining red edges, treating each $2$-circuit as a single edge.
Note that the number of $n$-nodes common to $S_1$ and $T_1$ is exactly $|{V}_n(S_1)\cap{V}_n (T_1)\setminus\{i\}| = e_1/2 - v_1$. Indeed, the $e_1$ red edges form the two trees $S_1$ and $T_1$ rooted at $i$; since half of the non-root nodes are $n$-nodes and half are $d$-nodes, we have $|{V}_n(S_1)\setminus\{i\}|+|{V}_n (T_1)\setminus\{i\}| = e_1/2$. Moreover, the union of $S_1$ and $T_1$ satisfies $|{V}_n(S_1)\cup{V}_n (T_1)\setminus\{i\}| = v_1$ by definition. Thus, by the inclusion--exclusion principle, we immediately have $|{V}_n(S_1)\cap{V}_n (T_1)\setminus\{i\}| = e_1/2 - v_1$.
By \eqref{eq:bound-v1}, this quantity satisfies $e_1/2 - v_1 \le \ovd + \xr + \kappa_1/2$. 
Consequently, $m = e_1 - 2 v_1 - \kappa_1 \le 2(\ovd + \xr)$. 

Moreover, by \eqref{eq:num-v}, $v_1\le \ell M/2$.
Hence, the number of possible encodings of union structures $S_1 \cup T_1$ is bounded by 
\[
\sum_{\substack{v_1 \le \ell M / 2 \\ m \le 2(\ovd + \xr)}}
\alpha^{-(v_1+1)} \cdot (10K)^{2m} \cdot 2^{2\ell M} \cdot 3^{\xr}
\le C \alpha^{- \ell M / 2} (10K)^{4(\ovd + \xr)} 2^{2 \ell M} 3^{\xr},
\]
for $C$ a sufficiently large constant.

\paragraph{Blue edges and alternating circuit decompositions.}
As was done in the bound on the expectation, we now simultaneously encode the alternating circuit decompositions and the structure of blue edges participating in circuits of length $\ge 4$.

First we order the paired red edges to form alternating circuits of length $\ge 4$. 
In all of the circuits of length $\ge 4$ there are $2\ell M - 2\kappa \le 8K$ red and blue edges in total, of which $4\mu_1$ lie in $4$-circuits. 
By \Cref{lem:bound-mu1-le-k-2}, the number of ways to form these alternating circuits is at most
\$ 
(4K)^{2(\ell M/2- \kappa /2- \mu_1)}.
\$
These ordered pairs also determine the blue edges in these alternating circuits.

Now, we specify the location of the blue $2$-circuits. 
Since each $2$-circuit corresponds to a pair of identical edges, one from $S_2$ and one from $T_2$, we may consider the graph induced by the blue $2$-circuits in $T_2$, and note that it is a forest.
We first encode the structure of this forest. The work \cite{palmer1979number} establishes the asymptotic number of unlabeled forests on $p$ nodes as $F_p\sim Cp^{-5/2}\alpha^{-p}$.
Hence the number of unlabeled forests on at most $\ell M$ edges is upper bounded by $\sum_{p=2}^{2\ell M} F_{p} \le C\alpha^{-2\ell M}$. 

Next, we must specify how the blue $2$-circuit forest is attached to the rest of the graph.
We will encode the attachment point nodes in the $2$-circuit forest, the attachment point nodes in the rest, and the correspondence between them.
To see how many such encodings there are, note that if $u$ is a non-root $n$-node which appears in both the blue $2$-circuit forest and in the rest, it must appear in both $T_2$ and $S_2$ (by virtue of participating in a blue $2$-circuit) and in at least one of $T_1$ and $S_1$ through an alternating circuit, so $u$ must be an overdecorated node.
If $a$ is a $d$-node which appears in both the blue $2$-circuit forest and the rest, it must be incident on both a $2$-circuit and on a circuit of length $\ge 4$, so $a$ is a crossroad node.
All these nodes are graft witness nodes. Therefore, after including the possible two roots, their total number is at most $\ovd + \xr + 2$.
Because there are at most $5K$ nodes in the $2$-circuit forest and at most $5K$ nodes in the rest, there are at most $\sum_{j = 0}^{\ovd + \xr + 2} \binom{5K}{j}^2 j! \le (5K)^{2(\ovd + \xr + 2)}$ ways to encode the attachment.

\paragraph{Edge decorations and node labeling.} 
It remains to encode the edge decorations and the node labels.
We encode for each red edge whether it is decorated by $T_1$ or $S_1$, and for each blue edge whether it is decorated by $T_2$ or $S_2$.
Since there are $2\ell M$ edges in total, the number of encodings is at most $2^{2\ell M}$.
 
Finally, we encode the labels of the $n$-nodes and $d$-nodes. 
The root nodes $i$ and $j$ are uniquely determined since each is incident to $D$ red and blue edges, respectively. 
Thus, it suffices to encode the labels of the remaining nodes. 
By \eqref{eq:num-v} and \eqref{eq:num-a}, there are at most $(\ell M/2 - \ovd/2)$ $n$-nodes and $(\ell M - \kappa/2 - x/2)$ $d$-nodes. 
Thus, the number of possible labelings is at most
\$ 
n^{\ell M/2 - \ovd/2}d^{\ell M - \kappa/2 - x/2}.
\$

Multiplying the bounds together yields that the number of such encodings, and thus grafted parts, satisfies
\begin{align*} 
|\cUr(\ell, \kappa, \mu_1, \ovd, \xr)| 
\le C K^4 \left(Cn^{1/2} d\right)^{\ell M}\left(\frac{C K^{12}}{n} \right)^{\ovd/2}\left(\frac{CK^{12}}{d} \right)^{\xr/2} \left(\frac{1}{d}\right)^{\kappa/2} \left(CK\right)^{\ell M- \kappa - 2\mu_1},
\end{align*}
for $C$ a sufficiently large universal constant (depending on $\alpha)$.
\end{proof}

We are now ready to prove \Cref{prop:ur-cr} using \Cref{lmm:wr-b}. 
\begin{proof}[Proof of \Cref{prop:ur-cr}] Consider any grafted part with $\ell$ branches, that is, $U\in \cUr(\ell)$. Since each branch $J\subset \cJ$ satisfies $\aut(J) \le \exp(CM)$, it follows from the definition that
\$
w(U) \le \exp(C\ell M). 
\$    
Thus, the definition of $\re(\ell)$ in \eqref{eq:re-def} yields 
\#\label{eq:re-ell-1} 
\re(\ell) 
& \le \frac{\exp(C\ell M)}{n^{\ell M/2}d^{\ell M/2}\rho^{\ell M}} \sum_{(U, \bc)\in \cU_\rr(\ell)} \rho^{2(\ell M/2-\kappa(\bc))}  \left|\wg(\mu(\bc),d) \right|.
\#
Here $U$ contains $2\ell M$ edges in total. 
We now consider the grafted parts with fixed parameters $\kappa$, $\mu_1$, $\ovd$, and $\xr$. For any $(U,\bc)\in \cU_\rr(\ell,\kappa, \mu_1, \ovd, \xr)$,
by \eqref{eq:bound-mu1}, we have
\$ 
\rho^{2(\ell M/2-\kappa(\bc))}  \left|\wg(\mu(\bc),d) \right| \le (1+o(1))\left(\frac{\rho^2}{d}\right)^{\ell M/2-\kappa/2}\left(\frac{4}{d}\right)^{\ell M/2- \kappa /2- \mu_1}.
\$
Note that the parameters must satisfy $2\kappa +  4\mu_1\le 2\ell M$ since the total number of edges is $2\ell M$. We define $ a \triangleq \ell/4 - (\ell M/2- \kappa /2- \mu_1)$ as the lower bound on $\ovd + \xr$ given in \Cref{lmm:para-constr-ur}. Then, summing over all possible choices of these parameters yields
\#\label{eq:wr-ell-wg}
& \sum_{(U, \bc)\in \cU_\rr(\ell)} \rho^{2(\ell M/2-\kappa(\bc))}  \left|\wg(\mu(\bc),d) \right| \notag\\
& \qquad \le \sum_{\substack{\kappa, \mu_1 \ge 0 \\ \kappa +  2\mu_1\le \ell M}} \left(\frac{\rho^2}{d}\right)^{\ell M/2-\kappa/2}\left(\frac{4}{d}\right)^{\ell M/2- \kappa /2- \mu_1} \sum_{\substack{\ovd, \xr\ge 0 \\ \ovd + \xr \ge \max\{a,0\}}} \left|\cU_\rr(\ell,\kappa, \mu_1, \ovd, \xr)\right|. 
\#

Fix $\kappa$ and $\mu_1$ and recall that \Cref{lmm:wr-b} provides a bound on $|\cUr(\ell, \kappa, \mu_1, \ovd, \xr)|$. Substituting the bound, we obtain
\#\label{eq:wr-b}
& \sum_{\substack{\ovd, \xr\ge 0\\ \ovd + \xr \ge \max\{a,0\}}} |\cUr(\ell, \kappa, \mu_1, \ovd, \xr)| \notag\\
& \qquad \qquad \le CK^4 C^{\ell M} n^{\ell M/2}d^{\ell M - \kappa/2} (CK)^{\ell M- \kappa - 2\mu_1} \sum_{\substack{\ovd, \xr\ge 0\\ \ovd + \xr \ge \max\{a,0\}}} \left(\frac{C K^{12}}{n} \right)^{\ovd/2}\left(\frac{CK^{12}}{d} \right)^{\xr/2}\notag\\
& \qquad \qquad \le C K^4  n^{\ell M/2}d^{\ell M-\kappa/2} \left(\frac{\poly(K)}{\min\{d, n\}}\right)^{\ell/8}\left(\frac{\poly(K)}{\min\{d, n\}}\right)^{- (\ell M/4- \kappa /4- \mu_1/2)}.
\#
Here the last line follows from $C^{M} = \poly(K)$ and 
\$
\sum_{\substack{\ovd, \xr\ge 0 \\ \ovd + \xr \ge \max\{a,0\}}} \left(\frac{C K^{12}}{n} \right)^{\ovd/2}\left(\frac{CK^{12}}{d} \right)^{\xr/2} & \le \sum_{\substack{ \max\{a,0\} \le t\le 6K}} \left(\frac{CK^{13}}{\min\{d, n\}}\right)^{t/2} \\
&\le 2\left(\frac{CK^{13}}{\min\{d, n\}}\right)^{\max\{a,0\}/2},
\$
where the first inequality holds since $t = \ovd + \xr \le 6K$, the total number of nodes, and there are at most $6K+1$ ways to split $t$ into $\ovd$ and $\xr$, and the last inequality follows by bounding the sum with a geometric series (since ${CK^{13}}/{\min\{d, n\}}\le 1/2$) and cancelling the factor $(CK)^{\ell M- \kappa - 2\mu_1}$. 

Substituting \eqref{eq:wr-b} into \eqref{eq:wr-ell-wg}, we have 
\# \label{eq:re-2}
& \sum_{(U, \bc)\in \cU_\rr(\ell)} \rho^{2(\ell M/2-\kappa(\bc))}  \left|\wg(\mu(\bc),d) \right| \notag \\
& \qquad \qquad \le CK^4 n^{\ell M/2} d^{\ell M/2} \rho^{\ell M} \left(\frac{\poly(K)}{\min\{d, n\}}\right)^{\ell/8} \sum_{\substack{\kappa, \mu_1 \ge 0 \\ \kappa +  2\mu_1\le \ell M}} \rho^{-\kappa}\left(\frac{4}{d}\right)^{\ell M/4- \kappa /4- \mu_1/2} \notag\\
& \qquad  \qquad \le CK^4 n^{\ell M/2} d^{\ell M/2} \rho^{\ell M} \left(\frac{\poly(K)}{\min\{d, n\}}\right)^{\ell/8},
\#
where the last line follows from $\rho^{-\kappa} \le \rho^{-\ell M} = \poly(K)^\ell$ and 
\$ 
\sum_{\substack{\kappa, \mu_1 \ge 0 \\ \kappa +  2\mu_1\le \ell M}} \left(\frac{4}{d}\right)^{\ell M/4- \kappa /4- \mu_1/2} \le \sum_{\substack{a \ge 0 }} \left(\frac{8K}{d}\right)^{a} \le 2
\$
since there are at most $\ell M \le 2K$ number of ways to split each $a = \kappa +  2\mu_1$ into $\kappa$ and $\mu_1$.
Finally, combining \eqref{eq:re-ell-1} and \eqref{eq:re-2}, since $e^{M} = \poly(K)$, if $\ell>1$, we conclude that
\[
\re(\ell) \le C\left(\frac{\poly(K)}{\min\{d, n\}}\right)^{\ell/8}. \qedhere
\]
\end{proof}

\subsection{Bounding the Fully Overlapping Part} \label{sect:uo}
This section proves \Cref{prop:uo-co} for the fully overlapping part. Since the overlapping branches are taken directly from $\cJ$ and preserve a tree structure, we can perform a more explicit counting by analyzing how many non-isomorphic branches can appear, how they may be decorated, and how the nodes are labeled.
\begin{proof}[Proof of \Cref{prop:uo-co}]
Consider any $(U, \bc)\in \cUo(r,m,s)$.  
Recall that $U$ contains $r$ fully overlapping branch pairs, among which $m$ are monochromatic pairs and $s$ are bichromatic cross-isomorphism pairs. 
Then $U$ contains $4rM$ edges in total (counting multi-edges with multiplicity), and $\bc$ contains $2mM$ $2$-circuits and $(r-m)M$ $4$-circuits. Thus, by \eqref{eq:bound-mu1}, such $\bc$ satisfies
\$
\rho^{2(r M-\kappa(\bc))}  \left|\wg(\mu(\bc),d) \right|\le \left(1+o(1)\right)\left(\frac{\rho^2}{d}\right)^{(r-m)M}.
\$
Substituting this bound into \eqref{eq:ov-def}, we have
\#\label{eq:over-final} 
& \ov(r,m,s) \le \frac{1+o(1)}{|\cT|^2n^{rM}d^{(2r-m)M}\rho^{2mM}}\sum_{(U, \bc)\in \cUo(r,m,s)} w(U).  
\#
It therefore suffices to bound $\sum_{(U, \bc)\in \cUo(r,m,s)} w(U)$.
We first count the unlabeled graphs and then the labelings of their nodes.

\paragraph{Counting unlabeled graphs.}
Suppose that the $r$ fully overlapping branch pairs involve $r'$ non-isomorphic pairs. Since each non-isomorphic branch can appear in at most two overlapping pairs (there are four trees in total), we have $r/2\le r' \le r$. Moreover, since each of the $m+s$ monochromatic and bichromatic cross-isomorphism pairs has one decoration from $\{T_1,T_2\}$ and the other from $\{S_1,S_2\}$, the unlabeled trees $T$ and $S$ must share at least $(m+s)/2$ isomorphic branches. Consequently, $r' \le 2D - (m+s)/2$. Let $R(r) = \min\{r, 2D - (m+s)/2\}$. We obtain
\#\label{eq:r-range}
r/2 \le r' \le R(r).
\#

Among the $r'$ non-isomorphic branch pairs: $r-r'$ pairs appear twice among the four trees, and $2r'-r$ appear once. 
Thus, the number of choices for these unlabeled branch templates is at most 
\$ 
\binom{|\cJ|}{r'}\binom{r'}{r-r'}.
\$
We now decorate the branches. Since each branch appearing twice must receive complementary decorations across its two copies, it suffices to decorate the $r'$ non-isomorphic branches. Among these, $2r'-r$ appear only once. Of these, at least $(r'-D) \vee 0$ must be decorated by $(T_1, T_2)$, and at least $(r'-D) \vee 0$  by $(S_1, S_2)$, because each of $T$ and $S$ contains at most $D$ branches. The remaining branches each admit at most $\binom{4}{2}=6$ possible decoration choices. Thus, for each of the trees above, the number of decorations is at most
\$
\binom{2r'-r}{(r'-D) \vee 0, (r'-D) \vee 0, (2D-r) \wedge (2r'-r)} 6^{(2D - r')\wedge r'}.
\$
The decorations also determine the parallel edges in the branches. Monochromatic branches decorated by $(T_1, S_1)$ or $(T_2, S_2)$ contain two parallel edges between each pair of adjacent $n$-node and $d$-node, whereas bichromatic branches contain two parallel pairs of edges between each pair of adjacent $n$-nodes (see \Cref{fig:fully-overlap} for an illustration).
Note that the circuit decomposition is also uniquely determined by the unlabeled, decorated tree since by definition it consists of $2$-circuits in monochromatic branches and $4$-circuits in bichromatic branches.

We define $f(r', r) $ as the product of the two displayed expressions above
\# \label{eq:def-f-uo}
&f(r', r) = \binom{|\cJ|}{r'}\binom{r'}{r-r'}\binom{2r'-r}{(r'-D) \vee 0, (r'-D) \vee 0, (2D-r) \wedge (2r'-r)} 6^{(2D - r')\wedge r'}. 
\# 
Let $\widetilde{\cU}_\mathrm{o}(r,m,s)$ denote the set of unlabeled decorated graph-decomposition pairs. Then summing over all possible trees constructed above, it follows that 
\#\label{eq:wo-frr}
|\widetilde{\cU}_\mathrm{o}(r,m,s)| &\le \sum_{r/2\le r'\le R(r)}f(r', r).
\#
\Cref{lmm:f-r-r} will provide an upper bound on $\sum_{r/2\le r'\le R(r)}f(r', r)$.
Following from \Cref{lmm:f-r-r} and $2D - R(r) \ge 
(m+s)/2$ from \eqref{eq:r-range}, we have
\#\label{eq:unlabeled-uo}
|\widetilde{\cU}_\mathrm{o}(r,m,s)| \le 2\left(\frac{6D^2}{|\cJ|-2D}\right)^{(m+s)/2 } \binom{|\cJ|}{D}^2  \le 2|\cT|^2\beta^{(m+s)M/2},
\#
where the last inequality uses $|\cT| = \binom{|\cJ|}{D}$ together with $
\beta^{M} \ge 7D^2/ |\cJ|\ge 6D^2/(|\cJ|-2D)$.

\paragraph{Labeling the nodes.}
Fix an unlabeled graph. We now count the number of ways to label its $n$- and $d$-nodes. Each of the $r$ pairs of overlapping branches contains $M$ distinct $n$-nodes. Moreover, among these $r$ pairs, each of the $m$ monochromatic pairs contains $M$  $d$-nodes, while each of the $r-m$ bichromatic pairs contains $2M$ $d$-nodes. Hence, there are $rM$ $n$-nodes and $(2r-m)M$ $d$-nodes in total. Hence, the number of possible node labelings is at most $n^{rM}d^{(2r-m)M}$ choices.

We now account for overcounting. Different node labelings may yield the same labeled graph if a permutation of the nodes preserves the graph structure. Recall that $\aut(J)$ denotes the number of node permutations that produce the same labeled graph for a branch $J$.
Since $U$ is a tree composed of branches, any permutation that preserves each branch necessarily preserves the entire tree.
Hence, the number of node labelings that give rise to the same labeled graph $U$ is at least the product of $\aut(J)$ over the branches, that is, $\prod_{J\subset \cJ, J\subset U}\aut(J)$. Recall from the definition of $w(U)$ in \eqref{eq:weight-u-aut} that it assigns a weight of $\sqrt{\aut(J)}$ to each branch of the four trees in $U$. Since each branch of $U$ appears in exactly two of the four trees, we have
\$ 
w(U) = \prod_{L\in\{T_1, S_1, T_2, S_2\}}\prod_{J\in \cR(L), J\subset U} \sqrt{\aut(J)} = \prod_{J\subset \cJ, J\subset U}\aut(J).
\$
In particular, the overcounting is at least  $w(U)$. After dividing by this symmetry factor, the number of distinct labeled graphs is therefore at most
\#\label{eq:uo-label} 
\frac{n^{rM}d^{(2r-m)M}}{w(U)}.
\#
Combining \eqref{eq:unlabeled-uo} and \eqref{eq:uo-label}, we have
\#\label{eq:ov-sum-u} 
\sum_{\substack{(U, \bc)} \in \cUo(r, m, s)} w(U) & \le \left|\widetilde{\cU}_\mathrm{o}(r,m,s)\right| \sum_{\substack{(U, \bc)} \in \widetilde{\cU}_\mathrm{o}(r,m,s)} w(U) \times  \frac{n^{rM}d^{(2r-m)M}}{w(U)} \notag \\
&\le 2n^{rM}d^{(2r-m)M} |\cT|^2\beta^{(m+s)M/2}.
\#
We further combine \eqref{eq:over-final} and \eqref{eq:ov-sum-u} and obtain
\$ 
& \ov(r,m,s) \le 2(1+o(1))\beta^{(m+s)M/2}\rho^{-2mM}. 
\$
\end{proof}

Finally, we present the following auxiliary lemma to bound $\sum_{r/2 \le r'\le R(r)} f(r', r)$ in \eqref{eq:wo-frr}.
\begin{lemma}\label{lmm:f-r-r}
For any $D, r, r' \in \N$ such that $0 \le r \le 2D$, recall the definition of $f(r', r)$ from \eqref{eq:def-f-uo}.
Fix $R\le 2D$. 
Then 
\$ 
\sum_{r/2 \le r'\le R} f(r', r) \le 2\left(\frac{6D^2}{|\cJ|-2D}\right)^{2D-R} \binom{|\cJ|}{D}^2.
\$
\end{lemma}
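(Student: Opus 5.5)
The plan is to bound each summand $f(r',r)$ by a ratio against $\binom{|\cJ|}{D}^2$ and to show that these ratios decay geometrically as $r'$ moves away from $2D$. The top term is $r'=R$, and it will supply the factor $\left(\frac{6D^2}{|\cJ|-2D}\right)^{2D-R}$. I would first split $f$ into the branch-template factor $\binom{|\cJ|}{r'}\binom{r'}{r-r'}$ and the decoration factor, a multinomial times $6^{(2D-r')\wedge r'}$.

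\textbf{Template factor.} The key comparison is
\[
\binom{|\cJ|}{r'} \le \binom{|\cJ|}{D}^2 \cdot \frac{\binom{|\cJ|}{r'}}{\binom{|\cJ|}{D}^2}.
\]
For $r'\le 2D$, I would use the ratio $\binom{|\cJ|}{D}^2/\binom{|\cJ|}{r'}$. When $r'=2D$ it equals $\binom{2D}{D}$ up to the factor $\prod (|\cJ|-\cdot)/(|\cJ|-\cdot)$. Each unit decrease of $r'$ below $2D$ costs a factor of roughly $\frac{r'}{|\cJ|-r'}\le \frac{2D}{|\cJ|-2D}$. Hence
\[
\binom{|\cJ|}{r'} \le \binom{|\cJ|}{2D}\left(\frac{2D}{|\cJ|-2D}\right)^{2D-r'}.
\]
I would then bound $\binom{|\cJ|}{2D}\le \binom{|\cJ|}{D}^2/\binom{2D}{D}$, which is exact since $\binom{|\cJ|}{2D}\binom{2D}{D}=\binom{|\cJ|}{D}\binom{|\cJ|-D}{D}$.

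\textbf{Decoration factor.} The factor $\binom{r'}{r-r'}\le \binom{r'}{\cdot}$ counts which templates are doubled. The multinomial, together with $6^{(2D-r')\wedge r'}$, has to be absorbed by $\binom{2D}{D}$ times $(3D)^{2D-r'}$. When $r'$ is near $2D$, the multinomial forces about $r'-D$ singly-appearing branches into each of $(T_1,T_2)$ and $(S_1,S_2)$. It is therefore at most $\binom{2r'-r}{r'-D}\binom{2r'-r-(r'-D)}{r'-D}$, and the free part has size $2D-r'$ up to the doubled count. I would bound the split $\binom{2r'-r}{r'-D,\,r'-D,\,\cdot}$ by $\binom{2D}{D}\cdot (2D)^{2D-r'}$. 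The reasoning is that removing one element from a balanced split of $2D$ loses at most a factor $2D$. Together with $\binom{r'}{r-r'}\le r'^{\,r-r'}$ and $6^{2D-r'}$, each unit of $2D-r'$ then costs at most $6D\cdot \frac{D}{|\cJ|-2D}$ relative to the base. Note that $r-r'\le 2D-r'$ when $r\le 2D$, so the doubling choices also fit in the per-unit budget; the combined per-unit factor should be $\frac{6D^2}{|\cJ|-2D}$ after constants are tracked.

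\textbf{Summation.} This gives
\[
f(r',r)\le \binom{|\cJ|}{D}^2 q^{\,2D-r'}, \qquad q=\frac{6D^2}{|\cJ|-2D}\le \frac12,
\]
where $q\le\frac12$ uses $|\cJ|\ge 7D^2\beta^{-M}$, which is large. Summing over $r'\le R$ is then a geometric series dominated by $r'=R$, which yields the factor $2q^{2D-R}$.

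The main obstacle is getting the per-unit constant to come out as exactly $6D^2$. Case distinctions between $r'\ge D$, where the $\vee$ and $\wedge$ in the multinomial are active, and $r'<D$ need care. For $r'<D$ the multinomial is trivial, and the bound follows because $q^{2D-r'}$ then carries at least $q^{D}$ of slack against $\binom{|\cJ|}{r'}/\binom{|\cJ|}{D}^2$, which is already at most $\binom{|\cJ|}{D}^{-1}$.
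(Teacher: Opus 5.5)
Your architecture is sound: prove the pointwise bound $f(r',r)\le\binom{|\mathcal{J}|}{D}^2q^{2D-r'}$ with $q=\frac{6D^2}{|\mathcal{J}|-2D}\le\frac12$, then sum a geometric series. That pointwise bound is true. It is the same mechanism as the paper's proof, which imports the one-step ratios $f(a,b)\le q\,f(a+1,b)$ and $f(a,a)\le q\,f(a+1,a+1)$ from \cite[Claim~3]{mao2022random} and finishes with $f(2D,2D)=\binom{|\mathcal{J}|}{2D}\binom{2D}{D}\le\binom{|\mathcal{J}|}{D}^2$.

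The gap is that the per-unit cost $q$, which is the whole content of the lemma, is asserted rather than derived, and your estimates cannot produce it. You set the target of absorbing the multinomial and $6^{2D-r'}$ into $\binom{2D}{D}(3D)^{2D-r'}$. But your multinomial bound $\binom{2D}{D}(2D)^{2D-r'}$ gives $\binom{2D}{D}(12D)^{2D-r'}$ for that product. The doubling factor $r'^{\,r-r'}$ then adds up to another $(2D)^{2D-r'}$ instead of fitting in the budget. Altogether the per-unit cost can be $\frac{48D^3}{|\mathcal{J}|-2D}$ when $r=2D$. That is off by a factor of order $D$, not a constant, and there is essentially no room to lose anything: $f(2D-1,2D)=\frac{6D^2}{|\mathcal{J}|-2D+1}f(2D,2D)$ exactly.

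Two facts are missing. Write $r'=2D-k\ge D$ and $r=2D-j$ with $0\le j\le k\le D$. First, the doubling choice and the choice of free singletons form one joint choice. Second, the forced balanced split shrinks instead of growing:
\[
\binom{r'}{r-r'}\binom{2r'-r}{r'-D,\,r'-D,\,2D-r}=\binom{k}{j}\binom{2D-k}{k}\binom{2D-2k}{D-k},\qquad \binom{2D-2k}{D-k}=\binom{2D}{D}\prod_{i<k}\frac{D-i}{2(2D-2i-1)}.
\]
Using $\binom{2D-k}{k}\le\frac{1}{k!}\prod_{i<k}(2D-2i-1)$, the product of doubling, multinomial and $6^{k}$ is at most $\binom{2D}{D}\frac{(3D)^k}{j!\,(k-j)!}$. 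Your (correct) template estimate then yields $f(r',r)\le\binom{|\mathcal{J}|}{D}^2q^{k}$.

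Your case $r'<D$ is also argued incorrectly. There $q^{2D-r'}$ is a cost on the right-hand side, not slack. Replacing $\binom{|\mathcal{J}|}{r'}$ by $\binom{|\mathcal{J}|}{D}$ discards exactly the decay you need. Already at $r'=r=0$ your argument would require $\binom{|\mathcal{J}|}{D}q^{2D}\ge 1$, which forces $|\mathcal{J}|=O(D^3)$. In the paper's regime, however, $|\mathcal{J}|=K^{\Theta(1/c_1)}\gg D^3$.

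Instead use $\binom{|\mathcal{J}|}{r'}\le|\mathcal{J}|^{r'}/r'!$ and $\binom{|\mathcal{J}|}{D}\ge(|\mathcal{J}|/D)^D$. Then $f(r',r)\le 12^{r'}|\mathcal{J}|^{r'}/r'!$, while $\binom{|\mathcal{J}|}{D}^2q^{2D-r'}\ge|\mathcal{J}|^{r'}6^{2D-r'}D^{2D-2r'}$. The ratio is at most $\frac{2^{r'}}{r'!}(6D)^{-2(D-r')}\le 1$.

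With both repairs your pointwise route is complete. It also avoids a slip in the paper's written chain, whose step $f(R,r)\le f(R,2D)$ fails (e.g.\ $f(2D-2,2D-1)=2f(2D-2,2D)$). The valid chain there is $f(R,r)\le q^{r-R}f(r,r)\le q^{2D-R}f(2D,2D)$.
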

\begin{proof}
For any $a,b\in\N$ such that $b/2 \le a \le b-1$ and $b\le 2D$, \cite[Claim 3]{mao2022random} shows that
\$ 
& \frac{f(a, b)}{f(a+1, b)} \le \frac{6D^2}{|\cJ|-2D} \le \frac{1}{2}, \quad \frac{f(a, a)}{f(a, a+1)} \le 1, \quad \frac{f(a, a)}{f(a+1, a+1)} \le \frac{6D^2}{|\cJ|-2D},
\$ 
and $f(2D, 2D) \le \binom{|\cJ|}{D}^2$. Then  for fixed $r$, we have
\$ 
\sum_{r/2\le r'\le R} f(r', r) & \le 2f(R, r) \le 2f(R, 2D) \\
& \le 2\left(\frac{6D^2}{|\cJ|-2D}\right)^{2D-R} f(2D, 2D) \le 2\left(\frac{6D^2}{|\cJ|-2D}\right)^{2D-R} \binom{|\cJ|}{D}^2,
\$
where the first inequality follows by summing over $r'$ and using $f(a, b)\le f(a+1, b)/2$, the second inequality uses $f(a, a)\le f(a, a+1)$, the third inequality follows from $f(a, b)\le f(a+1, b)\cdot 6D^2/(|\cJ|-2D)$ and the last inequality uses $f(2D, 2D) \le \binom{|\cJ|}{D}^2$.
\end{proof}

\section{Polynomial-Time Approximation via Color Coding}\label{sect:approx}
In this section, we provide a polynomial-time algorithm for approximating the similarity scores using the idea of color coding. The discussion follows closely that of \cite[Section 4]{mao2024testing} and \cite[Section 5]{mao2022random}.

Fix a rooted bipartite tree $T\in\cT$. We first approximate $\Label_T(X,i)$ defined in \eqref{eq:label-t-x-i}. We view $X$ as a weighted bipartite graph on vertex set $[n]\sqcup [d]$, where the edge between $u \in [n]$ and $a \in [d]$ has weight $X_{ua}$. We generate a random coloring $\phi = (\phi_n, \phi_d)$ as follows: $\phi_n \colon [n]\to [K+1]$ assigns to each $n$-node in $X$ an independent uniformly random color from $[K+1]$, and $\phi_d \colon [d]\to [K]$ assigns to each $d$-node an independent uniformly random color in $[K]$. 
For any $V = V_n \sqcup V_d$ with $V_n \subset [n]$ and $V_d \subset [d]$, let $\chi_{\phi}(V)$ denote the indicator that $V$ is colorful under $\phi$, meaning that $\phi_n(u) \neq \phi_n(v)$ for all distinct $u, v \in V_n$ and $\phi_d(a) \neq \phi_d(b)$ for all distinct $a, b \in V_d$. 
If $|V_n| = K+1$ and $|V_d| = K$, then 
\$ 
\prob{\chi_{\phi}(V) = 1} =  \frac{(K+1)!}{(K+1)^{K+1}} \cdot  \frac{K!}{K^K}  \triangleq p.
\$
We define
\$ 
\widetilde{\Label}_T(X,i; \phi) \triangleq \sum_{\substack{T_1 \cong T\\ \mathrm{root}(T_1) = i}} \chi_\phi(V(T_1)) X^{T_1}. 
\$
It follows that $$\E\left[\widetilde{\Label}_T(X,i; \phi) \given X\right] = p \cdot {\Label}_T(X,i).
$$
Hence $\widetilde{\Label}_T(X,i; \phi)/p$ is an unbiased estimator of ${\Label}_T(X,i)$. Similarly, we also view $Y$ as a weighted bipartite graph, generate a random coloring $\psi$ on $Y$, and define 
\$ 
\widetilde{\Label}_T(Y,j; \psi) \triangleq \sum_{\substack{T_2 \cong T\\ \mathrm{root}(T_2) = j}} \chi_\psi(V(T_2)) Y^{T_2}. 
\$

Since $T$ is a tree, color coding together with the recursive structure of $T$ allows us to use dynamic programming to count weighted colorful copies and compute $\widetilde{\Label}_T(X,i;\phi)$ efficiently. This procedure is described in \cite[Algorithm 2]{mao2024testing} for unrooted trees; with minor modifications, the same algorithm applies to rooted trees, as outlined in \cite[Section 5]{mao2022random}. The algorithm takes a weighted host graph on $[n]$ as input. For the rooted bipartite trees considered here, we apply it to the bipartite weighted host graphs induced by $X$ and $Y$.

To reduce the variance, we generate independent colorings $\phi_1, \cdots, \phi_t$ with $t \triangleq \ceil{1/p}$, and define 
\#\label{eq:hat-label} 
\widehat{\Label}_T(X,i) \triangleq \frac{1}{tp}\sum_{s=1}^t \widetilde{\Label}_T(X,i; \phi_s).
\#
Similarly, using independent colorings $\psi_1, \cdots, \psi_t$ for $Y$, we define $\widehat{\Label}_T(Y,j)$. We then define the approximate similarity score 
\#\label{eq:approx-similarity} 
\widehat\similarity_{ij} \triangleq \sum_{T \in \calT} \aut(T)\cdot \widehat\Label_T(X,i) \cdot \widehat\Label_T(Y,j).
\#
Then $\E\left[\widehat\similarity_{ij}\given X, Y\right] = \similarity_{ij}$. The procedure is summarized as \Cref{alg:approx-similarity}.

\vspace{8pt}
\begin{algorithm}[H]
\DontPrintSemicolon
  
  \KwIn{Data matrices $X$ and $Y$ and integers $M$, $D$}

  {Construct $\cJ$ as specified in \Cref{def:j}: first use the algorithms of \cite{beyer1980constant,colbourn1981linear} to construct rooted trees on $[n]$ with $M$ edges, root degree $1$, and at most $\exp(CM)$ automorphisms; then subdivide each edge by inserting a $d$-node.\;}
  
  {Construct $\cT$ using $\cJ$ according to \Cref{def:t}. \;}
  {Generate \iid random colorings $\{\phi_s\}_{s=1}^t$ and $\{\psi_s\}_{s=1}^t$. \;}
  \For{each $T\in\cT$}{
    \For{each $s=1,\cdots, t$}{
       Compute $\{\widetilde\Label_T(X,i;\phi_s)\}_{i\in[n]}$, $\{\widetilde\Label_T(Y,j;\psi_s)\}_{j\in[n]}$ via \cite[Algorithm 2]{mao2024testing}. 
    }
    {
       Compute $\{\widehat\Label_T(X,i)\}_{i\in[n]}$ and $\{\widehat\Label_T(Y,j)\}_{j\in[n]}$ via \eqref{eq:hat-label}.
    }}
  
  \KwOut{$\{\widehat\similarity_{ij}\}_{i,j \in [n]}$ according to \eqref{eq:approx-similarity}.}
  
\caption{Approximation of similarity scores via color coding}
\label{alg:approx-similarity}
\end{algorithm}
\vspace{8pt}

Finally, we prove \Cref{prop:approx} by showing that \Cref{alg:approx-similarity} runs in polynomial time and that Theorem  \ref{thm:algo} continues to hold with $\widehat{\similarity}_{ij}$ in place of $\similarity_{ij}$.

\begin{proof} 
We analyze the time complexity and control the variance in turn.
\paragraph{Time complexity.}    
We first show that \Cref{alg:approx-similarity} outputs $\{\widehat{\similarity}_{ij}\}_{i,j\in[n]}$ in polynomial time. The algorithm in \cite{beyer1980constant} generates all non-isomorphic rooted trees with $M$ edges in time $O(\alpha^{-M})$. Moreover, for each such rooted tree, the algorithm in \cite{colbourn1981linear} computes the number of automorphisms in time $O(M)$. Hence, the total time complexity to generate $\cJ$ is $O(M\alpha^{-M})$. Given $\cJ$, the time required to construct $\cT$ is $O\!\left(\binom{|\cJ|}{D}\right) = O(|\cT|)$.

As discussed above, we apply \cite[Algorithm 2]{mao2024testing} to compute $\{\widetilde\Label_T(X,i;\phi_s)\}_{i\in[n]}$ and $\{\widetilde\Label_T(Y,j;\psi_s)\}_{j\in[n]}$ for all $T\in \cT$ and $s\in[t]$. Since $t = \ceil{1/p} = e^{2K}$, \cite[Lemma 2]{mao2024testing} implies that the time complexity to compute $\{\widehat\Label_T(X,i)\}_{i\in[n]}$ and $\{\widehat\Label_T(Y,j)\}_{j\in[n]}$ is $O(|\cT|K(3e)^{2K}nd)$. Here the factor $nd$ comes from the bipartite structure: every edge connects an $n$-node to a $d$-node, so each recursion step sums over pairs in $[n]\times[d]$. 

In summary, since $|\cT| \le |\cJ|^D \le \alpha^{-K}$, 
the total time complexity is
\$
O\left(M\alpha^{-M} + |\cT| + |\cT|K(3e)^{2K}nd\right)
= O\left(K[(3e)^2/\alpha]^{K}nd\right) = O(n^Cd),
\$
where constant $C= C_1\log((3e)^2/\alpha) +2 >0 $ by the choice of $K$ given in \Cref{thm:algo}.

\paragraph{Variance bound.}
Next, to show that \Cref{thm:algo} continues to hold with $\widehat{\similarity}_{ij}$ in place of $\similarity_{ij}$, it suffices to establish an analogue of \Cref{prop:mean-sep,prop:var-control} for $\widehat{\similarity}_{ij}$; that is,
\#\label{eq:poly-appx-mean-var}
\E[\widehat\similarity_{ii}] \ge (1-o(1))\nu, \quad \E[\widehat\similarity_{ij}] = 0, \quad \var(\widehat\similarity_{ii})= o(\nu^2), \quad \var(\widehat\similarity_{ij}) = o(\nu^2/n^2), 
\#
where $\nu \triangleq |\cT| n^K d^K \rho^{2K}$.
For two independent colorings $\phi$ and $\psi$, we define
\$ 
S_{ij}(\phi,\psi) \triangleq \sum_{T\in\cT}\aut(T) \cdot \widetilde{\Label}_T(X,i; \phi) \cdot \widetilde{\Label}_T(Y,j; \psi).
\$
Then from \eqref{eq:approx-similarity}, we have
\$ 
\widehat\similarity_{ij} = \frac{1}{t^2p^2}\sum_{a=1}^t \sum_{b=1}^t S_{ij}(\phi_a,\psi_b).
\$
For any $a,b\in[t]$, $S_{ij}(\phi_a,\psi_b)/p^2$ is an unbiased estimator of $\similarity_{ij}$, since 
\#\label{eq:unbiased-score}
\E\left[S_{ij}(\phi_a,\psi_b)\given X, Y\right] = p^2\sum_{T\in\cT}\aut(T) \cdot {\Label}_T(X,i) \cdot {\Label}_T(Y,j) = p^2 \similarity_{ij}.
\#
Thus, we have $\E\left[\widehat\similarity_{ij}  \given X, Y\right] = \similarity_{ij}$ and  $\E\left[\widehat\similarity_{ij}\right] = \E[\similarity_{ij}]$ for any $i,j$. Hence, the mean result in \eqref{eq:poly-appx-mean-var} follows directly from \eqref{eq:mean-main}. It remains to bound the variance. Since $\E\left[\widehat\similarity_{ij}  \given X, Y\right] = \similarity_{ij}$, we have $\cov(\widehat\similarity_{ij} - \similarity_{ij}, \similarity_{ij}) = 0$; hence
\$ 
\var(\widehat\similarity_{ij}) 
 = \var(\widehat\similarity_{ij} - \similarity_{ij}) + \var(\similarity_{ij}).
\$
Thus, it suffices to bound $\var(\widehat\similarity_{ij} - \similarity_{ij})$.
In particular, we have 
\# \label{eq:appx-var}
\var(\widehat\similarity_{ij} - \similarity_{ij}) & = \var\left(\E\left[\widehat\similarity_{ij} - \similarity_{ij} \given X, Y\right]\right) + \E\left[\var\left(\widehat\similarity_{ij} - \similarity_{ij} \given X, Y\right)\right] \notag\\
&=\E\left[\var\left(\widehat\similarity_{ij} \given X, Y\right)\right] \notag\\
& = \frac{1}{t^4p^4}\sum_{a=1}^t \sum_{b=1}^t\sum_{c=1}^t \sum_{d=1}^t \E\left[\cov\left(S_{ij}(\phi_a,\psi_b), S_{ij}(\phi_c,\psi_d) \given X, Y\right)\right].
\#
Here $S_{ij}(\phi_a,\psi_b)$ and $S_{ij}(\phi_c,\psi_d)$ are independent if and only if $a\neq c$ and $b\neq d$.
Fix $i,j$ and $a,b,c,d$. We compute the covariance as follows,
\$
& \cov\left(S_{ij}(\phi_a,\psi_b), S_{ij}(\phi_c,\psi_d) \given X, Y\right) \\
& \qquad = \E\left[S_{ij}(\phi_a,\psi_b) S_{ij}(\phi_c,\psi_d) \given X, Y\right] - \E\left[S_{ij}(\phi_a,\psi_b)\given X, Y\right] \E\left[S_{ij}(\phi_c,\psi_d) \given X, Y\right] \\
& \qquad = \E\left[S_{ij}(\phi_a,\psi_b) S_{ij}(\phi_c,\psi_d) \given X, Y\right] - p^4\similarity_{ij}^2,
\$
where the last equality follows from \eqref{eq:unbiased-score}. We now compute the first term
\$ 
& \E\left[S_{ij}(\phi_a,\psi_b) S_{ij}(\phi_c,\psi_d) \given X, Y\right] = \sum_{T,S\in\cT} \aut(T)\aut(S)\\
& \qquad \times \sum_{\substack{T_1,T_2\cong T, S_1, S_2\cong S\\ \mathrm{root}(T_1) = \mathrm{root}(S_1) = i \\ \mathrm{root}(T_2) = \mathrm{root}(S_2) = j}} \E[\chi_{\phi_a}(V(T_1))\chi_{\phi_c}(V(S_1))]\E[\chi_{\psi_b}(V(T_2))\chi_{\psi_d}(V(S_2))]X^{T_1}Y^{T_2}X^{S_1}Y^{S_2},
\$
where $\chi_{\phi_a}(V(T_1))\chi_{\phi_c}(V(S_1))$ and $\chi_{\psi_b}(V(T_2))\chi_{\psi_d}(V(S_2))$ are independent since they are defined on vertices from different graphs $X$ and $Y$, respectively. If $V(T_1) \cap V(S_1) = \{i\}$ and $V(T_2) \cap V(S_2) = \{j\}$, since $T_1$ (resp. $T_2$) only shares one common node with $S_1$ (resp. $S_2$), we have
\$ 
\E[\chi_{\phi_a}(V(T_1))\chi_{\phi_c}(V(S_1))] = \E[\chi_{\psi_b}(V(T_2))\chi_{\psi_d}(V(S_2))] = p^2,
\$
In general, it is straightforward to verify that 
\$ 
\E[\chi_{\phi_a}(V(T_1))\chi_{\phi_c}(V(S_1))] \le p^{1+ \indc{a\neq c}} , \quad \E[\chi_{\psi_b}(V(T_2))\chi_{\psi_d}(V(S_2))] \le p^{1+ \indc{b\neq d}}.
\$
Combining the equations above, we have 
\$ 
& \cov\left(S_{ij}(\phi_a,\psi_b), S_{ij}(\phi_c,\psi_d) \given X, Y\right) \le (p^{2+ \indc{a\neq c} + \indc{b\neq d}} -p^4) \sum_{T,S\in\cT} \aut(T)\aut(S) \\
& \qquad \times \sum_{\substack{T_1,T_2\cong T, S_1, S_2\cong S\\ \mathrm{root}(T_1) = \mathrm{root}(S_1) = i \\ \mathrm{root}(T_2) = \mathrm{root}(S_2) = j}}  X^{T_1}Y^{T_2}X^{S_1}Y^{S_2}\indc{V(T_1) \cap V(S_1) \neq \{i\} \text{ or } V(T_2) \cap V(S_2) \neq \{j\}}.
\$ 
For $i = j$, an important observation is that if either $V(T_1) \cap V(S_1) \neq \{i\}$ or $V(T_2) \cap V(S_2) \neq \{i\}$, that is, $T_1$ and $S_1$ or $T_2$ and $S_2$ share a non-root node,  then their decorated union graph cannot consist solely of bichromatically overlapping branches; hence, $m+ \ell \ge 1$. Therefore, by the same argument as in \Cref{lmm:var-step1,lmm:var-ratio},  we have
\$
& \sum_{T,S\in\cT} \aut(T)\aut(S)\sum_{\substack{T_1,T_2\cong T, S_1, S_2\cong S\\ \mathrm{root}(T_1) = \mathrm{root}(S_1) = i \\ \mathrm{root}(T_2) = \mathrm{root}(S_2) = i}}  \E\left[X^{T_1}Y^{T_2}X^{S_1}Y^{S_2}\right]\indc{V(T_1) \cap V(S_1) \neq \{i\} \text{ or } V(T_2) \cap V(S_2) \neq \{i\}} \\
& \qquad \le \left(1 + o(1)\right) \nu^2\sum_{\substack{r\ge m,s\ge 0, \, \ell\ge 0  \\ 2r+\ell = 4D, \, m+ \ell \ge 1}} \ov(r,m,s) \re(\ell)  = o(\nu^2),
\$
where the last equality follows from \eqref{var-ii-final} and \eqref{eq:var-bd-ii}. Hence, we have
\#\label{eq:exp-cov-color-ii}
\E\left[\cov\left(S_{ii}(\phi_a,\psi_b), S_{ii}(\phi_c,\psi_d) \given X, Y\right)\right] \le (p^{2+ \indc{a\neq c} + \indc{b\neq d}} -p^4) o(\nu^2).
\#
For $i\neq j$, it suffices to bound the second moment. Combining \eqref{eq:var-ij-ratio-1}, \eqref{eq:var-ij-ratio-2}, and \eqref{eq:2nd-mmnt-bd-ij} gives 
\$ 
\sum_{T,S\in\cT} \aut(T)\aut(S) \sum_{\substack{T_1,T_2\cong T, S_1, S_2\cong S\\ \mathrm{root}(T_1) = \mathrm{root}(S_1) = i \\ \mathrm{root}(T_2) = \mathrm{root}(S_2) = j}}  \E\left[X^{T_1}Y^{T_2}X^{S_1}Y^{S_2}\right] \le o(\nu^2/n^2).
\$
It follows that
\#\label{eq:exp-cov-color-ij}
\E\left[\cov\left(S_{ij}(\phi_a,\psi_b), S_{ij}(\phi_c,\psi_d) \given X, Y\right)\right] & \le (p^{2+ \indc{a\neq c} + \indc{b\neq d}} -p^4) o(\nu^2/n^2).
\#

Note that when $t > 1/p$,
\$ 
\frac{1}{t^4p^4}\sum_{a=1}^t \sum_{b=1}^t\sum_{c=1}^t \sum_{d=1}^t \left(p^{2+ \indc{a\neq c} + \indc{b\neq d}} - p^{4} \right) \le \frac{1}{t^2p^2} + \frac{2}{tp} \le 3.
\$
Hence, combining \eqref{eq:appx-var}, \eqref{eq:exp-cov-color-ii}, and \eqref{eq:exp-cov-color-ij}, we conclude that
\$ 
\var(\widehat\similarity_{ij} - \similarity_{ij}) & \le \frac{1}{t^4p^4}\sum_{a=1}^t \sum_{b=1}^t\sum_{c=1}^t \sum_{d=1}^t \left(p^{2+ \indc{a\neq c} + \indc{b\neq d}} - p^{4} \right)o\left(\nu^2\indc{i = j} + \nu^2\indc{i \neq j}/n^2\right) \\
& = o\left(\nu^2\indc{i = j} + \nu^2\indc{i \neq j}/n^2\right).
\$
This verifies \eqref{eq:poly-appx-mean-var}. Therefore, by following the proof of \Cref{thm:algo} with $\widehat{\similarity}_{ij}$ in place of $\similarity_{ij}$, we conclude that Theorem  \ref{thm:algo} continues to hold with $\widehat{\similarity}_{ij}$.
\end{proof}

\section{From Almost Exact to Exact Recovery}\label{sect:exact}
With high probability, \Cref{alg:main} finds a permutation $\widehat{\pi} \colon I\to [n]$ such that $\widehat{\pi} = \pi \mid_I$ and $|I| \ge n - n/\log^2 n$.
In this section, we provide a seeded matching algorithm to match the remaining data points, thereby achieving exact recovery. Given a seed set $J$, the basic intuition is as follows: if $(i,j)$ is a true pair, then the vectors $(\inner{x_i, x_r})_{r\in J}$ and $(\inner{y_j, y_{\pi(r)}})_{r\in J}$ are correlated, whereas if $(i,j)$ is a false pair, then these vectors are independent. Hence, we determine whether $i$ and $j$ form a true pair by thresholding the following similarity score:
\$ 
\Psi_{ij}^J \triangleq \sum_{r\in J} \indc{x_i^\top x_r y_j^\top y_{\pi(r)} \ge 0}.
\$
\Cref{alg:exact-while} iteratively adds data points as new seeds once we are confident, based on the current seed set, that they form true pairs. 

\vspace{8pt}
\begin{algorithm}[H]
\DontPrintSemicolon
\KwIn{Data matrices $X$ and $Y$, a mapping $\widehat \pi\colon I\to[n]$, and a threshold $\gamma >0$.}  
{Let $J = I$ and $\widetilde{\pi} = \widehat{\pi}$.\;}
\While{there exist $i\notin J$ and $j\notin \widetilde{\pi}(J)$ such that $\Psi_{ij}^{J} \ge \gamma n $}{Add $i$ to $J$ and let $\widetilde{\pi}(i) = j$.}
\KwOut{$\widetilde\pi\colon [n]\to [n]$.}
\caption{Achieving exact recovery via seeded matching}
\label{alg:exact-while}
\end{algorithm}
\vspace{8pt}

Since the initial seed set $I$ depends on the data matrices $X$ and $Y$, we show that \Cref{alg:exact-while} succeeds for every possible set $I$ 
satisfying $|I|\ge n- n/\log^2 n$. The following proposition gives sufficient conditions under which our seeded algorithm achieves exact recovery.
\begin{proposition}
Suppose that $\rho^2 \ge C\max\{\log n/d, \, \sqrt{1/\log n}\}$ for a sufficiently large constant $C$. Then there exists $\gamma>0$ such that, with probability $1-n^{-\Omega(1)}$, the following holds: for every partial mapping $\widehat{\pi} 
= \pi \mid_I$, where $I\subset [n]$ satisfies $|I| \ge n - n/\log^2 n$, \Cref{alg:exact-while}, when run with inputs $\widehat{\pi}$ and $\gamma$, outputs $\widetilde{\pi}=\pi$ in $O(n^3d)$ time.
\end{proposition}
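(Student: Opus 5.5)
The proof will show that, with high probability, the score $\Psi_{ij}^J$ separates true and false pairs \emph{uniformly over all admissible seed sets} $J$. The algorithm then never adds a wrong pair and always has a true pair available to add. Set $p_0 \triangleq \Prob\{x_i^\top x_r\, y_{\pi(i)}^\top y_{\pi(r)} \ge 0\}$ for $r\neq i$, the per-seed success probability for a true pair. For a false pair $(i,j)$ with $j\neq\pi(i)$, and $r\notin\{i,\pi^{-1}(j)\}$, the term $x_i^\top x_r\, y_j^\top y_{\pi(r)}$ is a product of two quantities. Conditionally on $x_r, y_{\pi(r)}$, these quantities depend on the independent vectors $x_i$ and $y_j$, respectively. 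The conditional distribution of $x_i^\top x_r$ is symmetric, so the sign probability is exactly $1/2$.

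For true pairs, note that $y_{\pi(i)}^\top y_{\pi(r)} = \rho^2 x_i^\top x_r + (\text{cross and noise terms})$. Conditioning on $x_i, x_r$ (with $\|x_i\|^2\approx\|x_r\|^2\approx d$), the signal $\rho^2 x_i^\top x_r$ has typical size $\rho^2\sqrt d$. The residual is approximately Gaussian with standard deviation $O(\sqrt d)$. A bivariate-Gaussian (arcsine-law) approximation then gives $p_0 \ge 1/2 + c\rho^2$. I will make this rigorous via a Berry--Esseen/CLT bound in $d$, or simply by computing $\E[\mathrm{sign}]$ through a conditional Gaussian argument. The condition $\rho^2 \ge C\log n/d$ is where the dimension enters. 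This condition is what makes the correlation gain dominate the finite-$d$ corrections.

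The main obstacle is that the summands of $\Psi_{ij}^J$ are not independent across $r$, since they share $x_i$ and $y_j$. To handle this, I will condition on $(x_i, y_j)$. Given these, the summands for $r\neq i,\pi^{-1}(j)$ are independent over $r$, because the pairs $(x_r, y_{\pi(r)})$ are i.i.d. Conditional on typical $(x_i,y_j)$, the conditional means are $1/2$ for false pairs and at least $1/2+c\rho^2$ for true pairs. Typicality means the norms are concentrated and, for true pairs, $x_i^\top$ and $y_{\pi(i)}$ have the right correlation of order $\rho d$; this holds with probability $1-n^{-10}$. Hoeffding's inequality then gives deviations of $\sqrt{n\log n}$ on the full sum over $[n]$.

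To obtain uniformity over $J$, I will write $\Psi_{ij}^J = \Psi_{ij}^{[n]} - \Psi_{ij}^{[n]\setminus J}$. The subtracted term is at most $|[n]\setminus J| \le n/\log^2 n$ in absolute value. The step $\rho^2 \gtrsim 1/\sqrt{\log n}$ is needed so that the gap $c\rho^2 n$ exceeds $n/\log^2 n + O(\sqrt{n\log n})$. This step actually requires only $\rho^2\gg 1/\log^2 n$, so there is slack. I will take $\gamma = 1/2 + c\rho^2/2$, up to the $(1-o(1))$ factor from $|J|$. A union bound over the $n^2$ pairs then shows that every true pair satisfies $\Psi^J \ge \gamma n$ and every false pair satisfies $\Psi^J < \gamma n$, simultaneously for all $J\supseteq I$.

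Given this, the conclusion follows by induction on the while loop. Seeds are always correct, so the current $J$ contains $I$ and $\widetilde\pi|_J = \pi|_J$. A false pair never reaches the threshold. Conversely, while $J\ne[n]$, any $i\notin J$ together with $\pi(i)\notin\widetilde\pi(J)$ passes the threshold, so the loop terminates only when $\widetilde\pi = \pi$. For the running time, each iteration scans $O(n^2)$ candidate pairs. Each $\Psi$ evaluation costs $O(nd)$, or $O(n)$ after precomputing the Gram matrices in $O(n^2d)$ time. With incremental updates of the $\Psi$ values, the total running time is $O(n^3 d)$.
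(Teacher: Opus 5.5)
The step that fails is your claim that, conditional on typical $(x_i,y_j)$, a false pair has conditional mean exactly $\frac12$. Your symmetry argument conditions on $(x_r,y_{\pi(r)})$, so it only shows that each summand has \emph{marginal} probability $\frac12$. To get independence over $r$ you then condition on $(x_i,y_j)$ instead, and under that conditioning the symmetry is lost, because $x_r$ and $y_{\pi(r)}$ are correlated.

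Take $\pi=\mathrm{id}$ and $Q=I_d$. Given $(x_i,y_j)$, the pair $(x_i^\top x_r,\,y_j^\top y_r)$ is exactly bivariate Gaussian with correlation $\theta_{ij}=\rho\,x_i^\top y_j/(\|x_i\|\|y_j\|)$. By Sheppard's formula the conditional mean is therefore $\frac12+\frac1\pi\arcsin\theta_{ij}$. For a false pair, $x_i^\top y_j$ is a sum of $d$ independent products, so over the $n^2$ false pairs $\max\theta_{ij}$ is of order $\rho\sqrt{\log n/d}$. Equivalently, unconditionally $\Psi_{ij}^{[n]}$ fluctuates on the scale $n\rho/\sqrt d$, which can be far larger than the $\sqrt{n\log n}$ you use.

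These biases stay below your threshold $\frac12+c\rho^2/2$ only because $\rho\sqrt{\log n/d}\ll\rho^2$, that is, only by virtue of the hypothesis $\rho^2\ge C\log n/d$ with $C$ large. Your argument never checks this; you invoke that hypothesis only for the true pairs. The repair is to add to your typicality event, for every $i\neq j$, the bound $|x_i^\top y_j|\le \rho d/10$. Bernstein's inequality gives this for each pair with probability $1-2e^{-c\rho^2 d}$, and you then union bound over the $n^2$ pairs. This is exactly \Cref{lmm:p-q-sep}, which shows $\theta_{ij}\le\rho^2/4$ for $i\ne j$ and $\theta_{ii}\ge 3\rho^2/4$. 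The same exact Gaussian computation also makes your Berry--Esseen step for true pairs unnecessary.

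With that repair, the rest of your argument is sound and in one respect simpler than the paper's. For true pairs you use the deterministic bound $\Psi_{ii}^J\ge\Psi_{ii}^{[n]\setminus\{i\}}-|J^c|$ with $|J^c|\le n/\log^2 n$. This is valid because $J\supseteq I$ at every iteration, and it covers all seed sets simultaneously at no probabilistic cost.

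The paper instead argues by contradiction at termination. It sums $\Psi_{ii}^J$ over $i\in J^c$ and compares with an edge-expansion bound for the graph with edges $\{x_i^\top x_r\,y_i^\top y_r\ge 0\}$ (\Cref{lmm:G-expan}). That bound is proved by a union bound over all $J$ with $|J^c|\le n/\log^2 n$, and this union bound is what forces $\rho^2\gtrsim 1/\sqrt{\log n}$. Your subtraction needs only $\rho^2\gg \max\{1/\log^2 n,\sqrt{\log n/n}\}$. For false pairs, the monotonicity bound $\Psi_{ij}^J\le\Psi_{ij}^{[n]\setminus\{i,j\}}$ is the same as the paper's.
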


\begin{proof}
Without loss of generality, assume that $\pi = \id$ and $Q=I_d$. We choose $0<p<q$ and $\delta>0$:
\#\label{eq:def-p-q}
p=\frac12+\frac1\pi\arcsin\left(\frac{\rho^2}{4}\right),
\quad
q=\frac12+\frac1\pi\arcsin\left(\frac{3\rho^2}{4}\right), \quad \delta = \frac{q-p}{2(p+2q)},
\#
where $(1-2\delta)q>(1+\delta)p$. We set the threshold $\gamma = (1+\delta) p$.

We first show that with high probability, no false pair exceeds the threshold $\gamma n$:
\$ 
\prob{\exists \, i\neq j\colon \Psi_{ij}^{[n]\setminus\{i,j\}} \ge \gamma n } \le o(1).
\$
Fix $i\neq j\in[n]$. For independent standard Gaussian vectors $x,z\sim N(0, I_d)$ and $y = \rho x+ \sqrt{1-\rho^2}z$, we define the conditional probability
\#\label{eq:pij} 
p_{ij}\triangleq \prob{x_i^\top x \cdot y_j^\top y \ge 0 \given x_i, y_j}. 
\#
Conditioned on $x_i, y_j$, the events $\{x_i^\top x_r y_j^\top y_r \ge 0\}$ for $r\in [n]\setminus\{i,j\}$ are independent. Therefore, $\Psi_{ij}^{[n]\setminus\{i,j\}}\sim \Bin(n-2, p_{ij})$ conditioned on $x_i, y_j$. 
On the event $\{p_{ij} \le p\}$, this conditional binomial random variable is stochastically dominated by $\Bin(n-2, p)$. Thus, the Chernoff bound yields
\$ 
\prob{\Psi_{ij}^{[n]\setminus\{i,j\}} \ge (1+\delta)np \given x_i, y_j}\cdot \indc{p_{ij}\le p} \le \exp(-c\delta^2 np) \cdot \indc{p_{ij}\le p},
\$  
for all sufficiently large $n$ and $c>0$ is an absolute constant. 
Thus, taking expectations over $x_i, y_j$ on both sides, we have
\$ 
\prob{\left\{\Psi_{ij}^{[n]\setminus\{i,j\}} \ge (1+\delta)np\right\}\cap\left\{p_{ij}\le p \right\}} \le \exp(-c\delta^2 np).
\$
Let $\cE = \{p_{ij} \le p, \; \forall i\neq j\}$. 
\Cref{lmm:p-q-sep} will establish that $\prob{\cE} \ge 1-n^{-\Omega(1)}$ under the assumption that $\rho^2\ge C\log n/d$ for a sufficiently large constant $C$.
Therefore, a union bound over all $\binom{n}{2}$ false pairs $(i,j)$ with $i\neq j$ yields that
\$ 
\prob{\exists \, i\neq j\colon \Psi_{ij}^{[n]\setminus\{i,j\}} \ge (1+\delta)np } & \le \prob{\cE^c} +  \sum_{i\neq j}\prob{\Psi_{ij}^{[n]\setminus\{i,j\}} \ge (1+\delta)np, \; \cE }  \\
&\le n^{-\Omega(1)} + n^2 \exp(-c\delta^2 np) = n^{-\Omega(1)},
\$
where the last equality holds when $\delta^2 \ge C\log n /n$ for a sufficiently large $C>0$ (equivalently, $\rho^2 \ge C\sqrt{\log n /n}$ since $\delta = \Theta(\rho^2)$). 
Henceforth, we assume that $\Psi_{ij}^{[n]\setminus\{i,j\}} < (1+\delta)np$ for all $i\neq j$. 

We prove by induction that $\widetilde{\pi} = \id \mid_J$ throughout the algorithm. At initialization, $J=I$ and $\widetilde{\pi} = \widehat{\pi} = \id\mid_I$ by assumption. Suppose that $\widetilde{\pi} = \id \mid_J$ after $t$ iterations. At iteration $t+1$, suppose the algorithm selects $i,j\notin J$. If $i\neq j$, by monotonicity ($J\subset [n]\setminus\{i,j\}$), we have
\$ 
\Psi_{ij}^J \le \Psi_{ij}^{[n]\setminus\{i,j\}} < (1+\delta)np.
\$
Therefore, if $i$ is added to $J$, we must have $j = i$ and $\widetilde{\pi}(i) = i$ is correct.

Next, we show that the algorithm matches all datapoints so that $\widetilde{\pi} = \id$. Suppose, for contradiction, that the algorithm terminates with $J\subsetneq [n]$ with $|J^c| \le n/\log^2n.$ Then for every $i\in J^c$, $\Psi_{ii}^J < (1+\delta)np$ (otherwise the while-loop would not have stopped). Summing over all $i\in J^c$, 
\begin{align}
e(J, J^c) \triangleq \sum_{i\in J^c} \Psi_{ii}^J < (1+\delta) np |J^c|.
\label{eq:e_J_upper}
\end{align}
Here $e(J, J^c) = \sum_{i\in J^c} \sum_{r\in J} \indc{x_i^\top x_r y_i^\top y_r \ge 0}$ counts the number of edges between $J$ and $J^c$ in the random geometric graph $G$ on $[n]$, where $(i,r) \in E(G)$ if and only if $x_i^\top x_r y_i^\top y_r \ge 0$. 

On the contrary, each edge exists with a constant marginal probability, so \Cref{lmm:G-expan} 
shows the edge expansion property of $G$: with probability $1-o(1)$, for all $J$ with $|J^c| \le n/\log^2n$, 
\begin{align}
e(J, J^c) \ge (1-\delta) q|J||J^c| 
\ge (1-2\delta) nq|J^c|, 
\label{eq:e_J_lower}
\end{align}
where the last inequality holds for all sufficiently large $n$.
Note that $\delta$ was chosen so that $(1-2\delta) q > (1+\delta) p $. 
Thus, we arrive at a contradiction between~\eqref{eq:e_J_upper} and~\eqref{eq:e_J_lower}. Therefore $J = [n]$ and $\widetilde{\pi} = \id$.

\paragraph{Running time.} Initially, there are at most $O(n^2)$ unmatched candidate pairs, and each score sums over at most $n$ seeds. Since computing each pair of inner products $x_i^\top x_r$ and $y_j^\top y_r$ takes $O(d)$ time, computing all initial scores $\{\Phi_{ij}^I\}$ costs at most $O(n^3d)$. During the algorithm, whenever a new seed $r$ is added to $J$, we update the score for each unmatched pair $(i,j)$ by checking whether this new seed contributes one additional term, namely whether $x_i^\top x_r y_j^\top y_{\widetilde\pi(r)} \ge 0$. Since there are at most $O(n^2)$ unmatched pairs and each check takes $O(d)$ time, each iteration costs $O(n^2d)$. There are at most $O(n)$ iterations, so the total update cost is $O(n^3d)$. Together with the initialization cost, the total running time is $O(n^3d)$.
\end{proof}

The following lemma establishes the separation between $p_{ij}$ for $i\neq j$ and $p_{ii}$.
\begin{lemma} \label{lmm:p-q-sep}
Let $p_{ij}$ be the conditioned probability defined in \eqref{eq:pij} for all $i,j\in [n]$, and let $p$ and $q$ be as defined in \eqref{eq:def-p-q}. We consider the event
\$ 
\cE_{\mathrm{sep}} = \left\{p_{ij}\le p \text{ for all }i\neq j, \; p_{ii}\ge q \text{ for all }i\in[n]\right\}.
\$
If $\rho^2\ge C\log (n)/d$ for a sufficiently large constant $C$, then $\prob{\cE_{\mathrm{sep}}}= 1-n^{-\Omega(1)}$.
\end{lemma}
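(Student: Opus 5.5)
We compute $p_{ij}$ explicitly. Throughout we keep the reduction $\pi=\id$, $Q=I_d$, so $y_r=\rho x_r+\sqrt{1-\rho^2}z_r$.

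\emph{Reduction to a sign probability.} Condition on $(x_i,y_j)$ and draw a fresh pair $(x,y)$ with $y=\rho x+\sqrt{1-\rho^2}z$. Then $(x_i^\top x,\, y_j^\top y)$ is a centered bivariate Gaussian with
\[
\var(x_i^\top x)=\|x_i\|^2,\qquad \var(y_j^\top y)=\|y_j\|^2,\qquad \cov(x_i^\top x,\,y_j^\top y)=\rho\, x_i^\top y_j .
\]
For a centered bivariate Gaussian with correlation $r$, $\prob{\xi\eta\ge 0}=\tfrac12+\tfrac1\pi\arcsin r$ (Sheppard's formula). Hence
\[
p_{ij}=\tfrac12+\tfrac1\pi\arcsin\bigl(\rho\,\cos\theta_{ij}\bigr),\qquad \cos\theta_{ij}=\frac{x_i^\top y_j}{\|x_i\|\|y_j\|}.
\]
Because $\arcsin$ is increasing, the event $\cE_{\mathrm{sep}}$ is implied by two conditions: $\rho\cos\theta_{ij}\le \rho^2/4$ for all $i\ne j$, and $\rho\cos\theta_{ii}\ge 3\rho^2/4$ for all $i$. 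Equivalently, we need $\cos\theta_{ij}\le\rho/4$ for false pairs and $\cos\theta_{ii}\ge 3\rho/4$ for true pairs.

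\emph{False pairs.} For $i\ne j$, $x_i$ and $y_j$ are independent standard Gaussian vectors in $\R^d$. So $\cos\theta_{ij}$ is distributed as the first coordinate of a uniform point on the sphere, and $\prob{\cos\theta_{ij}\ge t}\le \exp(-dt^2/2)$. Take $t=\rho/4$. Since $\rho\ge\rho^2\ge C\log n/d$, we get $d\rho^2/32\ge d\rho^4/32\ge C^2\log^2 n/(32 d)$. This route is awkward, so use $\rho^2$ directly: $d t^2/2=d\rho^2/32\ge (C/32)\log n$. A union bound over the $n^2$ pairs then gives failure probability $n^{2-C/32}=n^{-\Omega(1)}$ for $C$ large.

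\emph{True pairs.} Write $x_i^\top y_i=\rho\|x_i\|^2+\sqrt{1-\rho^2}\,x_i^\top z_i$. By $\chi^2$ concentration, $\|x_i\|^2$ and $\|y_i\|^2$ both lie in $d(1\pm\eta)$ with probability $1-2e^{-c\eta^2 d}$. Conditional on $x_i$, the cross term $x_i^\top z_i$ is $N(0,\|x_i\|^2)$, so $|x_i^\top z_i|\le \sqrt{2\|x_i\|^2\cdot 3\log n}$ with probability $1-n^{-3}$. On these events,
\[
\cos\theta_{ii}\ge \frac{\rho d(1-\eta)-\sqrt{6d(1+\eta)\log n}}{d(1+\eta)}\ge \rho\,\frac{1-\eta}{1+\eta}-O\Bigl(\sqrt{\tfrac{\log n}{d}}\Bigr).
\]
Since $\rho\ge\sqrt{C\log n/d}$, the error term is at most $\rho\cdot O(1/\sqrt C)$. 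Choose a small constant $\eta$ (e.g.\ $1/20$) and $C$ large to get $\cos\theta_{ii}\ge 3\rho/4$. For the $\chi^2$ step to fail with probability only $n^{-\Omega(1)}$ per $i$, we need $d\gtrsim\log n$, which follows from $d\ge C\log n/\rho^2\ge C\log n$. A union bound over the $n$ indices completes this part.

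\emph{Main obstacle.} The only delicate point is the constant bookkeeping in the true-pair bound: the fluctuations of the norms and of the cross term must be absorbed into the gap between $\rho$ and $3\rho/4$. The scaling $\rho^2\gtrsim\log n/d$ is exactly what makes the cross term, of order $\sqrt{\log n/d}$, small relative to $\rho$. The remaining steps follow directly once the arcsine formula is applied.
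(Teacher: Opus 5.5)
Your proof is correct and follows the paper's route. Both arguments use Sheppard's formula to reduce the claim to showing that the conditional correlation $\rho x_i^\top y_j/(\|x_i\|\|y_j\|)$ is at most $\rho^2/4$ for $i\neq j$ and at least $3\rho^2/4$ for $i=j$, and then finish with a union bound. The only difference is the concentration tools: you use the spherical-cap bound for false pairs and a $\chi^2$ bound plus a conditional Gaussian tail on $x_i^\top z_i$ for true pairs, while the paper uses norm concentration with Bernstein's inequality in both cases. Both give the same requirement $\rho^2 d\gtrsim\log n$.
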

\begin{proof}
For $i,j\in[n]$, we define 
\$
\xi_i \triangleq \frac{x_i^\top x}{\|x_i\|}, \quad \zeta_j \triangleq \frac{y_j^\top y}{\|y_j\|}, \quad\theta_{ij}\triangleq \frac{\rho x_i^\top y_j}{\|x_i\|\|y_j\|}.\$ 
Conditioned on $x_i$ and $y_j$, the pair $(\xi_i,\zeta_j)$ is Gaussian with mean zero, where
\$
\var(\xi_i\mid x_i,y_j)=1,
\qquad
\var(\zeta_j\mid x_i,y_j)=1,
\$
and
\[
\cov(\xi_i,\zeta_j\mid x_i,y_j)
=
\frac{x_i^\top \, \E[xy^\top]\, y_j}{\|x_i\|\,\|y_j\|}
=\frac{\rho x_i^\top y_j}{\|x_i\|\,\|y_j\|}
= \theta_{ij}.
\]
Hence, by Sheppard's formula \cite{sheppard1899iii} for a standard bivariate normal with correlation $\theta_{ij}$,
\$
p_{ij} = \prob{\xi_i\zeta_j>0\mid x_i,y_j}
=
\frac12+\frac{\arcsin(\theta_{ij})}{\pi}.
\$
Since $\arcsin$ is strictly increasing on $[-1,1]$, it suffices to show that with probability
$1-o(1)$,
\$
\theta_{ij}\le  \frac{\rho^2}{4},
\quad
\theta_{ii}\ge \frac{3\rho^2}{4},\quad \text{for all } i\neq j\in [n].
\$

Fix constants $\delta=1/20$ and $\tau=|\rho|/10$. By Gaussian norm concentration, we have
\$ 
\prob{\left|\|x_i\| -\sqrt{d}\right| \ge \delta\sqrt{d} } \le 2 \exp(-cd), \quad \prob{\left|\|y_j\| -\sqrt{d}\right| \ge \delta\sqrt{d} } \le 2 \exp(-cd).
\$
If $i\neq j$, we have $\E[x_i^\top y_j] = 0$ since $x_i$ and $y_j$ are independent. Moreover, $x_i^\top y_j = \sum_{k = 1}^d x_{ik} y_{jk}$ is a sum of independent sub-exponential random variables. Thus, Bernstein's inequality gives
\$ 
\prob{\left|x_i^\top y_j\right| > \tau d} \le 2\exp(-c\rho^2 d).
\$
On the event  
\$
\left|\|x_i\| -\sqrt{d}\right| \le \delta\sqrt{d}, \quad \left|\|y_j\| -\sqrt{d}\right| \le \delta\sqrt{d}, \quad \left|x_i^\top y_j\right| \le \tau d,
\$ we have 
\$
\theta_{ij} \le \frac{\rho\tau d}{(1-\delta)^2 d} \le \frac{\rho^2}{4}.
\$

If $i = j$, we have $\E[x_i^\top y_i] = \rho\E[x_i^\top x_i] + \sqrt{1-\rho^2}\E[x_i^\top z_i] = \rho d$. Since $x_i^\top y_i$ is a sum of independent sub-exponential random variables, Bernstein's inequality yields that
\$ 
\prob{\left|x_i^\top y_i - \rho d\right| > \tau d} \le 2\exp(-c\rho^2 d).
\$
On the event  
\$
\left|\|x_i\| -\sqrt{d}\right| \le \delta\sqrt{d}, \quad \left|\|y_i\| -\sqrt{d}\right| \le \delta\sqrt{d}, \quad \left|x_i^\top y_i - \rho d\right| \le \tau d,
\$ we have 
\$
\theta_{ii} \ge \frac{(\rho^2 - |\rho|\tau) d}{(1+\delta)^2 d} \ge \frac{3\rho^2}{4}.
\$

In summary, for each false pair $(i,j)$, failure occurs with probability at most
$Ce^{-c\rho^2d}$, and the same holds for each true pair $(i,i)$. A union bound over all
$n(n-1)$ false pairs and all $n$ true pairs yields
\[
\prob{\exists\, i\neq j\colon \theta_{ij}>\rho^2/4
\text{ or } \exists\, i\colon \theta_{ii}<3\rho^2/4}
\le C n^2 e^{-c\rho^2d}.
\]
Hence, if $\rho^2\ge C\log (n)/d$ for $C$ large enough, the error probability is at most $C n^2 e^{-c\rho^2d}=n^{-\Omega(1)}$.
\end{proof}

The following lemma shows the edge expansion property of a random geometric graph $G$.
\begin{lemma}\label{lmm:G-expan}
Let $G$ be the random geometric graph on $[n]$, where each node $i\in[n]$ is independently associated with a pair $(x_i, y_i)$, with $x_i, z_i\sim N(0,I_d)$ independently and $y_i = \rho x_i + \sqrt{1-\rho^2}z_i$. There is an edge between nodes $i$ and $r$ if and only if $x_i^\top x_r y_i^\top y_r \ge 0$. Let $e(J, J^c)$ denote the number of edges between $J$ and $J^c$ in $G$. Suppose that $\rho^2 \ge \sqrt{C /\log n}$ for a sufficiently large constant $C$. Then, with probability at least $1-o(1)$, for all $J\subset [n]$ with $|J^c| \le n/\log^2 n$, \$
e(J, J^c) \ge (1-\delta)q|J||J^c|,
\$
where $q$ and $\delta$ are defined in \eqref{eq:def-p-q}.
\end{lemma}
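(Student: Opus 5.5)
The plan is to prove the expansion bound in two stages: first show that the edge indicators between a fixed vertex and everyone else have conditional success probability at least $q$ with high probability, then apply a Chernoff bound and a union bound over all small sets $J^c$.

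\textbf{Stage 1: reduction to conditional probabilities.} Write $p_{ii}=\mathbb{P}\{x_i^\top x\, y_i^\top y\ge 0\mid x_i,y_i\}$ as in \eqref{eq:pij}. By the proof of \Cref{lmm:p-q-sep}, the event $\{p_{ii}\ge q\ \forall i\}$ holds with probability $1-n^{-\Omega(1)}$, provided $\rho^2 d\gtrsim\log n$. If instead I only want to assume $\rho^2\ge\sqrt{C/\log n}$, I will run the same Bernstein argument with failure probability $e^{-c\rho^2 d}$ per vertex. Rather than union bounding, I will call a vertex \emph{good} if $p_{ii}\ge q$. The good events are independent across vertices, and each fails with probability $\eta=e^{-c\rho^2 d}$, which is $o(1)$ once $d$ is large. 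Because an isolated bad vertex spoils everything, the cleanest route is the first one: the paper's standing assumption $\rho^2\ge C\log n/d$ from \Cref{thm:exact} makes all vertices good with high probability. I will therefore condition on $\mathcal{E}=\{p_{ii}\ge q\ \forall i\}$.

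\textbf{Stage 2: counting edges across a fixed cut.} Fix $J^c=S$ with $|S|=s\le n/\log^2 n$. Conditionally on $(x_i,y_i)_{i\in S}$, the edges from different $r\in J$ are not independent. Instead, I count edges from the side of $S$: for each $i\in S$, the indicators $\{\mathbf 1\{x_i^\top x_r y_i^\top y_r\ge0\}\}_{r\in J}$ are independent Bernoulli$(p_{ii})$ given $(x_i,y_i)$. Summing over $i\in S$ introduces dependence, since two vertices $i,i'\in S$ share the same $(x_r,y_r)$. To handle this, I condition instead on $\{(x_i,y_i)\}_{i\in S}$ and view $e(J,S)=\sum_{r\in J} W_r$, where
$$W_r=\sum_{i\in S}\mathbf 1\{x_i^\top x_r\,y_i^\top y_r\ge0\}\in[0,s].$$
These $W_r$ are i.i.d.\ given the $S$-data and have mean $\sum_{i\in S}p_{ii}\ge qs$. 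Hoeffding's inequality then gives
$$\mathbb{P}\{e(J,S)<(1-\delta)q|J|s\}\le\exp\!\left(-2\delta^2q^2|J|s^2/s^2\right)=\exp(-c\delta^2 n).$$
This is uniform in $s$, which is fortunate.

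\textbf{Stage 3: union bound, the main obstacle.} The union bound over $S$ costs $\sum_{s\le n/\log^2 n}\binom ns\le\exp\big(O(n\log\log n/\log^2 n)\big)$. It therefore suffices that $\delta^2 n\gg n\log\log n/\log^2 n$, i.e.\ $\delta\gg\sqrt{\log\log n}/\log n$. Since $\delta=\Theta(\rho^2)$ and $\rho^2\ge\sqrt{C/\log n}$, this holds with room to spare, and this is where the hypothesis enters.

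The subtle point is the dependence between Stage 1 (requiring all $p_{ii}\ge q$) and the bound on $e(J,S)$. The Hoeffding step only needs $p_{ii}\ge q$ for $i\in S$, which is measurable with respect to the conditioning data. So I will write the failure probability as $\mathbb{P}(\mathcal{E}^c)$ plus the sum over $S$ of the conditional Hoeffding bound on $\mathcal{E}$, giving total failure probability $o(1)$. If one insists on avoiding the $\rho^2 d\gtrsim\log n$ assumption, the fallback is to note that bad vertices in $S$ contribute at most a $\eta$-fraction loss. I would handle that via a binomial tail on the number of bad vertices in $S$, but I expect the paper to simply reuse \Cref{lmm:p-q-sep}.
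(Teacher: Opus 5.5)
Your main route is the paper's: condition on $\{x_i,y_i\}_{i\in J^c}$, write $e(J,J^c)=\sum_{r\in J}M_r$ with conditionally i.i.d.\ $M_r\in[0,|J^c|]$, and apply Hoeffding on the $J^c$-measurable event $\{p_{ii}\ge q\ \forall i\in J^c\}\supseteq\mathcal{E}$ (rather than conditioning on $\mathcal{E}$ itself). Then take a union bound over $J$ (your $\binom{n}{s}$ count is sharper than the paper's $n^k\le e^{n/\log n}$, but either suffices), and absorb $\mathbb{P}(\mathcal{E}^c)$ via \Cref{lmm:p-q-sep}, which, as you correctly flag, uses the standing assumption $\rho^2\ge C\log n/d$ that the lemma does not list. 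Drop the ``$\eta$-fraction'' fallback, though: the claim quantifies over $J^c=\{i\}$, so a single vertex with $p_{ii}$ well below $q$ already violates it, and that extra assumption cannot be avoided.
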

\begin{proof}
The statement trivially holds when $J^c = \emptyset$. Fix a nonempty set $J\subset[n]$ with $|J^c| = k$ and $1\le k\le n/\log^2n$. We have  
\$ 
e(J, J^c) = \sum_{r\in J} \sum_{i\in J^c} \indc{x_i^\top x_r y_i^\top y_r \ge 0} = \sum_{r\in J} M_r,
\$
where $M_r \triangleq \sum_{i\in J^c} \indc{x_i^\top x_r y_i^\top y_r\ge 0}$. Conditional on $\{x_i, y_i\}_{i\in J^c}$, by definition, we have
\$ 
\E\left[e(J, J^c) \given \{x_i, y_i\}_{i\in J^c} \right] = \sum_{r\in J} \E\left[M_r \given \{x_i, y_i\}_{i\in J^c} \right] = \sum_{r\in J}\sum_{i\in J^c} p_{ii} = (n-k)\sum_{i\in J^c} p_{ii}.
\$
Let $\cE_{J^c} = \{p_{ii} \ge q, \forall i \in J^c\}$. Then it holds that
\$ 
\E\left[e(J, J^c) \given \{x_i, y_i\}_{i\in J^c} \right] \cdot\Ind_{\cE_{J^c}} = (n-k)\sum_{i\in J^c} p_{ii} \cdot\Ind_{\cE_{J^c}} \ge (n-k)kq \cdot\Ind_{\cE_{J^c}}.
\$
Moreover, conditional on $\{x_i, y_i\}_{i\in J^c}$, the random variables $\{M_r\}$ for $r\in J$ are independent. Each $M_r$ satisfies $0 \le M_r \le k$. Thus, Hoeffding's inequality gives that
\$ 
\prob{e(J, J^c) \le (1-\delta) (n-k)k q \given \{x_i, y_i\}_{i\in J^c}} \cdot\Ind_{\cE_{J^c}} & \le \exp\left(-\frac{2(\delta (n-k)k q)^2}{(n-k)k^2}\right) \cdot\Ind_{\cE_{J^c}} \\
& = \exp\left(-2\delta^2(n-k)q^2\right)\cdot\Ind_{\cE_{J^c}}.
\$
Therefore, taking expectations over $\{x_i, y_i\}_{i\in J^c}$ on both sides, we have
\$ 
\prob{e(J, J^c) \le (1-\delta) (n-k)k q, \; \cE_{J^c}} \le \exp\left(-2\delta^2(n-k)q^2\right).
\$
Let $\cE = \{p_{ii} \ge q, \forall i \in [n]\}$. Then for every fixed $J$, we have $\cE \subset \cE_{J^c}$; hence 
\$ 
\prob{e(J, J^c) \le (1-\delta) (n-k)k q, \; \cE} \le \exp\left(-2\delta^2(n-k)q^2\right).
\$
By \Cref{lmm:p-q-sep}, $\prob{\cE}\ge \prob{\cE_{\mathrm{sep}}} \ge 1-n^{-\Omega(1)}$.
Since there are at most $n^k$ such $J$ with $|J|=k$, taking a union bound over all $J$ with $|J^c|\le n/\log^2 n$, we have
\$ 
& \prob{\exists\, J \text{ with } |J^c| \le n/\log^2 n \colon e(J, J^c) \le (1-\delta) q|J||J^c| } \\
& \quad \le \prob{\cE^c} + \sum_{k=1}^{n/\log^2 n}n^k \exp\left(-2\delta^2(n-k)q^2\right) \\
& \quad\le n^{-\Omega(1)} + \frac{n}{\log^2 n}\exp\left(-\delta^2nq^2 + n/\log n\right) = n^{-\Omega(1)},
\$
where the last equality holds when $\delta^2\ge C/\log n$ for a sufficiently large $C>0$ (or equivalently, $\rho^2 \ge \sqrt{C /\log n}$ since $\delta = \Theta(\rho^2)$). 
\end{proof}

\section{Quadratic Assignment Estimator} \label{sect:qap} 
In this section, we prove the information-theoretic condition in~\Cref{thm:IT}, by analyzing the  quadratic assignment problem
(QAP) estimator~\eqref{eq:def-qap}. Recall that $A = XX^\top$ and $B = YY^\top$.

\begin{proof}[Proof of~\Cref{thm:IT}]
Without loss of generality, we assume the true permutation is identity, that is, $\Pi=I$. Specifically, we prove the result under the condition
\#\label{eq:qap-cond} 
d\ge C\log n, \quad \frac{1-\rho^2}{\rho^2} < \frac{cnd}{(n + d) \log n}, \quad \frac{(1-\rho^2)^2}{\rho^4} < \frac{cn}{\log n},
\#
for some absolute constants $C,c>0$. The conditions in \eqref{eq:qap-cond} imply those in \eqref{eq:IT_sufficient}. For a permutation matrix $P\in \fS_n$, we define
\[
  D_P \triangleq A - P A P^\top,
  \quad
  \Delta_P \triangleq \iprod{D_P}{B}.
\]
We now show that, under condition~\eqref{eq:qap-cond}, we have $\Delta_P>0$ for every $P\neq I$ with high probability. Hence, the QAP estimator exactly recovers the true permutation. By definition, 
\$B = YY^\top =  \rho^2 X X^\top + \rho \sqrt{1-\rho^2}(X Z^\top + Z X^\top) + (1-\rho^2) Z Z^\top.\$
We decompose $\Delta_P$ according to these three terms in $B$. First, since $\Fnorm{PAP^\top} = \Fnorm{A}$, we have $\Fnorm{D_P}^2 = 2\Fnorm{A}^2 - 2 \iprod{P A P^\top}{A}$. Thus, for the following \emph{signal} term,
\$ 
\iprod{D_P}{XX^\top} = \Fnorm{A}^2 -  \iprod{P A P^\top}{A} = \Fnorm{D_P}^2/2.
\$
Next, by the symmetry of $D_P$, the  following \emph{linear-noise} term can be written as 
\$
\iprod{D_P}{X Z^\top + Z X^\top} = 2 \Tr(D_P X Z^\top) = 2 \iprod{D_P X}{Z}.\$
Finally, since $\Tr(D_P) = \Tr(A) - \Tr(P A P^\top) = 0$, the following \emph{quadratic-noise} term satisfies 
\$\iprod{D_P}{Z Z^\top} = \iprod{D_P}{Z Z^\top - d I_n}.
\$
Combining them together, we have
\#\label{eq:deltap} 
\Delta_P \ge \rho^2 \fnorm{D_P}^2/2
  - 2 \rho \sqrt{1 - \rho^2} \left|\iprod{D_P X}{Z}\right|
  - (1 - \rho^2) \left|\iprod{D_P}{Z Z^\top - d I_n}\right|.
\#

\Cref{lmm:qap-signal,lmm:linear-noise,lem:quad-noise} provide a lower bound on the signal term and upper bounds on the noise terms, respectively. In particular, we work on the event 
\begin{equation}\label{eq:Eop}
  \cE \triangleq \left\{\norm{X}^2 \le C(n + d)\right\},
\end{equation}
which occurs with probability at least $1 - e^{-c(n+d)}$ by standard Gaussian matrix theory~\cite{vershynin2010nonasym}. For $P\in\fS_n$, we define $m(P) = |\{i \colon P(i) \ne i\}|$ as the number of indices at which $P$ differs from the true permutation. For $P$ with $m(P)=m$, whenever the events $\cA_P\cap\cL_P\cap\cQ_P$ in \Cref{lmm:qap-signal,lmm:linear-noise,lem:quad-noise} occur, by \eqref{eq:deltap}, we have
\$ 
\Delta_P & \ge \rho^2 \fnorm{D_P}^2/2 - \rho^2 \fnorm{D_P}^2/16 - C (1 - \rho^2)(n + d) m \log n - \rho^2 \fnorm{D_P}^2/16 - C \frac{(1-\rho^2)^2}{\rho^2} d  m \log n \\
& \ge \rho^2 \fnorm{D_P}^2/4 + c\rho^2 dmn  - C (1 - \rho^2)(n + d) m \log n - C \frac{(1-\rho^2)^2}{\rho^2} d  m \log n \\
& \ge \rho^2 \fnorm{D_P}^2/4 + c'\rho^2 dmn >0,
\$
where the last inequality follows from condition \eqref{eq:qap-cond}. Therefore,
\$ 
\prob{\cE\cap\{\Delta_P \le 0 \}} & \le \prob{\cE\cap \cA_P^c} + \prob{\cE\cap \cL_P^c}+\prob{\cE\cap \cQ_P^c} \\
& \le e^{-c' n d} + e^{-c'm nd/(n+d)} + 4n^{-2m}.
\$
Finally, taking a union bound over all $P\in \fS_n$, and noting that there are at most $n^m$ such permutations with $m$ mismatches for each $m\in\{2, \ldots, n\}$ yields that
\$ 
\prob{\exists P\neq I\colon  \Delta_P \le  0} & \le \prob{\cE^c} + \sum_{m=2}^n n^m \left(e^{-c' n d} + e^{-c'mnd/(n+d)} + 4n^{-2m} \right) \\
& \le e^{-c(n+d)} + e^{-c n d} - e^{-cnd/(n+d)}  + \frac{4}{n(n-1)} = o(1),
\$
where the last inequality holds since $d\ge C \log n$ and $nd/(n+d) \ge \min\{n,d\} \ge C \log n$ with a sufficiently large $C>0$.
\end{proof}

\subsection{Signal Lower Bound}
We begin with several preliminaries regarding the signal term. For a permutation $P\in\fS_n$, we define $M(P) = \{i \colon P(i) \ne i\}$ as its mismatch set; then $m(P) = |M(P)|$. Given a permutation $P\in \fS_n$ and a subset $S\subset [n]$, let $P(S) = \{P(i) \colon i\in S\}$. The following lemma shows that one can extract a linear-size subset $S\subset M(P)$ such that $S$ and $P(S)$ are disjoint.
\begin{lemma}\label{lem:cycle}
Let $P$ be a permutation on $[n]$ with $m(P) \ge 2$. Then there exists $S \subset M(P)$ such that
\[
S \cap P(S) = \emptyset
  \quad\text{and}\quad
  |S| \ge m(P) /3.
\]
Moreover, $P(S)\subset M(P)$ and $|S \cup P(S)| = 2|S|$.
\end{lemma}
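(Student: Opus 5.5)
We work cycle by cycle. Since $P$ fixes every point outside $M(P)$, it maps $M(P)$ bijectively onto itself, so $P(S)\subset M(P)$ holds automatically for any $S\subset M(P)$. Moreover, $|P(S)|=|S|$ because $P$ is injective, so the disjointness $S\cap P(S)=\emptyset$ already gives $|S\cup P(S)|=2|S|$. Hence the only real task is to build $S\subset M(P)$ with $S\cap P(S)=\emptyset$ and $|S|\ge m(P)/3$.

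Decompose $P$ restricted to $M(P)$ into its disjoint cycles $(c_1\,c_2\,\cdots\,c_L)$, where $P(c_t)=c_{t+1}$ with indices taken mod $L$. Every such cycle has length $L\ge 2$, since fixed points are excluded from $M(P)$. In each cycle we pick the alternate elements $S_{\mathrm{cyc}}=\{c_1,c_3,\ldots,c_{2\lfloor L/2\rfloor-1}\}$.

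Within a cycle, $P(S_{\mathrm{cyc}})=\{c_2,c_4,\ldots,c_{2\lfloor L/2\rfloor}\}$, where the last index $2\lfloor L/2\rfloor$ is at most $L$. This set consists of even-indexed elements only. When $L$ is odd, $c_L$ is not chosen, so the wrap-around $c_L\mapsto c_1$ never occurs. Hence $S_{\mathrm{cyc}}\cap P(S_{\mathrm{cyc}})=\emptyset$.

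Different cycles are $P$-invariant and pairwise disjoint, so letting $S$ be the union of the $S_{\mathrm{cyc}}$ over all cycles preserves $S\cap P(S)=\emptyset$.

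For the size bound, $|S_{\mathrm{cyc}}|=\lfloor L/2\rfloor\ge L/3$ for every $L\ge2$. The worst case is $L=3$, where $\lfloor 3/2\rfloor=1=3/3$; for even $L$ the count is exactly $L/2$, and for odd $L\ge5$ it is $(L-1)/2\ge L/3$. Summing over cycles gives $|S|\ge m(P)/3$.

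The only step needing care is the odd-cycle wrap-around, and it is handled by leaving out the last element of each odd cycle. The rest is bookkeeping.
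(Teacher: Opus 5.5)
Your proof is correct and is the same argument as the paper's: restrict $P$ to $M(P)$, decompose into cycles of length at least $2$, take alternate elements in each cycle to obtain $\lfloor \ell/2\rfloor \ge \ell/3$ points, no two of them consecutive, and take the union over cycles. Your explicit treatment of the odd-cycle wrap-around, and your derivations of $P(S)\subset M(P)$ and $|S\cup P(S)|=2|S|$, fill in details the paper leaves implicit.
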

\begin{proof}
Note that $P$ maps $M(P)$ into itself, so $P|_{M(P)}$ is a permutation of $M(P)$ with no fixed points. Hence, $P|_{M(P)}$ decomposes into cycles of at least $2$. On each cycle of length $\ell$, choose every other vertex; this gives $\lfloor \ell/2 \rfloor \ge \ell/3$ vertices, no two of which are adjacent along the cycle. Taking the union of these selected vertices over all cycles gives a set $S$ satisfying $|S| \ge m/3$, $P(S) \subset M(P),$ 
$S \cap P(S) = \emptyset$, and $|S \cup P(S)|=2|S|$, as required. 
\end{proof}

The next lemma states the concentration bounds for weighted chi-square random variables.
\begin{lemma}\label{lem:chisq}
Let $a_1,\ldots, a_m$ be nonnegative constants. For $\iid$ $\xi_i\sim N(0,1)$, let $\zeta = \sum_{i=1}^m a_i(\xi_i^2 - 1)$. Then for any $t>0$,
\#\label{eq:chi2-lm} 
\Prob\left(\zeta \le - 2 \sqrt{\sum_{i=1}^m a_i^2 t} \right) \le e^{-t}.
\#
Moreover, let $K \succeq 0$ be a deterministic $r \times r$ matrix and let $g_1, \dots, g_s \stackrel{\mathrm{i.i.d.}}{\sim} N(0, I_r)$. Then
\begin{equation}\label{eq:chisq}
  \Prob\left(\sum_{i=1}^s g_i^\top K g_i  \le  \frac{s \Tr(K)}{2} \right)
 \le  \exp \left(-c s  \frac{\Tr(K)}{\norm{K}}\right).
\end{equation}
\end{lemma}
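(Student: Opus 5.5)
The first bound \eqref{eq:chi2-lm} is the lower-tail half of the Laurent--Massart inequality, and I will prove it directly with a Chernoff argument. For $\lambda>0$ and $\xi\sim N(0,1)$, the moment generating function gives
\[
\log\E\bigl[e^{-\lambda a_i(\xi_i^2-1)}\bigr]=\lambda a_i-\tfrac12\log(1+2\lambda a_i).
\]
Since $a_i\ge0$, every $\lambda>0$ is admissible, and the elementary inequality $-\tfrac12\log(1+u)\le -u/2+u^2/4$ for $u\ge0$ bounds this by $\lambda^2a_i^2$. Summing over $i$ gives $\log\E e^{-\lambda\zeta}\le \lambda^2\sum_i a_i^2$. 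Markov's inequality then yields
\[
\Prob(\zeta\le -x)\le \exp\bigl(-\lambda x+\lambda^2\textstyle\sum_i a_i^2\bigr).
\]
Choosing $\lambda=x/(2\sum_i a_i^2)$ makes the exponent $-x^2/(4\sum_i a_i^2)$. Setting $x=2\sqrt{t\sum_i a_i^2}$ gives exactly $e^{-t}$. The lower tail is clean, with no linear term, precisely because the log-mgf of the negated centered chi-square is sub-Gaussian on the whole positive half-line.

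For \eqref{eq:chisq}, I reduce to the first part by diagonalization. Write $K=\sum_{k=1}^r \lambda_k u_ku_k^\top$ with $\lambda_k\ge0$. By rotation invariance, the variables $\xi_{ik}=u_k^\top g_i$ are i.i.d.\ $N(0,1)$ over $i\in[s]$ and $k\in[r]$. Then
\[
\sum_i g_i^\top Kg_i-s\Tr(K)=\sum_{i,k}\lambda_k(\xi_{ik}^2-1),
\]
which is a weighted centered chi-square with $m=sr$ terms and weights $a_{(i,k)}=\lambda_k$. The event $\sum_i g_i^\top K g_i\le s\Tr(K)/2$ is the event $\zeta\le -s\Tr(K)/2$.

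I apply \eqref{eq:chi2-lm} with $t$ chosen so that $2\sqrt{t\sum a^2}=s\Tr(K)/2$. Here $\sum a^2=s\sum_k\lambda_k^2=s\Fnorm{K}^2$, so $t=s\Tr(K)^2/(16\Fnorm{K}^2)$. The only remaining step is the inequality $\Fnorm{K}^2=\sum\lambda_k^2\le \norm{K}\sum\lambda_k=\norm{K}\Tr(K)$, which uses $\lambda_k\ge0$. It gives $t\ge s\Tr(K)/(16\norm{K})$, which is the claim with $c=1/16$. If $K=0$ the statement is vacuous, so that case can be excluded.

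There is no real obstacle here. The only point needing care is keeping the $\lambda_k$ nonnegative, since positive semidefiniteness is used twice: once for the validity of the mgf bound for all $\lambda$, and once for the Frobenius-versus-trace inequality.
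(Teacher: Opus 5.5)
Your proof is correct and takes the same route as the paper. For \eqref{eq:chi2-lm} the paper simply cites Laurent--Massart, while you reprove its lower tail directly via the bound $\lambda a-\tfrac12\log(1+2\lambda a)\le\lambda^2a^2$; your diagonalization argument for \eqref{eq:chisq}, including the step $\Fnorm{K}^2\le\norm{K}\Tr(K)$, is exactly the paper's, and it yields the explicit constant $c=1/16$. One degenerate case needs the same treatment you gave $K=0$: when all $a_i=0$, your choice of $\lambda$ is undefined and \eqref{eq:chi2-lm} as stated actually fails ($\Prob(0\le 0)=1$), so that case must be excluded, in the paper's statement as well.
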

\begin{proof}
The concentration bound \eqref{eq:chi2-lm} is the Laurent--Massart inequality \cite{laurent2000adaptive}.
Next, we diagonalize $K = U \diag(\lambda_1, \dots, \lambda_r) U^\top$ and define $h_{ij} = (U^\top g_i)_j \stackrel{\mathrm{i.i.d.}}{\sim} N(0,1)$. Then 
\$
\sum_{i=1}^s g_i^\top K g_i = \sum_{i=1}^s \sum_{j=1}^r \lambda_j h_{ij}^2,
\$
which is a sum of independent weighted $\chi^2_1$ random variables, with weight vector $\lambda$ (each weight $\lambda_j$ repeated $s$ times). Its mean equals $s \Tr(K)$. Applying \eqref{eq:chi2-lm} to the deviation level $s\Tr(K)/2$ yields a tail bound with exponent of order $s \Tr(K)^2 / \fnorm{K}^2$. Since $K \succeq 0$ yields $\fnorm{K}^2 \le \norm{K} \Tr(K)$, we may lower bound the exponent by $s\Tr(K) / \norm{K}$, which gives \eqref{eq:chisq}.
\end{proof}

We are now ready to lower bound the signal.
\begin{lemma}\label{lmm:qap-signal}
Recall $\cE$ from \eqref{eq:Eop}.  There are absolute constants $c,c' > 0$ such that if we  define the event for every $P \in \fS_n$,
\$
\cA_P \triangleq \left\{\Fnorm{D_P}^2 > cm(P)nd\right\},
\$
then it holds that
$\prob{\cE\cap\cA_P^c} \le e^{-c' n d} + e^{-c'm(P)nd/(n+d)}$.
\end{lemma}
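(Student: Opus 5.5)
The plan is to lower bound $\Fnorm{D_P}^2$ by a sum over a well-chosen rectangular block of entries. On that block the entries are bilinear forms in two \emph{independent} families of Gaussian vectors, so the tail bound \eqref{eq:chisq} of \Cref{lem:chisq} applies after conditioning on one family.

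\textbf{Choice of the block.} Write $m=m(P)$. Recall that $(D_P)_{ij} = x_i^\top x_j - x_{P(i)}^\top x_{P(j)}$, where $x_k$ denotes the $k$th row of $X$.
\begin{itemize}
\item First apply \Cref{lem:cycle} to obtain $S\subset M(P)$ with $S\cap P(S)=\emptyset$ and $|S|\ge m/3$.
\item Take any $S_1\subset S$ with $|S_1| = \min\{|S|, \lceil n/10\rceil\}$, so that $|S_1|\ge m/30$.
\item Let $T$ be the set of $j\in[n]$ with $j\notin S_1\cup P(S_1)$ and $P(j)\notin S_1\cup P(S_1)$. At most $4|S_1|\le 0.4n+4$ indices are excluded, so $|T|\ge n/2$ for large $n$.
\end{itemize}
Since $\Fnorm{D_P}^2\ge \sum_{i\in S_1}\sum_{j\in T}(D_P)_{ij}^2$, it suffices to lower bound this block sum.

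\textbf{Rewriting as a quadratic form.} For $i\in S_1$ set $g_i=(x_i,x_{P(i)})\in\R^{2d}$, and for $j\in T$ set $w_j=(x_j,-x_{P(j)})\in\R^{2d}$. Then $(D_P)_{ij}=\iprod{g_i}{w_j}$, and
\[
\sum_{i\in S_1}\sum_{j\in T}(D_P)_{ij}^2=\sum_{i\in S_1} g_i^\top K g_i,\qquad K=\sum_{j\in T} w_jw_j^\top\succeq 0 .
\]
The indices in $S_1\cup P(S_1)$ are all distinct, so the $g_i$ are i.i.d.\ $N(0,I_{2d})$. They are also independent of $\{w_j\}_{j\in T}$, because by construction of $T$ the two index sets are disjoint.

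\textbf{Controlling $K$.} Condition on $\{x_k\}_{k\notin S_1\cup P(S_1)}$, which fixes $K$.
\begin{itemize}
\item The trace $\Tr(K)=\sum_{j\in T}(\norm{x_j}^2+\norm{x_{P(j)}}^2)$ is a sum of chi-squares with at least $d|T|$ degrees of freedom (with multiplicity at most $2$ per coordinate). By \eqref{eq:chi2-lm}, $\Tr(K)\ge d|T|\ge nd/2$ except with probability $e^{-c'nd}$.
\item For the operator norm, $K=W^\top W$, where the rows of $W$ are the $w_j$. Each $W$ is formed from at most two row-submatrices of $X$, so $\norm{K}\le 4\norm{X}^2\le C(n+d)$ on $\cE$. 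This bound is determined by the conditioned variables together with $\cE$; the one care point is to condition on $\cE$ being a function of all of $X$. To avoid this, I will instead bound $\norm{K}$ by $4\norm{X_{T\cup P(T)}}^2$, which is measurable with respect to the conditioning, and intersect with $\cE$ at the end.
\end{itemize}

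\textbf{Conclusion.} Applying \eqref{eq:chisq} with $s=|S_1|$ gives
\[
\sum_{i\in S_1} g_i^\top Kg_i\ \ge\ \frac{|S_1|\Tr(K)}{2}\ \ge\ \frac{m}{30}\cdot\frac{nd}{4}\ =\ c\,m\,nd,
\]
except with probability at most $\exp\bigl(-c|S_1|\,nd/(C(n+d))\bigr)\le e^{-c'mnd/(n+d)}$. Adding the trace failure probability $e^{-c'nd}$ yields $\prob{\cE\cap\cA_P^c}\le e^{-c'nd}+e^{-c'mnd/(n+d)}$.

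\textbf{Main obstacle.} The only delicate point is the block construction, i.e.\ securing simultaneously:
\begin{itemize}
\item independence between the row and column vectors;
\item $|T|=\Omega(n)$, even when $m$ is close to $n$ and there are few fixed points;
\item $|S_1|=\Omega(m)$.
\end{itemize}
Truncating $S$ to size $\min\{|S|,n/10\}$ is what handles all three at once.
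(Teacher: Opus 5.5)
Your proof is correct, and it takes a genuinely different route from the paper in how the block of $D_P$ is chosen.

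The paper splits into two regimes. For $m(P)\le n/2$ it uses the block $S\times F$, with $F$ the fixed points of $P$, so that $(D_P)_{ij}=\iprod{X_i-X_{P(i)}}{X_j}$: the rows are i.i.d.\ $N(0,2I_d)$, the columns are i.i.d.\ and independent of them, and the trace is a plain $\chi^2_{|F|d}$. For $m(P)>n/2$, where fixed points may be scarce, it halves $S$ and works inside the $S\times S$ block via $U_1U_2^\top-V_1V_2^\top$.

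You avoid the case split. You truncate $S$ to at most $\lceil n/10\rceil$ rows and take as columns every $j$ with $j,P(j)\notin S_1\cup P(S_1)$, which gives $|T|\ge n/2$ uniformly in $m$. Your column vectors $w_j=(x_j,-x_{P(j)})$ are no longer independent of one another. That is harmless: you condition on them, and \eqref{eq:chisq} only sees $\Tr(K)$ and $\norm{K}$. The trace is then a weighted chi-square with weights in $\{1,2\}$; alternatively you can drop the $x_{P(j)}$ terms and get a plain $\chi^2_{|T|d}$. The norm is controlled by $\norm{W}^2\le\norm{X_T}^2+\norm{X_{P(T)}}^2\le 2\norm{X}^2$.

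Your block essentially contains the paper's small-mismatch block: fixed points are always among your columns, and for them $w_j=(x_j,-x_j)$ reproduces $\iprod{x_i-x_{P(i)}}{x_j}$. The unified construction costs only a constant factor ($|S_1|\ge m/30$ instead of $m/3$). In exchange for its case analysis, the paper gets exactly i.i.d.\ column vectors in each regime.

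Your treatment of measurability matches the paper's device. You replace $\mathcal{E}$ by the larger event $\{\norm{X_{T\cup P(T)}}^2\le C(n+d)\}$, which depends only on the conditioned rows, just as the paper uses $\mathcal{E}_1$ and $\mathcal{E}_3$.
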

\begin{proof}
We separately analyze the small- and large-mismatch regimes, corresponding to $m(P)\le n/2$ and $m(P) > n/2$. Fix $P$ with $m(P) = m$.

\paragraph{Small-mismatch regime: $m\le n/2$.}
Let $F = [n] \setminus M(P)$; thus $|F| = n-m \ge n/2$. By \Cref{lem:cycle}, we extract $S \subset M(P)$ with $s \triangleq |S| \ge m/3$ and $S \cap P(S) = \emptyset$. For $i \in S$ and $j \in F$, we have $P(j) = j$; hence
\[
  (D_P)_{ij} = A_{ij} - A_{P(i), j} = \iprod{X_i - X_{P(i)}}{X_j}.
\]
We define matrices $G_S \in \R^{s \times d}$ with rows $(X_i - X_{P(i)})^\top$ for $i \in S$ and $H_F \in \R^{|F| \times d}$ with rows $X_j^\top$ for $j \in F$.
Restricting the off-diagonal sum to $S \times F$, we obtain
\begin{equation}\label{eq:small-block}
  \fnorm{D_P}^2 \ge 2 \sum_{i \in S} \sum_{j \in F} (D_P)_{ij}^2
  = 2 \fnorm{G_S H_F^\top}^2.
\end{equation}
By \Cref{lem:cycle}, the $2s$ row-indices in $S \cup P(S)$ are distinct elements of $M(P)$. Hence each row of $G_S$ is the difference of two independent $N(0, I_d)$ vectors from disjoint rows of $X$; the rows of $G_S$ are \iid $N(0, 2 I_d)$. Moreover, $F$ is disjoint from $M(P)$; thus $G_S$ and $H_F$ are independent. Writing $G_S = \sqrt{2}  \widetilde{G}_S$, where $\widetilde G_S$ has \iid $N(0, 1)$ entries and rows $g_1, \dots, g_s \stackrel{\mathrm{iid}}{\sim} N(0, I_d)$, we have
\begin{equation}\label{eq:GHrewrite}
  \fnorm{G_S H_F^\top}^2  =  2 \fnorm{\widetilde G_S H_F^\top}^2  =  2 \sum_{i=1}^s g_i^\top H_F^\top H_F g_i.
\end{equation}
Let $K = H_F^\top H_F$.
Combining \eqref{eq:small-block} and \eqref{eq:GHrewrite}, we have
\begin{equation}\label{eq:DP-small}
  \fnorm{D_P}^2  \ge  4 \sum_{i=1}^s g_i^\top K g_i.
\end{equation}
Conditional on $H_F$, applying \Cref{lem:chisq} to the $g_i$'s, which are independent of $K = H_F^\top H_F$, implies,
\#\label{eq:gkg1}
\Prob\left(\sum_{i=1}^s g_i^\top K g_i \le \frac{s \Tr(K)}{2} \given H_F\right)
 \le \exp \left(-c_1 s \frac{\Tr(K)}{\norm{K}}\right).
\#
Recall $\cE$ from \eqref{eq:Eop} and let $\cE_1\triangleq\{ \|H_F\|^2 \le C(n+d)\}$.
Since $\|H_F\|^2 \le \|X\|^2$, we have $\cE \subset \cE_1$. Moreover, since $\fnorm{H_F}^2 = \sum_{j \in F} \sum_{r\in[d]} X_{jr}^2 \sim \chi^2_{|F| d}$ and $|F| \ge n/2$, the bound \eqref{eq:chi2-lm} yields for small constants $c, c_2 >0$ and $\cE_2 \triangleq \{\fnorm{H_F}^2 \ge 3cnd/2\}$, $\prob{\cE_2} \ge 1-e^{-c_2nd}$. Whenever $\cE_1\cap \cE_2$ occurs, the matrix $K = H_F^\top H_F$ satisfies $\norm{K} = \norm{H_F}^2 \le C(n+d)$ and $\Tr(K) = \fnorm{H_F}^2 \ge 3cnd/2$; hence
\#\label{eq:gkg2}
\frac{\Tr(K)}{\norm{K}}\ge \frac{3cnd}{2C(n+d)}.
\#
Combining \eqref{eq:gkg1} and \eqref{eq:gkg2} and using $s \ge m/3$, we obtain
\$
\Prob\left(\sum_{i=1}^s g_i^\top K g_i \le \frac{s \Tr(K)}{2} \given H_F\right) \Ind_{\cE_1\cap \cE_2}
&  \le \exp \left(-c_1 s \frac{\Tr(K)}{\norm{K}}\right) \Ind_{\cE_1\cap \cE_2} \le \exp\left(- \frac{c_3 m n d}{n + d}\right) \Ind_{\cE_1\cap \cE_2}.
\$
Taking the expectation over $H_F$, we have
\$ 
\Prob\left(\left\{\sum_{i=1}^s g_i^\top K g_i \le \frac{s \Tr(K)}{2} \right\} \cap \cE_1\cap \cE_2\right) &\le \exp\left(- \frac{c_3 m n d}{n + d}\right).
\$
Finally, together with \eqref{eq:DP-small} and $s \ge m/3$, the event $\cA_P^c\cap \cE_2$ with $\cA_P^c = \{\Fnorm{D_P}^2 < cmnd\}$ implies 
\$
\sum_{i=1}^s g_i^\top K g_i \le \frac{\fnorm{D_P}^2}{4} < \frac{cmnd}{4} \le \frac{s\Tr(K)}{2}.
\$
Therefore, since $\cE\subset \cE_1$, we have
\#\label{eq:small-m}
\prob{\cE\cap\cA_P^c} &\le \prob{\cE_2^c} + \prob{\cE_2\cap\cE_1\cap\cA_P^c} \notag\\
&\le \prob{\cE_2^c} + \Prob\left(\left\{\sum_{i=1}^s g_i^\top K g_i \le \frac{s \Tr(K)}{2} \right\} \cap \cE_1\cap \cE_2 \right) \notag \\
& \le e^{-c_2nd} + e^{-c_3mnd/(n+d)}.
\#
\paragraph{Large-mismatch regime: $m > n/2$.}
By \Cref{lem:cycle}, we again extract $S \subset M(P)$ with $s \triangleq |S| \ge m/3 \ge n/6$ and $S \cap P(S) = \emptyset$. For $i, j \in S$,
\[
  (D_P)_{ij}  =  \iprod{X_i}{X_j} - \iprod{X_{P(i)}}{X_{P(j)}}.
\]
Define $X_S \in \R^{s \times d}$ with rows $X_i^\top$ for $i \in S$ and $X_{P(S)}\in \R^{s \times d}$ with rows $X_{P(i)}^\top$ for $i \in S$, both with \iid $N(0,1)$ entries. Since $S \cap P(S) = \emptyset$, $X_S$ and $X_{P(S)}$ are independent.
Note that the $S \times S$ block of $D_P$ satisfies
\$
  (D_P)_{S, S}  =  X_S X_S^\top - X_{P(S)} X_{P(S)}^\top.
\$
Let $a \triangleq \lceil s/2 \rceil$ and $b \triangleq \lfloor s/2 \rfloor$. We partition the row-set of $ X_S$ into $ X_S = \begin{bmatrix}U_1\\U_2\end{bmatrix}$ with $U_1\in\R^{a\times d}$ and $U_2\in\R^{b\times d}$, and likewise for $V$. Then taking the off-diagonal sum gives
\#\label{eq:agdbopaasd}
 \fnorm{D_P}^2  \ge \fnorm{X_S X_S^\top - X_{P(S)} X_{P(S)}^\top}^2  \ge  2 \fnorm{U_1 U_2^\top - V_1 V_2^\top}^2.
\#
Let $W \triangleq [U_1, -V_1] \in \R^{a \times 2d}$, $K = W^\top W$, and $w_j \triangleq (u_{2,j}^\top, v_{2,j}^\top)^\top \in \R^{2d}$ for $j \in[b]$, where $u_{2,j}$ and $v_{2,j}$ are the rows of $U_2$ and $V_2$, respectively. Here $w_j$'s are \iid $N(0, I_{2d})$. Then we may rewrite
\#\label{eq:agdbd}
  \fnorm{U_1 U_2^\top - V_1 V_2^\top}^2  =  \sum_{j=1}^b w_j^\top K w_j.
\#
Applying \Cref{lem:chisq} conditionally on $W$, we have
\#\label{eq:wkw1}
  \Prob\left(\sum_{j=1}^b w_j^\top K w_j \le \frac{b  \Tr(K)}{2}  \given  W\right)
   \le  \exp \left(-c_4 b  \frac{\Tr(K)}{\norm{K}}\right).
\#
Recall $\cE$ from \eqref{eq:Eop} and let $\cE_3\triangleq \{\|W\|^2 \le 2C(n+d)\}$. Since $\norm{W}^2 \le \norm{U_1}^2 + \norm{V_1}^2 \le 2 \norm{X}^2$, we have $\cE \subset \cE_3$. Moreover, let $\cE_4 \triangleq \{\fnorm{W}^2 \ge 9cnd\}$. Since $\fnorm{W}^2 \sim \chi^2_{2 a d}$ and $a\ge s/2 \ge m/6\ge n/12$, the bound \eqref{eq:chi2-lm} gives $\prob{\cE_4}\ge 1- e^{-c_5 n d}$. Whenever $\cE_3\cap\cE_4$ occurs, the matrix $K = W^\top W$ satisfies $\norm{K} = \norm{W}^2 \le 2C(n+d)$ and $\Tr(K) = \fnorm{W}^2 \ge 9cnd$; hence 
\#\label{eq:wkw2}\frac{\Tr(K)}{\norm{K}} \ge \frac{9cnd}{2C(n+d)}.\# 
Combining \eqref{eq:wkw1} and \eqref{eq:wkw2} and using $b \ge s/3 \ge m/9$, we obtain
\$
\Prob\left(\sum_{j=1}^b w_j^\top K w_j \le \frac{b  \Tr(K)}{2}  \given  W\right) \Ind_{\cE_3\cap \cE_4}
&  \le \exp \left(-c_4 b  \frac{\Tr(K)}{\norm{K}}\right) \Ind_{\cE_3\cap \cE_4} \le \exp\left(- \frac{c_6 m n d}{n + d}\right) \Ind_{\cE_3\cap \cE_4}.
\$
Taking the expectation over $W$, we have
\$ 
\Prob\left(\left\{\sum_{j=1}^b w_j^\top K w_j \le \frac{b  \Tr(K)}{2} \right\} \cap \cE_3\cap \cE_4\right) &\le \exp\left(- \frac{c_6 m n d}{n + d}\right).
\$
Finally, with \eqref{eq:agdbopaasd}, \eqref{eq:agdbd}, and $b \ge m/9$, the event $\cA_P^c\cap \cE_4$ where $\cA_P^c = \{\Fnorm{D_P}^2 < cmnd\}$ gives 
\$
\sum_{j=1}^b w_j^\top K w_j \le \frac{\fnorm{D_P}^2}{2} < \frac{cmnd}{2} \le \frac{b\Tr(K)}{2}.
\$
Therefore, since $\cE\subset \cE_3$, we have
\#\label{eq:large-m}
\prob{\cE\cap\cA_P^c} &\le \prob{\cE_4^c} + \prob{\cE_4\cap\cE_3\cap\cA_P^c} \notag\\
&\le \prob{\cE_4^c} + \Prob\left(\left\{\sum_{j=1}^b w_j^\top K w_j \le \frac{b  \Tr(K)}{2} \right\} \cap \cE_3\cap \cE_4 \right) \notag \\
& \le e^{-c_5nd} + e^{-c_6mnd/(n+d)}.
\#
Combining \eqref{eq:small-m} and \eqref{eq:large-m} concludes the proof of the lemma.
\end{proof}

\subsection{Noise Upper Bound}
The lemmas in this section provide upper bounds on the two noise terms, respectively.
\begin{lemma}\label{lmm:linear-noise}
There exists a constant $C > 0$ such that, if for every $P \in \fS_n$ we define the event
\$ 
\cL_P \triangleq \left\{2 \rho \sqrt{1-\rho^2}  \left|\iprod{D_P X}{Z}\right|
    \le  \rho^2 \fnorm{D_P}^2/16 + C (1 - \rho^2)(n + d) m(P) \log n\right\},
\$ then recalling $\cE$ from \eqref{eq:Eop}, it holds that 
$\prob{\cE\cap\cL_P^c} \le 2 n^{-2m(P)}$.
\end{lemma}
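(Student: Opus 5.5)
Conditionally on $X$, the noise matrix $Z$ is independent of $X$, so $\iprod{D_P X}{Z}$ is a centered Gaussian with variance $\fnorm{D_P X}^2$. The Gaussian tail bound then gives, for any $t>0$,
\[
\Prob\left(\left|\iprod{D_P X}{Z}\right| > \fnorm{D_P X}\sqrt{2t} \given X\right) \le 2e^{-t}.
\]
We take $t = 2m\log n$ with $m=m(P)$, so the failure probability is at most $2n^{-2m}$. On the complementary event,
\[
2\rho\sqrt{1-\rho^2}\left|\iprod{D_P X}{Z}\right| \le 2\rho\sqrt{1-\rho^2}\,\fnorm{D_P X}\sqrt{4m\log n}.
\]

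Next we bound $\fnorm{D_P X}$ by $\norm{X}\fnorm{D_P}$. On $\cE$ this is at most $\sqrt{C(n+d)}\,\fnorm{D_P}$. Then AM--GM in the form $2ab\le a^2/16+16b^2$, with $a=\rho\fnorm{D_P}$ and $b=\sqrt{1-\rho^2}\sqrt{C(n+d)}\sqrt{4m\log n}$, gives
\[
2\rho\sqrt{1-\rho^2}\left|\iprod{D_P X}{Z}\right| \le \frac{\rho^2\fnorm{D_P}^2}{16} + 64C(1-\rho^2)(n+d)m\log n.
\]
This is exactly the event $\cL_P$ after renaming the constant.

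Finally, $\cE$ is $X$-measurable, so we integrate the conditional bound against $\Ind_\cE$ to get $\prob{\cE\cap\cL_P^c}\le 2n^{-2m(P)}$.

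There is no real obstacle here. The only care needed is to keep the conditioning correct: $D_P$ depends only on $X$, so conditionally on $X$ the quantity $\iprod{D_P X}{Z}$ is exactly Gaussian. The operator-norm bound is also needed so that the $(n+d)$ factor appears rather than $nd$.
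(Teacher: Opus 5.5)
Your proposal is correct and follows the paper's proof essentially step for step. Conditionally on $X$, $\langle D_P X, Z\rangle$ is centered Gaussian with variance $\|D_P X\|_{\rm F}^2$, and the Gaussian tail bound at level $2\sqrt{m\log n}\,\|D_P X\|_{\rm F}$ gives failure probability $2n^{-2m}$. The rest also matches: the bound $\|D_P X\|_{\rm F}\le \|D_P\|_{\rm F}\|X\|$ on $\mathcal{E}$, the inequality $2ab\le a^2/16+16b^2$, and integrating the conditional bound against the $X$-measurable indicator of $\mathcal{E}$.
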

\begin{proof}
Fix $P\in \cS_n$ with $m(P) = m$. Conditional on $X$, since $Z$ has \iid $N(0,1)$ entries, it holds that $\iprod{D_P X}{Z} \sim N(0, \Fnorm{D_P X}^2)$. Hence by Chernoff bound, for any $t>0$,
\$ 
\prob{|\iprod{D_P X}{Z}| \ge t \given X} \le 2 \exp\left(-t^2 / \left(2 \fnorm{D_P X}^2\right)\right).
\$
Let $\cR_P\triangleq \left\{|\iprod{D_P X}{Z}| \le 2\sqrt{m \log n} \fnorm{D_P X}\right\}$.
With $t^2 = 4 m \log n \cdot\fnorm{D_P X}^2 $, it follows that 
\$ 
\prob{\cR_P^c \given X} \le 2n^{-2m}.
\$
When $\cE = \{\norm{X}^2 \le C(n + d)\}$ and $\cR_P$ both occur, we have
\$
  \fnorm{D_P X} \le \fnorm{D_P} \norm{X} \le C \fnorm{D_P} \sqrt{n + d},
\$
and
\$ 
2 \rho \sqrt{1-\rho^2} |\iprod{D_P X}{Z}| &\le C \rho \sqrt{1-\rho^2}  \sqrt{(n + d)m\log n} \fnorm{D_P}\\
& \le \rho^2 \fnorm{D_P}^2/16 + 4C^2 (1 - \rho^2)(n + d)  m \log n,
\$
where the last line applies the Cauchy--Schwarz inequality $2 a b \le a^2/16 + 16 b^2$ with $a = \rho \fnorm{D_P}$ and $b = C \sqrt{1-\rho^2}  \sqrt{(n+d) m \log n}/2$. Thus, $\cE \cap \cR_P \subset \cE \cap \cL_P$; hence $\cE \cap \cL_P^c \subset \cE \cap \cR_P^c$. Therefore,
\$
\prob{\cE \cap \cL_P^c} \le \prob{\cE \cap \cR_P^c} = \E[\Indc_\cE \prob{\cR_P^c \given X}]\le 2n^{-2m}.
\$
\end{proof}

\begin{lemma}\label{lem:quad-noise}
There exists a constant $C > 0$ such that, if for every $P \in \fS_n$ we define the event
\$
\cQ_P \triangleq \bigg\{(1-\rho^2) \left|\iprod{D_P}{Z Z^\top - d I_n}\right|
    & \le  \rho^2 \fnorm{D_P}^2/16 + C \frac{(1-\rho^2)^2}{\rho^2} d  m(P) \log n \\
    &\qquad\qquad \qquad + C (1-\rho^2) (n+d) m(P) \log n\bigg\},
\$
then recalling $\cE$ from \eqref{eq:Eop}, it holds that 
$\prob{\cE\cap\cQ_P^c} \le 2 n^{-2m(P)}$.
\end{lemma}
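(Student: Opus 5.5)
Fix $P$ with $m(P)=m$ and condition on $X$, so that $D_P$ is a fixed symmetric matrix and $\Tr(D_P)=0$. The quantity $\iprod{D_P}{ZZ^\top - dI_n} = \sum_{r=1}^d \big(z^{(r)\top} D_P z^{(r)} - \Tr(D_P)\big)$, where $z^{(r)}\in\R^n$ are the columns of $Z$. These columns are i.i.d.\ $N(0,I_n)$. Hence this is a Gaussian chaos of order two, namely $\mathsf{g}^\top (I_d\otimes D_P)\mathsf{g} - \Tr(I_d\otimes D_P)$ with $\mathsf{g}=\vecc(Z)$. The Hanson--Wright inequality (for Gaussians, Laurent--Massart after diagonalizing $D_P$ as in the proof of \Cref{lem:chisq}) then gives, for all $t>0$,
\$
\prob{\left|\iprod{D_P}{ZZ^\top-dI_n}\right| \ge C\left(\sqrt{d}\,\fnorm{D_P}\sqrt{t} + \norm{D_P}\, t\right) \given X} \le 2e^{-t}.
\$
We use $\fnorm{I_d\otimes D_P}^2=d\fnorm{D_P}^2$ and $\norm{I_d\otimes D_P}=\norm{D_P}$. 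We take $t=2m\log n$, which gives failure probability $2n^{-2m}$. Since this conditional bound holds for every $X$, integrating over $X$ yields $\prob{\cE\cap\cQ_P^c}\le 2n^{-2m}$ once the deterministic inclusion below is established.

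It remains to show that on $\cE$ the deviation bound implies the event $\cQ_P$. We handle the Frobenius term with AM-GM:
\$
(1-\rho^2)C\sqrt{d\, m\log n}\,\fnorm{D_P} \le \frac{\rho^2}{16}\fnorm{D_P}^2 + C'\frac{(1-\rho^2)^2}{\rho^2} d\, m\log n,
\$
which produces the first two terms in $\cQ_P$. For the operator-norm term we need $\norm{D_P}\lesssim n+d$ on $\cE$. Indeed, $\norm{D_P}\le \norm{A}+\norm{PAP^\top} = 2\norm{X}^2 \le 2C(n+d)$. Thus $(1-\rho^2)\norm{D_P}\,t \le C(1-\rho^2)(n+d)\,m\log n$, which is the third term.

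The step I expect to need the most care is this last operator-norm term. It would be tempting to bound $\norm{D_P}$ by $\fnorm{D_P}$ and absorb it into the signal, but that introduces an unwanted $1/\rho^2$ factor. Using $\cE$ directly avoids this and matches exactly the $(n+d)m\log n$ term already present in \Cref{lmm:linear-noise}. One should also check that the constant $C$ can be chosen uniformly in $P$. This holds because the Hanson--Wright constant is absolute and $t$ scales only with $m$.
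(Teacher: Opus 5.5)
Your proposal is correct and follows the paper's proof essentially step for step. The paper likewise applies Hanson--Wright to $\vecc(Z)^\top(I_d\otimes D_P)\vecc(Z)$ conditional on $X$ with $u=2m\log n$. It then absorbs the Frobenius term into $\rho^2\fnorm{D_P}^2/16$ by AM--GM, and controls the operator-norm term via $\norm{D_P}\le 2\norm{X}^2\le 2C(n+d)$ on $\cE$.
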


\begin{proof}
Fix $P\in \cS_n$ with $m(P) = m$. Let $z_1, \ldots, z_d$ denote the columns of $Z$, where $z_\ell \sim N(0, I_n)$.
Note that
\$ 
\iprod{D_P}{Z Z^\top} = \sum_{\ell=1}^d z_\ell^\top D_P z_\ell = \vecc(Z)^\top (I_d\otimes D_P)\vecc(Z),
\$
with mean $\E\iprod{D_P}{Z Z^\top} = \iprod{D_P}{dI_n}$. Thus, by Hanson--Wright inequality, for any $u > 0$,
\#\label{eq:qpa} 
\prob{\left|\iprod{D_P}{Z Z^\top - dI_n} \right| \ge C\left(\fnorm{I_d\otimes D_P}\sqrt{u} + \norm{I_d\otimes D_P} u\right) \given X} \le 2e^{-u},
\#
where $\fnorm{I_d\otimes D_P} = \sqrt{d}\fnorm{D_P}$ and $\norm{I_d\otimes D_P} = \norm{D_P}$. We define the event
\$
\cB_P \triangleq \left\{\left|\iprod{D_P}{Z Z^\top - d I_n}\right|
    \le  C\left(\fnorm{D_P} \sqrt{d m \log n} + \norm{D_P}  m \log n\right)\right\}.
\$
Taking $u = 2 m \log n$ in \eqref{eq:qpa} yields that $\prob{\cB_P^c \given X} \le 2 n^{-2m}$.
Moreover, when $\cE$ occurs, since $\|PAP^\top\| = \|A\|$, we have
\$ 
\norm{D_P} \le \norm{A} + \|PAP^\top\| = 2\|A\| = 2\|X\|^2 \le 2C(n+d).
\$
Therefore, when $\cE\cap \cB_P$ both occur, we obtain
\$
(1-\rho^2)\left|\iprod{D_P}{Z Z^\top - d I_n}\right|
    & \le  C(1-\rho^2)\fnorm{D_P} \sqrt{d m \log n} + C(1-\rho^2)\norm{D_P}  m \log n \\
    & \le \rho^2 \fnorm{D_P}^2/16 + 16C^2 \frac{(1-\rho^2)^2}{\rho^2} d  m \log n + 2C (1-\rho^2) (n+d) m \log n,
\$
where the last inequality uses $2 a b \le  a^2/16 + 16 b^2$ with $a = \rho \fnorm{D_P}$ and $b = C(1-\rho^2)\sqrt{d m \log n}/\rho$.
In other words, we have $\cE \cap \cB_P \subset \cE\cap \cQ_P$. Therefore,
\$
\prob{\cE \cap \cQ_P^c} \le \prob{\cE \cap \cB_P^c} = \E[\Indc_\cE \prob{\cB_P^c \given X}]\le 2n^{-2m}.
\$
\end{proof}

\input{low_degree_analysis.tex}

%% file: fig/tree-t.tex
\begin{figure}[htbp]
\centering
\resizebox{0.9\textwidth}{!}{
\begin{tikzpicture}[
    line cap=round, line join=round,
    n_root/.style={circle, fill=Tcolor, minimum size=9pt, inner sep=0pt},
    n_circ/.style={circle, draw=Tcolor, very thick, fill=white, minimum size=9pt, inner sep=0pt},
    n_sq/.style={rectangle, draw=Tcolor, very thick, fill=white, minimum size=8.5pt, inner sep=0pt}
]

\colorlet{Tcolor}{blue!85!black}

\begin{scope}[shift={(8.8, 0)}]
    \node[n_circ] (leg_circ) at (0, 0) {};
    \node[right=6pt, font=\large] at (leg_circ) {$n$-node};
    \node[n_sq] (leg_sq) at (0, -1.1) {};
    \node[right=6pt, font=\large] at (leg_sq) {$d$-node};
\end{scope}

\coordinate (Apex_Middle)   at (-6.4, -1.1);
\coordinate (Apex_Right) at (-2.6, -1.1);
\coordinate (Apex_Left)  at ( 6.4, -1.1);

\draw[very thick, dash pattern=on 2pt off 4pt, loosely dashdotted, Tcolor!50] (Apex_Left) -- ++(-1.6, -2.6) -- ++(3.2, 0) -- cycle;
\draw[very thick, dash pattern=on 2pt off 4pt, loosely dashdotted, Tcolor!50] (Apex_Middle) -- ++(-1.6, -2.6) -- ++(3.2, 0) -- cycle;
\draw[very thick, dash pattern=on 2pt off 4pt, loosely dashdotted, Tcolor!50] (Apex_Right) -- ++(-1.6, -2.6) -- ++(3.2, 0) -- cycle;

\node[n_root] (root) at (0,0) {};
\node[above=5pt, font=\large] at (root) {root $i$};

\node[n_circ] (l_c1) at (Apex_Left) {};
\node[n_circ] (l_c2l) at (6, -2.2) {};
\node[n_circ] (l_c3l) at (6, -3.3) {};
\node[n_circ] (l_c2r) at (6.8, -2.2) {};

\draw[very thick, Tcolor] (root) -- node[n_sq] {} (l_c1);
\draw[very thick, Tcolor] (l_c1) -- node[n_sq] {} (l_c2l);
\draw[very thick, Tcolor] (l_c2l) -- node[n_sq] {} (l_c3l);
\draw[very thick, Tcolor] (l_c1) -- node[n_sq] {} (l_c2r);

\node[n_circ] (m_c1) at (Apex_Middle) {};
\node[n_circ] (m_c2) at (-6.4, -2.2) {};
\node[n_circ] (m_c3l) at (-7, -3.3) {};
\node[n_circ] (m_c3r) at (-5.8, -3.3) {};

\draw[very thick, Tcolor] (root) -- node[n_sq] {} (m_c1);
\draw[very thick, Tcolor] (m_c1) -- node[n_sq] {} (m_c2);
\draw[very thick, Tcolor] (m_c2) -- node[n_sq] {} (m_c3l);
\draw[very thick, Tcolor] (m_c2) -- node[n_sq] {} (m_c3r);

\node[n_circ] (r_c1) at (Apex_Right) {};
\node[n_circ] (r_c2l) at (-3.35, -2.6) {};
\node[n_circ] (r_c2m) at (-2.6, -2.6) {};
\node[n_circ] (r_c2r) at (-1.85, -2.6) {};

\draw[very thick, Tcolor] (root) -- node[n_sq] {} (r_c1);
\draw[very thick, Tcolor] (r_c1) -- node[n_sq] {} (r_c2l);
\draw[very thick, Tcolor] (r_c1) -- node[n_sq] {} (r_c2m);
\draw[very thick, Tcolor] (r_c1) -- node[n_sq] {} (r_c2r);

\node[font=\Huge] at (1.9, -2.1) {\dots};


\draw[decorate, decoration={brace, amplitude=12pt, mirror, raise=4pt}, very thick] 
    (-8, -4) -- (8, -4) 
    node[midway, below=20pt, font=\large] {$D$ non-isomorphic branches from $\cJ$};

\end{tikzpicture}
}
\vspace{0.4em}
\caption{Schematic illustration of a tree in $\cT$. Each dashed triangle represents the descendant subtree of a branch. }
\label{fig:tree-plot}
\end{figure}

%% file: fig/mean-sep.tex
\begin{figure}[htbp]
\centering
\colorlet{Tcolor}{red!85!black}
\colorlet{Scolor}{blue!85!black}
\colorlet{TScolor}{Tcolor!50!Scolor} 

\resizebox{0.84\textwidth}{!}{
\begin{tikzpicture}[
    line cap=round, line join=round,
    n_root/.style={circle, fill=TScolor, minimum size=9pt, inner sep=0pt},
    n_circ/.style={circle, draw=TScolor, very thick, fill=white, minimum size=9pt, inner sep=0pt},
    n_sq/.style={rectangle, draw=black, very thick, fill=white, minimum size=8.5pt, inner sep=0pt}
]

\newcommand{\overlapbichro}[4]{
    \coordinate (M) at ($(#1)!0.5!(#2)$);
    \coordinate (D1) at ($(M)!0.24cm!90:(#2)$); 
    \coordinate (D2) at ($(M)!0.24cm!-90:(#2)$); 
    
    \draw[very thick, #3] (#1) -- (D1) -- (#2);
    \draw[very thick, #4] (#1) -- (D2) -- (#2);
    
    \node[n_sq, draw=#3] at (D1) {};
    \node[n_sq, draw=#4] at (D2) {};
}

\begin{scope}[shift={(7.5, 0)}]
        \draw[very thick, Tcolor, font=\large] (0, 0) -- (1, 0) node[right=4pt] {$T_1$};
        \draw[very thick, Scolor, font=\large] (0, -0.85) -- (1, -0.85) node[right=4pt] {$T_2$};

    \node[n_sq, draw = Tcolor] (leg_sq) at (0.1, -1.7) {};
    \node[right=6pt, font=\large] at (leg_sq) {$d$-node from $T_1$};
    \node[n_sq, draw = Scolor] (leg_sq) at (0.1, -2.55) {};
    \node[right=6pt, font=\large] at (leg_sq) {$d$-node from $T_2$};
    \node[n_circ] (leg_circ) at (0.1, -3.4) {};
    \node[right=6pt, font=\large, align=right] at (leg_circ) {matched $n$-node};
    \end{scope}

\begin{scope}[shift={(0, 0)}]

\coordinate (root) at (0,0);

\coordinate (r1)   at (4, -1.1);
\coordinate (r2)  at ( -4, -1.1);


\coordinate (r11) at (3, -2.3);
\coordinate (r12) at (5, -2.3);

\coordinate (r13) at (3, -3.4);

\coordinate (r21) at (-4, -2.3);
\coordinate (r22) at (-5, -3.4);
\coordinate (r23) at (-3, -3.4);

%


\overlapbichro{root}{r1}{Tcolor}{Scolor}
\overlapbichro{r1}{r11}{Tcolor}{Scolor}
\overlapbichro{r1}{r12}{Tcolor}{Scolor}
\overlapbichro{r11}{r13}{Tcolor}{Scolor}

\overlapbichro{root}{r2}{Tcolor}{Scolor}
\overlapbichro{r2}{r21}{Tcolor}{Scolor}
\overlapbichro{r21}{r22}{Tcolor}{Scolor}
\overlapbichro{r21}{r23}{Tcolor}{Scolor}



\node[n_root] at (root) {};
\node[above=4pt, TScolor, font=\large] at (root) {root $i=j$};

\node[n_circ] at (r1) {};
\node[n_circ] at (r2) {};
\node[n_circ] at (r11) {};
\node[n_circ] at (r12) {};
\node[n_circ] at (r13) {};
\node[n_circ] at (r21) {};
\node[n_circ] at (r22) {};
\node[n_circ] at (r23) {};

\node[font=\Huge] at (0, -2) {\dots};
\end{scope}

\end{tikzpicture}
}
\vspace{0.3em}
\caption{Dominant contribution to the true-pair mean. Every $n$-node in $T_1$ has the same label as its counterpart in $T_2$; these matched $n$-nodes are drawn in purple. Moreover, every edge pair in $T_1$ is matched with the corresponding edge pair in $T_2$. The $d$-nodes from $T_1$ and $T_2$ are drawn separately to indicate that they need not have the same labels, while still contributing positively after averaging over $Q$.}
\label{fig:mean-sep}
\end{figure}

%% file: fig/var-2.tex
\begin{figure}[t]
\centering
\colorlet{Tcolor}{red!85!black}
\colorlet{Scolor}{blue!85!black}
\colorlet{TScolor}{Tcolor!50!Scolor}
\vspace{0.3em}
\begin{subfigure}[t]{0.9\textwidth}
\centering
\resizebox{\linewidth}{!}{
\begin{tikzpicture}[
    line cap=round, line join=round,
    n_root/.style={circle, minimum size=9pt, inner sep=0pt},
    n_circ/.style={circle, draw=TScolor, very thick, fill=white, minimum size=9pt, inner sep=0pt},
    n_sq/.style={rectangle, very thick, fill=white, minimum size=8.5pt, inner sep=0pt}
]
\newcommand{\overlapbichro}[4]{
    \coordinate (M) at ($(#1)!0.5!(#2)$);
    \coordinate (D1) at ($(M)!0.24cm!90:(#2)$); 
    \coordinate (D2) at ($(M)!0.24cm!-90:(#2)$); 
    
    \draw[very thick, #3] (#1) -- (D1) -- (#2);
    \draw[very thick, #4] (#1) -- (D2) -- (#2);
    
    \node[n_sq, draw=#3, solid] at (D1) {};
    \node[n_sq, draw=#4, solid] at (D2) {};
}

\begin{scope}[shift={(9, 0)}]
        \draw[very thick, Tcolor] (1, 0) -- (2, 0) node[right=4pt] {\Large $T_1$};
        \draw[very thick, dashed, Tcolor] (1, -1) -- (2, -1) node[right=4pt] {\Large $S_1$};
        \draw[very thick, Scolor] (1, -2) -- (2, -2) node[right=4pt] {\Large $T_2$};
        \draw[very thick, dashed, Scolor] (1, -3) -- (2, -3) node[right=4pt] {\Large $S_2$};
\end{scope}
\begin{scope}[shift={(0, 0)}]    
    \coordinate (rootI) at (-2.4,  0); \coordinate (rootJ) at ( 2.4,  0);
    \coordinate (S1)    at (-5.5, -1.2); \coordinate (S2)    at ( -4, -1.2);
    \coordinate (S3)    at (4, -1.2); \coordinate (S4)    at (5.5, -1.2);

    \coordinate (C1) at (-7.0, -2.4); \coordinate (C2) at (-2.5, -2.4); 
    \coordinate (C3) at ( 2.5, -2.4); \coordinate (C4) at ( 7.0, -2.4); 

    \coordinate (R1_1)  at (-7.0, -3.6);
    \coordinate (R1_2)  at (-5.9, -4.8);
    \coordinate (R1_3)  at (-8.1, -4.8);

    \coordinate (R2_1)  at (-4.2, -3.8);
    \coordinate (R2_2)  at (-2.5, -3.8);
    \coordinate (R2_3)  at (-0.8, -3.8);

    \coordinate (R3_1)  at (2.5, -3.6);
    \coordinate (R3_2)  at (2.5, -4.8);
    \coordinate (R3_3)  at (2.5, -6);

    \coordinate (R4_1)  at ( 5.9, -3.6);
    \coordinate (R4_2)  at ( 8.1, -3.6);
    \coordinate (R4_3)  at ( 5.9, -4.8);

    \draw[very thick, Tcolor] (rootI) to (S1); 
    \draw[very thick, Tcolor, dashed, bend right=25] (rootI) to (S1); 
    
    \draw[very thick, Tcolor] (rootI) to (S3); 
    \draw[very thick, Tcolor, dashed, bend right=15] (rootI) to (S3); 

    \draw[very thick, Scolor] (rootJ) to (S2); 
    \draw[very thick, Scolor, dashed, bend left=15] (rootJ) to (S2); 
    
    \draw[very thick, Scolor] (rootJ) to (S4); 
    \draw[very thick, Scolor, dashed, bend left=25] (rootJ) to (S4); 

    \draw[very thick, Tcolor] (S1) -- (C1);
    \draw[very thick, Tcolor] (S3) -- (C4);
    \draw[very thick, Tcolor, dashed] (S1) -- (C2);
    \draw[very thick, Tcolor, dashed] (S3) -- (C3);
    \draw[very thick, Scolor] (S2) -- (C1);
    \draw[very thick, Scolor] (S4) -- (C4);
    \draw[very thick, Scolor, dashed] (S2) -- (C2);
    \draw[very thick, Scolor, dashed] (S4) -- (C3);

    \overlapbichro{C1}{R1_1}{Tcolor}{Scolor}    
    \overlapbichro{R1_1}{R1_2}{Tcolor}{Scolor}   
    \overlapbichro{R1_1}{R1_3}{Tcolor}{Scolor} 

    \overlapbichro{C2}{R2_1}{Tcolor, dashed}{Scolor, dashed} 
    \overlapbichro{C2}{R2_2}{Tcolor, dashed}{Scolor, dashed} 
    \overlapbichro{C2}{R2_3}{Tcolor, dashed}{Scolor, dashed} 

    \overlapbichro{C3}{R3_1}{Tcolor, dashed}{Scolor, dashed} 
    \overlapbichro{R3_1}{R3_2}{Tcolor, dashed}{Scolor, dashed} 
    \overlapbichro{R3_2}{R3_3}{Tcolor, dashed}{Scolor, dashed} 

    \overlapbichro{C4}{R4_1}{Tcolor}{Scolor} 
    \overlapbichro{C4}{R4_2}{Tcolor}{Scolor} 
    \overlapbichro{R4_1}{R4_3}{Tcolor}{Scolor}

    \node[n_root, fill=Tcolor] at (rootI) {}; \node[above=6pt, font=\Large, Tcolor] at (rootI) {root $i$};
    \node[n_root, fill=Scolor] at (rootJ) {}; \node[above=6pt, font=\Large, Scolor] at (rootJ) {root $j$};

    \node[n_circ] at (C1) {}; \node[left=6pt, Tcolor!50!Scolor, font = \Large] at (C1) {$u$};
    \node[n_circ] at (C2) {}; \node[right=6pt, Tcolor!50!Scolor, font = \Large] at (C2) {$v$}; \node[n_circ] at (C3) {}; \node[n_circ] at (C4) {};
    \node[n_circ] at (R1_1) {}; \node[n_circ] at (R1_2) {}; \node[n_circ] at (R1_3) {};
    \node[n_circ] at (R2_1) {}; \node[n_circ] at (R2_2) {}; \node[n_circ] at (R2_3) {};
    \node[n_circ] at (R3_1) {}; \node[n_circ] at (R3_2) {}; \node[n_circ] at (R3_3) {};
    \node[n_circ] at (R4_1) {}; \node[n_circ] at (R4_2) {}; \node[n_circ] at (R4_3) {};

    \node[n_sq, draw = Tcolor] at (S1) {}; \node[left=6pt, Tcolor, font = \Large] at (S1) {$a$};
    \node[n_sq, draw = Tcolor] at (S3) {};
    \node[n_sq, draw = Scolor] at (S2) {};
    \node[left=8pt, Scolor, font = \Large] at (S2) {$b$};
    \node[n_sq, draw = Scolor] at (S4) {};

    \node[font=\Huge] at (0, -2.4) {\dots};
\end{scope}
\end{tikzpicture}
}
\caption{Consider the four branches on the left as an example. The first $d$-node in the branch of $T_1$ coincides with that in the branch of $S_1$ at $a$, and similarly the first $d$-node in the branch of $T_2$ coincides with that in the branch of $S_2$ at $b$. Once these parallel edge pairs are formed, edge pairs in the descendant subtree of this branch in $T_1$ are matched with those in the isomorphic branch of $T_2$, and similarly for $S_1$ and $S_2$.}
\label{fig:var-top}
    \end{subfigure}

    \vspace{1em}
\begin{subfigure}[t]{0.86\textwidth}
\centering
\resizebox{\linewidth}{!}{
\begin{tikzpicture}[
    line cap=round, line join=round,
    n_root/.style={circle, minimum size=9pt, inner sep=0pt},
    n_circ/.style={circle, draw=TScolor, very thick, fill=white, minimum size=9pt, inner sep=0pt},
    n_sq/.style={rectangle, very thick, fill=white, minimum size=8.5pt, inner sep=0pt}
]
\newcommand{\drawvartree}[2]{
\coordinate (root) at (0,0);

\coordinate (s1) at (-2, -0.6);
\coordinate (r1)   at (-4, -1.2);
\coordinate (r2)  at ( 4, -1.2);
\coordinate (s2) at (2, -0.6);

\coordinate (s11) at (-4, -1.8);
\coordinate (r11) at (-4, -2.4);
\coordinate (s12) at (-4.5, -3);
\coordinate (r12) at (-5, -3.6);
\coordinate (s13) at (-3.5, -3);
\coordinate (r13) at (-3, -3.6);

\coordinate (s21) at (3.5, -1.8);
\coordinate (r21) at (3, -2.4);
\coordinate (s22) at (4.5, -1.8);
\coordinate (r22) at (5, -2.4);
\coordinate (s23) at (3, -3);
\coordinate (r23) at (3, -3.6);

\draw[very thick, #1] (root) -- (s1) -- (r1) -- (s11) -- (r11) -- (s12) -- (r12);
\draw[very thick, #1] (r11) -- (s13) -- (r13);
\draw[very thick, #1] (root) -- (s2) -- (r2) -- (s21) -- (r21) -- (s23) -- (r23);
\draw[very thick, #1] (r2) -- (s22) -- (r22);

\draw[very thick, #1, dashed, bend right=25] (root) to (s1) to (r1);
\draw[very thick, #1, dashed, bend right=50](r1) to (s11) to (r11) to (s12) to (r12);
\draw[very thick, #1, dashed, bend left=50] (r11) to (s13) to (r13);
\draw[very thick, #1, dashed, bend left=25] (root) to (s2) to (r2);
\draw[very thick, #1, dashed, bend right=50] (r2) to (s21) to (r21) to (s23) to (r23);
\draw[very thick, #1, dashed, bend left=50] (r2) to (s22) to (r22);

\node[n_root, fill=#1] at (root) {}; \node[above=6pt, font=\Large, #1] at (root) {#2};
\node[n_circ, draw = #1] at (r1) {};
\node[n_circ, draw = #1] at (r2) {};
\node[n_circ, draw = #1] at (r11) {};
\node[n_circ, draw = #1] at (r12) {};
\node[n_circ, draw = #1] at (r13) {};
\node[n_circ, draw = #1] at (r21) {};
\node[n_circ, draw = #1] at (r22) {};
\node[n_circ, draw = #1] at (r23) {};

\node[n_sq, draw = #1] at (s1) {};
\node[n_sq, draw = #1] at (s2) {};
\node[n_sq, draw = #1] at (s11) {};
\node[n_sq, draw = #1] at (s12) {};
\node[n_sq, draw = #1] at (s13) {};
\node[n_sq, draw = #1] at (s21) {};
\node[n_sq, draw = #1] at (s22) {};
\node[n_sq, draw = #1] at (s23) {};

\node[font=\Huge] at (0, -1.8) {\dots};
}

\begin{scope}[shift={(0,0)}]
\drawvartree{Tcolor}{root $i$}
\end{scope}

\begin{scope}[shift={(11.5,0)}]
\drawvartree{Scolor}{root $j$}
\end{scope}
\end{tikzpicture}
}
\caption{Here $T_1=S_1$ and $T_2=S_2$ as labeled trees, and therefore $T\cong S$. 
}
\label{fig:var-bottom}
    \end{subfigure}
\vspace{0.4em}
\caption{ Dominant configurations for the false-pair variance.}
\label{fig:variance-illustrate}
\end{figure}

%% file: fig/weing-example.tex
\begin{figure}[htbp]
    \centering 
\resizebox{0.32\textwidth}{!}{
\begin{tikzpicture}[
    rednode/.style={draw=red!85!black, fill=white, very thick, minimum size=12pt, inner sep=0pt},
    bluenode/.style={draw=blue!85!black, fill=white, very thick, minimum size=12pt, inner sep=0pt},
    node distance=3cm and 4cm
]


    \node[rednode, label=left:{\color{red!85!black}$1\quad$}] (n1) at (0,0) {};
    \node[bluenode, label=right:{\color{blue!85!black}$\quad 1$}] (n1p) at (4,0) {};
    \node[bluenode, label=right:{\color{blue!85!black}$\quad 2$}] (n2p) at (4,-2) {};

\draw[very thick] (n1) to[bend left=18] (n1p);
    \node at (2, 0.57) {$e_1$}; 
    
    \draw[very thick] (n1) to[bend right=18] (n1p);
    \node at (2, -0.2) {$e_2$};

    \draw[very thick] (n1) to[bend left=5] (n2p);
    \node at (2, -1.1) {$e_3$};
    
    \draw[very thick] (n1) to[bend right=30] (n2p);
    \node at (2, -2.) {$e_4$};
\end{tikzpicture}
}
\caption{Here $E(J)= \{e_1, e_2, e_3, e_4\}$ and $Q^J = Q_{11}^2Q_{12}^2$. The graph $J$ admits three circuit decompositions: one consisting of two circuits of length $2$, $\{(e_1,e_2), (e_3,e_4)\}$, and two consisting of a single circuit of length $4$, $\{(e_1,e_2,e_3,e_4)\}$ and $\{(e_1,e_2,e_4,e_3)\}$.}
    \label{fig:weing-example}
\end{figure}

%% file: fig/fully-overlap.tex
\begin{figure}[htbp]
\centering
\resizebox{\textwidth}{!}{
\begin{tikzpicture}[
    line cap=round, line join=round, 
    n_solid/.style={circle, fill, minimum size=9pt, inner sep=0pt},
    n_hollow/.style={circle, draw, fill=white, very thick, minimum size=9pt, inner sep=0pt},
    n_sq/.style={rectangle, draw, fill=white, very thick, minimum size=8.5pt, inner sep=0pt}
]

\colorlet{T1color}{red!85!black}
\colorlet{S1color}{red!85!black}
\colorlet{T2color}{blue!85!black}
\colorlet{S2color}{blue!85!black}

\begin{scope}[shift={(7.5, 1.5)}]
        \draw[very thick, T1color] (-4.3, 0) -- (-3.3, 0) node[right=4pt] {\Large $T_1$};
        \draw[very thick, dashed, S1color] (-1.8, 0) -- (-0.8, 0) node[right=4pt] {\Large $S_1$};
        \draw[very thick, T2color] (0.7, 0) -- (1.7, 0) node[right=4pt] {\Large $T_2$};
        \draw[very thick, dashed, S2color] (3.2, 0) -- (4.2, 0) node[right=4pt] {\Large $S_2$};
\end{scope}

\begin{scope}[shift={(0,0)}]
    \coordinate (N0) at (0, 0);
    \coordinate (N1) at (0, -0.7);
    \coordinate (N2) at (0, -1.4);
    
    \coordinate (L1) at (-0.8, -2.1);
    \coordinate (L2) at (-1.4, -2.8);
    
    \coordinate (M1) at (0, -2.1);
    \coordinate (M2) at (0, -2.8);
    
    \coordinate (R1) at (0.8, -2.1);
    \coordinate (R2) at (1.4, -2.8);

    \draw[very thick, T1color] (N0) -- (N1) -- (N2);
    \draw[very thick, T1color] (N2) -- (L1) -- (L2);
    \draw[very thick, T1color] (N2) -- (M1) -- (M2);
    \draw[very thick, T1color] (N2) -- (R1) -- (R2);

    \draw[very thick, dashed, S1color] (N0) to[bend right=50] (N1);
    \draw[very thick, dashed, S1color] (N1) to[bend right=50] (N2);
    \draw[very thick, dashed, S1color] (N2) to[bend right=50] (L1);
    \draw[very thick, dashed, S1color] (L1) to[bend right=50] (L2);
    \draw[very thick, dashed, S1color] (N2) to[bend left=50] (M1);
    \draw[very thick, dashed, S1color] (M1) to[bend left=50] (M2);
    \draw[very thick, dashed, S1color] (N2) to[bend left=50] (R1);
    \draw[very thick, dashed, S1color] (R1) to[bend left=50] (R2);

    \node[n_solid, T1color!50!S1color] at (N0) {}; \node[right=6pt, T1color] at (N0) {root $i$};
    \node[n_sq, draw = T1color!50!S1color]    at (N1) {};
    \node[n_hollow, draw = T1color!50!S1color] at (N2) {};
    
    \node[n_sq, draw = T1color!50!S1color]    at (L1) {}; \node[n_hollow, draw = T1color!50!S1color] at (L2) {};
    \node[n_sq, draw = T1color!50!S1color]    at (M1) {}; \node[n_hollow, draw = T1color!50!S1color] at (M2) {};
    \node[n_sq, draw = T1color!50!S1color]    at (R1) {}; \node[n_hollow, draw = T1color!50!S1color] at (R2) {};

    \node at (0, -5.2) {(a) Monochromatic $(T_1, S_1)$};
\end{scope}

\begin{scope}[shift={(5,0)}]
    \coordinate (N0) at (0, 0);
    \coordinate (N1) at (0, -0.7);
    \coordinate (N2) at (0, -1.4);
    
    \coordinate (L1) at (-0.6, -2.1);
    \coordinate (L2) at (-1.0, -2.8);
    \coordinate (L3) at (-1.0, -3.5);
    \coordinate (L4) at (-1.0, -4.2);
    
    \coordinate (R1) at (0.6, -2.1);
    \coordinate (R2) at (1.0, -2.8);

    \draw[very thick, T2color] (N0) -- (N1) -- (N2);
    \draw[very thick, T2color] (N2) -- (L1) -- (L2) -- (L3) -- (L4);
    \draw[very thick, T2color] (N2) -- (R1) -- (R2);

    \draw[very thick, dashed, S2color] (N0) to[bend right=50] (N1);
    \draw[very thick, dashed,  S2color] (N1) to[bend right=50] (N2);
    \draw[very thick, dashed,  S2color] (N2) to[bend right=50] (L1);
    \draw[very thick, dashed,  S2color] (L1) to[bend right=50] (L2);
    \draw[very thick, dashed,  S2color] (L2) to[bend right=50] (L3);
    \draw[very thick, dashed,  S2color] (L3) to[bend right=50] (L4);
    \draw[very thick, dashed,  S2color] (N2) to[bend left=50] (R1);
    \draw[very thick, dashed,  S2color] (R1) to[bend left=50] (R2);

    \node[n_solid, T2color!50!S2color] at (N0) {}; \node[right=6pt, T2color] at (N0) {$j$};
    \node[n_sq, draw = T2color!50!S2color]    at (N1) {};
    \node[n_hollow, draw = T2color!50!S2color] at (N2) {};
    
    \node[n_sq, draw = T2color!50!S2color]    at (L1) {}; \node[n_hollow, draw = T2color!50!S2color] at (L2) {};
    \node[n_sq, draw = T2color!50!S2color]    at (L3) {}; \node[n_hollow, draw = T2color!50!S2color] at (L4) {};
    \node[n_sq, draw = T2color!50!S2color]    at (R1) {}; \node[n_hollow, draw = T2color!50!S2color] at (R2) {};

    \node at (0, -5.2) {(b) Monochromatic $(T_2, S_2)$};
\end{scope}

\begin{scope}[shift={(10,0)}]
    \coordinate (N0) at (0, 0);
    \coordinate (N2)  at (0, -1.4);
    \coordinate (RS0) at (-0.2, -0.7);
    \coordinate (BS0) at (+0.2, -0.7);

    \coordinate (L2)  at (0, -2.8);
    \coordinate (RS1) at (-0.2, -2.1);
    \coordinate (BS1) at (0.2, -2.1);

    \coordinate (M2)  at (-1.4, -2.8);
    \coordinate (RS2) at (-1.0, -2.1);
    \coordinate (BS2) at (-0.6, -2.1);

    \coordinate (R2)  at (1.4, -2.8);
    \coordinate (RS3) at (1.0, -2.1);
    \coordinate (BS3) at (0.6, -2.1);

    \draw[very thick, dashed, T2color] (N0) -- (BS0) -- (N2);
    \draw[very thick, T1color] (N0) -- (RS0) -- (N2);
    
    \draw[very thick, dashed, T2color] (N2) -- (BS1) -- (L2);
    \draw[very thick, T1color] (N2) -- (RS1) -- (L2);
    
    \draw[very thick, dashed, T2color] (N2) -- (BS2) -- (M2);
    \draw[very thick, T1color] (N2) -- (RS2) -- (M2);
    
    \draw[very thick, dashed, T2color] (N2) -- (BS3) -- (R2);
    \draw[very thick, T1color] (N2) -- (RS3) -- (R2);

    \node[n_solid, T1color!50!T2color]  at (N0) {}; \node[right=6pt] at (N0) {\textcolor{T1color!50!T2color}{$i=j$}};
    \node[n_sq, draw = T2color]     at (BS0) {}; \node[n_sq, draw = T1color]     at (RS0) {}; \node[n_hollow, draw = T1color!50!T2color] at (N2) {};
    \node[n_sq, draw = T2color]     at (BS1) {}; \node[n_sq, draw = T1color]     at (RS1) {}; \node[n_hollow, draw = T1color!50!T2color] at (L2) {};
    \node[n_sq, draw = T2color]     at (BS2) {}; \node[n_sq, draw = T1color]     at (RS2) {}; \node[n_hollow, draw = T1color!50!T2color] at (M2) {};
    \node[n_sq, draw = T2color]     at (BS3) {}; \node[n_sq, draw = T1color]     at (RS3) {}; \node[n_hollow, draw = T1color!50!T2color] at (R2) {};

    \node[align=center] at (0, -5.2) {(c) Bichromatic\\cross-isomorphism $(T_1, S_2)$};
\end{scope}

\begin{scope}[shift={(15,0)}]
    \coordinate (N0)  at (0, 0);
    \coordinate (N2)  at (0, -1.4);
    \coordinate (GS0) at (0.2, -0.7);
    \coordinate (OS0) at (-0.2, -0.7);

    \coordinate (L2)  at (-1.0, -2.8);
    \coordinate (GS1) at (-0.8, -2.1);
    \coordinate (OS1) at (-0.4, -2.1);

    \coordinate (L4)  at (-1.0, -4.2);
    \coordinate (GS2) at (-1.2, -3.5);
    \coordinate (OS2) at (-0.8, -3.5);

    \coordinate (R2)  at (1.0, -2.8);
    \coordinate (GS3) at (0.8, -2.1);
    \coordinate (OS3) at (0.4, -2.1);

    \draw[very thick, S2color] (N0) -- (GS0) -- (N2);
    \draw[very thick,  S1color] (N0) -- (OS0) -- (N2);
    
    \draw[very thick, S2color] (N2) -- (GS1) -- (L2);
    \draw[very thick,  S1color] (N2) -- (OS1) -- (L2);
    
    \draw[very thick, S2color] (L2) -- (GS2) -- (L4);
    \draw[very thick,  S1color] (L2) -- (OS2) -- (L4);
    
    \draw[very thick, S2color] (N2) -- (GS3) -- (R2);
    \draw[very thick, S1color] (N2) -- (OS3) -- (R2);

    \node[n_solid, S1color!50!S2color]  at (N0) {}; \node[right=6pt, T1color!50!T2color] at (N0) {\textcolor{T1color!50!T2color}{$i=j$}};
    \node[n_sq, draw = S2color]     at (GS0) {}; \node[n_sq, draw = S1color]     at (OS0) {}; \node[n_hollow, draw = S1color!50!S2color] at (N2) {};
    \node[n_sq, draw = S2color]     at (GS1) {}; \node[n_sq, draw = S1color]     at (OS1) {}; \node[n_hollow, draw = S1color!50!S2color] at (L2) {};
    \node[n_sq, draw = S2color]     at (GS2) {}; \node[n_sq, draw = S1color]     at (OS2) {}; \node[n_hollow, draw =T1color!50!T2color]  at (L4) {}; 
    \node[n_sq, draw = S2color]     at (GS3) {}; \node[n_sq, draw = S1color]     at (OS3) {}; \node[n_hollow, draw = S1color!50!S2color] at (R2) {};

    \node[align=center] at (0, -5.2) {(d) Bichromatic\\within-isomorphism $(T_1, T_2)$};
\end{scope}
\end{tikzpicture}
}
\caption{Different cases of fully overlapping branches.}
\label{fig:fully-overlap}
\end{figure}

%% file: fig/decomposition-u.tex
\begin{figure}[htbp]
\centering
\colorlet{T1color}{red!85!black}
\colorlet{S1color}{red!85!black}
\colorlet{T2color}{blue!85!black}
\colorlet{S2color}{blue!85!black}

\resizebox{0.85\textwidth}{!}{
\begin{tikzpicture}[
    line cap=round, line join=round,
    n_solid/.style={circle, fill, minimum size=9pt, inner sep=0pt},
    n_hollow/.style={circle, draw, fill=white, very thick, minimum size=9pt, inner sep=0pt},
    n_sq/.style={rectangle, draw, fill=white, very thick, minimum size=8.5pt, inner sep=0pt}
]

\begin{scope}[shift={(0, 6.6)}]
        \draw[very thick, T1color] (-4.3, 0) -- (-3.3, 0) node[right=4pt] {\Large $T_1$};
        \draw[very thick, dashed, S1color] (-1.8, 0) -- (-0.8, 0) node[right=4pt] {\Large $S_1$};
        \draw[very thick, T2color] (0.7, 0) -- (1.7, 0) node[right=4pt] {\Large $T_2$};
        \draw[very thick, dashed, S2color] (3.2, 0) -- (4.2, 0) node[right=4pt] {\Large $S_2$};
\end{scope}

\newcommand{\drawtriangle}[2]{
    \draw[very thick, dash pattern=on 2pt off 4pt, loosely dashdotted, #2] (#1) -- +(-1.6, -3) -- +(1.6, -3) -- cycle;
}

\coordinate (Root) at (0, 0); \node[right=16pt, T1color!50!T2color, font =\Large] at (0, 0) {root $i=j$}; 

\coordinate (G_1) at (-3.2,  1);
\coordinate (G_2) at (-0.4,  1);
\coordinate (G_3) at ( 2.4,  1);

\coordinate (G_4) at (-1.8,  2);
\coordinate (G_5) at ( 1,  2);

\coordinate (G_6) at (-0.6,  3);
\coordinate (G_7) at (0.8,  3);
\coordinate (G_8) at (1.9,  3);

\coordinate (G_9) at (0.6,  4);
\coordinate (G_10) at (2.8,  4);

\coordinate (G_11) at (-0.4,  5);

\coordinate (S1) at (-4.0, -1); \coordinate (C1) at (-8.0, -2);
\coordinate (S2) at (-2, -1); \coordinate (C2) at (-4, -2);
\coordinate (S3_L) at ( 0.6, -1); \coordinate (S3_R) at ( 1.4, -1);
\coordinate (C3)   at ( 2, -2);
\coordinate (S4_L) at ( 3.5, -1); \coordinate (S4_R) at ( 4.5, -1);
\coordinate (C4)   at ( 8, -2);

\coordinate (IS1) at (-8, -3); \coordinate (IC1) at (-8, -4);
\coordinate (IS2) at (-4, -3); \coordinate (IC2) at (-4, -4);
\coordinate (IS3_L) at ( 1.7, -3); \coordinate (IS3_R) at ( 2.3, -3);
\coordinate (IC3)   at ( 2, -4);
\coordinate (IS4_L) at ( 7.7, -3); \coordinate (IS4_R) at ( 8.3, -3);
\coordinate (IC4)   at ( 8, -4);

 \draw[gray!80, thick, fill=gray!4] plot[smooth cycle] coordinates {(-2, 0.5) (-4.5, 0.8) (-6.8, 3.0) (-4.5, 5.0) (0, 5.6) (4.5, 5.0) (7.5, 3.0) (4.5, 0.6) (2, 0.4)};

 \draw[gray!80, thick, fill=gray!4, rounded corners=12pt] (-9.8, -0.4) rectangle (9.8, -5.6);

\drawtriangle{C1}{T1color}
\drawtriangle{C2}{T2color}
\drawtriangle{C3}{T1color!50!T2color}
\drawtriangle{C4}{T1color!50!T2color}


\draw[very thick, T1color] (Root) -- (G_1) -- (G_4) -- (G_11) -- (G_10);
\draw[very thick, T2color] (Root) -- (G_2) -- (G_4) -- (G_6) -- (G_9);
\draw[very thick, S2color, dashed] (G_2) -- (G_5) -- (G_8)-- (G_10);
\draw[very thick, S2color, dashed] (Root) to[bend right=50] (G_2);
\draw[very thick, S1color, dashed] (Root) -- (G_3) -- (G_5)-- (G_7)-- (G_9);

\draw[very thick, T1color] (Root) -- (S1) -- (C1);
\draw[very thick, dashed, T1color] (Root) to[bend right=20] (S1);
\draw[very thick, dashed, T1color] (S1) to[bend right=30] (C1);

\draw[very thick, T1color] (C1) -- (IS1) -- (IC1);
\draw[very thick, dashed, T1color] (C1) to[bend right=30] (IS1);
\draw[very thick, dashed, T1color] (IS1) to[bend right=30] (IC1);

\draw[very thick, T2color] (Root) -- (S2) -- (C2);
\draw[very thick, dashed, T2color] (Root) to[bend right=20] (S2);
\draw[very thick, dashed, T2color] (S2) to[bend right=30] (C2);

\draw[very thick, T2color] (C2) -- (IS2) -- (IC2);
\draw[very thick, dashed, T2color] (C2) to[bend right=30] (IS2);
\draw[very thick, dashed, T2color] (IS2) to[bend right=30] (IC2);

\draw[very thick, dashed, T2color] (Root) -- (S3_L) -- (C3);
\draw[very thick, T1color] (Root) -- (S3_R) -- (C3);

\draw[very thick, dashed, T2color] (C3) -- (IS3_L) -- (IC3);
\draw[very thick, T1color] (C3) -- (IS3_R) -- (IC3);

\draw[very thick, T2color] (Root) -- (S4_L) -- (C4);
\draw[very thick, T1color] (Root) -- (S4_R) -- (C4);

\draw[very thick, T2color] (C4) -- (IS4_L) -- (IC4);
\draw[very thick, T1color] (C4) -- (IS4_R) -- (IC4);


\node[n_sq, draw=T1color] at (G_1) {};
\node[n_sq, draw=T2color] at (G_2) {};\node[above=6pt, T2color] at (G_2) {\Large $a$}; 
\node[n_sq, draw=T1color] at (G_3) {}; 

\node[n_hollow, draw=T1color!50!T2color] at (G_4) {};\node[left=6pt, T1color!50!T2color] at (G_4) {\Large $u$}; 
\node[n_hollow, draw=T1color!50!T2color] at (G_5) {}; \node[right=6pt, T1color!50!T2color] at (G_5) {\Large $v$}; 

\node[n_sq, draw=T2color] at (G_6) {};
\node[n_sq, draw=T1color] at (G_7) {};
\node[n_sq, draw=T2color] at (G_8) {};

\node[n_hollow, draw=T1color!50!T2color] at (G_9) {}; \node[right=6pt, T1color!50!T2color] at (G_9) {\Large $w$};
\node[n_hollow, draw=T1color!50!T2color] at (G_10) {}; \node[right=6pt, T1color!50!T2color] at (G_10) {\Large $t$};
\node[n_sq, draw=T1color] at (G_11) {};

\node[n_sq, draw=T1color] at (S1) {};            \node[n_hollow, draw=T1color] at (C1) {};
\node[n_sq, draw=T2color] at (S2) {};            \node[n_hollow, draw=T2color] at (C2) {};

\node[n_sq, draw=T2color] at (S3_L) {}; \node[n_sq, draw=T1color] at (S3_R) {};
\node[n_hollow, draw=T1color!50!T2color] at (C3) {};

\node[n_sq, draw=T2color] at (S4_L) {}; \node[n_sq, draw=T1color] at (S4_R) {};
\node[n_hollow, draw=T1color!50!T2color] at (C4) {};

\node[n_sq, draw=T1color] at (IS1) {};          \node[n_hollow, draw=T1color] at (IC1) {};
\node[n_sq, draw=T2color] at (IS2) {}; 
\node[n_hollow, draw=T2color] at (IC2) {};

\node[n_sq, draw=T2color] at (IS3_L) {}; \node[n_sq, draw=T1color] at (IS3_R) {};
\node[n_hollow, draw=T1color!50!T2color] at (IC3) {};

\node[n_sq, draw=T2color] at (IS4_L) {}; \node[n_sq, draw=T1color] at (IS4_R) {};
\node[n_hollow, draw=T1color!50!T2color] at (IC4) {};

\node[n_solid, fill=T1color!50!T2color] at (Root) {};


\node[font=\Large,  inner sep=4pt, align=left] at (8, 5.2) {$(\ur, \bcr)$: $\ell$ grafted branches}; 

\node[font=\Large, inner sep=4pt, align=right] at (8, -0.38) {$(\uo, \bco)$: $r$ fully overlapping\\branch pairs};

\draw[decorate, decoration={brace, mirror, amplitude=12pt, raise=4pt}, very thick] 
    (-9.8, -5.6) -- (-1.05, -5.6) node[midway, below=18pt, align=center, font=\Large] {$m$ monochromatic pairs};

\draw[decorate, decoration={brace, mirror, amplitude=12pt, raise=4pt}, very thick] 
    (-0.95, -5.6) -- (4.95, -5.6) node[midway, below=18pt, align=center, font=\Large] {$s$ bichromatic\\cross-isomorphism};

\draw[decorate, decoration={brace, mirror, amplitude=12pt, raise=4pt}, very thick] 
    (5.05, -5.6) -- (9.8, -5.6) node[midway, below=18pt, align=center, font=\Large] {$(r-m-s)$ bichromatic\\within-isomorphism};

\node[font=\Huge] at (-1, -3) {\dots};
\node[font=\Huge] at (5, -3) {\dots};
\node[font=\Huge] at (4, 3) {\dots};

\end{tikzpicture}
}
\vspace{0.5em}
\caption{Schematic illustration of decomposition of a decorated union $(U, \bc)$.} 
\label{fig:decomposition}
\end{figure}

%% file: low_degree_analysis.tex
\section{Limits of Tree Polynomials via Low-Degree Analysis} \label{sect:low-deg}

Recall that our matching algorithm counts only trees whose $d$-nodes all have degree $2$.
In this section, we show that enlarging the tree family to allow $d$-nodes of arbitrary even degree yields only a negligible improvement in distinguishing power for a closely related hypothesis testing problem: 
\begin{itemize}
    \item Under the null hypothesis $\mathbb{H}_0$: $X,Y\in \R^{n\times d}$ are independent with i.i.d. $N(0,1)$ entries;
    \item Under the alternative hypothesis $\mathbb{H}_1$: the pair $(X, Y)$ follows $Y = \rho \Pi XQ + \sqrt{1-\rho^2} Z$, where $X,Z\in \R^{n\times d}$ are independent with i.i.d. $N(0,1)$ entries,  $\Pi$ is a uniform permutation matrix on $[n]$, and $Q\in \cO_d$ is Haar-distributed.
\end{itemize}

We employ the widely adopted low-degree framework to study the
limits of polynomial tests  \cite{hopkins2017efficient,kunisky2019notes,wein2025computational}. 
Consider the space $\calF$ of functions
$f: \reals^{n\times d} \times \reals^{n\times d} \to \reals$
endowed with inner product $\langle f,g\rangle
\triangleq \mathbb E_0[f(X,Y)g(X,Y)]$. It is well-known that the Hermite polynomials form an orthogonal basis of $\calF.$ Let $\calF_{\le D}$ denote the space of all multivariate polynomials of $X,Y$ with a total degree at most $D$. 
A standard way to measure the distinguishing power of polynomials in the space $\calF_{\le D}$ is  the low-degree advantage defined as
\begin{align}
\Adv_{\le D} \triangleq \max_{f \in \calF_{\le D}}\frac{\E_1[f] - \E_0[f]}{\sqrt{\var_0(f)}}. 
\label{eq:def_low_degree_adv}
\end{align}
The ratio above measures the mean separation of a test $f$ under the two hypotheses relative to its standard deviation under the null hypothesis, and therefore has a natural interpretation as a signal-to-noise ratio. Hence, following the literature, we say that degree-$D$ polynomials fail to strongly (resp.\ weakly) distinguish the two hypotheses if $\Adv_{\le D}=O(1)$ (resp.\ $=o(1)$). 

It is shown in~\cite{li2025computational} that $\Adv_{\le D} = o(1)$ as $n \to \infty$ when $\rho \to 0$ and $D \ll 1/\rho$. In particular, this implies that all $O(\log(n))$-degree polynomials fail to weakly distinguish the two hypotheses when $\rho \ll 1/\log(n)$.
In this section, we take a step further by showing that all multilinear polynomials built from trees have bounded low-degree advantage and hence fail to strongly distinguish the two hypotheses when $\rho^2<\sqrt{\alpha}.$ This suggests that breaking the $\sqrt{\alpha}$ threshold requires counting graphs beyond trees,  for testing and likely for matching as well.

\subsection{Graph Polynomials} 
For bipartite simple graphs $G$ and $H$ on $[n]\times [d]$, define the multilinear polynomial
\$ 
\phi_{G, H}(X, Y) = X^G \cdot Y^H \, .
\$
Since  the entries of $X$ and $Y$ are independent under $\cH_0$, 
and $G$ and $H$ are simple graphs, the polynomials $\{\phi_{G, H}(X, Y)\}$ are orthonormal:
\$ 
\langle \phi_{G, H}, \phi_{G', H'} \rangle = \E_0[\phi_{G, H}(X,Y)\phi_{G', H'}(X,Y)] = \Ind_{\{G = G', H = H'\}}.
\$
We are interested in the subspace spanned by those orthogonal polynomials whose graph indices belong to a  prescribed family $\mathcal{G}$ of unlabeled graphs:
$$
\calF_\calG\triangleq \operatorname{span}(\{\phi_{T_1, S_1}\colon T_1 \cong T, S_1 \cong S, T, S \in \calG\}).
$$
Analogously to~\eqref{eq:def_low_degree_adv}, we measure the distinguishing power of $\calG$  by its advantage 
\$ 
\Adv_{\calG} \triangleq \max_{h \in \calF_\calG}\frac{\E_1[h] - \E_0[h]}{\sqrt{\var_0(h)}}.
\$
Without loss of generality, we assume $\E_0[h]=0$. Moreover, for any $h \in \calF_{\calG}$, 
$$
\E_1[h]=\E_0[Lh] 
=\E_0[ L_{\calG} h] \le \sqrt{\E_0[L_{\calG}^2]}\sqrt{\E_0[h^2]},
$$
where 
$$
L_{\calG} \triangleq \sum_{T,S \in \calG} \sum_{T_1 \cong T, S_1 \cong S}  
\iprod{L}{\phi_{T_1,S_1}}  \phi_{T_1,S_1} $$ 
denotes the projection of the likelihood ratio $L=\mathbb{P}_1(X,Y)/\mathbb{P}_0(X,Y)$ onto $\calF_{\calG}$. The above inequality follows from Cauchy--Schwarz and is achieved when
$h= L_{\calG}$. Therefore, 
\begin{align}
\Adv_{\calG} = 
\sqrt{\E_0[L_{\calG}^2]}
= \left(
\sum_{T,S\in\calG} \sum_{T_1 \cong T, S_1 \cong S}
\langle L,\phi_{T_1,S_1}\rangle^2
\right)^{1/2}. \label{eq:Adv_graph}
\end{align}
The projection coefficient $\langle L, \phi_{T_1, S_1}\rangle$ can be evaluated via the Weingarten calculus:
\begin{align}
\langle L, \phi_{T_1, S_1}\rangle 
 = \E_{\Pi}\left[\E_{X,Y}
\left[X^{T_1} Y^{S_1} \given \Pi\right] \right]
= \frac{1}{n!} \sum_{\pi} 
\sum_{\bc \in \cC(\pi(T_1)\cup_{n} S_1)} \rho^{2K} \wg(\mu(\bc),d). \label{eq:proj_coeff}
\end{align}
Observe that 
$\langle L, \phi_{T_1, S_1}\rangle$ depends only on the isomorphism classes of $T_1$
and $S_1$. Moreover, $\langle L, \phi_{T_1, S_1}\rangle$ is non-zero only when $\cC(\pi(T_1)\cup_{n} S_1) \neq \emptyset$ for some permutation $\pi$. Recalling from \Cref{def:alt-cir-decomp}, this requires that 
\begin{enumerate}
    \item Every $d$-node in both $T_1$ and $S_1$ must have even degree;
    \item The $n$-nodes in $T_1$ and $S_1$ have matching degree sequences under the relabeling.
\end{enumerate} 
Therefore, it suffices to restrict $\calG$ to graphs with even $d$-node degrees. Let $\calT$ be the family of unlabeled simple
bipartite trees on $[n]\times[d]$ with $2K$ edges and even $d$-node degrees, and let $\calT^*$ be the subfamily in which every $d$-node has degree $2$. 
The following theorem bounds the advantage of $\calT$ in terms of the contribution from $\calT^*$.

\begin{theorem}\label{thm:lb-deg-2}
If $d\ge \poly(K)$, then 
\$ 
& \Adv^2_{\calT} 
\le \left(1 + \frac{\poly(K)}{d}\right)
\sum_{T \in \calT^*} \sum_{\substack{T_1\cong T \\ T_2 \cong T}} \langle L, \phi_{T_1, T_2}\rangle^2. 
\$
Furthermore, 
\begin{align*} 
\sum_{T \in \calT^*} \sum_{\substack{T_1\cong T \\ T_2 \cong T}} \langle L, \phi_{T_1, T_2}\rangle^2 \le \left(1 + \frac{\poly(K)}{d}\right) \left(\frac{\rho^4}{\alpha+o(1)}\right)^{K}, 
\end{align*}
which is $O(1)$ when $\rho^2 < \sqrt{\alpha}$. 
\end{theorem}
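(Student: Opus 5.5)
We start from \eqref{eq:Adv_graph} and \eqref{eq:proj_coeff}. For fixed unlabeled $T,S\in\calT$, the coefficient $\langle L,\phi_{T_1,S_1}\rangle$ depends only on the isomorphism classes. It is nonzero only if some relabeling makes the $n$-degree sequences agree, so that alternating circuit decompositions exist. We write
\[
\sum_{T_1\cong T,\,S_1\cong S}\langle L,\phi_{T_1,S_1}\rangle^2=|\{T_1\cong T\}|\,|\{S_1\cong S\}|\,c_{T,S}^2 ,
\]
where $c_{T,S}$ is the common value of the coefficient. We then bound $c_{T,S}$ by averaging over $\pi$.

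For a fixed labeled $T_1$ and $S_1$, only a small fraction of the permutations $\pi$ make $\pi(T_1)\cup_n S_1$ admit an alternating decomposition. Every non-isolated $n$-node of $S_1$ must receive an $n$-node of $T_1$ of equal degree, which happens with probability about $n^{-(|V_n|)}$ up to $\poly(K)^{K}$ combinatorial factors. For each admissible $\pi$, \Cref{thm:altcir} together with \eqref{eq:bound-mu1} bounds the Weingarten sum by $(\rho^2/d)^{K}$ times $(4/d)^{(K-\mu_1)/2}$. Here $\kappa=0$, since both graphs are simple and of different colors.

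The key observation is that a $4$-circuit requires two red edges sharing a $d$-node and two blue edges sharing a $d$-node, both running between the same pair of $n$-nodes. So $\mu_1$ counts pairs of $d$-nodes of degree $2$ that are matched. A $d$-node of degree $2q\ge4$ in $T_1$ cannot lie entirely in $4$-circuits unless its $q$ incident edge pairs are matched to blue pairs; in a tree, however, such a star with $2q$ leaves gives a different node count. We compare label counts. A tree with $2K$ edges and $d$-degrees $2q_a$ has $\sum_a 1$ $d$-nodes and $1+\sum_a(2q_a-1)$ $n$-nodes. Thus each $d$-node of degree $2q$ trades $q-1$ $d$-labels for $q-1$ extra $n$-labels. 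Meanwhile, the circuits touching it cost an additional $d^{-(q-1)}$ or more from the Weingarten factor.

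Carrying out the counting shows that $T$ with $h$ high-degree $d$-nodes contributes at most $(\poly(K)/d)^{h}$ relative to a $\calT^*$ tree of the same size. The overlapping $n$-labels exactly cancel the $1/n$ from the permutation average, just as in \Cref{prop:mean-true}. Summing over $h\ge1$ and over the at most $\poly(K)^h$ ways to place such nodes yields the factor $1+\poly(K)/d$. The same $(4K)^{K-\mu_1}(4/d)^{(K-\mu_1)/2}$ geometric argument as in \Cref{lem:bound-mu1-le-k-2} disposes of non-maximal $\mu_1$. Within $\calT^*$, only $S\cong T$ survives at leading order, by the same argument as the false-pair mean: the degree sequences must match, and the leading term needs full overlap.

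For the second bound, take $T\in\calT^*$. The leading term is $\mu_1=K$, and by \Cref{prop:mean-true} and \eqref{eq:counting_T_1} it gives
\[
c_{T,T}\approx \frac{\aut(T)}{n^{K}}\Big(\frac{\rho^2}{d}\Big)^{K}d^{K}.
\]
The factor $\aut(T)/(n)_K$ is the probability that $\pi$ aligns the $n$-nodes, and $d^{K}$ accounts for the choice of blue $d$-labels. Multiplying by $|\{T_1\cong T\}|^2\approx (n^{K+1}d^{K}/\aut(T))^2$ and then dividing by the normalization gives a per-tree contribution of about $\rho^{4K}$ times a factor independent of labels. The sum over $T\in\calT^*$ then gives $|\calT^*|\rho^{4K}$. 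Since $\calT^*$ is in bijection with unlabeled trees on $K+1$ $n$-nodes, $|\calT^*|=(\alpha+o(1))^{-K}$, which yields $(\rho^4/(\alpha+o(1)))^K$.

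The main obstacle is the charging argument in the first step: we must show that high-degree $d$-nodes, including their effect on subleading circuit structures with $\mu_1<K$, always lose a factor of at least $\poly(K)/d$ per node, uniformly over all trees. I would first attempt it via an encoding in the style of \Cref{lmm:wr-b}, treating the high-degree $d$-nodes as graft-witness-type nodes.
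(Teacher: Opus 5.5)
Your overall architecture matches the paper's: you split into diagonal pairs in $\calT^*$, off-diagonal pairs in $\calT^*$, and pairs with a $d$-node of degree at least $4$, and you get $|\calT^*|\rho^{4K}$ from the diagonal with $|\calT^*|=(\alpha+o(1))^{-K}$. However, the first inequality is not proved. The step you call the main obstacle is its whole content, and the mechanism you sketch for it is wrong. There is no extra Weingarten penalty at a high-degree $d$-node. The quantity $\mu_1$ counts matched edge pairs, not matched degree-$2$ $d$-nodes. If $T,S\in\calT_D$ both contain a degree-$2q$ $d$-node with compatible edge pairings, all $q$ pairs close into $4$-circuits, so $\mu_1=K$ and $\rho^{2K}\wg(\mu(\bc),d)=(1\pm\poly(K)/d)(\rho^2/d)^K$. (Pairs with $T\in\calT^*$, $S\notin\calT^*$ vanish outright: a nonzero coefficient needs equal numbers of $n$-nodes, hence equal $D$.) The entire saving is in the labels. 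Each coefficient is at most $\frac{(n-N)!}{n!}\aut(T)(\rho^2/d)^K$ times combinatorial factors, and the $n$-labels cancel against $\binom{n}{N}N!$. The $d$-labels contribute $[\binom{d}{D}D!]^2\approx d^{2D}$ instead of $d^{2K}$, a deficit of $d^{-2(K-D)}$. For that deficit to win, every combinatorial overhead must be charged to the excess $K-D=\sum_a(q_a-1)$, not to the number $h$ of high-degree nodes; a single node of degree $2q$ already has $(2q-1)!!$ edge pairings. The permutation count must also be the sharp bound $\aut(T)(n-N)!$ per configuration. A $\poly(K)^K$ slack, as allowed in your second paragraph, cannot be absorbed by a $d^{-2}$ saving when $K-D=1$, nor by the $(1\pm\poly(K)/d)$ precision needed on the diagonal. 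The paper needs no graft-witness encoding here. It fixes the pairings at high-degree $d$-nodes (at most $(2K)^{2(K-D)}$ ways) and applies \Cref{lem:bound-mu1-le-k-2}. It then merges the $K$ blue $d$-nodes into $D$ (at most $K^{2(K-D)}$ ways), labels them (\Cref{lmm:s-u-c}), and uses $|\calT_D|\le|\calT^*|(2K)^{4(K-D)}$.

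Your handling of non-isomorphic pairs in $\calT^*$ also has a gap. Non-isomorphic trees can have identical degree sequences, so the false-pair-mean argument does not apply. Knowing only that each $c_{T,S}$ with $S\neq T$ lies a factor $\poly(K)/d$ below the diagonal scale gives $\sum_{T\neq S}|\{T_1\cong T\}|\,|\{S_1\cong S\}|\,c_{T,S}^2\lesssim|\calT^*|^2\rho^{4K}\poly(K)/d^2$. That bound exceeds the diagonal by a factor $|\calT^*|\poly(K)/d^2$, which is huge when $d$ is only polynomial in $K$. You have to sum over $S$ before bounding. The paper writes $\sum_{S_1}\langle L,\phi_{T_1,S_1}\rangle^2\le\max_{S_1}|\langle L,\phi_{T_1,S_1}\rangle|\cdot\sum_{S_1}|\langle L,\phi_{T_1,S_1}\rangle|$. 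It removes the average over $\pi$ in the second factor, since the sum over all labeled $S_1$ is relabeling-invariant. It then counts the pairs $(S_1,\bc)$ with given $\mu_1$ over all $S\in\calT^*$ at once, by $(4K)^{K-\mu_1}\binom{d}{K}K!$. Finally, your diagonal coefficient is miscomputed. Both labeled trees are fixed in $\langle L,\phi_{T_1,S_1}\rangle$, so there is no factor $d^K$ for blue labels, and the trees are unrooted with $K+1$ $n$-nodes. The correct value is $c_{T,T}=(1\pm\poly(K)/d)\frac{(n-K-1)!}{n!}\aut(T)(\rho^2/d)^K$. This gives $c_{T,T}^2|\{T_1\cong T\}|^2=(1\pm\poly(K)/d)\rho^{4K}$ directly, with no normalization to divide by.
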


\Cref{thm:lb-deg-2} shows that the main contribution to $\Adv_{\calT}$ comes from pairs of trees that are both isomorphic to some common $T \in \calT^*$. In particular, when $\rho^2 < \sqrt{\alpha}$, $\Adv_{\calT}=O(1)$; hence no tree polynomial in $\calF_{\calT}$ strongly separates $\cH_0$ from $\cH_1$.
Furthermore, as shown in
\eqref{eq:l-phi-cong} later, for any $T_1 , T_2 \cong T$ with $T \in \calT^*$,
$
\langle L, \phi_{T_1, T_2}\rangle \propto \aut(T). 
$
Therefore, up to negligible terms, $\Adv_{\calT}$ is achieved by the tree polynomial  
\$ 
f_{T^*}(X,Y) \triangleq \sum_{T \in \calT^*} \sum_{\substack{T_1\cong T \\ T_2 \cong T}} \langle L, \phi_{T_1, T_2}\rangle \phi_{T_1, T_2}(X,Y) \propto \sum_{\substack{T\in \calT^* }} \aut(T) \sum_{T_1\cong T} X^{T_1} \sum_{T_2\cong T} Y^{T_2},
\$
This coincides with the similarity score
defined in \eqref{def:score-ij} for matching, except that the matching statistic counts rooted trees.

The proof of \Cref{thm:lb-deg-2} also explains why trees in $\calT^*$ give the leading contribution. In the calculation of the advantage, the number of ways to choose the $n$-node labels cancels the probability that $\pi(T_1)$ and $S_1$ are properly matched under the random permutation appearing in the projection coefficient \eqref{eq:proj_coeff}. By contrast, the $d$-node labels of $T_1$ and $S_1$ need not match each other, and thus they contribute powers of $d$ to the advantage. Therefore, among trees with $2K$ edges, whose total number of nodes is fixed at $2K+1$, the dominant contribution comes from those with the largest possible number of $d$-nodes, namely $D=K$. These are exactly the trees in $\calT^*$.

\subsection{Proof of \Cref{thm:lb-deg-2}}
\begin{proof}[Proof of \Cref{thm:lb-deg-2}]
Recall from~\eqref{eq:Adv_graph} that 
\begin{align*}
\Adv^2_{\calT} = \sum_{T, S \in \calT} \sum_{T_1\cong T, S_1 \cong S}
\langle L, \phi_{T_1, S_1}\rangle^2.
\end{align*}
We split the sum into three parts: isomorphic pairs ($T_1, S_1 \cong T$ for some $T\in \calT^*$); non-isomorphic pairs ($T_1 \cong T, S_1 \cong S$ for some distinct $T,S \in \calT^*$); and all remaining pairs ($T_1 \cong T, S_1 \cong S$ for some $(T,S) \notin \calT^*\times \calT^*$).

\paragraph{Leading term: $T_1, S_1 \cong T$ for some $T\in \calT^*$.} 
Recall from~\eqref{eq:proj_coeff} that 
\begin{align}
\langle L, \phi_{T_1, S_1}\rangle 
= \frac{1}{n!} \sum_{\pi } 
\sum_{\bc \in \cC(\pi(T_1)\cup_{n} S_1)} \rho^{2K} \wg(\mu(\bc),d).  \label{eq:l-phi-t-s-inn}
\end{align}
By \Cref{thm:altcir}, under the assumption that $d \ge \poly(K)$, every circuit decomposition
$\bc$ with $\mu_1(\bc)=K$ satisfies $\rho^{2K}\wg(\mu(\bc),d) \triangleq w_{d,K} = (1 + \poly(K)/d)(\rho^2/d)^K$.
Moreover, given $T_1, S_1$, there are 
$\aut(T) \times (n-K-1)!$ distinct permutations $\pi$ such that 
each edge pair of
$\pi(T_1)$ fully overlaps with an edge pair of $S_1$, so that $\pi(T_1)\cup_n S_1$ admits a
unique decomposition $\bc$ with $\mu_1(\bc)=K$. Hence, we obtain
\begin{align}
 \sum_{\pi }\sum_{\substack{\bc \in \cC(\pi(T_1)\cup_n S_1) \\ \mu_1(\bc) = K}} \rho^{2K} \wg(\mu(\bc),d) & = w_{d,K}
\left|\left\{ (\pi,\bc)\colon  \bc \in \cC(\pi(T_1)\cup_n 
S_1), \mu_1(\bc) = K \right\}\right| \nonumber \\
& = \left(1 + \frac{\poly(K)}{d}\right)\left(\frac{\rho^2}{d}\right)^K \aut(T) \times (n-K-1)!. \label{eq:contribution_mu_K}
\end{align}

For circuit decompositions $\bc$ with $\mu_1(\bc) = \mu_1 \le K-2$,
 by \eqref{eq:bound-mu1} and the assumption that $d \ge \poly(K),$
$\rho^{2K} \left| \wg(\mu(\bc),d) \right| \le 
C (\rho^2/d)^K(4/d)^{(K-\mu_1)/2}$. Recall from \Cref{lem:bound-mu1-le-k-2} that, given $S_1$, 
\begin{align*}
\left| \left\{ (H, \bc): \bc \in \cC(H \cup_{n} S_1), \mu_1(\bc)=\mu_1 \right\}\right| \le (4K)^{K-\mu_1},
\end{align*}
where $H$ is a union of $K$ edge pairs (with unlabeled $d$-nodes). 
Moreover, for each such $H$, there are at most $\aut(T) \times (n-K-1)!$ distinct permutations $\pi$ such that each edge pair of
$\pi(T_1)$ fully overlaps with an edge pair of $H$. It follows that 
\begin{align}
& \sum_{\mu_1 \le K-2} \sum_{\pi }\sum_{\substack{\bc \in \cC(\pi(T_1)\cup_n S_1) \\ \mu_1(\bc) =\mu_1}} \rho^{2K} \left| \wg(\mu(\bc),d) \right| \nonumber \\
&  \le C \sum_{\mu_1 \le K-2} \left(\frac{\rho^2}{d}\right)^K\left(\frac{4(4K)^2}{d}\right)^{(K-\mu_1)/2} \times \aut(T) \times (n-K-1)! \nonumber \\
& \le C 
\left(\frac{\rho^2}{d}\right)^K \aut(T)
\times (n-K-1)!
\times  \frac{4(4K)^2}{d}. \label{eq:contribution_mu_K_2}
\end{align}
Therefore, 
substituting~\eqref{eq:contribution_mu_K} and~\eqref{eq:contribution_mu_K_2} into \eqref{eq:l-phi-t-s-inn} yields 
\#\label{eq:l-phi-cong}
\langle L, \phi_{T_1, S_1}\rangle 
 = \left(1 + \frac{\poly(K)}{d}\right) \frac{(n - K-1)!}{n!} \left(\frac{\rho^2}{d}\right)^K \aut(T), 
\#
Finally, given an unlabeled tree $T \in \calT^*$, there are 
$
\frac{\binom{n}{K+1}(K+1)!}{\aut(T)}\binom{d}{K}K!
$
labeled trees $T_1$ that are isomorphic to $T$; the same for labeled trees $S_1$. Therefore,
\begin{align}
& \sum_{T\in \calT^*} \sum_{T_1, S_1 \cong T}
\langle L, \phi_{T_1, S_1}\rangle^2 \nonumber \\
&= \sum_{T\in\calT^*} \left[\left(1 + \frac{\poly(K)}{d}\right) \frac{(n - K-1)!}{n!} \left(\frac{\rho^2}{d}\right)^K \aut(T) \right]^2 \times \left[\frac{\binom{n}{K+1}(K+1)!}{\aut(T)}\binom{d}{K}K!\right]^2 \nonumber\\
& = \left(1 + \frac{\poly(K)}{d}\right) \rho^{4K}|\cT^*|. \label{eq:l-phi-cong-1}
\end{align}

\paragraph{Non-isomorphic pairs: $T_1 \cong T, S_1 \cong S$ for some $T\neq S \in \calT^*$.} In this case, for any permutation $\pi$, any circuit decomposition $\bc$ of $\pi(T_1)\cup_n S_1$ must have $\mu_1(\bc) \le K-2$. 
Therefore, it follows from~\eqref{eq:contribution_mu_K_2} that
\begin{align*}
\left| \langle L, \phi_{T_1, S_1}\rangle \right|
& \le  \frac{1}{n!} \sum_{\pi } 
\sum_{\substack{\bc \in \cC(\pi(T_1)\cup_{n} S_1) \notag \\ \mu_1(\bc) \le K-2}} \rho^{2K} \left| \wg(\mu(\bc),d) \right| \\
& \le C \frac{(n - K-1)!}{n!} \left(\frac{\rho^2}{d}\right)^K \aut(T) \times \frac{4(4K)^2}{d}.
\end{align*}
Moreover, given $T_1 \cong T$, we have
\begin{align} 
  \sum_{S \in \calT^*} \sum_{S_1 \cong S } \left| \langle L, \phi_{T_1, S_1}\rangle \right|  
& \le 
 \frac{1}{n!}   \sum_{\pi}  \sum_{S \in \calT^*} \sum_{S_1 \cong S } \sum_{\substack{\bc \in \cC(\pi(T_1)\cup_{n} S_1)  \\ \mu_1(\bc) \le K-2}} \rho^{2K} \left| \wg(\mu(\bc),d) \right| \nonumber \\
& =\sum_{S \in \calT^*} \sum_{S_1 \cong S }  \sum_{\substack{\bc \in \cC(T_1\cup_{n} S_1) \notag \\ \mu_1(\bc) \le K-2}} \rho^{2K} \left| \wg(\mu(\bc),d) \right| \nonumber \\
& \le C \left(\frac{\rho^2}{d}\right)^{K}   \sum_{\substack{\mu_1\le K-2}} \left(\frac{4}{d}\right)^{(K-\mu_1)/2}  
\sum_{S \in \calT^*} \sum_{S_1 \cong S }  \sum_{\substack{\bc \in \cC(T_1\cup_{n} S_1) \\ \mu_1(\bc) =\mu_1}}  1, \label{eq:sum_T_1_bound}
\end{align}
where the equality holds as the inner double sum does not depend on $\pi$, and the last inequality follows from $\rho^{2K} \left| \wg(\mu(\bc),d) \right| \le 
C (\rho^2/d)^K(4/d)^{(K-\mu_1)/2}$. 
It follows from~\Cref{lem:bound-mu1-le-k-2} that
$$
\left| \{ (S_1,\bc): S_1 \cong S \text{ for some } S \in \cT^*, \bc \in \cC(T_1 \cup_n S_1), \mu_1(\bc)=\mu_1  \} \right| \le (4K)^{K-\mu_1}  \times \binom{d}{K} K!,
$$
where the factor $\binom{d}{K} K!$ comes from labeling the $K$ $d$-nodes in $S_1.$ The above also follows from the more  general~\Cref{lmm:s-u-c} proved below. 
Combining the last three displayed equations
and summing over $\mu_1 \le K-2$ yields that given $T_1 \cong T$,
$$
 \sum_{S \in \calT^*} \sum_{S_1 \cong S }   \langle L, \phi_{T_1, S_1}\rangle^2
 \le C \frac{(n - K-1)!}{n!}  \left(\frac{\rho^2}{d}\right)^{2K} \aut(T) \times   \left(\frac{4(4K)^2}{d}\right)^2 \times \binom{d}{K} K!.
$$
Finally, note that there are $
\frac{\binom{n}{K+1}(K+1)!}{\aut(T)}\binom{d}{K}K!
$
distinct labeled $T_1$ that are isomorphic to $T \in \calT^*$. Therefore, 
\begin{align}
& \sum_{\substack{T, S\in \calT^* \\ T\neq S}}
\sum_{\substack{T_1\cong T \\ S_1 \cong S}} \langle L, \phi_{T_1, S_1}\rangle^2 \nonumber \\
& \le C |\cT^*|  \left(\frac{\rho^2}{d}\right)^{2K} \frac{(n - K-1)!}{n!}  \binom{n}{K+1}(K+1)!
\left(\frac{4(4K)^2}{d}\right)^2 
\times \left[\binom{d}{K}K!\right]^2 \nonumber \\
& \le  C  \rho^{4K}|\cT^*|
\left(\frac{4(4K)^2}{d}\right)^2. 
\label{eq:lb-cT-2}  
\end{align}

\paragraph{Remaining pairs: ($T_1 \cong T, S_1 \cong S$ for some $(S, T) \notin \calT^*\times \calT^*$).} 
Let $\calT_D\subset  \calT $ denote the subfamily of trees with $2K$ edges and $D$ $d$-nodes with even degrees. Then $\calT_K=\calT^*$.
Recall that $\langle L, \phi_{T_1, S_1}\rangle$ is non-zero only when $T_1$ and $S_1$ have the same number of $n$-nodes and the $d$-nodes have even degrees.  Therefore, 
\begin{align}
\sum_{(T, S) \notin \calT^* \times \calT^*}
\sum_{\substack{T_1\cong T \\ S_1 \cong S}} \langle L, \phi_{T_1, S_1}\rangle^2 
= 
\sum_{D = 1}^{K-1} \sum_{T, S\in \calT_D} 
\sum_{\substack{T_1\cong T \\ S_1 \cong S}} \langle L, \phi_{T_1, S_1}\rangle^2. \label{eq:sum_D_bound}
\end{align}
Note that $\calT_D$ has $N \triangleq 2K+1-D$ $n$-nodes. Given any $T_1 \cong T, S_1 \cong S$ with $T, S \in \calT_D,$
we first pair the edges incident to each $d$-node of $S_1$.
The proof of \Cref{lmm:s-u-c} (cf.~\eqref{eq:edge_pair_bound}) shows that the number of such pairings is at most $(2K)^{2(K-D)}$. After fixing this pairing,
analogous to the derivation of~\eqref{eq:l-phi-cong}, we have
\begin{align*}
\left| \langle L, \phi_{T_1, S_1}\rangle \right|
& \le  \frac{1}{n!} \sum_{\pi } 
\sum_{\bc \in \cC(\pi(T_1)\cup_{n} S_1) } \rho^{2K} \left| \wg(\mu(\bc),d) \right| \\
& \le \frac{C}{n!} \sum_{\mu_1 \le K} \left(\frac{\rho^2}{d}\right)^K\left(\frac{4}{d}\right)^{(K-\mu_1)/2} (4K)^{K-\mu_1} (2K)^{2(K-D)} \times \aut(T) (n-N)! \nonumber \\
& \le C \frac{(n - N)!}{n!} \left(\frac{\rho^2}{d}\right)^K (2K)^{2(K-D)}  \aut(T)
\end{align*}
Moreover, given $T_1 \cong T$, analogous to~\eqref{eq:sum_T_1_bound}, we have
$$
\sum_{S \in \calT_D} \sum_{S_1 \cong S }\left| \langle L, \phi_{T_1, S_1}\rangle \right| 
\le C \left(\frac{\rho^2}{d}\right)^{K}   \sum_{\mu_1\le K} \left(\frac{4}{d}\right)^{(K-\mu_1)/2}  
\sum_{S \in \calT_D} \sum_{S_1 \cong S }  \sum_{\substack{\bc \in \cC(T_1\cup_{n} S_1) \\ \mu_1(\bc) =\mu_1}}  1.
$$
By~\Cref{lmm:s-u-c}, we  have
\begin{align}
& \left| \{ (S_1,\bc): S_1 \cong S \text{ for some } S \in \cT_D, \bc \in \cC(T_1 \cup_n S_1), \mu_1(\bc)=\mu_1  \} \right| \nonumber \\ 
&\le (4K)^{K-\mu_1} (2K)^{4(K-D)} \binom{d}{D} D!.  \label{eq:sucuc-d}
\end{align}
Combining the last three displayed equations
and summing over $\mu_1 \le K$ yields that given $T_1 \cong T$, 
$$
\sum_{S \in \calT_D} \sum_{S_1 \cong S }\langle L, \phi_{T_1, S_1}\rangle^2
 \le C\frac{(n - N)!}{n!}  \left(\frac{\rho^2}{d}\right)^{2K} \aut(T)  \times (2K)^{6(K-D)} \binom{d}{D} D!.
$$
Recall that there are $
\frac{\binom{n}{N}N!}{\aut(T)}\binom{d}{D}D!
$
distinct labeled trees $T_1$ that are isomorphic to $T \in \calT_D$. Therefore, 
\begin{align*}
 \sum_{\substack{T, S\in \calT_D }}
\sum_{\substack{T_1\cong T \\ S_1 \cong S}} \langle L, \phi_{T_1, S_1}\rangle^2 
& \le C |\cT_D|  \left(\frac{\rho^2}{d}\right)^{2K} \frac{(n - N)!}{n!}  \binom{n}{N}N!
\times  (2K)^{6(K-D)} \left[\binom{d}{D}D!\right]^2 \nonumber \\
& \le  C \rho^{4K}|\cT_D|
\left(\frac{8K^3}{d^2}\right)^{2(K-D)}. 
\end{align*}
It remains to bound $|\cT_D|$ as follows:
\$
|\cT_D| \le (\alpha+o(1))^{-D} \binom{ND}{2(K-D)} \le |\cT^*| (2K)^{4(K-D)}.
\$
For the first inequality, each $T\in \cT_D$ can be constructed in two steps: first build a tree with $D+1$ $n$-nodes, $D$ $d$-nodes, and $2D$ edges in which every $d$-node has degree $2$, then add the remaining $2(K-D)$ edges. The number of trees built by the first step is $(\alpha+o(1))^{-D}$, and the second step admits at most $\binom{ND}{2(K-D)}$ choices, since each added edge connects an $n$-node and a $d$-node and there are at most $ND$ such pairs. The second inequality follows from $(\alpha+o(1))^{-D} \le (\alpha+o(1))^{-K} = |\cT^*|$ and $ND\le [(N+D)/2]^2 = (K+1/2)^2$. 
Combining the last two displayed equations,   plugging the result back into~\eqref{eq:sum_D_bound}, and summing over $D\le K-1$ yields 
\#\label{eq:lb-tilde-cT-final}
\sum_{(T, S) \notin \calT^* \times \calT^*}
\sum_{\substack{T_1\cong T \\ S_1 \cong S}} \langle L, \phi_{T_1, S_1}\rangle^2 & \le C\rho^{4K} |\cT^*| \sum_{D = 1}^{K-1} \left(\frac{8K^5}{d}\right)^{2(K-D)}  \le \frac{CK^5}{d^2} \rho^{4K} |\cT^*|.
\#
We conclude the proof of \Cref{thm:lb-deg-2} by combining \eqref{eq:l-phi-cong-1}, \eqref{eq:lb-cT-2}, and \eqref{eq:lb-tilde-cT-final}, and recalling that $|\cT^*| =(\alpha+o(1))^{-K}$.
\end{proof}

The following lemma bounds the number of admissible pairs $(S_1,\bc)$, as claimed in \eqref{eq:sucuc-d}. 
It can be viewed as an extension of \Cref{lem:bound-mu1-le-k-2} from $D=K$ to general $D.$
\begin{lemma}\label{lmm:s-u-c}
Given $T_1 \cong T$ for some $T \in \cT_D$ and $D, \mu_1 \le K,$ it holds that
$$
\left| \{ (S_1,\bc): S_1 \cong S \text{ for some } S \in \cT_D, \bc \in \cC(T_1 \cup_n S_1), \mu_1(\bc)=\mu_1  \} \right| \le (4K)^{K-\mu_1}  (2K)^{4(K-D)} \binom{d}{D} D!.
$$
\end{lemma}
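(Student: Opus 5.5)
We reduce the count to \Cref{lem:bound-mu1-le-k-2} by first converting $S_1$ into a union of edge pairs and then paying separately for the extra structure at high-degree $d$-nodes. The count proceeds in three steps: choose a pairing of the edges at each $d$-node of $S_1$; encode $(S_1,\bc)$ up to $d$-node labels in terms of the pairings on both sides; and finally label the $D$ $d$-nodes of $S_1$, which gives the factor $\binom{d}{D}D!$.

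\textbf{Pairing $T_1$.} Every $d$-node of $T_1$ has even degree $2k_a$ with $\sum_a k_a = K$. An alternating circuit decomposition $\bc$ of $T_1\cup_n S_1$ induces a perfect matching of the red edges at each red $d$-node $a$: pair the two red edges consecutive at $a$ along a circuit. Consecutive edges alternate through $n$-nodes and $d$-nodes, so at a $d$-node the two incident edges are of the same colour and are paired. Degree-$2$ nodes admit only one pairing. A node of degree $2k_a$ admits at most $(2k_a)^{2(k_a-1)}$ pairings. Since $\sum_a (k_a-1) = K-D$ and $2k_a\le 2K$, the total number of red pairings is at most $(2K)^{2(K-D)}$; this is the bound referred to as \eqref{eq:edge_pair_bound}. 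Treat the blue side symmetrically. The blue graph $S_1$ also has $D$ $d$-nodes, and its pairing, once $S_1$ is fixed, is bounded by another $(2K)^{2(K-D)}$, giving the total $(2K)^{4(K-D)}$. After fixing the red pairing, $T_1$ becomes a graph $G$ of $K$ red edge pairs, each sharing a $d$-node. This is exactly the setting of \Cref{lem:bound-mu1-le-k-2}, except that distinct pairs may share a $d$-node.

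\textbf{Counting $(H,\bc)$.} The blue side, after its pairing, is a union $H$ of $K$ blue edge pairs in which several pairs may be glued at a common $d$-node. Apply the encoding of \Cref{lem:bound-mu1-le-k-2}: circuits are sequences of red pairs joined by blue $2$-paths. This gives at most $(4K)^{K-\mu_1}$ choices of $\bc$ together with the paired blue graph whose blue $d$-nodes are all distinct. $S_1$ is then obtained by identifying some of these $K$ blue $d$-nodes into $D$ classes, and we must bound the number of such identifications. Each identification merges a pair into an existing class. Encode it by listing, for each of the $K-D$ merges, which of the $\le K$ pairs is merged into which class, giving at most $K^{2(K-D)}\le (2K)^{2(K-D)}$ choices. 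This is where the second $(2K)^{2(K-D)}$ factor is spent, instead of on the blue pairing; the pairing is then recovered from the identification, so the two budgets are consistent.

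\textbf{Assembling and the main obstacle.} Multiply the $(2K)^{2(K-D)}$ red pairing choices, the $(4K)^{K-\mu_1}$ circuit encodings from \Cref{lem:bound-mu1-le-k-2}, the $(2K)^{2(K-D)}$ blue identification choices, and the $\binom{d}{D}D!$ $d$-node labelings. The result is the stated bound. The requirement $S_1\cong S$ for some $S\in\cT_D$ and the tree property only restrict the set, so they may be dropped. The delicate point is verifying that the map $(S_1,\bc)\mapsto(\text{red pairing},\ \text{encoding of \Cref{lem:bound-mu1-le-k-2}},\ \text{identification},\ \text{labels})$ is injective. In particular, we must check that the $\mu_1$ $4$-circuits are still counted correctly when a $d$-node carries several pairs. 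I expect this to go through because $\bc$ is read off the encoding edge by edge, so the $n$-node labels of $S_1$ are all determined by $T_1$ and the circuit structure.
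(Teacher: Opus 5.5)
Your argument is correct and is essentially the paper's proof. You fix a pairing of the edges at each $d$-node of $T_1$ (at most $(2K)^{2(K-D)}$ choices), apply \Cref{lem:bound-mu1-le-k-2} to get at most $(4K)^{K-\mu_1}$ choices of $\bc$ with its $K$ blue edge pairs, merge the $K$ blue $d$-nodes into $D$ classes (at most $K^{2(K-D)}$ ways), and label the classes in $\binom{d}{D}D!$ ways. As you yourself conclude, the separate charge for a blue pairing you first suggested is unnecessary, since the blue $2$-paths of the circuit encoding together with the merge already determine it.
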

\begin{proof}
Since the $d$-nodes of $T_1$ may have arbitrary even degrees, while the edges of $T_1$ enter $\bc$ as pairs sharing a common $d$-node, we begin by pairing the edges incident to each $d$-node of $T_1$. Let $\{a_i\}_{i\in [D]}$ denote the degree sequence of the $d$-nodes of $T_1$, and define $A \triangleq \{i\in [D] \colon a_i\ge 4\}$. Since the total number of edges is $2K$, we have
\$ 
4|A| + 2(D - |A|) \le \sum_{i\in A} a_i + 2(D - |A|) = 2K. 
\$
Thus $|A|\le K-D$ and $\sum_{i\in A} a_i \le 2(K - D + |A|) \le 4(K-D)$. For a $d$-node of degree $a_i$, the number of pairings of its incident edges is $(a_i -1)!!$; in particular, this pairing is unique when $a_i=2$. Hence, the total number of pairings of the edges of $T_1$ is at most
\begin{align} 
\prod_{i\in A} (a_i - 1)!! \le \prod_{i\in A} a_i^{a_i/2}\le (2K)^{ \sum_{i\in A} a_i/2} \le (2K)^{2(K-D)}. \label{eq:edge_pair_bound}
\end{align}

For each such pairing of the edges of $T_1$, \Cref{lem:bound-mu1-le-k-2} gives at most $(4K)^{K-\mu_1}$ ways to construct
an alternating circuit decomposition $\bc$ with $\mu_1$ $4$-circuits and the corresponding union of $K$ blue edge pairs. At this stage, the blue part has $K$ degree-$2$
$d$-nodes. To determine an unlabeled tree $S \in \cT_D$, we merge these $K$ degree-$2$
$d$-nodes into $D$ nodes. Specifically, we choose $D$ out of the $K$ nodes, in at most $\binom{K}{D}$ ways. The remaining $K-D$ $d$-nodes are then merged with these $D$ nodes, in at most $D^{K-D}$ ways. In total, this can be done in at most $\binom{K}{D} D^{K-D} \le (KD)^{K-D} \le K^{2(K-D)}$ways.  
Finally, to obtain $S_1$ isomorphic to $S$, since $n$-nodes have already been labeled given the $K$ blue edge pairs, we only need to label the $D$ $d$-nodes in $\binom{d}{D} D!$ ways. 
\end{proof}

